\documentclass{article}

\usepackage[main,final]{neurips_2026}

\usepackage[utf8]{inputenc}
\usepackage[T1]{fontenc}
\usepackage{xcolor}
\definecolor{mydarkblue}{rgb}{0,0.08,0.45}
\usepackage[colorlinks=true,linkcolor=mydarkblue,citecolor=mydarkblue,urlcolor=mydarkblue,hypertexnames=false]{hyperref}
\usepackage{url}
\usepackage{upquote}
\usepackage{graphicx}
\usepackage{float}
\usepackage{placeins}
\usepackage{pgf}
\usepackage{booktabs}
\usepackage{multirow}
\usepackage{longtable}
\usepackage{array}
\usepackage{amsmath}
\usepackage{amssymb}
\usepackage{nicefrac}
\usepackage{microtype}
\usepackage{CJKutf8}
\makeatletter
\DeclareRobustCommand\fontseries[1]{\edef\f@series{#1}}
\DeclareRobustCommand\fontshape[1]{\edef\f@shape{#1}}
\makeatother
\usepackage{amsthm}
\usepackage{needspace}
\usepackage{algorithm}
\usepackage{algpseudocode}
\newtheorem{proposition}{Proposition}
\newcolumntype{L}[1]{>{\raggedright\arraybackslash}p{#1}}

\newcommand{\paperConditionCount}{9}
\newcommand{\paperVocabOneSixk}{16K}
\newcommand{\paperVocabThreeTwok}{32K}
\newcommand{\paperVocabSixFourk}{64K}
\newcommand{\paperVocabOneTwoEightk}{128K}
\newcommand{\paperVocabTwoFiveSixk}{256K}

\newcommand{\paperVocabSixFourkDisplay}{64,000}

\newcommand{\mergeVOneSixkBbpe}{15,739}
\newcommand{\mergeVOneSixkUbe}{15,483}
\newcommand{\mergeVOneSixkOverheadPct}{1.6\%}
\newcommand{\mergeVThreeTwokBbpe}{31,739}
\newcommand{\mergeVThreeTwokUbe}{31,483}
\newcommand{\mergeVThreeTwokOverheadPct}{0.8\%}
\newcommand{\mergeVSixFourkBbpe}{63,739}
\newcommand{\mergeVSixFourkUbe}{63,483}
\newcommand{\mergeVSixFourkOverheadPct}{0.4\%}
\newcommand{\mergeVOneTwoEightkBbpe}{127,739}
\newcommand{\mergeVOneTwoEightkUbe}{127,483}
\newcommand{\mergeVOneTwoEightkOverheadPct}{0.2\%}
\newcommand{\mergeVTwoFiveSixkBbpe}{255,739}
\newcommand{\mergeVTwoFiveSixkUbe}{255,483}
\newcommand{\mergeVTwoFiveSixkOverheadPct}{0.1\%}
\newcommand{\utfEightName}{UTF-8}
\newcommand{\utfSixteenName}{UTF-16}
\newcommand{\bbpeSixteenName}{BBPE16}

\newcommand{\gptTwoName}{GPT-2}
\newcommand{\floresTwoHundredName}{FLORES-200}

\newcommand{\llamaFourName}{Llama 4}

\newcommand{\byTFiveName}{ByT5}
\newcommand{\mrTFiveName}{MrT5}
\newcommand{\bbpeAlphabetSize}{256}
\newcommand{\ubeAlphabetSize}{512}
\newcommand{\utfEightAsciiFloor}{1}
\newcommand{\utfEightMidFloor}{2}
\newcommand{\utfEightHighFloor}{3}
\newcommand{\utfEightNonBmpFloor}{4}

\newcommand{\ubeRouteUtfEightMaxBytes}{2}

\newcommand{\paperPhaseCountWord}{two}
\newcommand{\paperPhaseOne}{1}
\newcommand{\paperPhaseTwo}{2}
\newcommand{\paperPhaseThree}{3}

\newcommand{\paperUnderOnePct}{<1\%}

\newcommand{\paperMinFrequencyShard}{2}
\newcommand{\paperMinFrequencyFinalize}{2}

\newcommand{\appendixDecodeVerifiedLangCount}{204}
\newcommand{\appendixDecodeVerifiedSentenceCount}{206,448}
\newcommand{\appendixDecodeVerifiedPassRate}{100\%}

\newcommand{\relatedRanaFertilityGainPct}{39.5\%}
\newcommand{\relatedRanaIndianLangCount}{22}

\newcommand{\eovOneSixkBetter}{37}
\newcommand{\eovOneSixkWorse}{148}
\newcommand{\eovOneSixkNeutral}{19}

\newcommand{\eovOneSixkGiniPct}{-13.9\%}

\newcommand{\eovThreeTwokBetter}{41}
\newcommand{\eovThreeTwokWorse}{140}
\newcommand{\eovThreeTwokNeutral}{23}

\newcommand{\eovThreeTwokGiniPct}{-11.9\%}

\newcommand{\eovThreeTwokWorseMedianDelta}{0.016}
\newcommand{\eovThreeTwokWorseMeanDelta}{0.238}
\newcommand{\eovThreeTwokMaxGainDelta}{155.591}

\newcommand{\eovSixFourkBetter}{37}
\newcommand{\eovSixFourkWorse}{85}
\newcommand{\eovSixFourkNeutral}{82}

\newcommand{\eovSixFourkGiniPct}{-7.3\%}

\newcommand{\fullvOneSixkBetter}{202}
\newcommand{\fullvOneSixkWorse}{2}
\newcommand{\fullvOneSixkNeutral}{0}

\newcommand{\fullvOneSixkGiniPct}{-8.7\%}

\newcommand{\fullvThreeTwokBetter}{191}
\newcommand{\fullvThreeTwokWorse}{13}
\newcommand{\fullvThreeTwokNeutral}{0}

\newcommand{\fullvThreeTwokGiniPct}{-7.2\%}

\newcommand{\fullvSixFourkBetter}{169}
\newcommand{\fullvSixFourkWorse}{11}
\newcommand{\fullvSixFourkNeutral}{24}

\newcommand{\fullvSixFourkGiniPct}{-6.2\%}

\newcommand{\oneOhOnevOneSixkBetter}{196}
\newcommand{\oneOhOnevOneSixkWorse}{8}
\newcommand{\oneOhOnevOneSixkNeutral}{0}
\newcommand{\oneOhOnevOneSixkGiniBBPE}{0.2194}
\newcommand{\oneOhOnevOneSixkGiniUBE}{0.1927}
\newcommand{\oneOhOnevOneSixkGiniPct}{-12.1\%}

\newcommand{\oneOhOnevThreeTwokBetter}{189}
\newcommand{\oneOhOnevThreeTwokWorse}{15}
\newcommand{\oneOhOnevThreeTwokNeutral}{0}
\newcommand{\oneOhOnevThreeTwokGiniBBPE}{0.2183}
\newcommand{\oneOhOnevThreeTwokGiniUBE}{0.1979}
\newcommand{\oneOhOnevThreeTwokGiniPct}{-9.4\%}
\newcommand{\oneOhOnevThreeTwokMeanRatio}{0.9920}

\newcommand{\oneOhOnevSixFourkBetter}{187}
\newcommand{\oneOhOnevSixFourkWorse}{17}
\newcommand{\oneOhOnevSixFourkNeutral}{0}
\newcommand{\oneOhOnevSixFourkGiniBBPE}{0.2225}
\newcommand{\oneOhOnevSixFourkGiniUBE}{0.2043}
\newcommand{\oneOhOnevSixFourkGiniPct}{-8.2\%}

\newcommand{\oneOhOnevOneTwoEightkBetter}{174}
\newcommand{\oneOhOnevOneTwoEightkWorse}{25}
\newcommand{\oneOhOnevOneTwoEightkNeutral}{5}
\newcommand{\oneOhOnevOneTwoEightkGiniBBPE}{0.2258}
\newcommand{\oneOhOnevOneTwoEightkGiniUBE}{0.2140}
\newcommand{\oneOhOnevOneTwoEightkGiniPct}{-5.2\%}

\newcommand{\oneOhOnevTwoFiveSixkBetter}{150}
\newcommand{\oneOhOnevTwoFiveSixkWorse}{41}
\newcommand{\oneOhOnevTwoFiveSixkNeutral}{13}
\newcommand{\oneOhOnevTwoFiveSixkGiniBBPE}{0.2246}
\newcommand{\oneOhOnevTwoFiveSixkGiniUBE}{0.2204}
\newcommand{\oneOhOnevTwoFiveSixkGiniPct}{-1.9\%}

\newcommand{\OneZeroOnelangvThreeTwoZeroZeroZeroGiniCIlo}{-0.0358}
\newcommand{\OneZeroOnelangvThreeTwoZeroZeroZeroGiniCIhi}{-0.0047}

\newcommand{\enonlyvOneSixZeroZeroZeroGiniCIlo}{-0.0616}
\newcommand{\enonlyvOneSixZeroZeroZeroGiniCIhi}{-0.0323}

\newcommand{\enzhkojahithvSixFourZeroZeroZeroGiniCIlo}{-0.0299}
\newcommand{\enzhkojahithvSixFourZeroZeroZeroGiniCIhi}{-0.0044}

\newcommand{\paperStatVarReductionMinPct}{24\%}
\newcommand{\paperStatVarReductionMaxPct}{49\%}
\newcommand{\paperStatGiniReductionMinPct}{6\%}
\newcommand{\paperStatGiniReductionMaxPct}{14\%}

\newcommand{\paperStatWSigCount}{8}

\newcommand{\paperStatWMaxP}{0.32}

\newcommand{\tcBBPEOneLangTrainB}{184.0}
\newcommand{\tcBBPEOneLangEvalB}{7.5}
\newcommand{\tcBBPEOneLangStepsPerEpoch}{350,991}
\newcommand{\tcBBPEOneLangTwoHundredKPct}{57.0}
\newcommand{\tcBBPESixLangTrainB}{120.0}
\newcommand{\tcBBPESixLangEvalB}{7.0}
\newcommand{\tcBBPESixLangStepsPerEpoch}{228,967}
\newcommand{\tcBBPESixLangTwoHundredKPct}{87.3}
\newcommand{\tcBBPEOneOhOneLangTrainB}{108.1}
\newcommand{\tcBBPEOneOhOneLangEvalB}{5.3}
\newcommand{\tcBBPEOneOhOneLangStepsPerEpoch}{206,142}
\newcommand{\tcBBPEOneOhOneLangTwoHundredKPct}{97.0}
\newcommand{\tcUBEOneLangTrainB}{166.9}
\newcommand{\tcUBEOneLangEvalB}{7.2}
\newcommand{\tcUBEOneLangStepsPerEpoch}{318,320}
\newcommand{\tcUBEOneLangTwoHundredKPct}{62.8}
\newcommand{\tcUBESixLangTrainB}{119.0}
\newcommand{\tcUBESixLangEvalB}{6.9}
\newcommand{\tcUBESixLangStepsPerEpoch}{227,033}
\newcommand{\tcUBESixLangTwoHundredKPct}{88.1}
\newcommand{\tcUBEOneOhOneLangTrainB}{107.8}
\newcommand{\tcUBEOneOhOneLangEvalB}{5.3}
\newcommand{\tcUBEOneOhOneLangStepsPerEpoch}{205,695}
\newcommand{\tcUBEOneOhOneLangTwoHundredKPct}{97.2}

\newcommand{\repLangSantaliBbpe}{341.287}
\newcommand{\repLangSantaliUbe}{222.944}
\newcommand{\repLangSantaliDeltaPct}{-34.7\%}

\newcommand{\repLangTamazightBbpe}{269.166}
\newcommand{\repLangTamazightUbe}{183.305}
\newcommand{\repLangTamazightDeltaPct}{-31.9\%}

\newcommand{\repLangTibetanBbpe}{336.804}
\newcommand{\repLangTibetanUbe}{286.737}
\newcommand{\repLangTibetanDeltaPct}{-14.9\%}

\newcommand{\repLangArabicBbpe}{60.600}
\newcommand{\repLangArabicUbe}{60.407}
\newcommand{\repLangArabicDeltaPct}{-0.3\%}

\newcommand{\repLangEnglishBbpe}{31.160}
\newcommand{\repLangEnglishUbe}{31.068}
\newcommand{\repLangEnglishDeltaPct}{-0.3\%}

\newcommand{\repLangOneTwoEightkTibetanDeltaPct}{+7.4\%}

\newcommand{\tdSentOneChars}{24}
\newcommand{\tdSentOneBbpeTokens}{63}
\newcommand{\tdSentOneUbeTokens}{39}
\newcommand{\tdSentOneSavePct}{38\%}

\newcommand{\gvaUbeThreeTwokGini}{0.1979}

\newcommand{\gvaSbpeBbpeThreeTwokGini}{0.1922}

\newcommand{\gvaBbpeSixteenThreeTwokGini}{0.1598}

\newcommand{\gvaInteractionOneSixk}{+4.4}
\newcommand{\gvaInteractionThreeTwok}{+0.3}
\newcommand{\gvaInteractionSixFourk}{-1.7}
\newcommand{\gvaInteractionOneTwoEightk}{-3.8}
\newcommand{\gvaInteractionTwoFiveSixk}{-4.4}
\newcommand{\gvaInteractionMean}{-1.1}
\newcommand{\gvaInteractionSixFourkCILow}{-3.2}
\newcommand{\gvaInteractionSixFourkCIHigh}{-0.1}
\newcommand{\gvaInteractionSixFourkBootN}{203}

\newcommand{\floresRobLangs}{204}
\newcommand{\floresRobGiniBBPE}{0.2225}
\newcommand{\floresRobGiniUBE}{0.2043}

\newcommand{\udhrRobLangs}{387}
\newcommand{\udhrRobGiniBBPE}{0.2491}
\newcommand{\udhrRobGiniUBE}{0.2248}

\newcommand{\sibRobLangs}{205}
\newcommand{\sibRobGiniBBPE}{0.2348}
\newcommand{\sibRobGiniUBE}{0.2174}

\newcommand{\mgsmRobLangs}{11}
\newcommand{\mgsmRobGiniBBPE}{0.1984}
\newcommand{\mgsmRobGiniUBE}{0.1980}

\newcommand{\floresRobOneSixkGiniBBPE}{0.2194}
\newcommand{\floresRobOneSixkGiniUBE}{0.1927}

\newcommand{\floresRobThreeTwokGiniBBPE}{0.2183}
\newcommand{\floresRobThreeTwokGiniUBE}{0.1979}
\newcommand{\floresRobThreeTwokGiniPct}{-9.4\%}

\newcommand{\floresRobOneTwoEightkGiniBBPE}{0.2258}
\newcommand{\floresRobOneTwoEightkGiniUBE}{0.2140}

\newcommand{\floresRobTwoFiveSixkGiniBBPE}{0.2246}
\newcommand{\floresRobTwoFiveSixkGiniUBE}{0.2204}

\newcommand{\udhrRobOneSixkGiniBBPE}{0.2449}
\newcommand{\udhrRobOneSixkGiniUBE}{0.2158}

\newcommand{\udhrRobThreeTwokGiniBBPE}{0.2499}
\newcommand{\udhrRobThreeTwokGiniUBE}{0.2221}
\newcommand{\udhrRobThreeTwokGiniPct}{-11.1\%}

\newcommand{\udhrRobOneTwoEightkGiniBBPE}{0.2552}
\newcommand{\udhrRobOneTwoEightkGiniUBE}{0.2322}

\newcommand{\udhrRobTwoFiveSixkGiniBBPE}{0.2574}
\newcommand{\udhrRobTwoFiveSixkGiniUBE}{0.2352}

\newcommand{\sibRobOneSixkGiniBBPE}{0.2289}
\newcommand{\sibRobOneSixkGiniUBE}{0.2035}

\newcommand{\sibRobThreeTwokGiniBBPE}{0.2297}
\newcommand{\sibRobThreeTwokGiniUBE}{0.2102}
\newcommand{\sibRobThreeTwokGiniPct}{-8.5\%}

\newcommand{\sibRobOneTwoEightkGiniBBPE}{0.2391}
\newcommand{\sibRobOneTwoEightkGiniUBE}{0.2280}

\newcommand{\sibRobTwoFiveSixkGiniBBPE}{0.2398}
\newcommand{\sibRobTwoFiveSixkGiniUBE}{0.2358}

\newcommand{\mgsmRobOneSixkGiniBBPE}{0.1699}
\newcommand{\mgsmRobOneSixkGiniUBE}{0.1710}

\newcommand{\mgsmRobThreeTwokGiniBBPE}{0.1841}
\newcommand{\mgsmRobThreeTwokGiniUBE}{0.1843}
\newcommand{\mgsmRobThreeTwokGiniPct}{+0.1\%}

\newcommand{\mgsmRobOneTwoEightkGiniBBPE}{0.2119}
\newcommand{\mgsmRobOneTwoEightkGiniUBE}{0.2119}

\newcommand{\mgsmRobTwoFiveSixkGiniBBPE}{0.2285}
\newcommand{\mgsmRobTwoFiveSixkGiniUBE}{0.2283}

\newcommand{\gsUbeEnRedPct}{24.2\%}

\newcommand{\gsUbeHiRedPct}{5.6\%}
\newcommand{\gsUbeHiPremMin}{2.57}
\newcommand{\gsUbeHiPremMax}{3.20}
\newcommand{\gsUbeHiPremDelta}{+0.63}

\newcommand{\gsUbeArRedPct}{40.7\%}

\newcommand{\gsUbeArPremDelta}{-0.43}
\newcommand{\gsUbeKoRedPct}{37.6\%}

\newcommand{\gsUbeKoPremDelta}{-0.23}
\newcommand{\gsUbePremUp}{75}

\newcommand{\gsUbePremTotal}{204}
\newcommand{\gsUbeVocabRange}{16K--256K}
\newcommand{\hdMedianAbsDelta}{0.0011}
\newcommand{\hdMaxAbsDelta}{0.0087}
\newcommand{\hdNComparisons}{20}

\newcommand{\lmBbpeOneLangBpb}{1.4909}
\newcommand{\lmUbeOneLangBpb}{\textbf{1.4698}}
\newcommand{\lmBbpeOneLangGini}{0.1993}
\newcommand{\lmUbeOneLangGini}{\textbf{0.1989}}
\newcommand{\lmBbpeOneLangXnli}{34.3}
\newcommand{\lmUbeOneLangXnli}{\textbf{34.4}}
\newcommand{\lmBbpeOneLangXcopa}{51.4}
\newcommand{\lmUbeOneLangXcopa}{\textbf{51.5}}
\newcommand{\lmBbpeOneLangBelebele}{\textbf{23.6}}
\newcommand{\lmUbeOneLangBelebele}{23.0}
\newcommand{\lmBbpeOneLangArcE}{38.1}
\newcommand{\lmUbeOneLangArcE}{\textbf{38.5}}
\newcommand{\lmBbpeOneLangArcC}{\textbf{18.3}}
\newcommand{\lmUbeOneLangArcC}{17.0}
\newcommand{\lmBbpeOneLangHSwag}{27.5}
\newcommand{\lmUbeOneLangHSwag}{\textbf{27.8}}
\newcommand{\lmBbpeOneLangPiqa}{61.9}
\newcommand{\lmUbeOneLangPiqa}{\textbf{62.7}}
\newcommand{\lmBbpeOneLangLambada}{\textbf{23.6}}
\newcommand{\lmUbeOneLangLambada}{22.5}
\newcommand{\lmBbpeSixLangBpb}{\textbf{1.3680}}
\newcommand{\lmUbeSixLangBpb}{1.3760}
\newcommand{\lmBbpeSixLangGini}{0.1930}
\newcommand{\lmUbeSixLangGini}{\textbf{0.1880}}
\newcommand{\lmBbpeSixLangXnli}{34.6}
\newcommand{\lmUbeSixLangXnli}{\textbf{34.9}}
\newcommand{\lmBbpeSixLangXcopa}{52.0}
\newcommand{\lmUbeSixLangXcopa}{\textbf{52.2}}
\newcommand{\lmBbpeSixLangBelebele}{\textbf{23.5}}
\newcommand{\lmUbeSixLangBelebele}{23.2}
\newcommand{\lmBbpeSixLangArcE}{35.6}
\newcommand{\lmUbeSixLangArcE}{\textbf{36.7}}
\newcommand{\lmBbpeSixLangArcC}{17.1}
\newcommand{\lmUbeSixLangArcC}{\textbf{18.4}}
\newcommand{\lmBbpeSixLangHSwag}{27.4}
\newcommand{\lmUbeSixLangHSwag}{\textbf{28.0}}
\newcommand{\lmBbpeSixLangPiqa}{61.3}
\newcommand{\lmUbeSixLangPiqa}{\textbf{62.0}}
\newcommand{\lmBbpeSixLangLambada}{\textbf{23.1}}
\newcommand{\lmUbeSixLangLambada}{\textbf{23.1}}
\newcommand{\lmBbpeOneZeroOneLangBpb}{\textbf{1.5174}}
\newcommand{\lmUbeOneZeroOneLangBpb}{1.5237}

\newcommand{\lmBbpeOneZeroOneLangGini}{0.1762}
\newcommand{\lmUbeOneZeroOneLangGini}{\textbf{0.1739}}

\newcommand{\lmBbpeOneZeroOneLangXnli}{34.3}
\newcommand{\lmUbeOneZeroOneLangXnli}{\textbf{34.4}}

\newcommand{\lmBbpeOneZeroOneLangXcopa}{\textbf{52.0}}
\newcommand{\lmUbeOneZeroOneLangXcopa}{51.8}

\newcommand{\lmBbpeOneZeroOneLangBelebele}{\textbf{24.8}}
\newcommand{\lmUbeOneZeroOneLangBelebele}{23.7}

\newcommand{\lmBbpeOneZeroOneLangArcE}{35.5}
\newcommand{\lmUbeOneZeroOneLangArcE}{\textbf{35.9}}

\newcommand{\lmBbpeOneZeroOneLangArcC}{17.0}
\newcommand{\lmUbeOneZeroOneLangArcC}{\textbf{17.8}}

\newcommand{\lmBbpeOneZeroOneLangHSwag}{27.7}
\newcommand{\lmUbeOneZeroOneLangHSwag}{\textbf{27.9}}

\newcommand{\lmBbpeOneZeroOneLangPiqa}{59.3}
\newcommand{\lmUbeOneZeroOneLangPiqa}{\textbf{60.1}}

\newcommand{\lmBbpeOneZeroOneLangLambada}{22.8}
\newcommand{\lmUbeOneZeroOneLangLambada}{\textbf{23.1}}

\newcommand{\lmOneZeroOneLangUbeWins}{37}

\newcommand{\lmOneZeroOneLangMeanDeltaBpb}{+0.0063}

\newcommand{\lmOneZeroOneLangOneByteMedianDelta}{+0.0064}

\newcommand{\lmOneZeroOneLangTwoByteMedianDelta}{+0.0070}

\newcommand{\lmOneZeroOneLangThreeByteMedianDelta}{+0.0029}

\newcommand{\lmMultiseedNSeeds}{3}

\newcommand{\lmMsUbeGiniMean}{0.1822}
\newcommand{\lmMsUbeGiniStd}{0.0074}

\newcommand{\lmMsBbpeGiniMean}{0.1819}
\newcommand{\lmMsBbpeGiniStd}{0.0052}

\newcommand{\lmMsBpbDeltaMean}{+0.0046}
\newcommand{\lmMsBpbDeltaMax}{0.0063}
\newcommand{\infNTrials}{30}

\newcommand{\infNEffective}{28}

\newcommand{\infDeltaSeqengLatn}{+3.9\%}

\newcommand{\infDeltaSeqhinDeva}{+0.1\%}

\newcommand{\infDeltaSeqthaThai}{+0.5\%}

\newcommand{\infDeltaTokamhEthi}{-32.1\%}
\newcommand{\infDeltaSeqamhEthi}{+45.0\%}

\newcommand{\infNSigLangsBH}{8}

\newcommand{\cwThreeTwokSantaliGainPct}{53\%}
\newcommand{\cwThreeTwokEnglishGainPct}{0.3\%}

\newcommand{\vdBbpeOneSixkDeadMerges}{9}

\newcommand{\vdUbeOneSixkDeadMerges}{6}

\newcommand{\vdSbpeBbpeOneSixkDeadMerges}{156}

\newcommand{\vdSbpeUbeOneSixkDeadMerges}{158}

\newcommand{\vdBbpeAlphTokens}{256}
\newcommand{\vdBbpeMergeTokens}{31,739}
\newcommand{\vdBbpeDeadMerges}{13}
\newcommand{\vdBbpeActiveUtil}{99.9\%}
\newcommand{\vdBbpeDepthOnePct}{7.9\%}
\newcommand{\vdBbpeMaxDepth}{8}
\newcommand{\vdUbeAlphTokens}{512}
\newcommand{\vdUbeMergeTokens}{31,483}
\newcommand{\vdUbeDeadMerges}{9}
\newcommand{\vdUbeActiveUtil}{99.8\%}
\newcommand{\vdUbeDepthOnePct}{17.3\%}
\newcommand{\vdUbeMaxDepth}{7}
\newcommand{\vdUbePuaPuaPct}{23.6\%}
\newcommand{\vdUbeGptGptPct}{66.6\%}
\newcommand{\vdUbeCrossPct}{9.8\%}
\newcommand{\vdSbpeBbpeAlphTokens}{256}
\newcommand{\vdSbpeBbpeMergeTokens}{31,739}
\newcommand{\vdSbpeBbpeDeadMerges}{524}
\newcommand{\vdSbpeBbpeActiveUtil}{98.3\%}
\newcommand{\vdSbpeBbpeDepthOnePct}{10.3\%}
\newcommand{\vdSbpeBbpeMaxDepth}{14}
\newcommand{\vdSbpeUbeAlphTokens}{512}
\newcommand{\vdSbpeUbeMergeTokens}{31,483}
\newcommand{\vdSbpeUbeDeadMerges}{531}
\newcommand{\vdSbpeUbeActiveUtil}{98.2\%}
\newcommand{\vdSbpeUbeDepthOnePct}{19.4\%}
\newcommand{\vdSbpeUbeMaxDepth}{14}
\newcommand{\vdSbpeUbePuaPuaPct}{24.6\%}
\newcommand{\vdSbpeUbeGptGptPct}{64.6\%}
\newcommand{\vdSbpeUbeCrossPct}{10.8\%}

\newcommand{\vdBbpeThreeTwokDeadMerges}{13}

\newcommand{\vdUbeThreeTwokDeadMerges}{9}

\newcommand{\vdSbpeBbpeThreeTwokDeadMerges}{524}

\newcommand{\vdSbpeUbeThreeTwokDeadMerges}{531}

\newcommand{\vdBbpeSixFourkDeadMerges}{20}

\newcommand{\vdUbeSixFourkDeadMerges}{16}

\newcommand{\vdSbpeBbpeSixFourkDeadMerges}{1,193}

\newcommand{\vdSbpeUbeSixFourkDeadMerges}{1,203}

\newcommand{\vdBbpeOneTwoEightkDeadMerges}{82}

\newcommand{\vdUbeOneTwoEightkDeadMerges}{78}

\newcommand{\vdSbpeBbpeOneTwoEightkDeadMerges}{3,841}

\newcommand{\vdSbpeUbeOneTwoEightkDeadMerges}{3,858}

\newcommand{\vdBbpeTwoFiveSixkDeadMerges}{224}

\newcommand{\vdUbeTwoFiveSixkDeadMerges}{217}

\newcommand{\vdSbpeBbpeTwoFiveSixkDeadMerges}{12,203}

\newcommand{\vdSbpeUbeTwoFiveSixkDeadMerges}{12,251}

\newcommand{\msNLangs}{8}
\newcommand{\msNDocs}{400}
\newcommand{\msUbeBetterOrTie}{7 of 8}
\newcommand{\msUbeTotalDeltaPct}{-0.2\%}
\newcommand{\msBbpeSixteenTotalDeltaPct}{6.6\%}
\newcommand{\msBengaliDeltaPct}{+0.4\%}
\newcommand{\pltGptfouroVocab}{200,019}
\newcommand{\pltGptfouroEnAvg}{26.554}
\newcommand{\pltGptfouroTotalM}{11.6}
\newcommand{\pltGptfouroGini}{0.2323}
\newcommand{\pltGptfourVocab}{100,277}
\newcommand{\pltGptfourEnAvg}{26.860}
\newcommand{\pltGptfourTotalM}{18.6}
\newcommand{\pltGptfourGini}{0.3287}
\newcommand{\pltLlamathreeVocab}{128,256}
\newcommand{\pltLlamathreeEnAvg}{26.849}
\newcommand{\pltLlamathreeTotalM}{16.6}
\newcommand{\pltLlamathreeGini}{0.3318}
\newcommand{\pltQwenTwoFiveVocab}{151,665}
\newcommand{\pltQwenTwoFiveEnAvg}{27.293}
\newcommand{\pltQwenTwoFiveTotalM}{15.4}
\newcommand{\pltQwenTwoFiveGini}{0.2724}
\newcommand{\pltQwenthreesixVocab}{248,044}
\newcommand{\pltQwenthreesixEnAvg}{27.204}
\newcommand{\pltQwenthreesixTotalM}{12.8}
\newcommand{\pltQwenthreesixGini}{0.2420}
\newcommand{\pltMistralVocab}{131,072}
\newcommand{\pltMistralEnAvg}{27.497}
\newcommand{\pltMistralTotalM}{14.2}
\newcommand{\pltMistralGini}{0.3030}

\newcommand{\cpuTokTrainHrs}{14}
\newcommand{\cpuTokWorkerHrs}{180}

\newcommand{\cpuEvalCoreHrs}{3}
\newcommand{\lmPrimaryDesignRuns}{16}
\newcommand{\lmAdditionalReportedLmRuns}{12}
\newcommand{\lmAdditionalSixFourkRuns}{6}
\newcommand{\lmAdditionalSuperBpeRuns}{2}
\newcommand{\lmAdditionalMultiseedRuns}{4}
\newcommand{\lmTotalReportedLmRuns}{28}
\newcommand{\lmOneEightTwoMRuns}{12}
\newcommand{\lmOneEightTwoMConfiguredSteps}{200{,}000}
\newcommand{\lmOneEightTwoMWallClockRange}{1--2 days}

\newcommand{\lmOneEightTwoMGpuHrsPerRun}{192--384}
\newcommand{\lmOneEightTwoMGpuHrs}{2{,}304--4{,}608}
\newcommand{\lmTwoThreeZeroMRuns}{6}
\newcommand{\lmTwoThreeZeroMConfiguredSteps}{200{,}000}
\newcommand{\lmTwoThreeZeroMWallClockRange}{1.5--3 days}

\newcommand{\lmTwoThreeZeroMGpuHrsPerRun}{288--576}
\newcommand{\lmTwoThreeZeroMGpuHrs}{1{,}728--3{,}456}
\newcommand{\lmThreeTwoNineMRuns}{4}
\newcommand{\lmThreeTwoNineMConfiguredSteps}{200{,}000}
\newcommand{\lmThreeTwoNineMWallClockRange}{2--4 days}

\newcommand{\lmThreeTwoNineMGpuHrsPerRun}{384--768}
\newcommand{\lmThreeTwoNineMGpuHrs}{1{,}536--3{,}072}
\newcommand{\lmOneThreeBRuns}{6}
\newcommand{\lmOneThreeBConfiguredSteps}{205{,}000}
\newcommand{\lmOneThreeBWallClockRange}{8--14 days}

\newcommand{\lmOneThreeBGpuHrsPerRun}{1{,}536--2{,}688}
\newcommand{\lmOneThreeBGpuHrs}{9{,}216--16{,}128}
\newcommand{\downstreamGpuHrs}{6}
\newcommand{\totalAcceleratorHrs}{14.8K--27.3K}

\newcommand{\contamDocsM}{89.47M}
\newcommand{\contamLangsChecked}{101}
\newcommand{\contamPrefixChars}{50}
\newcommand{\contamFloresOverlaps}{0}
\newcommand{\contamFloresSubstringOverlaps}{53}
\newcommand{\contamFloresSentences}{206,448}
\newcommand{\contamFloresSubstringEligibleSentences}{201,686}
\newcommand{\contamUdhrOverlaps}{1}
\newcommand{\contamUdhrSubstringOverlaps}{7,102}
\newcommand{\contamUdhrSentences}{35,870}
\newcommand{\contamUdhrSubstringEligibleSentences}{22,133}
\newcommand{\contamSibOverlaps}{0}
\newcommand{\contamSibSubstringOverlaps}{10}
\newcommand{\contamSibSentences}{41,820}
\newcommand{\contamSibSubstringEligibleSentences}{40,771}
\newcommand{\contamMgsmOverlaps}{0}
\newcommand{\contamMgsmSubstringOverlaps}{0}
\newcommand{\contamMgsmSentences}{2,750}
\newcommand{\contamMgsmSubstringEligibleSentences}{2,717}
\newcommand{\corpusOneLangLangs}{1}
\newcommand{\corpusOneLangTrainShards}{118}
\newcommand{\corpusOneLangTrainGB}{101}
\newcommand{\corpusOneLangTrainDocsM}{32.1}
\newcommand{\corpusOneLangEvalShards}{1}

\newcommand{\corpusOneLangEvalDocsM}{0.27}
\newcommand{\corpusSixLangLangs}{6}
\newcommand{\corpusSixLangTrainShards}{1,330}
\newcommand{\corpusSixLangTrainGB}{253}
\newcommand{\corpusSixLangTrainDocsM}{62.3}
\newcommand{\corpusSixLangEvalShards}{6}

\newcommand{\corpusSixLangEvalDocsM}{0.46}
\newcommand{\corpusOneOhOneLangLangs}{101}
\newcommand{\corpusOneOhOneLangTrainShards}{2,273}
\newcommand{\corpusOneOhOneLangTrainGB}{346}
\newcommand{\corpusOneOhOneLangTrainDocsM}{89.3}
\newcommand{\corpusOneOhOneLangEvalShards}{101}

\newcommand{\corpusOneOhOneLangEvalDocsM}{4.36}
\newcommand{\corpusOneOhOneLangHalfTrainGB}{173}
\newcommand{\detTieFloresOrigGini}{0.2522}
\newcommand{\detTieFloresDetGini}{0.2517}
\newcommand{\detTieFloresDeltaGini}{-0.00050}
\newcommand{\detTieFloresDeltaMean}{-0.0028}
\newcommand{\detTieUdhrOrigGini}{0.2473}
\newcommand{\detTieUdhrDetGini}{0.2471}
\newcommand{\detTieUdhrDeltaGini}{-0.00025}
\newcommand{\detTieUdhrDeltaMean}{+0.0003}
\newcommand{\detTieSibOrigGini}{0.2600}
\newcommand{\detTieSibDetGini}{0.2595}
\newcommand{\detTieSibDeltaGini}{-0.00050}
\newcommand{\detTieSibDeltaMean}{-0.0025}
\newcommand{\detTieMgsmOrigGini}{0.3120}
\newcommand{\detTieMgsmDetGini}{0.3120}
\newcommand{\detTieMgsmDeltaGini}{+0.00002}
\newcommand{\detTieMgsmDeltaMean}{-0.0004}
\newcommand{\detTieMaxAbsDeltaGini}{0.0005}
\newcommand{\detTieMaxAbsDeltaMean}{0.0028}
\newcommand{\floresRobOneSixkGiniPctDetailed}{-12.14\%}
\newcommand{\floresRobOneTwoEightkGiniPctDetailed}{-5.20\%}
\newcommand{\floresRobSixFourkGiniPctDetailed}{-8.20\%}
\newcommand{\floresRobThreeTwokGiniPctDetailed}{-9.36\%}
\newcommand{\floresRobTwoFiveSixkGiniPctDetailed}{-1.86\%}
\newcommand{\gvaBbpeAllDetailed}{66.056}
\newcommand{\gvaBbpeEnDetailed}{28.716}
\newcommand{\gvaBbpeGiniDetailed}{0.22254}
\newcommand{\gvaBbpeLatinDetailed}{52.540}
\newcommand{\gvaBbpeOneSixkAllDetailed}{79.408}
\newcommand{\gvaBbpeOneSixkEnDetailed}{34.863}
\newcommand{\gvaBbpeOneSixkGiniDetailed}{0.21938}
\newcommand{\gvaBbpeOneSixkLatinDetailed}{62.373}

\newcommand{\gvaBbpeOneTwoEightkAllDetailed}{60.594}
\newcommand{\gvaBbpeOneTwoEightkEnDetailed}{27.174}
\newcommand{\gvaBbpeOneTwoEightkGiniDetailed}{0.22576}
\newcommand{\gvaBbpeOneTwoEightkLatinDetailed}{48.751}

\newcommand{\gvaBbpeSixteenAllDetailed}{72.668}
\newcommand{\gvaBbpeSixteenEnDetailed}{34.471}
\newcommand{\gvaBbpeSixteenGiniDetailed}{0.18906}
\newcommand{\gvaBbpeSixteenLatinDetailed}{67.506}
\newcommand{\gvaBbpeSixteenOneSixkAllDetailed}{95.683}
\newcommand{\gvaBbpeSixteenOneSixkEnDetailed}{47.935}
\newcommand{\gvaBbpeSixteenOneSixkGiniDetailed}{0.14588}
\newcommand{\gvaBbpeSixteenOneSixkLatinDetailed}{90.432}

\newcommand{\gvaBbpeSixteenSixFourkAllDetailed}{79.224}
\newcommand{\gvaBbpeSixteenSixFourkEnDetailed}{37.054}
\newcommand{\gvaBbpeSixteenSixFourkGiniDetailed}{0.17218}
\newcommand{\gvaBbpeSixteenSixFourkLatinDetailed}{74.313}

\newcommand{\gvaBbpeSixteenThreeTwokAllDetailed}{87.170}
\newcommand{\gvaBbpeSixteenThreeTwokEnDetailed}{41.317}
\newcommand{\gvaBbpeSixteenThreeTwokGiniDetailed}{0.15982}
\newcommand{\gvaBbpeSixteenThreeTwokLatinDetailed}{82.137}

\newcommand{\gvaBbpeSixteenTwoFiveSixkAllDetailed}{66.468}
\newcommand{\gvaBbpeSixteenTwoFiveSixkEnDetailed}{32.929}
\newcommand{\gvaBbpeSixteenTwoFiveSixkGiniDetailed}{0.19920}
\newcommand{\gvaBbpeSixteenTwoFiveSixkLatinDetailed}{61.839}

\newcommand{\gvaBbpeThreeTwokAllDetailed}{71.889}
\newcommand{\gvaBbpeThreeTwokEnDetailed}{31.160}
\newcommand{\gvaBbpeThreeTwokGiniDetailed}{0.21833}
\newcommand{\gvaBbpeThreeTwokLatinDetailed}{56.743}

\newcommand{\gvaBbpeTwoFiveSixkAllDetailed}{55.389}
\newcommand{\gvaBbpeTwoFiveSixkEnDetailed}{26.272}
\newcommand{\gvaBbpeTwoFiveSixkGiniDetailed}{0.22456}
\newcommand{\gvaBbpeTwoFiveSixkLatinDetailed}{45.100}

\newcommand{\gvaSbpeBbpeAllDetailed}{66.871}
\newcommand{\gvaSbpeBbpeEnDetailed}{29.133}
\newcommand{\gvaSbpeBbpeGiniDetailed}{0.18996}
\newcommand{\gvaSbpeBbpeLatinDetailed}{55.586}
\newcommand{\gvaSbpeBbpeOneSixkAllDetailed}{82.853}
\newcommand{\gvaSbpeBbpeOneSixkEnDetailed}{38.308}
\newcommand{\gvaSbpeBbpeOneSixkGiniDetailed}{0.19967}
\newcommand{\gvaSbpeBbpeOneSixkLatinDetailed}{66.840}

\newcommand{\gvaSbpeBbpeOneTwoEightkAllDetailed}{61.045}
\newcommand{\gvaSbpeBbpeOneTwoEightkEnDetailed}{26.337}
\newcommand{\gvaSbpeBbpeOneTwoEightkGiniDetailed}{0.19042}
\newcommand{\gvaSbpeBbpeOneTwoEightkLatinDetailed}{51.476}

\newcommand{\gvaSbpeBbpeThreeTwokAllDetailed}{74.187}
\newcommand{\gvaSbpeBbpeThreeTwokEnDetailed}{33.143}
\newcommand{\gvaSbpeBbpeThreeTwokGiniDetailed}{0.19217}
\newcommand{\gvaSbpeBbpeThreeTwokLatinDetailed}{61.177}

\newcommand{\gvaSbpeBbpeTwoFiveSixkAllDetailed}{55.220}
\newcommand{\gvaSbpeBbpeTwoFiveSixkEnDetailed}{24.357}
\newcommand{\gvaSbpeBbpeTwoFiveSixkGiniDetailed}{0.18694}
\newcommand{\gvaSbpeBbpeTwoFiveSixkLatinDetailed}{47.479}

\newcommand{\gvaSbpeUbeAllDetailed}{64.957}
\newcommand{\gvaSbpeUbeEnDetailed}{29.050}
\newcommand{\gvaSbpeUbeGiniDetailed}{0.16790}
\newcommand{\gvaSbpeUbeLatinDetailed}{55.491}
\newcommand{\gvaSbpeUbeOneSixkAllDetailed}{80.676}
\newcommand{\gvaSbpeUbeOneSixkEnDetailed}{38.099}
\newcommand{\gvaSbpeUbeOneSixkGiniDetailed}{0.18269}
\newcommand{\gvaSbpeUbeOneSixkLatinDetailed}{66.534}

\newcommand{\gvaSbpeUbeOneTwoEightkAllDetailed}{59.474}
\newcommand{\gvaSbpeUbeOneTwoEightkEnDetailed}{26.314}
\newcommand{\gvaSbpeUbeOneTwoEightkGiniDetailed}{0.16998}
\newcommand{\gvaSbpeUbeOneTwoEightkLatinDetailed}{51.359}

\newcommand{\gvaSbpeUbeThreeTwokAllDetailed}{72.137}
\newcommand{\gvaSbpeUbeThreeTwokEnDetailed}{32.973}
\newcommand{\gvaSbpeUbeThreeTwokGiniDetailed}{0.17248}
\newcommand{\gvaSbpeUbeThreeTwokLatinDetailed}{60.992}

\newcommand{\gvaSbpeUbeTwoFiveSixkAllDetailed}{54.257}
\newcommand{\gvaSbpeUbeTwoFiveSixkEnDetailed}{24.344}
\newcommand{\gvaSbpeUbeTwoFiveSixkGiniDetailed}{0.17280}
\newcommand{\gvaSbpeUbeTwoFiveSixkLatinDetailed}{47.451}

\newcommand{\gvaUbeAllDetailed}{64.448}
\newcommand{\gvaUbeEnDetailed}{28.695}
\newcommand{\gvaUbeGiniDetailed}{0.20428}
\newcommand{\gvaUbeLatinDetailed}{52.500}
\newcommand{\gvaUbeOneSixkAllDetailed}{76.410}
\newcommand{\gvaUbeOneSixkEnDetailed}{34.671}
\newcommand{\gvaUbeOneSixkGiniDetailed}{0.19274}
\newcommand{\gvaUbeOneSixkLatinDetailed}{62.163}

\newcommand{\gvaUbeOneTwoEightkAllDetailed}{59.649}
\newcommand{\gvaUbeOneTwoEightkEnDetailed}{27.158}
\newcommand{\gvaUbeOneTwoEightkGiniDetailed}{0.21401}
\newcommand{\gvaUbeOneTwoEightkLatinDetailed}{48.722}

\newcommand{\gvaUbeThreeTwokAllDetailed}{69.857}
\newcommand{\gvaUbeThreeTwokEnDetailed}{31.068}
\newcommand{\gvaUbeThreeTwokGiniDetailed}{0.19790}
\newcommand{\gvaUbeThreeTwokLatinDetailed}{56.627}

\newcommand{\gvaUbeTwoFiveSixkAllDetailed}{55.105}
\newcommand{\gvaUbeTwoFiveSixkEnDetailed}{26.271}
\newcommand{\gvaUbeTwoFiveSixkGiniDetailed}{0.22039}
\newcommand{\gvaUbeTwoFiveSixkLatinDetailed}{45.097}

\newcommand{\mgsmRobOneSixkGiniPctDetailed}{+0.65\%}
\newcommand{\mgsmRobOneTwoEightkGiniPctDetailed}{-0.03\%}
\newcommand{\mgsmRobSixFourkGiniPctDetailed}{-0.19\%}
\newcommand{\mgsmRobThreeTwokGiniPctDetailed}{+0.14\%}
\newcommand{\mgsmRobTwoFiveSixkGiniPctDetailed}{-0.06\%}
\newcommand{\sibRobOneSixkGiniPctDetailed}{-11.10\%}
\newcommand{\sibRobOneTwoEightkGiniPctDetailed}{-4.67\%}
\newcommand{\sibRobSixFourkGiniPctDetailed}{-7.40\%}
\newcommand{\sibRobThreeTwokGiniPctDetailed}{-8.50\%}
\newcommand{\sibRobTwoFiveSixkGiniPctDetailed}{-1.66\%}
\newcommand{\udhrRobOneSixkGiniPctDetailed}{-11.86\%}
\newcommand{\udhrRobOneTwoEightkGiniPctDetailed}{-9.03\%}
\newcommand{\udhrRobSixFourkGiniPctDetailed}{-9.74\%}
\newcommand{\udhrRobThreeTwokGiniPctDetailed}{-11.11\%}
\newcommand{\udhrRobTwoFiveSixkGiniPctDetailed}{-8.62\%}
\newcommand{\vdBbpeOneSixkActiveUtilDetailed}{99.825\%}
\newcommand{\vdBbpeOneSixkDepthOnePctDetailed}{12.326\%}
\newcommand{\vdBbpeOneTwoEightkActiveUtilDetailed}{99.921\%}
\newcommand{\vdBbpeOneTwoEightkDepthOnePctDetailed}{2.982\%}
\newcommand{\vdBbpeSixFourkActiveUtilDetailed}{99.939\%}
\newcommand{\vdBbpeSixFourkDepthOnePctDetailed}{4.937\%}
\newcommand{\vdBbpeThreeTwokActiveUtilDetailed}{99.900\%}
\newcommand{\vdBbpeThreeTwokDepthOnePctDetailed}{7.889\%}
\newcommand{\vdBbpeTwoFiveSixkActiveUtilDetailed}{99.905\%}
\newcommand{\vdBbpeTwoFiveSixkDepthOnePctDetailed}{1.774\%}
\newcommand{\vdSbpeBbpeOneSixkActiveUtilDetailed}{98.906\%}
\newcommand{\vdSbpeBbpeOneSixkDepthOnePctDetailed}{15.655\%}
\newcommand{\vdSbpeBbpeOneTwoEightkActiveUtilDetailed}{96.984\%}
\newcommand{\vdSbpeBbpeOneTwoEightkDepthOnePctDetailed}{4.233\%}
\newcommand{\vdSbpeBbpeSixFourkActiveUtilDetailed}{98.106\%}
\newcommand{\vdSbpeBbpeSixFourkDepthOnePctDetailed}{6.897\%}
\newcommand{\vdSbpeBbpeThreeTwokActiveUtilDetailed}{98.303\%}
\newcommand{\vdSbpeBbpeThreeTwokDepthOnePctDetailed}{10.334\%}
\newcommand{\vdSbpeBbpeTwoFiveSixkActiveUtilDetailed}{95.225\%}
\newcommand{\vdSbpeBbpeTwoFiveSixkDepthOnePctDetailed}{2.518\%}
\newcommand{\vdSbpeUbeOneSixkActiveUtilDetailed}{98.763\%}
\newcommand{\vdSbpeUbeOneSixkCrossPctDetailed}{9.307\%}
\newcommand{\vdSbpeUbeOneSixkDepthOnePctDetailed}{27.223\%}
\newcommand{\vdSbpeUbeOneSixkPuaPuaPctDetailed}{25.467\%}
\newcommand{\vdSbpeUbeOneTwoEightkActiveUtilDetailed}{96.954\%}
\newcommand{\vdSbpeUbeOneTwoEightkCrossPctDetailed}{13.802\%}
\newcommand{\vdSbpeUbeOneTwoEightkDepthOnePctDetailed}{8.474\%}
\newcommand{\vdSbpeUbeOneTwoEightkPuaPuaPctDetailed}{21.866\%}
\newcommand{\vdSbpeUbeSixFourkActiveUtilDetailed}{98.056\%}
\newcommand{\vdSbpeUbeSixFourkCrossPctDetailed}{12.696\%}
\newcommand{\vdSbpeUbeSixFourkDepthOnePctDetailed}{13.393\%}
\newcommand{\vdSbpeUbeSixFourkPuaPuaPctDetailed}{23.474\%}
\newcommand{\vdSbpeUbeThreeTwokActiveUtilDetailed}{98.216\%}
\newcommand{\vdSbpeUbeThreeTwokCrossPctDetailed}{10.755\%}
\newcommand{\vdSbpeUbeThreeTwokDepthOnePctDetailed}{19.423\%}
\newcommand{\vdSbpeUbeThreeTwokPuaPuaPctDetailed}{24.607\%}
\newcommand{\vdSbpeUbeTwoFiveSixkActiveUtilDetailed}{95.198\%}
\newcommand{\vdSbpeUbeTwoFiveSixkCrossPctDetailed}{15.931\%}
\newcommand{\vdSbpeUbeTwoFiveSixkDepthOnePctDetailed}{5.225\%}
\newcommand{\vdSbpeUbeTwoFiveSixkPuaPuaPctDetailed}{20.902\%}
\newcommand{\vdUbeOneSixkActiveUtilDetailed}{99.713\%}
\newcommand{\vdUbeOneSixkCrossPctDetailed}{9.082\%}
\newcommand{\vdUbeOneSixkDepthOnePctDetailed}{24.467\%}
\newcommand{\vdUbeOneSixkPuaPuaPctDetailed}{24.758\%}
\newcommand{\vdUbeOneTwoEightkActiveUtilDetailed}{99.907\%}
\newcommand{\vdUbeOneTwoEightkCrossPctDetailed}{11.846\%}
\newcommand{\vdUbeOneTwoEightkDepthOnePctDetailed}{7.526\%}
\newcommand{\vdUbeOneTwoEightkPuaPuaPctDetailed}{20.924\%}
\newcommand{\vdUbeSixFourkActiveUtilDetailed}{99.911\%}
\newcommand{\vdUbeSixFourkCrossPctDetailed}{10.595\%}
\newcommand{\vdUbeSixFourkDepthOnePctDetailed}{11.715\%}
\newcommand{\vdUbeSixFourkPuaPuaPctDetailed}{22.297\%}
\newcommand{\vdUbeThreeTwokActiveUtilDetailed}{99.847\%}
\newcommand{\vdUbeThreeTwokCrossPctDetailed}{9.808\%}
\newcommand{\vdUbeThreeTwokDepthOnePctDetailed}{17.298\%}
\newcommand{\vdUbeThreeTwokPuaPuaPctDetailed}{23.587\%}
\newcommand{\vdUbeTwoFiveSixkActiveUtilDetailed}{99.899\%}
\newcommand{\vdUbeTwoFiveSixkCrossPctDetailed}{13.297\%}
\newcommand{\vdUbeTwoFiveSixkDepthOnePctDetailed}{4.612\%}
\newcommand{\vdUbeTwoFiveSixkPuaPuaPctDetailed}{19.641\%}
\newcommand{\gvaBbpeCjkDetailed}{37.237}
\newcommand{\gvaBbpeOneSixkCjkDetailed}{48.521}
\newcommand{\gvaBbpeOneSixkOtherBmpDetailed}{308.522}
\newcommand{\gvaBbpeOneTwoEightkCjkDetailed}{33.353}
\newcommand{\gvaBbpeOneTwoEightkOtherBmpDetailed}{217.459}
\newcommand{\gvaBbpeOtherBmpDetailed}{250.389}
\newcommand{\gvaBbpeSixteenCjkDetailed}{35.364}
\newcommand{\gvaBbpeSixteenOneSixkCjkDetailed}{51.045}
\newcommand{\gvaBbpeSixteenOneSixkOtherBmpDetailed}{228.970}
\newcommand{\gvaBbpeSixteenOtherBmpDetailed}{201.591}
\newcommand{\gvaBbpeSixteenSixFourkCjkDetailed}{39.576}
\newcommand{\gvaBbpeSixteenSixFourkOtherBmpDetailed}{208.599}
\newcommand{\gvaBbpeSixteenThreeTwokCjkDetailed}{45.083}
\newcommand{\gvaBbpeSixteenThreeTwokOtherBmpDetailed}{222.753}
\newcommand{\gvaBbpeSixteenTwoFiveSixkCjkDetailed}{31.876}
\newcommand{\gvaBbpeSixteenTwoFiveSixkOtherBmpDetailed}{174.085}
\newcommand{\gvaBbpeThreeTwokCjkDetailed}{42.444}
\newcommand{\gvaBbpeThreeTwokOtherBmpDetailed}{275.681}
\newcommand{\gvaBbpeTwoFiveSixkCjkDetailed}{30.259}
\newcommand{\gvaBbpeTwoFiveSixkOtherBmpDetailed}{176.498}
\newcommand{\gvaSbpeBbpeCjkDetailed}{38.710}
\newcommand{\gvaSbpeBbpeOneSixkCjkDetailed}{49.655}
\newcommand{\gvaSbpeBbpeOneSixkOtherBmpDetailed}{273.239}
\newcommand{\gvaSbpeBbpeOneTwoEightkCjkDetailed}{34.349}
\newcommand{\gvaSbpeBbpeOneTwoEightkOtherBmpDetailed}{215.979}
\newcommand{\gvaSbpeBbpeOtherBmpDetailed}{231.689}
\newcommand{\gvaSbpeBbpeThreeTwokCjkDetailed}{43.595}
\newcommand{\gvaSbpeBbpeThreeTwokOtherBmpDetailed}{245.653}
\newcommand{\gvaSbpeBbpeTwoFiveSixkCjkDetailed}{30.977}
\newcommand{\gvaSbpeBbpeTwoFiveSixkOtherBmpDetailed}{187.800}
\newcommand{\gvaSbpeUbeCjkDetailed}{38.808}
\newcommand{\gvaSbpeUbeOneSixkCjkDetailed}{49.281}
\newcommand{\gvaSbpeUbeOneSixkOtherBmpDetailed}{221.587}
\newcommand{\gvaSbpeUbeOneTwoEightkCjkDetailed}{34.539}
\newcommand{\gvaSbpeUbeOneTwoEightkOtherBmpDetailed}{172.071}
\newcommand{\gvaSbpeUbeOtherBmpDetailed}{178.255}
\newcommand{\gvaSbpeUbeThreeTwokCjkDetailed}{43.496}
\newcommand{\gvaSbpeUbeThreeTwokOtherBmpDetailed}{192.162}
\newcommand{\gvaSbpeUbeTwoFiveSixkCjkDetailed}{31.223}
\newcommand{\gvaSbpeUbeTwoFiveSixkOtherBmpDetailed}{159.784}
\newcommand{\gvaUbeCjkDetailed}{37.438}
\newcommand{\gvaUbeOneSixkCjkDetailed}{48.052}
\newcommand{\gvaUbeOneSixkOtherBmpDetailed}{228.410}
\newcommand{\gvaUbeOneTwoEightkCjkDetailed}{33.596}
\newcommand{\gvaUbeOneTwoEightkOtherBmpDetailed}{189.992}
\newcommand{\gvaUbeOtherBmpDetailed}{204.090}
\newcommand{\gvaUbeThreeTwokCjkDetailed}{42.418}
\newcommand{\gvaUbeThreeTwokOtherBmpDetailed}{219.709}
\newcommand{\gvaUbeTwoFiveSixkCjkDetailed}{30.498}
\newcommand{\gvaUbeTwoFiveSixkOtherBmpDetailed}{167.665}

\newcommand{\covExOneSixkEnBbpe}{33.629}
\newcommand{\covExOneSixkEnUbe}{33.451}
\newcommand{\covExOneSixkEnDeltaPct}{-0.5\%}
\newcommand{\covExOneSixkAmharicBbpe}{219.321}
\newcommand{\covExOneSixkAmharicUbe}{140.388}
\newcommand{\covExOneSixkAmharicDeltaPct}{-36.0\%}
\newcommand{\covExOneSixkLaoBbpe}{343.176}
\newcommand{\covExOneSixkLaoUbe}{234.068}
\newcommand{\covExOneSixkLaoDeltaPct}{-31.8\%}
\newcommand{\covExOneSixkOlChikiBbpe}{363.850}
\newcommand{\covExOneSixkOlChikiUbe}{230.685}
\newcommand{\covExOneSixkOlChikiDeltaPct}{-36.6\%}
\newcommand{\covExOneSixkTeluguBbpe}{263.163}
\newcommand{\covExOneSixkTeluguUbe}{224.362}
\newcommand{\covExOneSixkTeluguDeltaPct}{-14.7\%}
\newcommand{\covExOneSixkKannadaBbpe}{354.337}
\newcommand{\covExOneSixkKannadaUbe}{238.641}
\newcommand{\covExOneSixkKannadaDeltaPct}{-32.7\%}
\newcommand{\covExOneSixkTibetanBbpe}{336.849}
\newcommand{\covExOneSixkTibetanUbe}{287.103}
\newcommand{\covExOneSixkTibetanDeltaPct}{-14.8\%}
\newcommand{\covExEnBbpe}{30.249}
\newcommand{\covExEnUbe}{30.169}
\newcommand{\covExEnDeltaPct}{-0.3\%}
\newcommand{\covExThreeTwokEnBbpe}{30.249}
\newcommand{\covExThreeTwokEnUbe}{30.169}
\newcommand{\covExThreeTwokEnDeltaPct}{-0.3\%}
\newcommand{\covExAmharicBbpe}{202.775}
\newcommand{\covExAmharicUbe}{137.539}
\newcommand{\covExAmharicDeltaPct}{-32.2\%}
\newcommand{\covExThreeTwokAmharicBbpe}{202.775}
\newcommand{\covExThreeTwokAmharicUbe}{137.539}
\newcommand{\covExThreeTwokAmharicDeltaPct}{-32.2\%}
\newcommand{\covExLaoBbpe}{342.100}
\newcommand{\covExLaoUbe}{233.445}
\newcommand{\covExLaoDeltaPct}{-31.8\%}
\newcommand{\covExThreeTwokLaoBbpe}{342.100}
\newcommand{\covExThreeTwokLaoUbe}{233.445}
\newcommand{\covExThreeTwokLaoDeltaPct}{-31.8\%}
\newcommand{\covExOlChikiBbpe}{341.287}
\newcommand{\covExOlChikiUbe}{222.707}
\newcommand{\covExOlChikiDeltaPct}{-34.7\%}
\newcommand{\covExThreeTwokOlChikiBbpe}{341.287}
\newcommand{\covExThreeTwokOlChikiUbe}{222.707}
\newcommand{\covExThreeTwokOlChikiDeltaPct}{-34.7\%}
\newcommand{\covExTeluguBbpe}{237.530}
\newcommand{\covExTeluguUbe}{220.807}
\newcommand{\covExTeluguDeltaPct}{-7.0\%}
\newcommand{\covExThreeTwokTeluguBbpe}{237.530}
\newcommand{\covExThreeTwokTeluguUbe}{220.807}
\newcommand{\covExThreeTwokTeluguDeltaPct}{-7.0\%}
\newcommand{\covExKannadaBbpe}{251.956}
\newcommand{\covExKannadaUbe}{237.506}
\newcommand{\covExKannadaDeltaPct}{-5.7\%}
\newcommand{\covExThreeTwokKannadaBbpe}{251.956}
\newcommand{\covExThreeTwokKannadaUbe}{237.506}
\newcommand{\covExThreeTwokKannadaDeltaPct}{-5.7\%}
\newcommand{\covExTibetanBbpe}{264.246}
\newcommand{\covExTibetanUbe}{247.286}
\newcommand{\covExTibetanDeltaPct}{-6.4\%}
\newcommand{\covExThreeTwokTibetanBbpe}{264.246}
\newcommand{\covExThreeTwokTibetanUbe}{247.286}
\newcommand{\covExThreeTwokTibetanDeltaPct}{-6.4\%}
\newcommand{\covExSixFourkEnBbpe}{28.078}
\newcommand{\covExSixFourkEnUbe}{28.035}
\newcommand{\covExSixFourkEnDeltaPct}{-0.2\%}
\newcommand{\covExSixFourkAmharicBbpe}{163.896}
\newcommand{\covExSixFourkAmharicUbe}{134.837}
\newcommand{\covExSixFourkAmharicDeltaPct}{-17.7\%}
\newcommand{\covExSixFourkLaoBbpe}{256.355}
\newcommand{\covExSixFourkLaoUbe}{231.798}
\newcommand{\covExSixFourkLaoDeltaPct}{-9.6\%}
\newcommand{\covExSixFourkOlChikiBbpe}{341.274}
\newcommand{\covExSixFourkOlChikiUbe}{213.132}
\newcommand{\covExSixFourkOlChikiDeltaPct}{-37.5\%}
\newcommand{\covExSixFourkTeluguBbpe}{184.658}
\newcommand{\covExSixFourkTeluguUbe}{181.593}
\newcommand{\covExSixFourkTeluguDeltaPct}{-1.7\%}
\newcommand{\covExSixFourkKannadaBbpe}{220.944}
\newcommand{\covExSixFourkKannadaUbe}{219.524}
\newcommand{\covExSixFourkKannadaDeltaPct}{-0.6\%}
\newcommand{\covExSixFourkTibetanBbpe}{253.668}
\newcommand{\covExSixFourkTibetanUbe}{247.167}
\newcommand{\covExSixFourkTibetanDeltaPct}{-2.6\%}
\newcommand{\covExStressSavingMinPct}{6}
\newcommand{\covExStressSavingMaxPct}{35}
\newcommand{\covExStressContentGainMinPct}{6}
\newcommand{\covExStressContentGainMaxPct}{53}

\newcommand{\regimeOverallReductionRange}{5.4--7.2}
\newcommand{\regimeEnglishIncreaseRange}{18.5--19.9}
\newcommand{\regimeCjkIncreaseRange}{34.7--39.6}
\newcommand{\regimeCjkUbeReductionRange}{2.4--3.8}
\newcommand{\regimeGiniUbeReductionRange}{8.5--10.5}
\newcommand{\regimePrimaryPremium}{2.314}
\newcommand{\regimePrimaryGini}{0.2183}
\newcommand{\regimePrimaryBbpeActive}{5,834}
\newcommand{\regimePrimaryUbeActive}{5,864}
\newcommand{\regimeCappedPremium}{1.841}
\newcommand{\regimeCappedGini}{0.2281}
\newcommand{\regimeCappedBbpeActive}{4,500}
\newcommand{\regimeCappedUbeActive}{4,504}
\newcommand{\regimeMaskedConfigurations}{16}
\newcommand{\regimeMinLower}{181}
\newcommand{\regimeMaxLower}{185}
\newcommand{\regimeCjkBbpeThreePiecePrimaryRange}{0.2--0.6}
\newcommand{\regimeCjkBbpeThreePieceCappedRange}{3.5--5.5}
\newcommand{\regimeCjkUbeWholeSplitPrimaryRange}{1.7--2.2}
\newcommand{\regimeCjkBbpeWholeSplitPrimaryRange}{0.1--0.7}
\newcommand{\regimeEnglishSharePrimary}{29.2}
\newcommand{\regimeEnglishShareCapped}{1.0}
\newcommand{\regimeCjkSharePrimary}{26.5}
\newcommand{\regimeCjkShareCapped}{3.0}
\newcommand{\regimeHindiThaiSharePrimary}{17.4}
\newcommand{\regimeHindiThaiShareCapped}{2.0}
\newcommand{\regimeRemainingSharePrimary}{26.8}
\newcommand{\regimeRemainingShareCapped}{94.1}
\newcommand{\regimeBurmeseSharePrimary}{0.3}
\newcommand{\regimeBurmeseShareCapped}{1.0}
\newcommand{\regimeCjkSplitPrimaryRange}{9.7--13.8}
\newcommand{\regimeCjkSplitCappedRange}{34.3--38.7}
\newcommand{\regimeOtherBmpMaxChange}{3.9}
\newcommand{\regimeMyanmarReductionRange}{6.6--19.3}
\newcommand{\regimeAbsentScriptMaxChange}{7.5}
\newcommand{\regimeLatinArabicCyrillicDropShareRange}{96--104}

\newif\ifcodereleased
\newcommand{\codeurl}{}
\newcommand{\codeavailability}{\ifcodereleased The UBE tokenizer implementation, the Unicode fidelity audit, and the MYTE-UBE transport and parity-verification code are available at \expandafter\url\expandafter{\codeurl}.\else We plan to release the UBE tokenizer implementation, the Unicode fidelity audit, and the MYTE-UBE transport and parity-verification code publicly.\fi}

\title{Universal Byte-Level Encoding: UTF-8/UTF-16 Routing to Reduce Cross-Script Token-Budget Disparities}

\author{%
  Hyunsik Kim \\
  Samsung Research \\
  \texttt{hyunsik777.kim@samsung.com}
  \And
  Youngmoon Jung \\
  Samsung Research \\
  \texttt{youngm.jung@samsung.com}
}

\begin{document}

\maketitle

\begin{abstract}
Byte-level byte-pair encoding (BBPE) tokenizers are attractive for large language models (LLMs) in multilingual settings because they cover all Unicode text. In \utfEightName{}-based BBPE, however, many scripts start from a higher byte-level fallback cost than English: when no learned merges can be applied, a multibyte character requires multiple byte-derived symbols. We call this worst-case pre-merge cost the \emph{encoding floor}. A higher floor can increase token counts, shrink usable context, and increase per-request cost. Changing the underlying text encoding can reduce this gap, but a single global encoding can also make already-efficient English spans more expensive in mixed-script text. We propose Universal Byte-Level Encoding (UBE), a dual-alphabet tokenizer that keeps characters encoded in 1--2 \utfEightName{} bytes on the \utfEightName{} path while routing 3--4-byte \utfEightName{} characters through \utfSixteenName{}. This lowers the encoding floor for 3-byte Basic Multilingual Plane (BMP) characters in scripts with high token premiums (token counts relative to English) without raising it for already-efficient spans in mixed-script text. UBE changes only the byte representation presented to byte-pair encoding (BPE): the BPE merge rule remains standard, and exact decoding is preserved. UBE also composes with alternative boundary policies and morphology-based representations. In a Unicode 17 audit, UBE exactly round-trips all Unicode scalar values and all inputs in the official normalization, grapheme-break, and emoji test suites. Across intrinsic tokenization evaluations, UBE lowers dispersion in English-normalized token-count ratios, reducing cross-lingual token-budget disparity. In multilingual language model (LM) experiments, UBE matches BBPE's LM quality. In the main multilingual settings, UBE reduces token counts most for high-premium scripts and also slightly lowers English token counts; these reductions translate into more usable context under fixed token budgets and faster prompt processing in content-matched benchmarks.
\end{abstract}

\section{Introduction}

Tokenization is often treated as a preprocessing detail, but in multilingual language modeling it also determines how fixed context and compute budgets are spent across languages. When a tokenizer fragments one language into many more units than another, semantically comparable content consumes more of the usable context window and requires processing longer sequences.

For autoregressive transformers, prompt token count determines the sequence length processed during prefill, so cross-lingual differences in tokenized length, which we call token-budget disparity under a fixed tokenizer, directly affect usable context, latency, and prompt-processing cost. This disparity is not merely a side effect of multilingual data imbalance; it is also a property of tokenizer design itself~\citep{petrov2023unfairness,arnett2025explaining}.

We use fairness in an operational sense: under a shared tokenizer, semantically comparable text should not require systematically different token budgets solely because characters from different scripts have different per-character byte-encoding costs. Even when high-resource languages dominate overall usage, this disparity remains operationally consequential because it reduces usable context under fixed token budgets and affects latency and per-request cost for users writing in scripts with higher byte costs. As multilingual large language models (LLMs) are deployed across APIs, embedded applications, and instruction-tuned assistants, the same disparity also becomes a pricing concern in pay-per-token interfaces~\citep{ahia2023languages,artola2025overcharging}.

\citet{petrov2023unfairness} show that, under widely deployed tokenizers such as GPT-4's cl100k\_base, parallel Shan text can require up to $15\times$ as many tokens as English; by our calculation, a fixed 128K-token context then holds only as much Shan content as about 8.5K English tokens. The problem is especially acute for compact or on-device language models (LMs), where English-centric compact models often use 32K--50K tokenizers~\citep{allal2025smollm2,liu2024mobilellm}; prior work also suggests that tokenizer quality matters more at smaller model scales~\citep{goldman2024unpacking}. Under a compact 16K vocabulary on \floresTwoHundredName{}~\citep{costajussa2022nllb}, a byte-level byte-pair encoding (BBPE) tokenizer averages \gvaBbpeOneSixkOtherBmpDetailed{} tokens per parallel sentence for seven Tibetan, Myanmar, Ol~Chiki, and Tifinagh configurations, whose 3-byte Basic Multilingual Plane (BMP) scripts have little dedicated training data, but only \gvaBbpeOneSixkEnDetailed{} for English (Appendix Table~\ref{tab:gini-vs-tokens}), so the same context window holds only about one-ninth as much parallel content in these scripts after normalizing by English token counts. Tokenizer design therefore becomes especially consequential for compact multilingual LMs serving mixed-script users under fixed context budgets.

BBPE is a useful setting for isolating this issue because it is simple, reversible, and language-agnostic: every Unicode string can be represented using a fixed \bbpeAlphabetSize{}-symbol byte alphabet, and byte-pair encoding (BPE) merges can be learned without language-specific segmentation rules. This simplicity, however, comes with an asymmetry inherited from \utfEightName{}: before any merges are learned, characters from different scripts are represented by byte sequences of different lengths. In BBPE, each character is first decomposed into \utfEightName{} bytes, and BPE merges then combine frequent adjacent byte-derived symbols into larger tokens. Because a finite multilingual vocabulary cannot cover all subword patterns that arise across languages and text domains, low-coverage spans continue to fall back toward byte-level decomposition. We define the \emph{encoding floor} for this substrate-imposed fallback as a scalar-level fallback cost: the number of base byte symbols required for one Unicode scalar value under the underlying byte encoding when no learned merge covers it. It is not a lower bound on post-BPE token counts; learned merges can compress frequent characters and multi-character spans below the floor, while underrepresented spans often remain close to it. Under \utfEightName{}, this floor is script- and code-point-dependent (Section~\ref{sec:method}, Table~\ref{tab:encoding-floor}).

\looseness=-1
The encoding floor is a scalar-level, pre-merge analogue of the corpus-level \emph{byte premium}~\citep{arnett2024bitproblem}, which measures \utfEightName{} byte-length ratios between content-matched texts. Larger vocabularies absorb more frequent patterns, although added tokens become progressively less frequent~\citep{reddy2025diminishing}, and token-premium disparities persist across comparably trained monolingual tokenizers even as vocabulary size increases~\citep{arnett2025explaining}; spans that no learned merge covers still pay the full encoding floor, motivating our aligned token premiums and English-normalized dispersion metrics.

Recent work shows that replacing the byte-encoding substrate can lower this floor for some scripts~\citep{kim2026bbpe16}; however, a uniform replacement can also raise costs for spans already efficient under \utfEightName{} in mixed-script text. In this work, we propose Universal Byte-Level Encoding (UBE), a dual-alphabet BBPE tokenizer with deterministic per-character routing based on \utfEightName{} byte length. Characters encoded in 1--2 \utfEightName{} bytes stay on the \utfEightName{} path, while 3-byte BMP characters are routed through \utfSixteenName{}, where they require 2 bytes. Because routing is per character rather than per language, UBE lowers the encoding floor for scripts most exposed to byte-level fragmentation without imposing the uniform 2-byte floor on already-efficient spans in mixed-script text. The method leaves BPE training and merge learning unchanged, preserves exact round-trip decoding for encoded text, and requires no language identification or model-architecture changes.

\section{Related work}
\citet{petrov2023unfairness} show on parallel text that tokenized length can vary sharply across languages, and \citet{ahia2023languages} show that the same content-matched disparity appears in API pricing. We therefore use aligned sentence-level token premiums as our main intrinsic measure of cross-lingual token-budget disparity (Section~\ref{sec:experiments}). Follow-up work links tokenization disparity to unequal utility, downstream performance, latency, and infrastructure/access costs~\citep{lundin2025tokentax,dixit2026scripttax,teklehaymanot2025infrastructure}.

\citet{arnett2025explaining} analyze causes and mitigations of cross-lingual tokenizer inequity, while \citet{shani2026roots} situate tokenizer choices among broader design choices behind multilingual performance gaps. Related metrics quantify disparity at the tokenizer level (Single Token Retention Rate~\citep{nayeem2025strr}), downstream LM-task level (Performance Realization Ratio~\citep{hu2025quantifying}), and dialectal NLP task level (Tokenization Parity~\citep{petrov2023unfairness}, as applied by \citet{kanjirangat2025tokenization}). \citet{nayeem2026tokenizers} argue for tokenizer--model co-design and standardized evaluation.

SentencePiece supports BPE and unigram segmentation directly from raw text~\citep{kudo2018sentencepiece}; it is not restricted to byte-based representations. Mitigations fall into three groups. Within-substrate methods change merge or segmentation objectives, pre-tokenization, vocabulary allocation, or post-hoc vocabulary additions while keeping the byte encoding fixed~\citep{liu2025superbpe,schmidt2025boundless,abagyan2025onetokenizer,churchill2026reducing}; these can compose with UBE but leave the \utfEightName{} encoding floor unchanged. Substrate-changing methods alter the input symbols: \bbpeSixteenName{} applies \utfSixteenName{} uniformly~\citep{kim2026bbpe16}, SCRIPT-BPE combines a script-aware representation with script/category boundaries and constrained merges~\citep{land2025script}, MYTE learns morphology-driven byte codes~\citep{limisiewicz2024myte}, and Bit-level BPE re-splits the byte stream at the bit level~\citep{moon2025bitbpe}. UBE instead routes each character between \utfEightName{} and \utfSixteenName{} inside one reversible tokenizer; Section~\ref{sec:alternative-tokenizer-comparisons} and Appendix~\ref{app:comparative-analyses} compare these representation, boundary, and morphology-based alternatives at the tokenizer and model levels. Tokenization-free and byte-level models operate directly on characters or bytes: CANINE~\citep{clark2022canine}, ByT5~\citep{xue2022byt5}, BLT~\citep{pagnoni2025blt}, MAGNET~\citep{ahia2024magnet}, and H-Net~\citep{hwang2025hnet}; they reduce reliance on fixed tokenizers but change the model's architecture or input granularity, whereas UBE preserves standard BPE training and exact decoding. The appendix surveys further byte-level architectures (Appendix~\ref{app:extended-related-work}).

\section{Background and method}
\label{sec:method}

\begin{table}[H]
\caption{Encoding-floor values before BPE merges. \bbpeSixteenName{}~\citep{kim2026bbpe16} is a \utfSixteenName{}-only baseline; bold marks UBE's target trade-off: ASCII stays at the 1-byte \utfEightName{} floor, while 3-byte BMP scripts (e.g., CJK, Devanagari, and Thai) fall from 3 to 2 via the \utfSixteenName{} Private Use Area (PUA) path; 4-byte non-BMP code points remain at a 4-symbol floor. CJK stands for Chinese, Japanese, and Korean.}
\label{tab:encoding-floor}
\centering
\small
\begin{tabular}{lcccc}
\toprule
 & ASCII & Cyrillic/Arabic/Hebrew & \shortstack{CJK,\\Devanagari, Thai} & Non-BMP \\
\midrule
BBPE & 1 & 2 & 3 & 4 \\
\bbpeSixteenName{} & 2 & 2 & 2 & 4 \\
UBE & \textbf{1} & 2 & \textbf{2} & 4 \\
\bottomrule
\end{tabular}

\end{table}

\paragraph{Encoding floor under standard BBPE.}
BBPE~\citep{wang2020bbpe} converts text to \utfEightName{} bytes and learns BPE merges over the resulting byte stream; in \gptTwoName{}-style implementations, following the encoder released with \gptTwoName{}~\citep{radford2019gpt2}, each byte is first mapped to a reversible visible symbol. Under the encoding-floor definition introduced above, BBPE's floor equals the \utfEightName{} byte length: \utfEightAsciiFloor{} for ASCII, \utfEightMidFloor{} for many Cyrillic, Arabic, and Hebrew characters, \utfEightHighFloor{} for Chinese, Japanese, and Korean (CJK) and other 3-byte scripts, and \utfEightNonBmpFloor{} for non-BMP code points (Table~\ref{tab:encoding-floor}).

\begin{figure}[t]
\centering
\includegraphics[width=0.85\linewidth]{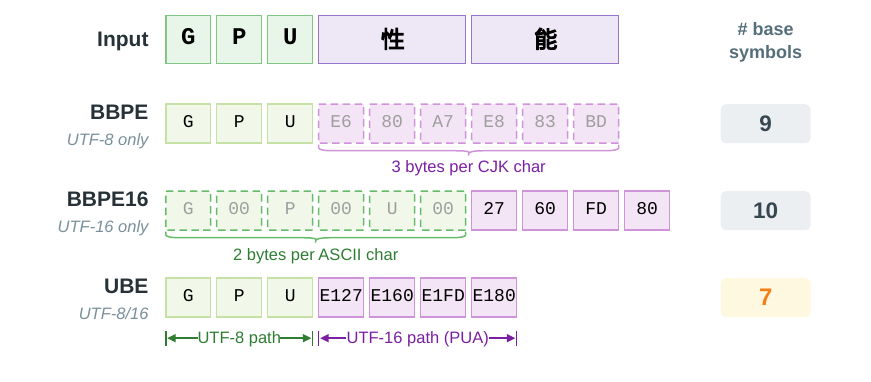}
\caption{Pre-merge base-symbol decomposition of ``GPU\begin{CJK}{UTF8}{gbsn}性能\end{CJK}''. UBE, BBPE, and \bbpeSixteenName{} use 7, 9, and 10 base symbols, respectively. PUA denotes the Private Use Area; Appendix~\ref{app:ube-architecture} gives the full algorithm.}
\label{fig:ube-pipeline}
\end{figure}

\paragraph{Design target and compatibility constraints.}
UBE targets one specific source of multilingual tokenization disparity in BBPE under finite vocabularies: 3-byte BMP scripts that remain close to byte fallback continue to pay a higher per-character cost because their encoding floor (\utfEightHighFloor{} bytes) exceeds the \utfEightAsciiFloor{}- and \utfEightMidFloor{}-byte cases (Table~\ref{tab:encoding-floor}). Non-BMP 4-byte code points, including many emoji, are routed through the same deterministic branch but remain 4 base symbols as \utfSixteenName{} surrogate pairs; UBE therefore matches BBPE's 4-symbol encoding floor for these characters, and unless otherwise noted we focus on BMP 3-byte scripts. To preserve compatibility with existing BPE pipelines, we impose three constraints: exact reversibility, unchanged BPE training and vocabulary learning, and no explicit marker tokens for encoding choice, whose signal could later be absorbed by BPE merges.

\paragraph{UBE: universal byte-level encoding.}
\bbpeSixteenName{}~\citep{kim2026bbpe16} demonstrated that replacing \utfEightName{} with \utfSixteenName{} lowers the CJK floor to \utfEightMidFloor{} while preserving a language-agnostic byte-level representation, which is well matched to multilingual automatic speech recognition (ASR) and other settings dominated by CJK text. In the mixed-script LM setting studied here, however, the same global substrate also raises the ASCII floor to \utfEightMidFloor{} (Table~\ref{tab:encoding-floor}; Figure~\ref{fig:ube-pipeline} illustrates the contrast on a mixed-script example). UBE avoids this trade-off with a single deterministic \utfEightName{}/\utfSixteenName{} routing rule based on the number of bytes in each character's \utfEightName{} encoding:
\[
r(c)=
\begin{cases}
\text{\utfEightName{}} & \text{if } |\mathrm{utf8}(c)| \leq \ubeRouteUtfEightMaxBytes,\\
\text{\utfSixteenName{}} & \text{otherwise.}
\end{cases}
\]
Thus, the routing rule forms a complete partition of Unicode scalar values. The \utfSixteenName{} branch serializes code units as UTF-16LE without a byte-order mark. Here, ``character'' means one Unicode scalar value rather than a user-perceived grapheme cluster; combining-mark sequences and zero-width joiner sequences are routed scalar by scalar, with no additional normalization. Two disjoint \bbpeAlphabetSize{}-symbol alphabets---the \gptTwoName{} byte alphabet for \utfEightName{} and a Private Use Area (PUA) block (U+E100--E1FF) for \utfSixteenName{}\footnote{Using this PUA range for internal byte symbols does not introduce ambiguity: input characters in the same range are first encoded as UTF-16LE bytes and then mapped to those symbols (Appendix~\ref{app:encoding-example}).}---let the decoder infer the active encoding from each symbol's alphabet membership, without marker tokens. The full encoder/decoder details, including pseudocode (Algorithms~\ref{alg:ube-encode} and~\ref{alg:ube-decode}) and dual-buffer flush-on-switch decoding, are in Appendix~\ref{app:ube-architecture}.

\paragraph{BPE compatibility and exact decoding.}
UBE changes the initial symbol stream presented to a standard BPE trainer: the base alphabet becomes the \ubeAlphabetSize{}-symbol union of both byte alphabets, and the vocabulary is learned from that routed stream. The pair-counting interface and merge-selection criterion remain the standard BPE ones; UBE introduces no script-aware merge rule or marker token. UBE therefore wraps an unchanged BPE core with a routing pre-tokenizer, disjoint base alphabet, and custom dual-buffer tokenizer runtime; the Transformer architecture is unchanged. Proposition~\ref{prop:roundtrip} in Appendix~\ref{app:decoder} proves exact round-trip decoding. Empirically, UBE has no decoding failures on \floresTwoHundredName{} across \appendixDecodeVerifiedLangCount{} languages (\appendixDecodeVerifiedPassRate{} pass rate) and recovers every tested raw input in an exhaustive Unicode 17 audit spanning all Unicode scalar values and the official normalization, grapheme-break, and emoji test suites. Among the audited tokenizers, only UBE recovers every input: MYTE recovers 1,138,306 of the 1,155,724 inputs, and SCRIPT-BPE applies Unicode Normalization Form C (NFC) and filters Private-Use scalars, unassigned scalars, and Unicode 17 additions (Appendix Table~\ref{tab:cr-unicode-fidelity}).

\paragraph{Alphabet size and marker-free design.}
Keeping ASCII at a 1-byte floor while mapping common BMP 3-byte \utfEightName{} characters to a 2-byte path requires the decoder to distinguish which byte encoding produced each base symbol. UBE uses a \ubeAlphabetSize{}-symbol disjoint union alphabet as a simple marker-free construction under unrestricted BPE merges: BPE may form merge tokens whose expanded base symbols span both alphabets (e.g., a Latin--CJK adjacent pair within a single regex chunk); after merge-token expansion, alphabet membership identifies the decoder branch internal to the merge token, so the dual-buffer decoder flushes at the alphabet boundary even when merges span routing boundaries (Appendix~\ref{app:vocab-diagnostics} reports cross-alphabet merge frequency across vocabulary sizes).

\paragraph{Vocabulary overhead.}
The trade-off is vocabulary budget: with total vocabulary size $V$ and $S=5$ shared special tokens, UBE leaves $(V-S-\ubeAlphabetSize)$ merge slots instead of $(V-S-\bbpeAlphabetSize)$ for BBPE---the same \bbpeAlphabetSize{}-slot difference, an overhead of roughly $\bbpeAlphabetSize{}/V$, still \paperUnderOnePct{} at typical LM vocabulary sizes ($V \geq \paperVocabThreeTwok{}$).

\section{Experiments}
\label{sec:experiments}

\subsection{Intrinsic tokenizer evaluation}

\paragraph{Metrics and evaluation scope.}
Our primary intrinsic metric is the language-level token premium computed from sentence-level token counts on aligned text. For a tokenizer $\tau$, language $\ell$, and aligned sentence set $S_\ell$, let $m_\tau(\ell) = \frac{1}{|S_\ell|}\sum_{s \in S_\ell} |\tau(s)|$ denote the mean number of tokens per sentence. We define the premium as $\pi_\tau(\ell) = m_\tau(\ell) / m_\tau(\text{en})$; thus $\pi_\tau(\ell)>1$ means that aligned content occupies more positions in the context window than English under the same tokenizer. For dispersion statistics, the English reference row is excluded, so Gini and variance are computed over $\{\pi_\tau(\ell): \ell \neq \text{en}\}$ (definitions in Appendix~\ref{app:intrinsic-significance}). For the additional robustness corpora, without using sentence alignment, we compute $m_\tau(\ell)$ as the mean token count over the available text units (nonempty UDHR line segments, SIB-200 examples, or MGSM questions) and form the same English-normalized non-English ratios before computing Gini; these are descriptive corpus-level comparisons rather than content-matched sentence premiums. Because BBPE tokenizers produce a tokenization for arbitrary Unicode strings rather than emitting unknown tokens~\citep{wang2020bbpe,radford2019gpt2}, we describe such text as byte-fallback-heavy rather than out-of-vocabulary. These tokenizer-level statistics quantify allocation under fixed context and compute budgets: for semantically matched inputs, they determine how much context capacity and prefill computation each language consumes.

\paragraph{Tokenizer setup and evaluation.}
In the core UBE--BBPE comparison, both variants use the same BPE pipeline; only the initial symbol stream differs. We compare three fixed mC4~\citep{xue2021mt5} raw-text shard configurations: \texttt{En-only} (encoding-stress), \texttt{6-lang} (\{en, zh, ko, ja, hi, th\}, narrow-coverage), and \texttt{101-lang} (full-coverage, used for the main matched LM comparison). The core matrix trains all three configurations at 16K/32K/64K vocabularies, and the \texttt{101-lang} vocabulary-scaling sweep extends to 128K and 256K; within each configuration--vocabulary pair, the raw input data are held fixed across tokenizer families so token count is a measured outcome. The primary intrinsic benchmark is the FLORES-200 devtest split (1,012 parallel sentences $\times$ 204 language--script configurations, hereafter languages); pipeline details (shard-local counting, singleton filtering, merge aggregation), corpus construction, and per-configuration token-count statistics are in Appendix Sections~\ref{app:tokenizer-training} and~\ref{app:corpus-accounting}.

\paragraph{FLORES-200 results.}
UBE reduces the Gini coefficient of token premiums in every setting of the \texttt{En-only}, \texttt{6-lang}, and \texttt{101-lang} $\times$ 16K/32K/64K core matrix, and also in the \texttt{101-lang} 128K and 256K extensions.
Across the nine core comparisons, relative reductions range from \paperStatGiniReductionMinPct{} to \paperStatGiniReductionMaxPct{} (Appendix Table~\ref{tab:config-ablation-app}).
The relative Gini reduction is largest at \texttt{En-only/16K} and smallest at \texttt{6-lang/64K}.
Appendix Tables~\ref{tab:config-ablation-app} and~\ref{tab:gini-vs-tokens} give the full configuration and vocabulary sweeps.
The three-way matrix identifies vocabulary size as the dominant source of variation: broader training coverage narrows the UBE--BBPE gap only after the merge budget is already moderate.
Even at \texttt{101-lang/16K}, however, UBE still lowers Gini (\oneOhOnevOneSixkGiniPct{}), showing that merge-budget dilution across many scripts keeps many languages near the encoding floor.
A paired language bootstrap over non-English language-level premiums directly supports the dispersion claim: the 95\% confidence interval (CI) for $\Delta$Gini (UBE $-$ BBPE) is below zero in all nine core comparisons, including [\OneZeroOnelangvThreeTwoZeroZeroZeroGiniCIlo{}, \OneZeroOnelangvThreeTwoZeroZeroZeroGiniCIhi{}] in the primary \texttt{101-lang/32K} setting (Appendix Table~\ref{tab:cr-intrinsic-statistics}).
As a separate paired token-count check, two-sided Wilcoxon signed-rank tests over language-level mean token counts are significant in \paperStatWSigCount{}/9 core comparisons at $p < 0.01$.
The same \paperStatWSigCount{}/9 token-count decision holds after Holm correction at $\alpha=0.05$ and Benjamini--Hochberg false discovery rate (FDR) correction at $q=0.05$ over the nine prespecified core comparisons.
The only non-significant token-count comparison is \texttt{En-only/64K}, where UBE and BBPE have nearly matched language-level token counts ($p = \paperStatWMaxP{}$).

\paragraph{Encoding-floor mechanism under vocabulary scaling.}
Under \texttt{101-lang/32K}, UBE reduces Gini from \oneOhOnevThreeTwokGiniBBPE{} to \oneOhOnevThreeTwokGiniUBE{} (\oneOhOnevThreeTwokGiniPct{}) with a mean UBE/BBPE ratio of \oneOhOnevThreeTwokMeanRatio{}. The largest token-count reductions occur in low-coverage 3-byte scripts such as Santali (Ol~Chiki), Central Atlas Tamazight (Tifinagh), and Tibetan, where UBE reduces the byte-level fallback cost from 3 bytes to 2 bytes per character. Across all 204 languages, UBE produces fewer tokens in \oneOhOnevThreeTwokBetter{} languages and more tokens in \oneOhOnevThreeTwokWorse{} languages.
Figure~\ref{fig:gini-vs-tokens} shows that UBE's Gini stays below BBPE's from 16K to 256K, although the gap narrows at larger vocabularies (\oneOhOnevTwoFiveSixkGiniPct{} at 256K); Appendix Table~\ref{tab:gini-vs-tokens} reports the full numerical sweep. The overall upward trend in Gini reflects heterogeneous compression gains among non-English languages (Appendix Sections~\ref{app:gini-rises} and~\ref{app:token-cost-model}). Because the half-data ablation (Appendix~\ref{app:halfdata-ablation}) preserves per-language sampling proportions, it tests corpus-size effects rather than language-mixture effects; the near-overlap of full- and half-data curves across all four tokenizer families argues that the scaling pattern is not explained simply by insufficient tokenizer-training volume. Appendix~\ref{app:per-lang-distribution} provides the per-language distribution view, and Appendix~\ref{app:script-family} provides the script-family view of the same effect.

\begin{figure}[t]
\centering
\resizebox{\linewidth}{!}{\input{figures/gini_vs_tokens.pgf}}
\caption{Token-premium Gini vs.\ total FLORES-200 token count (\texttt{101-lang}; 16K--256K vocabularies). S-UBE and S-BBPE apply SuperBPE-style boundary relaxation to UBE and BBPE, respectively. Public LLM tokenizers are reference points (Appendix Table~\ref{tab:public-llm-gini}); lower-left is better.}
\label{fig:gini-vs-tokens}
\end{figure}

\paragraph{Cross-dataset robustness and contamination.}
To test robustness on additional evaluation sets, we repeat the \texttt{101-lang/32K} comparison on three additional corpora (Table~\ref{tab:cross-dataset-robustness}). The \href{https://github.com/nltk/nltk_data/blob/gh-pages/packages/corpora/udhr2.xml}{NLTK Unicode UDHR corpus} (\texttt{udhr2}; \udhrRobLangs{} entries, many with no mC4 training data) shows a larger Gini reduction under UBE (\udhrRobThreeTwokGiniPct{}) than on FLORES-200 (\floresRobThreeTwokGiniPct{}), consistent with the encoding-floor mechanism being most visible where many entries remain near the byte-level fallback floor. Because UDHR is widely reproduced and our mC4-overlap audit finds frequent UDHR prefix matches, we treat it as a public-text stress test rather than a clean held-out set (Appendix~\ref{app:contamination}). SIB-200~\citep{adelani2024sib200}, a topic-labeled subset of FLORES-200 sentences, also improves (\sibRobThreeTwokGiniPct{}), while MGSM~\citep{shi2022mgsm} is effectively unchanged (\mgsmRobThreeTwokGiniPct{}), as expected for its small set of languages, all of which appear in the 101-language mC4 corpus. Appendix Table~\ref{tab:cross-dataset-vocab} reports the full 16K--256K sweep for all four corpora, including the 32K row summarized here. A scan of all mC4 training shards (held-out evaluation shards excluded) finds no exact FLORES-200 matches across \contamDocsM{} documents and \contamLangsChecked{} languages; a \contamPrefixChars{}-character prefix-substring scan finds \contamFloresSubstringOverlaps{} FLORES-200 overlaps among \contamFloresSubstringEligibleSentences{} eligible sentences (Appendix~\ref{app:contamination}).

\paragraph{Alphabet separation trade-off.}
\label{sec:alphabet-tradeoff}
UBE's dual alphabet separates the byte identities of the \utfEightName{} and \utfSixteenName{} paths, so it can lose some shared-byte compression relative to BBPE\@.
Its \ubeAlphabetSize{}-symbol initial alphabet also consumes \bbpeAlphabetSize{} merge slots that standard BBPE uses for subword merges.
The \texttt{En-only} regime tests the worst-case cost: with only English training data, UBE wins just \eovOneSixkBetter{} of 204 languages at 16K, because most languages cannot recoup the lost merge slots.
Two observations mitigate this.
First, typical regressions are small: at \texttt{En-only/32K}, the median token-count increase for languages where UBE is worse is just \eovThreeTwokWorseMedianDelta{} tokens per sentence (mean \eovThreeTwokWorseMeanDelta{}), while UBE's token-count reductions on scripts that remain near the encoding floor reach up to \eovThreeTwokMaxGainDelta{} tokens per sentence.
Second, the Gini reduction at \texttt{En-only/16K} (\eovOneSixkGiniPct{}) confirms that UBE concentrates its benefits precisely where disparity is largest.
The coverage threshold for the PUA alphabet is low: moving from \texttt{En-only} to just 6 languages spanning Latin text and several 3-byte BMP script families jumps UBE's per-language wins from \eovOneSixkBetter{}/204 to \fullvOneSixkBetter{}/204 at 16K.
At \texttt{6-lang/32K}, UBE wins \fullvThreeTwokBetter{}/204 languages.
At \texttt{101-lang/32K}, it wins \oneOhOnevThreeTwokBetter{}/204, showing that even modest script coverage enables the PUA alphabet to learn productive merges and offset the alphabet overhead.
Word-level token composition and decomposition examples are collected in Appendix~\ref{app:token-composition}.
Vocabulary diagnostics support the capacity interpretation: at \texttt{101-lang/32K}, active utilization remains nearly identical to BBPE (\vdBbpeActiveUtil{} vs.\ \vdUbeActiveUtil{}), dead merges fall from \vdBbpeDeadMerges{} to \vdUbeDeadMerges{}, and depth-1 merges roughly double (\vdBbpeDepthOnePct{}$\to$\vdUbeDepthOnePct{}), suggesting that the extra alphabet capacity is not simply stranded (Appendix~\ref{app:vocab-diagnostics}).

\begin{table}[t]
\caption{Cross-dataset token-premium Gini (\texttt{101-lang/32K}). UDHR is a public-text stress test with prefix overlaps. The full 16K--256K sweep is in Appendix Table~\ref{tab:cross-dataset-vocab}.}
\label{tab:cross-dataset-robustness}
\centering
\small
\begin{tabular}{lrrrrl}
\toprule
Dataset & Eval.\ units & UBE Gini & BBPE Gini & $\Delta$Gini & Note \\
\midrule
FLORES-200 & \floresRobLangs{} & \floresRobThreeTwokGiniUBE{} & \floresRobThreeTwokGiniBBPE{} & \floresRobThreeTwokGiniPct{} & parallel sentences \\
UDHR & \udhrRobLangs{} & \udhrRobThreeTwokGiniUBE{} & \udhrRobThreeTwokGiniBBPE{} & \udhrRobThreeTwokGiniPct{} & public-text stress test \\
SIB-200 & \sibRobLangs{} & \sibRobThreeTwokGiniUBE{} & \sibRobThreeTwokGiniBBPE{} & \sibRobThreeTwokGiniPct{} & topic classification \\
MGSM & \mgsmRobLangs{} & \mgsmRobThreeTwokGiniUBE{} & \mgsmRobThreeTwokGiniBBPE{} & \mgsmRobThreeTwokGiniPct{} & math reasoning \\
\bottomrule
\end{tabular}
\vspace{-8pt}
\end{table}

\paragraph{Gini vs.\ absolute token counts.}
Gini alone does not capture absolute token cost: \bbpeSixteenName{} achieves the lowest Gini (\gvaBbpeSixteenThreeTwokGini{}), consistent with its CJK-heavy ASR target domain, but its uniform 2-byte floor narrows relative premiums at the cost of higher absolute token counts for English (\gvaBbpeThreeTwokEnDetailed{}$\to$\gvaBbpeSixteenThreeTwokEnDetailed{}) in mixed-script LM settings. At the 32K main-text point (Figure~\ref{fig:gini-vs-tokens}; full numerical breakdown in Appendix Table~\ref{tab:gini-vs-tokens}), UBE reduces Gini while also slightly lowering the overall average token count (\gvaBbpeThreeTwokAllDetailed{}$\to$\gvaUbeThreeTwokAllDetailed{}). SuperBPE-style boundary relaxation is complementary to UBE but shifts the trade-off: S-BBPE lowers Gini further at 32K (\gvaSbpeBbpeThreeTwokGini{} vs.\ \gvaUbeThreeTwokGini{}) while slightly inflating English tokens (\gvaBbpeThreeTwokEnDetailed{}$\to$\gvaSbpeBbpeThreeTwokEnDetailed{}), and S-UBE lowers Gini further still while plain UBE keeps fewer total FLORES-200 tokens at 16K--64K. Unlike these alternatives, UBE does not increase English cost: in the main multilingual settings (\texttt{6-lang} and \texttt{101-lang}; \texttt{En-only} excluded), it lowers English and Other BMP token counts at every vocabulary size while keeping CJK counts within 1\% of BBPE, and at 16K--64K it has the lowest total FLORES-200 count among the standalone methods tested. English's base stream is unchanged; the small difference emerges from other scripts changing global pair counts and merge competition, not its encoding floor. Appendix Tables~\ref{tab:gini-vs-tokens} and~\ref{tab:gini-vs-tokens-superbpe} report the full 16K--256K standalone and composition sweeps.

\begin{table}[b]
\vspace{-4pt}
\caption{Representative per-language token counts on \floresTwoHundredName{} (\texttt{101-lang/32K}; mean tokens per sentence). Other vocabulary sizes appear in Appendix Table~\ref{tab:rep-language-vocab}.}
\label{tab:rep-language-main}
\centering
\small
\setlength{\tabcolsep}{4pt}
\begin{tabular}{llrrr}
\toprule
Language & Script & UBE & BBPE & $\Delta$\% \\
\midrule
Santali & Ol~Chiki & \repLangSantaliUbe{} & \repLangSantaliBbpe{} & \repLangSantaliDeltaPct{} \\
Central Atlas Tamazight & Tifinagh & \repLangTamazightUbe{} & \repLangTamazightBbpe{} & \repLangTamazightDeltaPct{} \\
Tibetan & Tibetan & \repLangTibetanUbe{} & \repLangTibetanBbpe{} & \repLangTibetanDeltaPct{} \\
Arabic & Arabic & \repLangArabicUbe{} & \repLangArabicBbpe{} & \repLangArabicDeltaPct{} \\
English & Latin & \repLangEnglishUbe{} & \repLangEnglishBbpe{} & \repLangEnglishDeltaPct{} \\
\bottomrule
\end{tabular}
\end{table}

\paragraph{Mixed-script text and representative languages.}
On \msNDocs{} naturally occurring mixed-script mC4 documents (Appendix~\ref{app:mixed-script}), UBE matches or slightly improves on BBPE in \msUbeBetterOrTie{} tested languages, with Bengali showing a small regression (\msBengaliDeltaPct{}); \bbpeSixteenName{} applies the same uniform 2-byte floor to both CJK-heavy and already-efficient spans in this slice, so its aggregate token count is \msBbpeSixteenTotalDeltaPct{} above BBPE\@. The same pattern is visible in the core compact setting: under \texttt{101-lang/32K}, UBE cuts token counts sharply for low-coverage 3-byte scripts such as Santali, Central Atlas Tamazight, and Tibetan, while token counts also decrease slightly for English and Arabic (Table~\ref{tab:rep-language-main}). On the 100-task multilingual code workload~\citep{jimenez2024swebench,hofman2026maps}, UBE uses fewer tokens than BBPE in all 11 language versions and both conditions, including the 99.24\%-ASCII code condition (Appendix Table~\ref{tab:cr-coding-workload}). A complementary 6-language stress-set slice appears in Appendix~\ref{app:coverage-examples}; full per-language values, including the small regression cases, appear in Appendix~\ref{app:per-language-results}.

\subsection{Language model evaluation}

\paragraph{Language model protocol.}
To connect intrinsic disparity to downstream behavior, we train matched LM comparisons over \{UBE, BBPE\} $\times$ \{\texttt{En-only}, \texttt{6-lang}, \texttt{101-lang}\}; within each setting, the same 101-language mC4 corpus, architecture, optimizer, and token-step budget are held fixed while the tokenizer configuration changes. The primary attribution setting is \texttt{101-lang/32K}, where UBE and BBPE are exposure-matched to within $<0.3\%$ raw-text coverage; lower-coverage rows are fixed-compute stress tests where tokenizer quality and raw-text exposure co-vary. The 32K vocabulary was chosen to maximize the UBE--BBPE floor contrast; larger-vocabulary variants, a 1.3B scale check, and appendix-only SuperBPE composition runs are reported in Appendix~\ref{app:lm-eval-full}, with training details in Appendix~\ref{app:lm-training-config}.

We evaluate held-out per-language bits per byte (BPB) on deterministic mC4 terminal shards and zero-shot accuracy on multilingual and English benchmarks. BPB always uses the raw \utfEightName{} byte count of the underlying text as the denominator for both tokenizer conditions, so UBE and BBPE likelihoods are normalized on the same content scale. Exact model sizes, token budgets, optimizer settings, sequence length, BPB sampling, and evaluation splits are reported in Appendix~\ref{app:lm-training-config}. Appendix Table~\ref{tab:prior-work-scale} situates the reported suite against recent tokenizer-comparison papers, and Appendix~\ref{app:compact-lm-tokenizers} surveys compact public LM tokenizer scales.

\paragraph{LM results (182M scale).}
Table~\ref{tab:lm-eval} summarizes the matched 182M/32K LM comparison across the three tokenizer-coverage regimes.

\begin{table}[t]
\caption{LM evaluation (182M, 32K vocabulary, 200K mC4 steps; primary seed; multi-seed in Appendix~\ref{app:multiseed}) across three \emph{tokenizer-training} configurations. Every LM is trained on the same 101-language mC4 corpus; En-only/6-lang/101-lang identify only the corpus used to learn the tokenizer vocabulary. Bits per byte (BPB) and BPB Gini use 101 held-out mC4 language splits; benchmark columns report zero-shot accuracy (Bele = Belebele, HSwag = HellaSwag, LAMB = LAMBADA).}
\label{tab:lm-eval}
\centering
\footnotesize
\setlength{\tabcolsep}{2.5pt}
\begin{tabular}{llcccccccccc}
\toprule
Config & Tok & BPB$\downarrow$ & BPB Gini$\downarrow$ & XNLI & Bele & XCOPA & ARC-E & ARC-C & HSwag & PIQA & LAMB \\
\midrule
\multirow{2}{*}{En-only}
 & UBE & \lmUbeOneLangBpb{} & \lmUbeOneLangGini{} & \lmUbeOneLangXnli{} & \lmUbeOneLangBelebele{} & \lmUbeOneLangXcopa{} & \lmUbeOneLangArcE{} & \lmUbeOneLangArcC{} & \lmUbeOneLangHSwag{} & \lmUbeOneLangPiqa{} & \lmUbeOneLangLambada{} \\
 & BBPE & \lmBbpeOneLangBpb{} & \lmBbpeOneLangGini{} & \lmBbpeOneLangXnli{} & \lmBbpeOneLangBelebele{} & \lmBbpeOneLangXcopa{} & \lmBbpeOneLangArcE{} & \lmBbpeOneLangArcC{} & \lmBbpeOneLangHSwag{} & \lmBbpeOneLangPiqa{} & \lmBbpeOneLangLambada{} \\
\midrule
\multirow{2}{*}{6-lang}
 & UBE & \lmUbeSixLangBpb{} & \lmUbeSixLangGini{} & \lmUbeSixLangXnli{} & \lmUbeSixLangBelebele{} & \lmUbeSixLangXcopa{} & \lmUbeSixLangArcE{} & \lmUbeSixLangArcC{} & \lmUbeSixLangHSwag{} & \lmUbeSixLangPiqa{} & \lmUbeSixLangLambada{} \\
 & BBPE & \lmBbpeSixLangBpb{} & \lmBbpeSixLangGini{} & \lmBbpeSixLangXnli{} & \lmBbpeSixLangBelebele{} & \lmBbpeSixLangXcopa{} & \lmBbpeSixLangArcE{} & \lmBbpeSixLangArcC{} & \lmBbpeSixLangHSwag{} & \lmBbpeSixLangPiqa{} & \lmBbpeSixLangLambada{} \\
\midrule
\multirow{2}{*}{101-lang}
 & UBE & \lmUbeOneZeroOneLangBpb{} & \lmUbeOneZeroOneLangGini{} & \lmUbeOneZeroOneLangXnli{} & \lmUbeOneZeroOneLangBelebele{} & \lmUbeOneZeroOneLangXcopa{} & \lmUbeOneZeroOneLangArcE{} & \lmUbeOneZeroOneLangArcC{} & \lmUbeOneZeroOneLangHSwag{} & \lmUbeOneZeroOneLangPiqa{} & \lmUbeOneZeroOneLangLambada{} \\
 & BBPE & \lmBbpeOneZeroOneLangBpb{} & \lmBbpeOneZeroOneLangGini{} & \lmBbpeOneZeroOneLangXnli{} & \lmBbpeOneZeroOneLangBelebele{} & \lmBbpeOneZeroOneLangXcopa{} & \lmBbpeOneZeroOneLangArcE{} & \lmBbpeOneZeroOneLangArcC{} & \lmBbpeOneZeroOneLangHSwag{} & \lmBbpeOneZeroOneLangPiqa{} & \lmBbpeOneZeroOneLangLambada{} \\
\bottomrule
\end{tabular}
\end{table}

In the primary seed, UBE has lower BPB Gini than BBPE in all three coverage regimes.
Across \lmMultiseedNSeeds{} seeds at \texttt{101-lang/32K}, however, the mean BPB Gini is effectively tied (UBE \lmMsUbeGiniMean{}$\pm$\lmMsUbeGiniStd{} vs.\ BBPE \lmMsBbpeGiniMean{}$\pm$\lmMsBbpeGiniStd{}; Appendix Table~\ref{tab:lm-eval-multiseed}), with cross-seed variability comparable to the gap.
We therefore do not claim a systematic LM-level BPB-dispersion advantage, and we treat the matched end-of-training BPB dispersion as neutral.
Across macro BPB, multilingual benchmarks, and English regression tasks, deltas are small and mixed.

\paragraph{BPB breakdown and scale checks.}
\label{sec:bpb-breakdown}
Disaggregated by encoding-floor group, held-out BPB differences are small in the median: median $\Delta$BPB is \lmOneZeroOneLangOneByteMedianDelta{} for 1-byte, \lmOneZeroOneLangTwoByteMedianDelta{} for 2-byte, and \lmOneZeroOneLangThreeByteMedianDelta{} for 3-byte scripts; in the primary seed, UBE wins \lmOneZeroOneLangUbeWins{}/101 languages despite near-tied aggregate BPB (\lmOneZeroOneLangMeanDeltaBpb{}), a pattern consistent with redistribution rather than uniform improvement. Larger language-specific shifts occur in both directions: Lao improves, whereas Burmese and Sinhala regress. We report BPB Gini as the primary cross-lingual dispersion measure; absolute per-language BPB values (from which any non-ratio dispersion statistic can be recomputed) are in Appendix Table~\ref{tab:perlang-bpb}. At 64K and 128K, intrinsic gaps narrow as merge coverage improves; held-out BPB and downstream benchmark differences stay small and mixed across the 64K, 128K, and 1.3B/32K scale checks, consistent with prior evidence that tokenizer compression or token-count changes need not directly predict downstream LM quality~\citep{schmidt2024compression}; the corresponding LM results are collected in Appendix~\ref{app:lm-eval-full}, and full per-language XNLI, XCOPA, and BPB tables (\texttt{101-lang/32K}) appear in Appendix~\ref{app:perlang-benchmarks}.

\paragraph{Source-byte-matched LM comparison.}
With the primary \texttt{101-lang/32K} UBE/BBPE tokenizers, we compare UBE, BBPE, and BBPE16 at matched supervised UTF-8 byte exposure and on identical held-out text. UBE has the lowest common-text BPB and BPB Gini in this three-way comparison; task accuracy is mixed (Appendix Table~\ref{tab:cr-source-matched-accuracy}).

\paragraph{Context-window and prompt-processing effects.}
Token-count reductions translate into fixed-budget capacity and prefill effects: defining usable-context gain under a fixed token budget as $m_{\mathrm{BBPE}}(\ell)/m_{\mathrm{UBE}}(\ell)-1$, Santali (Ol~Chiki) gains \cwThreeTwokSantaliGainPct{} more usable context at \texttt{101-lang/32K}, with English also gaining \cwThreeTwokEnglishGainPct{}; in a separate \texttt{6-lang/32K} same-content prompt-processing benchmark, end-to-end sequence-processing rate improves by \infDeltaSeqamhEthi{} for Amharic and \infDeltaSeqengLatn{} for English (Appendix Table~\ref{tab:inference-benchmark}). We report end-to-end rather than isolating self-attention because tokenizer-induced length changes propagate through multilayer perceptron (MLP) and memory-bandwidth costs as well.

\subsection{Composition and tokenizer comparisons}
\label{sec:alternative-tokenizer-comparisons}

UBE acts as a reusable representation layer: Table~\ref{tab:cr-composition} brings together matched substitutions under standard, SuperBPE-style, and SCRIPT-style boundaries and MYTE's morphology maps. In each comparison, UBE lowers the English, Other BMP, and overall mean sequence lengths while preserving the complementary component; with the primary corpus, CJK means stay within 0.3\% of BBPE under the three BPE boundary policies (Appendix~\ref{app:corpus-regime} explains why).

\begin{table}[ht]
\caption{Composition with UBE on FLORES-200 for 101-language/32K BPE vocabularies trained on the English-heavy primary corpus or on the 50 MiB/language prefix. Values are mean BPE tokens per sentence; in the MYTE row, the UBE and BBPE columns report MYTE-UBE and MYTE base symbols. CJK (5) and Other BMP (7) are the configuration groups defined in Appendix~\ref{app:gini-vs-tokens}. Gini uses 203 non-English premiums.}
\label{tab:cr-composition}
\centering
\small
\setlength{\tabcolsep}{2.5pt}
\begin{tabular}{@{}lrrrrrrrrrr@{}}
\toprule
 & \multicolumn{2}{c}{English} & \multicolumn{2}{c}{CJK (5)} & \multicolumn{2}{c}{Other BMP (7)} & \multicolumn{2}{c}{Overall (204)} & \multicolumn{2}{c}{Gini} \\
\cmidrule(lr){2-3}\cmidrule(lr){4-5}\cmidrule(lr){6-7}\cmidrule(lr){8-9}\cmidrule(lr){10-11}
Fixed component & UBE & BBPE & UBE & BBPE & UBE & BBPE & UBE & BBPE & UBE & BBPE \\
\midrule
\multicolumn{11}{@{}l}{\textit{Primary corpus}} \\
Standard & \textbf{31.068} & 31.160 & \textbf{42.418} & 42.444 & \textbf{219.709} & 275.681 & \textbf{69.857} & 71.889 & \textbf{0.1979} & 0.2183 \\
SuperBPE-style & \textbf{32.973} & 33.143 & \textbf{43.496} & 43.595 & \textbf{192.162} & 245.653 & \textbf{72.137} & 74.187 & \textbf{0.1725} & 0.1922 \\
SCRIPT-style & \textbf{31.652} & 31.698 & 42.104 & \textbf{42.054} & \textbf{228.269} & 286.663 & \textbf{68.001} & 69.957 & \textbf{0.1824} & 0.2036 \\
\multicolumn{11}{@{}l}{\textit{50 MiB/language prefix}} \\
Standard & \textbf{36.938} & 36.959 & \textbf{57.385} & 58.956 & \textbf{222.054} & 272.617 & \textbf{66.079} & 67.875 & \textbf{0.2086} & 0.2281 \\
SuperBPE-style & \textbf{39.524} & 39.586 & \textbf{58.576} & 60.871 & \textbf{189.674} & 236.155 & \textbf{67.208} & 68.858 & \textbf{0.1736} & 0.1916 \\
SCRIPT-style & \textbf{37.539} & 37.569 & \textbf{56.741} & 58.133 & \textbf{223.497} & 281.154 & \textbf{63.918} & 65.677 & \textbf{0.1832} & 0.2048 \\
\midrule
MYTE maps & \textbf{109.094} & 109.140 & \textbf{94.610} & 124.612 & \textbf{247.622} & 349.398 & \textbf{141.354} & 148.142 & --- & --- \\
\bottomrule
\end{tabular}
\end{table}

\paragraph{Composition with boundary policies.}
SuperBPE-style boundary relaxation~\citep{liu2025superbpe} and SCRIPT-style pre-tokenization~\citep{land2025script} act on a different component from UBE's byte representation. The SCRIPT-style comparison shares the official script/category split but retains UBE/BBPE symbols and standard, unconstrained BPE merges. Compared with standard pre-tokenization, SCRIPT-style achieves a lower overall mean primarily through reductions on South Asian scripts (Appendix Table~\ref{tab:cr-primary-cohorts}). Replacing BBPE with UBE also lowers Gini under all three boundary policies (Appendix~\ref{app:tokenizer-composition}). The same pattern holds with a 50 MiB/language training cap (Appendix Tables~\ref{tab:cr-scriptbpe} and~\ref{tab:cr-superbpe}).

\paragraph{Composition with morphology maps.}
Keeping MYTE's released decomposition and morphology-merge maps fixed~\citep{limisiewicz2024myte}, we route only residual characters through UBE. MYTE-UBE preserves MYTE's decoded output while reducing total base-symbol positions by 4.58\% (201/204 language--script configurations lower, three tied; Appendix Table~\ref{tab:cr-myte}). These are pre-BPE base symbols, not learned BPE tokens.

\paragraph{Comparison with SCRIPT-BPE.}
In the separate five-tokenizer comparison in Appendix Table~\ref{tab:cr-scriptbpe} (50 MiB/language), official SCRIPT-BPE attains the lowest overall mean and token-premium Gini. UBE uses fewer English tokens and fewer tokens in 169/204 language--script configurations (Appendix Table~\ref{tab:cr-scriptbpe-cohorts}). SCRIPT-BPE's gains concentrate in specific script cohorts; this pattern persists across vocabulary sizes, larger training prefixes, and XNLI/XCOPA/MGSM inputs (Appendix~\ref{app:comparative-analyses}).

\paragraph{Sensitivity to the tokenizer-training corpus.}
The two corpus blocks of Table~\ref{tab:cr-composition} compare tokenizers trained on the English-heavy primary corpus and on the 50 MiB/language prefix. Across the six composition variants, the capped tokenizers use \regimeOverallReductionRange{}\% fewer tokens overall, but \regimeEnglishIncreaseRange{}\% more for English and \regimeCjkIncreaseRange{}\% more for CJK, while the Other BMP mean changes by at most \regimeOtherBmpMaxChange{}\%. The cap lowers the training share of English from \regimeEnglishSharePrimary{}\% to \regimeEnglishShareCapped{}\%, that of Chinese, Japanese, and Korean from \regimeCjkSharePrimary{}\% to \regimeCjkShareCapped{}\%, and that of Hindi and Thai from \regimeHindiThaiSharePrimary{}\% to \regimeHindiThaiShareCapped{}\%, so their text receives fewer of the frequency-ordered BPE merges~\citep{conneau2019xlm,costajussa2022nllb,hayase2024datamixture}. The remaining 95 training languages rise from \regimeRemainingSharePrimary{}\% to \regimeRemainingShareCapped{}\%, which shortens the many non-English Latin, Arabic, and Cyrillic configurations that dominate the equally weighted FLORES-200 average, as flatter vocabulary-sampling mixes do for low-resource languages in multilingual translation~\citep{arivazhagan2019massively}; most Other BMP scripts are not used by any training language. Corpus mixture and total training volume change together. Within the capped condition, replacing BBPE with UBE reduces CJK counts by \regimeCjkUbeReductionRange{}\% and keeps English counts lower under every boundary policy. UBE also lowers Gini in both corpus regimes. Appendix~\ref{app:corpus-regime} reports all script groups and paired intervals; the five-tokenizer LM comparison appears in Appendix Table~\ref{tab:cr-matched-exposure-lm}.

\FloatBarrier

\section{Conclusion}

UBE replaces BBPE's single \utfEightName{} substrate with exactly reversible per-character \utfEightName{}/\utfSixteenName{} routing while keeping standard BPE intact; the routing itself requires neither language identification nor a Unicode script table. It reduces token-budget disparity across the main intrinsic evaluations, especially at 16K--64K, and composes with SuperBPE-style and SCRIPT-style boundary policies and MYTE's morphology maps while preserving each complementary mechanism. Downstream differences remain small and mixed; in the source-byte-matched BBPE/BBPE16 comparison UBE has the lowest BPB and BPB Gini, while same-content prompt processing improves by up to \infDeltaSeqamhEthi{}. UBE is most useful for compact vocabularies and low-coverage 3-byte BMP scripts near the encoding floor.

\paragraph{Limitations.}
UBE addresses only the substrate-level source of token-budget disparity; per-language vocabulary sizing, data balance, pre-tokenization, and merge-rule interventions remain orthogonal. Its \bbpeAlphabetSize{} extra PUA symbols reduce wins at very narrow coverage (\texttt{En-only/16K}: \eovOneSixkBetter{}/204) and forgo BBPE's coincidental byte sharing across scripts, causing small regressions in a few high-coverage languages. LM evaluation covers 182M/32K, 230M/64K, 329M/128K, and 1.3B/32K settings, with \lmMultiseedNSeeds{} seeds only at \texttt{101-lang/32K}. The largest intrinsic token-count gains occur for some low-coverage 3-byte scripts outside the 101-language mC4 LM corpus, so downstream gains for those exact languages remain untested. Because UBE changes the tokenizer vocabulary, it is not a drop-in replacement for existing checkpoints. A model-level comparison includes SCRIPT-BPE and derived 32K MYTE+BPE; the released 256-symbol MYTE itself and Bit-level BPE remain future work. Constrained-decoding throughput is also left for future work (Appendix~\ref{app:additional-limitations}, including broader impacts).

\clearpage

\begin{ack}
Hyunsik Kim thanks his wife, Jiyoung Kim, and his daughter, Nahyun Kim, for their unwavering support and understanding throughout this work. He is grateful to Youngmoon Jung for meticulously conducting and reviewing this research. He is also grateful to Haeri Kim and Sichen Jin of Carnegie Mellon University for inspiring the idea behind UBE. Youngmoon Jung is grateful to Hyunsik Kim for the great research opportunity and dedicated collaboration.

This research received no funding. The authors declare no competing interests beyond the affiliation shown above.
\end{ack}

\bibliographystyle{plainnat}
\bibliography{references}

\begin{thebibliography}{86}
\providecommand{\natexlab}[1]{#1}
\providecommand{\url}[1]{\texttt{#1}}
\expandafter\ifx\csname urlstyle\endcsname\relax
  \providecommand{\doi}[1]{doi: #1}\else
  \providecommand{\doi}{doi: \begingroup \urlstyle{rm}\Url}\fi

\bibitem[Abagyan et~al.(2025)Abagyan, Salamanca, Cruz-Salinas, Cao, Lin,
  Locatelli, Fadaee, {\"U}st{\"u}n, and Hooker]{abagyan2025onetokenizer}
Diana Abagyan, Alejandro~R. Salamanca, Andres~Felipe Cruz-Salinas, Kris Cao,
  Hangyu Lin, Acyr Locatelli, Marzieh Fadaee, Ahmet {\"U}st{\"u}n, and Sara
  Hooker.
\newblock One tokenizer to rule them all: Emergent language plasticity via
  multilingual tokenizers.
\newblock \emph{arXiv preprint arXiv:2506.10766}, 2025.
\newblock \doi{10.48550/arXiv.2506.10766}.

\bibitem[Abouelenin et~al.(2025)Abouelenin, Ashfaq, Atkinson, Awadalla, Bach,
  Bao, Benhaim, Cai, Chaudhary, Chen, Chen, Chen, Chen, Chen, Chen, Chen, Dai,
  Dai, Fan, Gao, et~al.]{abouelenin2025phi4mini}
Abdelrahman Abouelenin, Atabak Ashfaq, Adam Atkinson, Hany Awadalla, Nguyen
  Bach, Jianmin Bao, Alon Benhaim, Martin Cai, Vishrav Chaudhary, Congcong
  Chen, Dong Chen, Dongdong Chen, Junkun Chen, Weizhu Chen, Yen-Chun Chen,
  Yi-ling Chen, Qi~Dai, Xiyang Dai, Ruchao Fan, Mei Gao, et~al.
\newblock {Phi-4-Mini} technical report: Compact yet powerful multimodal
  language models via mixture-of-{LoRAs}.
\newblock \emph{arXiv preprint arXiv:2503.01743}, 2025.
\newblock \doi{10.48550/arXiv.2503.01743}.

\bibitem[Adelani et~al.(2024)Adelani, Liu, Shen, Vassilyev, Alabi, Mao, Gao,
  and Lee]{adelani2024sib200}
David~Ifeoluwa Adelani, Hannah Liu, Xiaoyu Shen, Nikita Vassilyev, Jesujoba~O.
  Alabi, Yanke Mao, Haonan Gao, and En-Shiun~Annie Lee.
\newblock {SIB}-200: A simple, inclusive, and big evaluation dataset for topic
  classification in 200+ languages and dialects.
\newblock In \emph{Proceedings of the 18th Conference of the European Chapter
  of the Association for Computational Linguistics (Volume 1: Long Papers)},
  pages 226--245, St. Julian{'}s, Malta, March 2024. Association for
  Computational Linguistics.
\newblock \doi{10.18653/v1/2024.eacl-long.14}.

\bibitem[Ahia et~al.(2023)Ahia, Kumar, Gonen, Kasai, Mortensen, Smith, and
  Tsvetkov]{ahia2023languages}
Orevaoghene Ahia, Sachin Kumar, Hila Gonen, Jungo Kasai, David Mortensen, Noah
  Smith, and Yulia Tsvetkov.
\newblock Do all languages cost the same? {Tokenization} in the era of
  commercial language models.
\newblock In \emph{Proceedings of the 2023 Conference on Empirical Methods in
  Natural Language Processing}, pages 9904--9923, Singapore, December 2023.
  Association for Computational Linguistics.
\newblock \doi{10.18653/v1/2023.emnlp-main.614}.

\bibitem[Ahia et~al.(2024)Ahia, Kumar, Gonen, Hofmann, Limisiewicz, Tsvetkov,
  and Smith]{ahia2024magnet}
Orevaoghene Ahia, Sachin Kumar, Hila Gonen, Valentin Hofmann, Tomasz
  Limisiewicz, Yulia Tsvetkov, and Noah Smith.
\newblock {MAGNET}: Improving the multilingual fairness of language models with
  adaptive gradient-based tokenization.
\newblock In \emph{Advances in Neural Information Processing Systems},
  volume~37, pages 47790--47814. Curran Associates, Inc., 2024.
\newblock \doi{10.52202/079017-1514}.

\bibitem[Allal et~al.(2025)Allal, Lozhkov, Bakouch, Blazquez, Penedo, Tunstall,
  Marafioti, Lajar{\'\i}n, Kydl{\'\i}{\v c}ek, Srivastav, Lochner, Fahlgren,
  Nguyen, Burtenshaw, Fourrier, Zhao, Larcher, Morlon, Zakka, Raffel,
  Von~Werra, and Wolf]{allal2025smollm2}
Loubna~Ben Allal, Anton Lozhkov, Elie Bakouch, Gabriel~Martin Blazquez,
  Guilherme Penedo, Lewis Tunstall, Andr{\'e}s Marafioti, Agust{\'\i}n~Piqueres
  Lajar{\'\i}n, Hynek Kydl{\'\i}{\v c}ek, Vaibhav Srivastav, Joshua Lochner,
  Caleb Fahlgren, Xuan~Son Nguyen, Ben Burtenshaw, Cl{\'e}mentine Fourrier,
  Haojun Zhao, Hugo Larcher, Mathieu Morlon, Cyril Zakka, Colin Raffel, Leandro
  Von~Werra, and Thomas Wolf.
\newblock {SmolLM2}: When smol goes big {\textemdash} data-centric training of
  a fully open small language model.
\newblock In \emph{Second Conference on Language Modeling}, 2025.

\bibitem[Alqahtani et~al.(2026)Alqahtani, Nayeem, Laskar, Mohiuddin, and
  Bari]{nayeem2026tokenizers}
Sawsan Alqahtani, Mir~Tafseer Nayeem, Md~Tahmid~Rahman Laskar, Tasnim
  Mohiuddin, and M~Saiful Bari.
\newblock Stop taking tokenizers for granted: They are core design decisions in
  large language models.
\newblock In \emph{Proceedings of the 19th Conference of the European Chapter
  of the Association for Computational Linguistics (Volume 1: Long Papers)},
  pages 8410--8432, Rabat, Morocco, March 2026. Association for Computational
  Linguistics.
\newblock \doi{10.18653/v1/2026.eacl-long.394}.

\bibitem[Arivazhagan et~al.(2019)Arivazhagan, Bapna, Firat, Lepikhin, Johnson,
  Krikun, Chen, Cao, Foster, Cherry, Macherey, Chen, and
  Wu]{arivazhagan2019massively}
Naveen Arivazhagan, Ankur Bapna, Orhan Firat, Dmitry Lepikhin, Melvin Johnson,
  Maxim Krikun, Mia~Xu Chen, Yuan Cao, George Foster, Colin Cherry, Wolfgang
  Macherey, Zhifeng Chen, and Yonghui Wu.
\newblock Massively multilingual neural machine translation in the wild:
  Findings and challenges.
\newblock \emph{arXiv preprint arXiv:1907.05019}, 2019.
\newblock \doi{10.48550/arXiv.1907.05019}.

\bibitem[Arnett et~al.(2024)Arnett, Chang, and Bergen]{arnett2024bitproblem}
Catherine Arnett, Tyler~A. Chang, and Benjamin Bergen.
\newblock A bit of a problem: Measurement disparities in dataset sizes across
  languages.
\newblock In \emph{Proceedings of the 3rd Annual Meeting of the Special
  Interest Group on Under-resourced Languages @ LREC-COLING 2024}, pages 1--9,
  Torino, Italia, May 2024. ELRA and ICCL.

\bibitem[Arnett et~al.(2025)Arnett, Chang, Biderman, and
  Bergen]{arnett2025explaining}
Catherine Arnett, Tyler Chang, Stella Biderman, and Benjamin Bergen.
\newblock Explaining and mitigating crosslingual tokenizer inequities.
\newblock In \emph{Advances in Neural Information Processing Systems}, volume
  38, Main Conference, pages 63277--63313. Curran Associates, Inc., 2025.
\newblock \doi{10.52202/085713-2126}.

\bibitem[Artola~Velasco et~al.(2025)Artola~Velasco, Tsirtsis, Okati, and
  Gomez~Rodriguez]{artola2025overcharging}
Ander Artola~Velasco, Stratis Tsirtsis, Nastaran Okati, and Manuel
  Gomez~Rodriguez.
\newblock Is your {LLM} overcharging you? {Tokenization}, transparency, and
  incentives.
\newblock In \emph{EurIPS 2025 Workshop on Private {AI} Governance}, 2025.

\bibitem[Bakouch et~al.(2025)Bakouch, Ben~Allal, Lozhkov, Tazi, Tunstall,
  Pati{\~n}o, Beeching, Roucher, Reedi, Gallou{\'e}dec, Rasul, Habib, Fourrier,
  Kydlicek, Penedo, Larcher, Morlon, Srivastav, Lochner, Nguyen, Raffel, von
  Werra, and Wolf]{bakouch2025smollm3}
Elie Bakouch, Loubna Ben~Allal, Anton Lozhkov, Nouamane Tazi, Lewis Tunstall,
  Carlos~Miguel Pati{\~n}o, Edward Beeching, Aymeric Roucher, Aksel~Joonas
  Reedi, Quentin Gallou{\'e}dec, Kashif Rasul, Nathan Habib, Cl{\'e}mentine
  Fourrier, Hynek Kydlicek, Guilherme Penedo, Hugo Larcher, Mathieu Morlon,
  Vaibhav Srivastav, Joshua Lochner, Xuan-Son Nguyen, Colin Raffel, Leandro von
  Werra, and Thomas Wolf.
\newblock {SmolLM3}: smol, multilingual, long-context reasoner.
\newblock \url{https://huggingface.co/blog/smollm3}, 2025.

\bibitem[Bandarkar et~al.(2024)Bandarkar, Liang, Muller, Artetxe, Shukla, Husa,
  Goyal, Krishnan, Zettlemoyer, and Khabsa]{bandarkar2024belebele}
Lucas Bandarkar, Davis Liang, Benjamin Muller, Mikel Artetxe, Satya~Narayan
  Shukla, Donald Husa, Naman Goyal, Abhinandan Krishnan, Luke Zettlemoyer, and
  Madian Khabsa.
\newblock The {Belebele} benchmark: a parallel reading comprehension dataset in
  122 language variants.
\newblock In \emph{Proceedings of the 62nd Annual Meeting of the Association
  for Computational Linguistics (Volume 1: Long Papers)}, pages 749--775,
  Bangkok, Thailand, August 2024. Association for Computational Linguistics.
\newblock \doi{10.18653/v1/2024.acl-long.44}.

\bibitem[Bellagente et~al.(2024)Bellagente, Tow, Mahan, Phung, Zhuravinskyi,
  Adithyan, Baicoianu, Brooks, Cooper, Datta, Lee, Mostaque, Pieler,
  Pinnaparju, Rocha, Saini, Teufel, Zanichelli, and
  Riquelme]{bellagente2024stablelm2}
Marco Bellagente, Jonathan Tow, Dakota Mahan, Duy Phung, Maksym Zhuravinskyi,
  Reshinth Adithyan, James Baicoianu, Ben Brooks, Nathan Cooper, Ashish Datta,
  Meng Lee, Emad Mostaque, Michael Pieler, Nikhil Pinnaparju, Paulo Rocha,
  Harry Saini, Hannah Teufel, Niccolo Zanichelli, and Carlos Riquelme.
\newblock Stable {LM} 2 1.6{B} technical report.
\newblock \emph{arXiv preprint arXiv:2402.17834}, 2024.
\newblock \doi{10.48550/arXiv.2402.17834}.

\bibitem[Biderman et~al.(2023)Biderman, Schoelkopf, Anthony, Bradley, O'Brien,
  Hallahan, Khan, Purohit, Prashanth, Raff, Skowron, Sutawika, and Van
  Der~Wal]{biderman2023pythia}
Stella Biderman, Hailey Schoelkopf, Quentin~Gregory Anthony, Herbie Bradley,
  Kyle O'Brien, Eric Hallahan, Mohammad~Aflah Khan, Shivanshu Purohit,
  Usvsn~Sai Prashanth, Edward Raff, Aviya Skowron, Lintang Sutawika, and Oskar
  Van Der~Wal.
\newblock Pythia: A suite for analyzing large language models across training
  and scaling.
\newblock In \emph{Proceedings of the 40th International Conference on Machine
  Learning}, volume 202 of \emph{Proceedings of Machine Learning Research},
  pages 2397--2430. PMLR, 23--29 Jul 2023.

\bibitem[{BigScience Workshop} et~al.(2023){BigScience Workshop}, Le~Scao, Fan,
  Akiki, Pavlick, Ili{\'c}, Hesslow, Castagn{\'e}, Luccioni, Yvon, Gall{\'e},
  Tow, Rush, Biderman, Webson, et~al.]{bigscience2022bloom}
{BigScience Workshop}, Teven Le~Scao, Angela Fan, Christopher Akiki, Ellie
  Pavlick, Suzana Ili{\'c}, Daniel Hesslow, Roman Castagn{\'e}, Alexandra~Sasha
  Luccioni, Fran{\c{c}}ois Yvon, Matthias Gall{\'e}, Jonathan Tow, Alexander~M.
  Rush, Stella Biderman, Albert Webson, et~al.
\newblock {BLOOM}: A {176B}-parameter open-access multilingual language model.
\newblock \emph{arXiv preprint arXiv:2211.05100}, 2023.
\newblock \doi{10.48550/arXiv.2211.05100}.

\bibitem[Bisk et~al.(2020)Bisk, Zellers, Le~Bras, Gao, and Choi]{bisk2020piqa}
Yonatan Bisk, Rowan Zellers, Ronan Le~Bras, Jianfeng Gao, and Yejin Choi.
\newblock {PIQA}: Reasoning about physical commonsense in natural language.
\newblock \emph{Proceedings of the AAAI Conference on Artificial Intelligence},
  34\penalty0 (05):\penalty0 7432--7439, 2020.
\newblock \doi{10.1609/aaai.v34i05.6239}.

\bibitem[Brahma et~al.(2025)Brahma, Karthika, Singh, Adiga, Bhate,
  Ramakrishnan, Saluja, and Desarkar]{brahma2025morphtok}
Maharaj Brahma, N~J Karthika, Atul~Kumar Singh, Devaraja Adiga, Smruti Bhate,
  Ganesh Ramakrishnan, Rohit Saluja, and Maunendra~Sankar Desarkar.
\newblock {MorphTok}: Morphologically grounded tokenization for {Indic}
  languages.
\newblock In \emph{Tokenization Workshop}, 2025.

\bibitem[Churchill and Skiena(2026)]{churchill2026reducing}
Geoffrey Churchill and Steven Skiena.
\newblock Reducing tokenization premiums for low-resource languages.
\newblock \emph{arXiv preprint arXiv:2601.13328}, 2026.
\newblock \doi{10.48550/arXiv.2601.13328}.

\bibitem[Clark et~al.(2022)Clark, Garrette, Turc, and Wieting]{clark2022canine}
Jonathan~H. Clark, Dan Garrette, Iulia Turc, and John Wieting.
\newblock {CANINE}: Pre-training an efficient tokenization-free encoder for
  language representation.
\newblock \emph{Transactions of the Association for Computational Linguistics},
  10:\penalty0 73--91, 2022.
\newblock \doi{10.1162/tacl_a_00448}.

\bibitem[Clark et~al.(2018)Clark, Cowhey, Etzioni, Khot, Sabharwal, Schoenick,
  and Tafjord]{clark2018arc}
Peter Clark, Isaac Cowhey, Oren Etzioni, Tushar Khot, Ashish Sabharwal, Carissa
  Schoenick, and Oyvind Tafjord.
\newblock Think you have solved question answering? {Try} {ARC}, the {AI2}
  reasoning challenge.
\newblock \emph{arXiv preprint arXiv:1803.05457}, 2018.
\newblock \doi{10.48550/arXiv.1803.05457}.

\bibitem[Conneau and Lample(2019)]{conneau2019xlm}
Alexis Conneau and Guillaume Lample.
\newblock Cross-lingual language model pretraining.
\newblock In \emph{Advances in Neural Information Processing Systems},
  volume~32, pages 7059--7069. Curran Associates, Inc., 2019.

\bibitem[Conneau et~al.(2018)Conneau, Rinott, Lample, Williams, Bowman,
  Schwenk, and Stoyanov]{conneau2018xnli}
Alexis Conneau, Ruty Rinott, Guillaume Lample, Adina Williams, Samuel~R.
  Bowman, Holger Schwenk, and Veselin Stoyanov.
\newblock {XNLI}: Evaluating cross-lingual sentence representations.
\newblock In \emph{Proceedings of the 2018 Conference on Empirical Methods in
  Natural Language Processing}, pages 2475--2485, Brussels, Belgium,
  October-November 2018. Association for Computational Linguistics.
\newblock \doi{10.18653/v1/D18-1269}.

\bibitem[Dao et~al.(2022)Dao, Fu, Ermon, Rudra, and
  R{\'e}]{dao2022flashattention}
Tri Dao, Dan Fu, Stefano Ermon, Atri Rudra, and Christopher R{\'e}.
\newblock {FlashAttention}: Fast and memory-efficient exact attention with
  {IO}-awareness.
\newblock In \emph{Advances in Neural Information Processing Systems},
  volume~35, pages 16344--16359. Curran Associates, Inc., 2022.

\bibitem[Darshana(2026)]{wwho2026sgpe}
Kusal Darshana.
\newblock Separate before you compress: The {WWHO} tokenization architecture.
\newblock \emph{arXiv preprint arXiv:2603.25309}, 2026.
\newblock \doi{10.48550/arXiv.2603.25309}.

\bibitem[Davis and Holbrook(2025)]{unicode2025uts51}
Mark Davis and Ned Holbrook.
\newblock Unicode technical standard \#51: {Unicode} emoji, version 17.0.
\newblock \url{https://www.unicode.org/reports/tr51/tr51-29.html}, 2025.

\bibitem[Dixit and Dixit(2026)]{dixit2026scripttax}
Aradhya Dixit and Shreem Dixit.
\newblock The script tax: Measuring tokenization-driven efficiency and latency
  disparities in multilingual language models.
\newblock \emph{arXiv preprint arXiv:2602.11174}, 2026.
\newblock \doi{10.48550/arXiv.2602.11174}.

\bibitem[Eberhard et~al.(2024)Eberhard, Simons, and Fennig]{ethnologue2024}
David~M. Eberhard, Gary~F. Simons, and Charles~D. Fennig, editors.
\newblock \emph{Ethnologue: Languages of the World}.
\newblock SIL International, Dallas, Texas, 27th edition, 2024.

\bibitem[Georges et~al.(2007)Georges, Buytaert, and
  Eeckhout]{georges2007statistically}
Andy Georges, Dries Buytaert, and Lieven Eeckhout.
\newblock Statistically rigorous {Java} performance evaluation.
\newblock In \emph{Proceedings of the 22nd Annual ACM SIGPLAN Conference on
  Object-Oriented Programming, Systems, Languages, and Applications (OOPSLA)},
  pages 57--76. ACM, October 2007.
\newblock \doi{10.1145/1297027.1297033}.

\bibitem[Goldman et~al.(2024)Goldman, Caciularu, Eyal, Cao, Szpektor, and
  Tsarfaty]{goldman2024unpacking}
Omer Goldman, Avi Caciularu, Matan Eyal, Kris Cao, Idan Szpektor, and Reut
  Tsarfaty.
\newblock Unpacking tokenization: Evaluating text compression and its
  correlation with model performance.
\newblock In \emph{Findings of the Association for Computational Linguistics:
  ACL 2024}, pages 2274--2286, Bangkok, Thailand, August 2024. Association for
  Computational Linguistics.
\newblock \doi{10.18653/v1/2024.findings-acl.134}.

\bibitem[Grattafiori et~al.(2024)Grattafiori, Dubey, Jauhri,
  et~al.]{grattafiori2024llama3}
Aaron Grattafiori, Abhimanyu Dubey, Abhinav Jauhri, et~al.
\newblock The {Llama} 3 herd of models.
\newblock \emph{arXiv preprint arXiv:2407.21783}, 2024.
\newblock \doi{10.48550/arXiv.2407.21783}.

\bibitem[Hayase et~al.(2024)Hayase, Liu, Choi, Oh, and
  Smith]{hayase2024datamixture}
Jonathan Hayase, Alisa Liu, Yejin Choi, Sewoong Oh, and Noah Smith.
\newblock Data mixture inference attack: {BPE} tokenizers reveal training data
  compositions.
\newblock In \emph{Advances in Neural Information Processing Systems},
  volume~37, pages 8956--8983. Curran Associates, Inc., 2024.
\newblock \doi{10.52202/079017-0285}.

\bibitem[Hoefler and Belli(2015)]{hoefler2015scientific}
Torsten Hoefler and Roberto Belli.
\newblock Scientific benchmarking of parallel computing systems: Twelve ways to
  tell the masses when reporting performance results.
\newblock In \emph{Proceedings of the International Conference for High
  Performance Computing, Networking, Storage and Analysis (SC)}, pages
  73:1--73:12, Austin, TX, USA, November 2015. ACM.
\newblock ISBN 978-1-4503-3723-6.
\newblock \doi{10.1145/2807591.2807644}.

\bibitem[Hoffmann et~al.(2022)Hoffmann, Borgeaud, Mensch, Buchatskaya, Cai,
  Rutherford, de~Las~Casas, Hendricks, Welbl, Clark, Hennigan, Noland,
  Millican, van~den Driessche, Damoc, Guy, Osindero, Simonyan, Elsen, Vinyals,
  Rae, and Sifre]{hoffmann2022chinchilla}
Jordan Hoffmann, Sebastian Borgeaud, Arthur Mensch, Elena Buchatskaya, Trevor
  Cai, Eliza Rutherford, Diego de~Las~Casas, Lisa~Anne Hendricks, Johannes
  Welbl, Aidan Clark, Thomas Hennigan, Eric Noland, Katherine Millican, George
  van~den Driessche, Bogdan Damoc, Aurelia Guy, Simon Osindero, Kar{\'e}n
  Simonyan, Erich Elsen, Oriol Vinyals, Jack Rae, and Laurent Sifre.
\newblock An empirical analysis of compute-optimal large language model
  training.
\newblock In \emph{Advances in Neural Information Processing Systems},
  volume~35, pages 30016--30030. Curran Associates, Inc., 2022.

\bibitem[Hofman et~al.(2026)Hofman, Brokman, Rachmil, Bose, Pahuja, Shimizu,
  Starostina, Marchisio, Goldfarb-Tarrant, and Vainshtein]{hofman2026maps}
Omer Hofman, Jonathan Brokman, Oren Rachmil, Shamik Bose, Vikas Pahuja, Toshiya
  Shimizu, Trisha Starostina, Kelly Marchisio, Seraphina Goldfarb-Tarrant, and
  Roman Vainshtein.
\newblock {MAPS}: A multilingual benchmark for agent performance and security.
\newblock In \emph{Findings of the Association for Computational Linguistics:
  EACL 2026}, pages 821--845. Association for Computational Linguistics, 2026.

\bibitem[Hu et~al.(2025)Hu, Vuli{\'c}, and Korhonen]{hu2025quantifying}
Songbo Hu, Ivan Vuli{\'c}, and Anna Korhonen.
\newblock Quantifying language disparities in multilingual large language
  models.
\newblock In \emph{Proceedings of the 2025 Conference on Empirical Methods in
  Natural Language Processing}, pages 4003--4018, Suzhou, China, November 2025.
  Association for Computational Linguistics.
\newblock \doi{10.18653/v1/2025.emnlp-main.199}.

\bibitem[{HuggingFaceTB}(2025)]{huggingfacetb2025smollm3card}
{HuggingFaceTB}.
\newblock {SmolLM3-3B} model card and configuration.
\newblock \url{https://huggingface.co/HuggingFaceTB/SmolLM3-3B}, 2025.

\bibitem[Hwang et~al.(2026)Hwang, Wang, and Gu]{hwang2025hnet}
Sukjun Hwang, Brandon Wang, and Albert Gu.
\newblock Dynamic chunking for end-to-end hierarchical sequence modeling.
\newblock In \emph{The Fourteenth International Conference on Learning
  Representations}, 2026.

\bibitem[Jimenez et~al.(2024)Jimenez, Yang, Wettig, Yao, Pei, Press, and
  Narasimhan]{jimenez2024swebench}
Carlos~E. Jimenez, John Yang, Alexander Wettig, Shunyu Yao, Kexin Pei, Ofir
  Press, and Karthik Narasimhan.
\newblock {SWE-bench}: Can language models resolve real-world {GitHub} issues?
\newblock In \emph{International Conference on Learning Representations}, 2024.

\bibitem[Kallini et~al.(2025)Kallini, Murty, Manning, Potts, and
  Csord{\'a}s]{kallini2025mrt5}
Julie Kallini, Shikhar Murty, Christopher~D. Manning, Christopher Potts, and
  R{\'o}bert Csord{\'a}s.
\newblock {MrT5}: Dynamic token merging for efficient byte-level language
  models.
\newblock In \emph{The Thirteenth International Conference on Learning
  Representations}, 2025.

\bibitem[Kanjirangat et~al.(2025)Kanjirangat, Samardzic, Dolamic, and
  Rinaldi]{kanjirangat2025tokenization}
Vani Kanjirangat, Tanja Samardzic, Ljiljana Dolamic, and Fabio Rinaldi.
\newblock Tokenization and representation biases in multilingual models on
  dialectal {NLP} tasks.
\newblock In \emph{Proceedings of the 2025 Conference on Empirical Methods in
  Natural Language Processing}, pages 23992--24010, Suzhou, China, November
  2025. Association for Computational Linguistics.
\newblock \doi{10.18653/v1/2025.emnlp-main.1224}.

\bibitem[Kim et~al.(2026)Kim, Kim, Lee, and Lee]{kim2026bbpe16}
Hyunsik Kim, Haeri Kim, Munhak Lee, and Kyungmin Lee.
\newblock {BBPE16}: {UTF}-16-based byte-level byte-pair encoding for improved
  multilingual speech recognition.
\newblock In \emph{ICASSP 2026 -- 2026 IEEE International Conference on
  Acoustics, Speech and Signal Processing (ICASSP)}, pages 17067--17071. IEEE,
  2026.
\newblock \doi{10.1109/ICASSP55912.2026.11460641}.

\bibitem[Kudo and Richardson(2018)]{kudo2018sentencepiece}
Taku Kudo and John Richardson.
\newblock {S}entence{P}iece: A simple and language independent subword
  tokenizer and detokenizer for neural text processing.
\newblock In \emph{Proceedings of the 2018 Conference on Empirical Methods in
  Natural Language Processing: System Demonstrations}, pages 66--71, Brussels,
  Belgium, November 2018. Association for Computational Linguistics.
\newblock \doi{10.18653/v1/D18-2012}.

\bibitem[Land and Arnett(2025)]{land2025script}
Sander Land and Catherine Arnett.
\newblock {BPE} stays on {SCRIPT}: Structured encoding for robust multilingual
  pretokenization.
\newblock In \emph{Tokenization Workshop}, 2025.

\bibitem[Limisiewicz et~al.(2024)Limisiewicz, Blevins, Gonen, Ahia, and
  Zettlemoyer]{limisiewicz2024myte}
Tomasz Limisiewicz, Terra Blevins, Hila Gonen, Orevaoghene Ahia, and Luke
  Zettlemoyer.
\newblock {MYTE}: Morphology-driven byte encoding for better and fairer
  multilingual language modeling.
\newblock In \emph{Proceedings of the 62nd Annual Meeting of the Association
  for Computational Linguistics (Volume 1: Long Papers)}, pages 15059--15076,
  Bangkok, Thailand, 2024. Association for Computational Linguistics.
\newblock \doi{10.18653/v1/2024.acl-long.804}.

\bibitem[Liu et~al.(2025)Liu, Hayase, Hofmann, Oh, Smith, and
  Choi]{liu2025superbpe}
Alisa Liu, Jonathan Hayase, Valentin Hofmann, Sewoong Oh, Noah~A. Smith, and
  Yejin Choi.
\newblock {SuperBPE}: Space travel for language models.
\newblock In \emph{Second Conference on Language Modeling}, 2025.

\bibitem[Liu et~al.(2024)Liu, Zhao, Iandola, Lai, Tian, Fedorov, Xiong, Chang,
  Shi, Krishnamoorthi, Lai, and Chandra]{liu2024mobilellm}
Zechun Liu, Changsheng Zhao, Forrest Iandola, Chen Lai, Yuandong Tian, Igor
  Fedorov, Yunyang Xiong, Ernie Chang, Yangyang Shi, Raghuraman Krishnamoorthi,
  Liangzhen Lai, and Vikas Chandra.
\newblock {M}obile{LLM}: Optimizing sub-billion parameter language models for
  on-device use cases.
\newblock In \emph{Proceedings of the 41st International Conference on Machine
  Learning}, volume 235 of \emph{Proceedings of Machine Learning Research},
  pages 32431--32454. PMLR, 21--27 Jul 2024.

\bibitem[Lundin et~al.(2026)Lundin, Zhang, Karim, Louzan, Wei, Adelani, and
  Carroll]{lundin2025tokentax}
Jessica~M. Lundin, Ada Zhang, Nihal Karim, Hamza Louzan, Guohao Wei,
  David~Ifeoluwa Adelani, and Cody Carroll.
\newblock The token tax: Systematic bias in multilingual tokenization.
\newblock In \emph{Proceedings of the 7th Workshop on African Natural Language
  Processing (AfricaNLP 2026)}, pages 103--112, Rabat, Morocco, March 2026.
  Association for Computational Linguistics.
\newblock \doi{10.18653/v1/2026.africanlp-main.10}.

\bibitem[{Meta}(2024)]{meta2024llama32card}
{Meta}.
\newblock {Llama 3.2} model card.
\newblock
  \url{https://github.com/meta-llama/llama-models/blob/main/models/llama3_2/MODEL_CARD.md},
  2024.

\bibitem[{Microsoft}(2024)]{microsoft2024phi35mini}
{Microsoft}.
\newblock {Phi-3.5-mini-instruct} model card.
\newblock \url{https://huggingface.co/microsoft/Phi-3.5-mini-instruct}, 2024.

\bibitem[Minixhofer et~al.(2026)Minixhofer, Murray, Limisiewicz, Korhonen,
  Zettlemoyer, Smith, Ponti, Soldaini, and Hofmann]{minixhofer2025bolmo}
Benjamin Minixhofer, Tyler Murray, Tomasz Limisiewicz, Anna Korhonen, Luke
  Zettlemoyer, Noah~A. Smith, Edoardo~M. Ponti, Luca Soldaini, and Valentin
  Hofmann.
\newblock {Bolmo}: Byteifying the next generation of language models.
\newblock \emph{arXiv preprint arXiv:2512.15586}, 2026.
\newblock \doi{10.48550/arXiv.2512.15586}.

\bibitem[{Mistral AI}(2024{\natexlab{a}})]{mistralai2024mistralnemo}
{Mistral AI}.
\newblock {Mistral NeMo}.
\newblock \url{https://mistral.ai/news/mistral-nemo/}, 2024{\natexlab{a}}.

\bibitem[{Mistral AI}(2024{\natexlab{b}})]{mistralai2024mistralnemocard}
{Mistral AI}.
\newblock {Mistral-Nemo-Base-2407} model card and configuration.
\newblock \url{https://huggingface.co/mistralai/Mistral-Nemo-Base-2407},
  2024{\natexlab{b}}.

\bibitem[Moon et~al.(2025)Moon, Hiraoka, and Okazaki]{moon2025bitbpe}
Sangwhan Moon, Tatsuya Hiraoka, and Naoaki Okazaki.
\newblock Bit-level {BPE}: Below the byte boundary.
\newblock \emph{arXiv preprint arXiv:2506.07541}, 2025.
\newblock \doi{10.48550/arXiv.2506.07541}.

\bibitem[Nayeem et~al.(2025)Nayeem, Alqahtani, Laskar, Mohiuddin, and
  Bari]{nayeem2025strr}
Mir~Tafseer Nayeem, Sawsan Alqahtani, Md~Tahmid~Rahman Laskar, Tasnim
  Mohiuddin, and M~Saiful Bari.
\newblock Beyond fertility: Analyzing {STRR} as a metric for multilingual
  tokenization evaluation.
\newblock In \emph{NeurIPS 2025 Workshop on Evaluating the Evolving {LLM}
  Lifecycle: Benchmarks, Emergent Abilities, and Scaling}, 2025.

\bibitem[{NLLB Team}(2024)]{costajussa2022nllb}
{NLLB Team}.
\newblock Scaling neural machine translation to 200 languages.
\newblock \emph{Nature}, 630:\penalty0 841--846, 2024.
\newblock \doi{10.1038/s41586-024-07335-x}.

\bibitem[{OpenAI}(2023)]{openai2023tiktoken}
{OpenAI}.
\newblock tiktoken.
\newblock \url{https://github.com/openai/tiktoken}, 2023.

\bibitem[Owodunni et~al.(2026)Owodunni, Ahia, and
  Kumar]{owodunni2025flexitokens}
Abraham~Toluwase Owodunni, Orevaoghene Ahia, and Sachin Kumar.
\newblock {FLEXITOKENS}: Flexible tokenization for evolving language models.
\newblock In \emph{Findings of the Association for Computational Linguistics:
  ACL 2026}, pages 17170--17190, San Diego, California, United States, July
  2026. Association for Computational Linguistics.
\newblock \doi{10.18653/v1/2026.findings-acl.848}.

\bibitem[Pagnoni et~al.(2025)Pagnoni, Pasunuru, Rodriguez, Nguyen, Muller, Li,
  Zhou, Yu, Weston, Zettlemoyer, Ghosh, Lewis, Holtzman, and
  Iyer]{pagnoni2025blt}
Artidoro Pagnoni, Ramakanth Pasunuru, Pedro Rodriguez, John Nguyen, Benjamin
  Muller, Margaret Li, Chunting Zhou, Lili Yu, Jason~E. Weston, Luke
  Zettlemoyer, Gargi Ghosh, Mike Lewis, Ari Holtzman, and Srini Iyer.
\newblock Byte latent transformer: Patches scale better than tokens.
\newblock In \emph{Proceedings of the 63rd Annual Meeting of the Association
  for Computational Linguistics (Volume 1: Long Papers)}, pages 9238--9258,
  Vienna, Austria, July 2025. Association for Computational Linguistics.
\newblock \doi{10.18653/v1/2025.acl-long.453}.

\bibitem[Paperno et~al.(2016)Paperno, Kruszewski, Lazaridou, Pham, Bernardi,
  Pezzelle, Baroni, Boleda, and Fern{\'a}ndez]{paperno2016lambada}
Denis Paperno, Germ{\'a}n Kruszewski, Angeliki Lazaridou, Ngoc~Quan Pham,
  Raffaella Bernardi, Sandro Pezzelle, Marco Baroni, Gemma Boleda, and Raquel
  Fern{\'a}ndez.
\newblock The {LAMBADA} dataset: Word prediction requiring a broad discourse
  context.
\newblock In \emph{Proceedings of the 54th Annual Meeting of the Association
  for Computational Linguistics (Volume 1: Long Papers)}, pages 1525--1534,
  Berlin, Germany, August 2016. Association for Computational Linguistics.
\newblock \doi{10.18653/v1/P16-1144}.

\bibitem[Petrov et~al.(2023)Petrov, La~Malfa, Torr, and
  Bibi]{petrov2023unfairness}
Aleksandar Petrov, Emanuele La~Malfa, Philip Torr, and Adel Bibi.
\newblock Language model tokenizers introduce unfairness between languages.
\newblock In \emph{Advances in Neural Information Processing Systems},
  volume~36, pages 36963--36990. Curran Associates, Inc., 2023.

\bibitem[Ponti et~al.(2020)Ponti, Glava{\v{s}}, Majewska, Liu, Vuli{\'c}, and
  Korhonen]{ponti2020xcopa}
Edoardo~Maria Ponti, Goran Glava{\v{s}}, Olga Majewska, Qianchu Liu, Ivan
  Vuli{\'c}, and Anna Korhonen.
\newblock {XCOPA}: A multilingual dataset for causal commonsense reasoning.
\newblock In \emph{Proceedings of the 2020 Conference on Empirical Methods in
  Natural Language Processing (EMNLP)}, pages 2362--2376, Online, November
  2020. Association for Computational Linguistics.
\newblock \doi{10.18653/v1/2020.emnlp-main.185}.

\bibitem[Pope et~al.(2023)Pope, Douglas, Chowdhery, Devlin, Bradbury, Heek,
  Xiao, Agrawal, and Dean]{pope2023efficiently}
Reiner Pope, Sholto Douglas, Aakanksha Chowdhery, Jacob Devlin, James Bradbury,
  Jonathan Heek, Kefan Xiao, Shivani Agrawal, and Jeff Dean.
\newblock Efficiently scaling transformer inference.
\newblock In \emph{Proceedings of Machine Learning and Systems}, volume~5,
  pages 606--624, 2023.

\bibitem[{Qwen Team}(2024)]{qwenteam2024qwen25card}
{Qwen Team}.
\newblock {{Qwen}2.5}: A party of foundation models.
\newblock \url{https://qwenlm.github.io/blog/qwen2.5/}, September 2024.

\bibitem[{Qwen Team}(2025)]{yang2024qwen25report}
{Qwen Team}.
\newblock {{Qwen}2.5} technical report.
\newblock \emph{arXiv preprint arXiv:2412.15115}, 2025.
\newblock \doi{10.48550/arXiv.2412.15115}.

\bibitem[{Qwen Team}(2026)]{qwenteam2026qwen36card}
{Qwen Team}.
\newblock {{Qwen}3.6-27B}: Flagship-level coding in a {27B} dense model.
\newblock \url{https://qwen.ai/blog?id=qwen3.6-27b}, April 2026.

\bibitem[Radford et~al.(2019)Radford, Wu, Child, Luan, Amodei, and
  Sutskever]{radford2019gpt2}
Alec Radford, Jeffrey Wu, Rewon Child, David Luan, Dario Amodei, and Ilya
  Sutskever.
\newblock Language models are unsupervised multitask learners.
\newblock \emph{OpenAI Blog}, 2019.

\bibitem[Rana et~al.(2026)Rana, Menezes, Kulkarni, Khatri, and
  Agarwal]{rana2025indicsuper}
Souvik Rana, Arul Menezes, Ashish Kulkarni, Chandra Khatri, and Shubham
  Agarwal.
\newblock {MUTANT}: A recipe for multilingual tokenizer design.
\newblock In \emph{Proceedings of the 64th Annual Meeting of the Association
  for Computational Linguistics (Volume 1: Long Papers)}, pages 46261--46277,
  San Diego, California, United States, July 2026. Association for
  Computational Linguistics.
\newblock \doi{10.18653/v1/2026.acl-long.2146}.

\bibitem[Reddi et~al.(2020)Reddi, Cheng, Kanter, Mattson, Schmuelling, Wu,
  Anderson, Breughe, Charlebois, Chou, Chukka, Coleman, Davis, Deng, Diamos,
  Duke, Fick, Gardner, Hubara, Idgunji, Jablin, Jiao, John, Kanwar, Lee, Liao,
  Lokhmotov, Massa, Meng, Micikevicius, Osborne, Pekhimenko, Rajan, Sequeira,
  Sirasao, Sun, Tang, Thomson, Wei, Wu, Xu, Yamada, Yu, Yuan, Zhong, Zhang, and
  Zhou]{reddi2020mlperf}
Vijay~Janapa Reddi, Christine Cheng, David Kanter, Peter Mattson, Guenther
  Schmuelling, Carole-Jean Wu, Brian Anderson, Maximilien Breughe, Mark
  Charlebois, William Chou, Ramesh Chukka, Cody Coleman, Sam Davis, Pan Deng,
  Greg Diamos, Jared Duke, Dave Fick, J.~Scott Gardner, Itay Hubara, Sachin
  Idgunji, Thomas~B. Jablin, Jeff Jiao, Tom~St. John, Pankaj Kanwar, David Lee,
  Jeffery Liao, Anton Lokhmotov, Francisco Massa, Peng Meng, Paulius
  Micikevicius, Colin Osborne, Gennady Pekhimenko, Arun Tejusve~Raghunath
  Rajan, Dilip Sequeira, Ashish Sirasao, Fei Sun, Hanlin Tang, Michael Thomson,
  Frank Wei, Ephrem Wu, Lingjie Xu, Koichi Yamada, Bing Yu, George Yuan, Aaron
  Zhong, Peizhao Zhang, and Yuchen Zhou.
\newblock {MLPerf} inference benchmark.
\newblock In \emph{2020 ACM/IEEE 47th Annual International Symposium on
  Computer Architecture (ISCA)}, pages 446--459, May 2020.
\newblock \doi{10.1109/ISCA45697.2020.00045}.

\bibitem[Reddy et~al.(2025)Reddy, Schmidt, Pinter, and
  Tanner]{reddy2025diminishing}
Varshini Reddy, Craig~W. Schmidt, Yuval Pinter, and Chris Tanner.
\newblock How much is enough? {The} diminishing returns of tokenization
  training data.
\newblock In \emph{Tokenization Workshop}, 2025.

\bibitem[Schmidt et~al.(2024)Schmidt, Reddy, Zhang, Alameddine, Uzan, Pinter,
  and Tanner]{schmidt2024compression}
Craig~W. Schmidt, Varshini Reddy, Haoran Zhang, Alec Alameddine, Omri Uzan,
  Yuval Pinter, and Chris Tanner.
\newblock Tokenization is more than compression.
\newblock In \emph{Proceedings of the 2024 Conference on Empirical Methods in
  Natural Language Processing}, pages 678--702, Miami, Florida, USA, November
  2024. Association for Computational Linguistics.
\newblock \doi{10.18653/v1/2024.emnlp-main.40}.

\bibitem[Schmidt et~al.(2025)Schmidt, Reddy, Tanner, and
  Pinter]{schmidt2025boundless}
Craig~W. Schmidt, Varshini Reddy, Chris Tanner, and Yuval Pinter.
\newblock Boundless byte pair encoding: Breaking the pre-tokenization barrier.
\newblock In \emph{Second Conference on Language Modeling}, 2025.

\bibitem[Shani et~al.(2026)Shani, Reif, Roll, Jurafsky, and
  Shutova]{shani2026roots}
Chen Shani, Yuval Reif, Nathan Roll, Dan Jurafsky, and Ekaterina Shutova.
\newblock The roots of performance disparity in multilingual language models:
  Intrinsic modeling difficulty or design choices?
\newblock \emph{arXiv preprint arXiv:2601.07220}, 2026.
\newblock \doi{10.48550/arXiv.2601.07220}.

\bibitem[Shazeer(2019)]{shazeer2019fast}
Noam Shazeer.
\newblock Fast transformer decoding: One write-head is all you need.
\newblock \emph{arXiv preprint arXiv:1911.02150}, 2019.
\newblock \doi{10.48550/arXiv.1911.02150}.

\bibitem[Shi et~al.(2023)Shi, Suzgun, Freitag, Wang, Srivats, Vosoughi, Chung,
  Tay, Ruder, Zhou, Das, and Wei]{shi2022mgsm}
Freda Shi, Mirac Suzgun, Markus Freitag, Xuezhi Wang, Suraj Srivats, Soroush
  Vosoughi, Hyung~Won Chung, Yi~Tay, Sebastian Ruder, Denny Zhou, Dipanjan Das,
  and Jason Wei.
\newblock Language models are multilingual {Chain-of-Thought} reasoners.
\newblock In \emph{The Eleventh International Conference on Learning
  Representations}, 2023.

\bibitem[Slagle(2024)]{slagle2024spacebyte}
Kevin Slagle.
\newblock {SpaceByte}: Towards deleting tokenization from large language
  modeling.
\newblock In \emph{Advances in Neural Information Processing Systems},
  volume~37, pages 124925--124950. Curran Associates, Inc., 2024.
\newblock \doi{10.52202/079017-3967}.

\bibitem[Teklehaymanot and Nejdl(2025)]{teklehaymanot2025infrastructure}
Hailay~Kidu Teklehaymanot and Wolfgang Nejdl.
\newblock Tokenization disparities as infrastructure bias: How subword systems
  create inequities in {LLM} access and efficiency.
\newblock In \emph{2025 3rd International Conference on Foundation and Large
  Language Models ({FLLM})}, pages 822--828. {IEEE}, November 2025.
\newblock \doi{10.1109/FLLM67465.2025.11390956}.

\bibitem[Thakur et~al.(2025)Thakur, Nagpal, Savarkar, Pundalik, Dosi, Sawarkar,
  Thakur, Saluja, Desarkar, and Ramakrishnan]{artofbreaking2025indic}
Aamod Thakur, Ajay Nagpal, Atharva Savarkar, Kundeshwar Pundalik, Siddhesh
  Dosi, Piyush Sawarkar, Viraj Thakur, Rohit Saluja, Maunendra~Sankar Desarkar,
  and Ganesh Ramakrishnan.
\newblock The art of breaking words: Rethinking multilingual tokenizer design.
\newblock \emph{arXiv preprint arXiv:2508.06533}, 2025.
\newblock \doi{10.48550/arXiv.2508.06533}.

\bibitem[{The Unicode Consortium}(2025)]{unicode2025}
{The Unicode Consortium}.
\newblock The {Unicode} standard, version 17.0.0.
\newblock \url{https://www.unicode.org/versions/Unicode17.0.0/}, 2025.

\bibitem[Vaswani et~al.(2017)Vaswani, Shazeer, Parmar, Uszkoreit, Jones, Gomez,
  Kaiser, and Polosukhin]{vaswani2017attention}
Ashish Vaswani, Noam Shazeer, Niki Parmar, Jakob Uszkoreit, Llion Jones,
  Aidan~N. Gomez, {\L}ukasz Kaiser, and Illia Polosukhin.
\newblock Attention is all you need.
\newblock In \emph{Advances in Neural Information Processing Systems},
  volume~30. Curran Associates, Inc., 2017.

\bibitem[Velayuthan and Sarveswaran(2025)]{velayuthan2025gpe}
Menan Velayuthan and Kengatharaiyer Sarveswaran.
\newblock Egalitarian language representation in language models: It all begins
  with tokenizers.
\newblock In \emph{Proceedings of the 31st International Conference on
  Computational Linguistics}, pages 5987--5996, Abu Dhabi, UAE, January 2025.
  Association for Computational Linguistics.

\bibitem[Wang et~al.(2020)Wang, Cho, and Gu]{wang2020bbpe}
Changhan Wang, Kyunghyun Cho, and Jiatao Gu.
\newblock Neural machine translation with byte-level subwords.
\newblock \emph{Proceedings of the AAAI Conference on Artificial Intelligence},
  34\penalty0 (05):\penalty0 9154--9160, 2020.
\newblock \doi{10.1609/aaai.v34i05.6451}.

\bibitem[Xue et~al.(2021)Xue, Constant, Roberts, Kale, Al-Rfou, Siddhant,
  Barua, and Raffel]{xue2021mt5}
Linting Xue, Noah Constant, Adam Roberts, Mihir Kale, Rami Al-Rfou, Aditya
  Siddhant, Aditya Barua, and Colin Raffel.
\newblock m{T}5: A massively multilingual pre-trained text-to-text transformer.
\newblock In \emph{Proceedings of the 2021 Conference of the North American
  Chapter of the Association for Computational Linguistics: Human Language
  Technologies}, pages 483--498, Online, June 2021. Association for
  Computational Linguistics.
\newblock \doi{10.18653/v1/2021.naacl-main.41}.

\bibitem[Xue et~al.(2022)Xue, Barua, Constant, Al-Rfou, Narang, Kale, Roberts,
  and Raffel]{xue2022byt5}
Linting Xue, Aditya Barua, Noah Constant, Rami Al-Rfou, Sharan Narang, Mihir
  Kale, Adam Roberts, and Colin Raffel.
\newblock {B}y{T}5: Towards a token-free future with pre-trained byte-to-byte
  models.
\newblock \emph{Transactions of the Association for Computational Linguistics},
  10:\penalty0 291--306, 2022.
\newblock \doi{10.1162/tacl_a_00461}.

\bibitem[Zellers et~al.(2019)Zellers, Holtzman, Bisk, Farhadi, and
  Choi]{zellers2019hellaswag}
Rowan Zellers, Ari Holtzman, Yonatan Bisk, Ali Farhadi, and Yejin Choi.
\newblock {HellaSwag}: Can a machine really finish your sentence?
\newblock In \emph{Proceedings of the 57th Annual Meeting of the Association
  for Computational Linguistics}, pages 4791--4800, Florence, Italy, July 2019.
  Association for Computational Linguistics.
\newblock \doi{10.18653/v1/P19-1472}.

\bibitem[Zhang et~al.(2024)Zhang, Zeng, Wang, and Lu]{zhang2024tinyllama}
Peiyuan Zhang, Guangtao Zeng, Tianduo Wang, and Wei Lu.
\newblock {TinyLlama}: An open-source small language model.
\newblock \emph{arXiv preprint arXiv:2401.02385}, 2024.
\newblock \doi{10.48550/arXiv.2401.02385}.

\end{thebibliography}

\clearpage
\appendix
\renewcommand{\thesection}{\Alph{section}}
\renewcommand{\thesubsection}{\thesection.\arabic{subsection}}
\numberwithin{table}{section}
\numberwithin{figure}{section}
\renewcommand{\theHsection}{appendix.\arabic{section}}
\renewcommand{\theHsubsection}{appendix.\arabic{section}.\arabic{subsection}}
\renewcommand{\theHtable}{appendix.\arabic{section}.table.\arabic{table}}
\renewcommand{\theHfigure}{appendix.\arabic{section}.figure.\arabic{figure}}

\section{Universal byte-level encoding (UBE) algorithm and correctness}
\label{app:ube-architecture}

\subsection{Pre-tokenizer: \utfEightName{}-byte-length routing}

The pre-tokenizer applies three stages sequentially:

\paragraph{Stage \paperPhaseOne{}: \gptTwoName{} regex splitting.}
UBE uses the same \gptTwoName{} byte-level pre-tokenizer pattern as the BBPE baseline in this paper to split input text into word-like chunks:
\begin{center}
\footnotesize
\verb@'s|'t|'re|'ve|'m|'ll|'d| ?\p{L}+| ?\p{N}+@\\
\verb@| ?[^\s\p{L}\p{N}]+|\s+(?!\S)|\s+@
\end{center}
In the core UBE/BBPE tokenizer artifacts, the Stage-\paperPhaseOne{} pre-tokenizer is configured with \texttt{use\_regex=true} and \texttt{add\_prefix\_space=false}; no prefix space is inserted before regex splitting during tokenization. The tokenizer-level normalizer is \texttt{null}, so inputs are used as received, with no Unicode normalization, lowercasing, accent stripping, or compatibility folding. All tokenizer families reserve five added special tokens with IDs 0--4: \texttt{<s>}, \texttt{<pad>}, \texttt{</s>}, \texttt{<unk>}, and \texttt{<mask>}; each is serialized with \texttt{special=true} and \texttt{normalized=false}. This stage is held fixed across tokenizer families, so UBE changes the byte substrate rather than the word-boundary policy or special-token convention.

\paragraph{Stage \paperPhaseTwo{}: per-character routing.}
Each character is routed by a deterministic rule based only on its \utfEightName{} byte length. Here, ``character'' refers to one Unicode scalar value (the Rust \texttt{char} unit used by the implementation), not a grapheme cluster; isolated surrogate code points are outside this well-formed input domain:
\begin{itemize}
    \item \textbf{\utfEightName{} path}: characters whose \utfEightName{} encoding is 1--2 bytes (e.g., ASCII and many Latin Extended, Cyrillic, Greek, Arabic, and Hebrew code points).
    \item \textbf{\utfSixteenName{} path}: characters whose \utfEightName{} encoding is 3--4 bytes. This route includes all 3-byte BMP scalar values and 4-byte non-BMP scalar values; only the former receive a byte-floor reduction, from 3 \utfEightName{} bytes to 2 \utfSixteenName{} bytes. Examples of 3-byte BMP scripts and code points include CJK characters, Devanagari, Bengali, Tamil, Thai, Georgian, Tibetan, Ethiopic, Myanmar, and Khmer.
\end{itemize}

\paragraph{Stage \paperPhaseThree{}: byte serialization and alphabet mapping.}
\begin{itemize}
    \item \textbf{\utfEightName{} path}: character $\to$ \utfEightName{} bytes $\to$ \gptTwoName{} visible-byte alphabet (\bbpeAlphabetSize{} symbols).
    \item \textbf{\utfSixteenName{} path}: character $\to$ UTF-16LE bytes $\to$ PUA U+E100--E1FF (\bbpeAlphabetSize{} symbols).
\end{itemize}
On the \utfSixteenName{} path, BMP code points emit 2 bytes, while non-BMP code points emit 4 bytes via surrogate pairs.
The combined initial alphabet has $\bbpeAlphabetSize{} + \bbpeAlphabetSize{} = \ubeAlphabetSize{}$ base symbols. The two alphabets share no characters.

\subsection{Reference pseudocode}
\label{app:ube-pseudocode}

The following pseudocode (Algorithms~\ref{alg:ube-encode} and~\ref{alg:ube-decode}) summarizes the reference implementation at the level needed to reproduce the routing and exact round-trip behavior. Standard BPE merge learning, ranking, and vocabulary construction are unchanged. BPE is applied independently to each \gptTwoName{} regex chunk, matching standard \gptTwoName{}-style BBPE and preventing learned merges from crossing pre-tokenizer chunk boundaries. Exact round-trip decoding for encoded Unicode strings is proved in Proposition~\ref{prop:roundtrip}.

\begin{algorithm}[ht]
\caption{UBE encode}
\label{alg:ube-encode}
\begin{algorithmic}[1]
\Require Unicode string $s$; trained BPE vocabulary over the \ubeAlphabetSize{}-symbol union alphabet
\Ensure Token ID sequence
\If{\texttt{add\_prefix\_space} and $s$ does not start with whitespace}
    \State $s \leftarrow$ \texttt{\char34\ \char34} $\|$ $s$
\EndIf
\State $\mathit{chunks} \leftarrow \text{GPT-2-regex-split}(s)$ \Comment{Stage 1}
\State $\mathit{tokens} \leftarrow [\,]$
\For{each chunk in $\mathit{chunks}$}
    \State $\mathit{chunkSymbols} \leftarrow [\,]$
    \For{each character $c$ in chunk} \Comment{Stage 2: routing}
        \If{$|\mathrm{utf8}(c)| \leq \ubeRouteUtfEightMaxBytes$}
            \State Encode $c$ as \utfEightName{} bytes; map each byte $b$ to $\text{GPT-2}[b]$ \Comment{Stage 3}
        \Else
            \State Encode $c$ as UTF-16LE bytes; map each byte $b$ to $\text{PUA}[b]$ (U+E100+$b$) \Comment{Stage 3}
        \EndIf
        \State Append mapped symbols to $\mathit{chunkSymbols}$
    \EndFor
    \State Append $\text{BPE-merge}(\mathit{chunkSymbols})$ to $\mathit{tokens}$ \Comment{No merge crosses chunk boundaries}
\EndFor
\State \Return $\mathit{tokens}$
\end{algorithmic}
\end{algorithm}

\begin{algorithm}[ht]
\caption{UBE decode (dual-buffer flush-on-switch)}
\label{alg:ube-decode}
\begin{algorithmic}[1]
\Require Expanded symbol sequence from BPE decoding over the mixed alphabet
\Ensure Decoded Unicode string
\State $B_8 \leftarrow [\,]$; $B_{16} \leftarrow [\,]$; $\mathit{out} \leftarrow [\,]$ \Comment{buffers + output}
\For{each symbol $x$ in expanded BPE symbol stream}
    \If{$x \in \text{GPT-2 alphabet}$}
        \State Flush $B_{16}$: decode as UTF-16LE, append to $\mathit{out}$, clear $B_{16}$
        \State Append corresponding byte to $B_8$
    \ElsIf{$x \in \text{PUA alphabet}$}
        \State Flush $B_8$: decode as \utfEightName{}, append to $\mathit{out}$, clear $B_8$
        \State Append $(\text{ord}(x) - \text{0xE100})$ to $B_{16}$
    \Else \Comment{special token boundary}
        \State Flush $B_8$ and $B_{16}$; append $x$ to $\mathit{out}$
    \EndIf
\EndFor
\State Flush remaining $B_8$ and $B_{16}$
\State \Return $\text{concat}(\mathit{out})$
\end{algorithmic}
\end{algorithm}

\FloatBarrier

\subsection{Encoding example}
\label{app:encoding-example}

We complement Figure~\ref{fig:ube-pipeline} (which gives the 7/9/10 base-symbol totals for ``GPU\begin{CJK}{UTF8}{gbsn}性能\end{CJK}'' under UBE, BBPE, and \bbpeSixteenName{}) with the byte-level mapping for the CJK character \begin{CJK}{UTF8}{gbsn}性\end{CJK} (U+6027): BBPE encodes it as 3 \utfEightName{} bytes (\texttt{0xE6, 0x80, 0xA7}), mapped to the \gptTwoName{} alphabet, while UBE encodes it as 2 UTF-16LE bytes (\texttt{0x27, 0x60}), mapped to PUA (U+E127, U+E160).

\paragraph{Private-use input characters.}
The PUA block U+E100--E1FF is used only as an internal visible-byte alphabet for the \utfSixteenName{} path; raw input scalar values from the same Unicode range are still encoded as ordinary input characters before alphabet mapping. For example, a raw U+E100 character is a 3-byte \utfEightName{} scalar and is routed to the \utfSixteenName{} path. Its UTF-16LE bytes are \texttt{0x00, 0xE1}, which are represented by the two internal symbols U+E100 and U+E1E1. During decoding, these symbols are interpreted as byte values, reconstructed as the \utfSixteenName{} code unit 0xE100, and emitted as the original raw scalar value. Thus there is no separate collision case: alphabet membership identifies the byte buffer, while the reconstructed code unit determines the final Unicode scalar.

\subsection{Correctness and implementation notes}
\label{app:decoder}

\begin{proposition}[Exact round-trip]
\label{prop:roundtrip}
Let $s$ be a well-formed Unicode scalar-value string with no isolated surrogate code points, and let $s^{+}$ denote the string after the tokenizer's optional prefix-space preprocessing. If BPE merge tokens are expanded to base symbols before buffer routing, then $\text{decode}(\text{encode}(s)) = s^{+}$. In particular, $\text{decode}(\text{encode}(s)) = s$ whenever prefix-space preprocessing does not modify $s$.
\end{proposition}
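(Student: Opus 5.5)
The argument factors the pipeline into three maps and shows that each one is undone by the corresponding part of the decoder. Let $E$ denote the composite of Stages~\paperPhaseOne{}--\paperPhaseThree{} applied to $s^{+}$. Stage~\paperPhaseOne{} regex splitting produces chunks $c_1,\dots,c_k$ whose concatenation is $s^{+}$; the GPT-2 pattern ends in \verb@\s+@ and otherwise covers every character class, so every character lands in exactly one chunk. Stages~\paperPhaseTwo{}--\paperPhaseThree{} then map each chunk, one scalar at a time, to a symbol string over the 512-symbol alphabet $A_8\sqcup A_{16}$.

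\textbf{Step 1 (BPE is transparent).} BPE merges only ever replace an adjacent pair of tokens by a token whose expansion is the concatenation of their expansions. Induction on the number of applied merges therefore shows that expanding the output of $\text{BPE-merge}(x)$ returns exactly $x$. Concatenating over chunks, the expanded symbol stream fed to Algorithm~\ref{alg:ube-decode} equals $E(c_1)\cdots E(c_k)=E(s^{+})$. This step uses the hypothesis that merge tokens are expanded before buffer routing, and since no special tokens are present in $\text{encode}(s)$, the third decoder branch never fires.

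\textbf{Step 2 (run decomposition).} Write $s^{+}=u_1u_2\cdots u_m$ as a concatenation of maximal runs, where each run consists entirely of scalars of a single route $r(\cdot)$ and consecutive runs have different routes. The map $E$ is a homomorphism on characters, so $E(s^{+})=E(u_1)\cdots E(u_m)$. Each $E(u_j)$ lies entirely in $A_8$ (if the run is UTF-8-routed) or entirely in $A_{16}$. Because the two alphabets are disjoint and consecutive runs alternate routes, the maximal same-alphabet blocks of the stream are exactly the $E(u_j)$. The decoder flushes precisely at alphabet switches and at the end of the stream, so at each flush the buffer contains the byte sequence of exactly one run. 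For a UTF-8 run this is the UTF-8 encoding $\mathrm{utf8}(u_j)$, recovered by inverting the GPT-2 byte bijection. For a UTF-16 run it is the UTF-16LE encoding of $u_j$, recovered via $x\mapsto \text{ord}(x)-\text{0xE100}$. This inversion is valid even for raw PUA input characters, as argued in Appendix~\ref{app:encoding-example}, because the input scalar is encoded to bytes before the alphabet mapping is applied. Flushing an empty buffer emits nothing.

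\textbf{Step 3 (codec inversion).} Since $s$ is well-formed and contains no lone surrogates, $u_j$ is a valid scalar string. Hence UTF-8 decoding of $\mathrm{utf8}(u_j)$ returns $u_j$, and UTF-16LE decoding returns $u_j$: non-BMP scalars produce well-paired surrogates, and a surrogate pair lies within a single run because both of its code units come from one scalar. The output is therefore $u_1\cdots u_m=s^{+}$. If the prefix-space step leaves $s$ unchanged, then $s^{+}=s$, which gives the final claim.

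The main obstacle is Step~2. One must argue that the flush boundaries never split a single character's byte sequence and never merge bytes from different runs. In particular, cross-alphabet merge tokens must cause no problem, and this holds because the decoder sees only the expanded stream. A further subtlety is that two adjacent chunks with the same route fall into one buffer; this is harmless, since the concatenation of valid encodings is the valid encoding of the concatenated string.
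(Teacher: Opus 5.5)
Your proof is correct and follows the paper's route: expand the merges, use alphabet disjointness and flush-on-switch to show that each nonempty flush holds the bytes of exactly one same-route run, then use the fact that UTF-8 and UTF-16LE decoding are bijective on well-formed sequences, including surrogate pairs. Your version is more explicit (BPE transparency by induction, the maximal-run decomposition, empty flushes); the paper also notes that special tokens force a flush and pass through unchanged, whereas you rule them out because Algorithm~\ref{alg:ube-encode} never emits them, which suffices for the encoder as specified.
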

\begin{proof} Prefix-space preprocessing, when enabled and applicable, deterministically replaces $s$ with $s^{+}$ before regex splitting; the remaining argument is therefore about reconstructing $s^{+}$. BPE decoding first expands each merge token into a sequence over the two base alphabets. The encoder maps each input scalar value of $s^{+}$ into exactly one of two disjoint symbol alphabets: the visible-byte alphabet used by the \utfEightName{} path, and the private-use alphabet used by the \utfSixteenName{} path. Because the alphabets are disjoint, every emitted symbol uniquely identifies which byte buffer must receive it during decoding. When the symbol stream switches paths, the previously active buffer is flushed and decoded before any byte from the new path is appended, so the decoder never forms a mixed \utfEightName{}/\utfSixteenName{} byte string. Within a same-path run, the decoder does not need to know character boundaries: the encoder wrote a concatenation of complete \utfEightName{} byte sequences or complete \utfSixteenName{} code-unit sequences, and the standard \utfEightName{} and \utfSixteenName{} decodings are bijective on well-formed byte or code-unit sequences (including surrogate-paired non-BMP scalar values), so the original scalar values are recovered. Special tokens trigger a flush of both buffers and are passed through unchanged, preserving token boundaries. Therefore every encoded span is reconstructed exactly, and the final concatenation is $s^{+}$; if prefix-space preprocessing is disabled or leaves $s$ unchanged, this is the original string $s$. We additionally verified this round-trip property empirically on \floresTwoHundredName{} (\appendixDecodeVerifiedLangCount{} languages, \appendixDecodeVerifiedSentenceCount{} sentences, \appendixDecodeVerifiedPassRate{} pass rate).
\end{proof}

This proposition is a round-trip guarantee for token sequences produced by the encoder, not a claim that every arbitrary model-generated token sequence expands to a well-formed \utfEightName{} or \utfSixteenName{} byte string. The UBE decoder follows the usual robust byte-tokenizer convention: malformed same-path byte runs are decoded with replacement characters rather than raising an exception, so generation remains total while exactness is guaranteed only on encoder-produced streams.

The Rust core implements the routing pre-tokenizer, dual-buffer decoder, and offset post-processor, with Python wrappers exposing standard Hugging Face training APIs; the incremental training pipeline (Appendix~\ref{app:tokenizer-training}) is a separate systems layer. UBE implements the Hugging Face \texttt{PostProcessor} interface for downstream span-extraction compatibility; our configurations set \texttt{trim\_offsets=false} so that leading/trailing whitespace offsets are returned as-is.

\paragraph{Streaming and constrained decoding audit.}
The decoder state needed for streaming consists only of the active alphabet tag and one pending byte buffer. The local invariant after any emitted base symbol is: all flushed output is valid Unicode, and the only unflushed bytes belong to exactly one path, either \utfEightName{} or \utfSixteenName{}\@. When the next expanded symbol has the same alphabet, its byte is appended to the pending buffer; when its alphabet differs, the pending buffer is decoded and emitted before the new buffer is started. Special tokens also force both buffers to flush. Therefore cross-alphabet BPE merges do not create mixed byte strings after token expansion, and streaming implementations can expose committed output at flush points while retaining at most the current same-alphabet byte run as pending state. Guided or grammar-constrained decoders need access to the same per-symbol alphabet tag before applying byte-level constraints; otherwise a constraint engine cannot tell whether a byte value is being interpreted as \utfEightName{} or \utfSixteenName{}\@. We audit this as an implementation requirement rather than a throughput result; constrained-decoding latency is not benchmarked in this paper.

\section{Training and evaluation pipeline}
\label{app:training-evaluation-pipeline}

\subsection{Tokenizer training pipeline}
\label{app:tokenizer-training}

\paragraph{Overview.}
All tokenizers (UBE, BBPE, \bbpeSixteenName{}) are trained using a \paperPhaseCountWord{}-phase distributed BPE pipeline. The canonical entry point is \texttt{train\_tokenizer.py}, which dispatches phase-\paperPhaseOne{} shard processing and phase-\paperPhaseTwo{} merge/finalization. Phase \paperPhaseOne{} computes shard-local states once, and phase \paperPhaseTwo{} merges and finalizes those states into the target vocabulary. This design separates shard-level word-count collection from merge selection, enabling reuse of shard states across vocabulary sizes for a fixed tokenizer type and fixed shard manifest.

\paragraph{Phase \paperPhaseOne{}: shard-local state collection.}
Each training shard (an Arrow file containing raw text) is processed independently. For each shard, we:
\begin{enumerate}
    \item Extract the \texttt{text} column from the Arrow file.
    \item Initialize a fresh \texttt{BpeTrainer} with \texttt{vocab\_size} set to the maximum target for the sweep being produced (\paperVocabSixFourkDisplay{} for the core 16K/32K/64K sweep, with larger targets used for extended diagnostics) and \texttt{min\_frequency=\paperMinFrequencyShard{}}.
    \item Run \texttt{train\_from\_iterator\_incremental(finalize=False)}, which accumulates pre-tokenized word counts without performing BPE merge selection. Pair statistics are derived from the retained word table during finalization.
    \item Save the resulting word-count state to a JSON file via \texttt{trainer.save\_state(path)}.
\end{enumerate}
The save-state threshold \texttt{min\_frequency=\paperMinFrequencyShard{}} filters per-shard singletons (word patterns appearing only once in a single shard), reducing state file size. This is a noise-reduction step, not a final vocabulary decision. Shard states are independent and can be computed in parallel across workers.
\paragraph{Phase \paperPhaseTwo{}: state merging.}
All shard states are merged by summing retained word counts in a single parallel reduce pass with no merge-time pruning or post-reduction filtering in the canonical path. Because addition is commutative and associative, the retained word-count totals are order-independent and exact over the retained shard states. This is not equivalent to exact whole-corpus BPE, however, because phase~\paperPhaseOne{} already removes per-shard singletons: patterns appearing once in multiple shards can be lost before the reduce ever sees them. For the 101-language configuration this canonical merge retains approximately 139M unique word patterns after phase-\paperPhaseOne{} filtering, and all retained patterns are carried into the subsequent BPE merge step.

\paragraph{Finalization within phase \paperPhaseTwo{}.}
The merged state is finalized by running the full BPE merge loop up to the target vocabulary size:
\begin{itemize}
    \item \texttt{finalize(vocab\_size=target, min\_frequency=\paperMinFrequencyFinalize{})}: the final BPE merge loop runs with the same numerical threshold used when saving shard states.
    \item Setting \texttt{min\_frequency=\paperMinFrequencyFinalize{}} at finalization preserves the behavior of the reported tokenizers.
\end{itemize}
The resulting tokenizer is saved as a Hugging Face-compatible \texttt{tokenizer.json}.

\paragraph{Frequency metadata.}
The merge routine records token-frequency metadata derived from the merged filtered state, including a \texttt{token\_counts.json} sidecar alongside the tokenizer artifact; the vocabulary-utilization diagnostics in Appendix~\ref{app:vocab-diagnostics} use these counts to compute dead-merge and active-utilization statistics.

\paragraph{Vocabulary sizes and alphabet overhead.}
The core sweep used for the paper's \paperConditionCount{} significance-tested conditions trains tokenizers at three vocabulary sizes: \paperVocabOneSixk{}, \paperVocabThreeTwok{}, and \paperVocabSixFourk{}, with \texttt{101-lang} extensions at \paperVocabOneTwoEightk{} and \paperVocabTwoFiveSixk{}. All rows include the same $S=5$ special tokens, so the available merge slots are $V-S-\ubeAlphabetSize{}$ for UBE and $V-S-\bbpeAlphabetSize{}$ for BBPE; Table~\ref{tab:vocab-merge-slots} reports the resulting accounting.

\begin{table}[ht]
\centering
\caption{Merge-slot accounting by vocabulary size. All rows reserve the same five special tokens; UBE spends an additional \bbpeAlphabetSize{} slots on its disjoint \utfSixteenName{} alphabet relative to BBPE\@.}
\label{tab:vocab-merge-slots}
\begin{tabular}{lccc}
\toprule
Vocab size & UBE merges & BBPE merges & UBE overhead \\
\midrule
\paperVocabOneSixk{} & \mergeVOneSixkUbe{} & \mergeVOneSixkBbpe{} & \mergeVOneSixkOverheadPct{} \\
\paperVocabThreeTwok{} & \mergeVThreeTwokUbe{} & \mergeVThreeTwokBbpe{} & \mergeVThreeTwokOverheadPct{} \\
\paperVocabSixFourk{} & \mergeVSixFourkUbe{} & \mergeVSixFourkBbpe{} & \mergeVSixFourkOverheadPct{} \\
\paperVocabOneTwoEightk{} & \mergeVOneTwoEightkUbe{} & \mergeVOneTwoEightkBbpe{} & \mergeVOneTwoEightkOverheadPct{} \\
\paperVocabTwoFiveSixk{} & \mergeVTwoFiveSixkUbe{} & \mergeVTwoFiveSixkBbpe{} & \mergeVTwoFiveSixkOverheadPct{} \\
\bottomrule
\end{tabular}
\end{table}

\paragraph{Reproducibility.}
Our preparation scripts construct Arrow shards under \texttt{data\_dir/\allowbreak\{lang\}/\allowbreak*.arrow}, reserve one terminal shard per language for held-out evaluation as \texttt{99999.arrow}, and hold the raw-data manifest fixed across tokenizer families. The frozen 101-language mC4~\citep{xue2021mt5} mirror therefore contains 2,273 train shards and 101 held-out evaluation shards overall. As discussed in the main text, \texttt{101-lang/32K} is the cleanest downstream attribution setting because UBE and BBPE see nearly identical raw-text coverage there; the lower-coverage settings are retained as fixed-compute stress tests.

\paragraph{Merge-order tie-breaking.}
The BPE merge loop is deterministic given the merged word-count table, but hash-map iteration order can change how equal-frequency ties near frequency boundaries are resolved. A retrain without deterministic tie handling is therefore \emph{frequency-exact} (all merged pair counts are identical across runs) but not necessarily \emph{bit-for-bit reproducible}: a small number of merges near frequency boundaries may be reordered.

The tokenizer directories used in this work are fixed and define the canonical artifacts for all reported experiments. For deterministic reruns of tie-prone training stages, our training scripts expose \texttt{-{}-deterministic-tie-breaks}, which passes \texttt{deterministic\_tie\_breaks=True} to the custom BPE trainer and applies stable secondary keys to equal-frequency alphabet and post-merge-pruning ties.

As a sensitivity check, we retrained the \texttt{6-lang/32K} UBE tokenizer with this deterministic-tie mode. The retrained tokenizer keeps the same 32K vocabulary size and five special-token IDs and round-trips multilingual samples without \texttt{<unk>} emission. Table~\ref{tab:deterministic-tie-break-check} compares the original tokenizer to this deterministic-tie variant on the same intrinsic datasets. The differences are at most \detTieMaxAbsDeltaGini{} in Gini and \detTieMaxAbsDeltaMean{} in mean premium, so the paper's intrinsic conclusions are not driven by hash-map tie order in this stress setting. We treat this as a targeted sensitivity check rather than as a full deterministic retrain sweep across all vocabulary sizes.

\begin{table}[ht]
\centering
\small
\caption{Deterministic tie-break sensitivity check for UBE \texttt{6-lang/32K}. ``Det.'' is the tokenizer trained with deterministic secondary ordering for equal-frequency merge candidates. $\Delta$ columns report Det.\ minus the original UBE tokenizer.}
\label{tab:deterministic-tie-break-check}
\begin{tabular}{lrrrr}
\toprule
Dataset & Original Gini & Det.\ Gini & $\Delta$Gini & $\Delta$ mean premium \\
\midrule
FLORES-200 & \detTieFloresOrigGini{} & \detTieFloresDetGini{} & \detTieFloresDeltaGini{} & \detTieFloresDeltaMean{} \\
UDHR       & \detTieUdhrOrigGini{}   & \detTieUdhrDetGini{}   & \detTieUdhrDeltaGini{}   & \detTieUdhrDeltaMean{} \\
SIB-200    & \detTieSibOrigGini{}    & \detTieSibDetGini{}    & \detTieSibDeltaGini{}    & \detTieSibDeltaMean{} \\
MGSM       & \detTieMgsmOrigGini{}   & \detTieMgsmDetGini{}   & \detTieMgsmDeltaGini{}   & \detTieMgsmDeltaMean{} \\
\bottomrule
\end{tabular}
\end{table}

\subsection{Language-model training and evaluation configuration}
\label{app:lm-training-config}

The primary two-scale LM matrix spans the 182M/32K coverage matrix and the Chinchilla-style~\citep{hoffmann2022chinchilla} 1.3B/32K scale check. The reported core LM suite contains \lmPrimaryDesignRuns{} runs: this primary matrix plus a separate 329M/128K composition check for S-UBE and S-BBPE\@. We additionally report \lmAdditionalReportedLmRuns{} appendix LM runs: \lmAdditionalSixFourkRuns{} 230M/64K vocabulary variants, \lmAdditionalSuperBpeRuns{} 182M/32K SuperBPE composition rows, and \lmAdditionalMultiseedRuns{} extra random-seed repeats for the matched \texttt{101-lang/32K} setting. All runs keep the LM corpus fixed to the 101-language mC4 release; runs differ in tokenizer coverage, vocabulary size, model scale, or seed. Within each setting, architecture, optimizer, and token-step budget are matched while the tokenizer configuration changes; raw-text exposure can therefore differ in the lower-coverage fixed-compute stress rows. The LM dataloader applies no language-level temperature sampling or resampling: it deterministically shuffles a fixed, capped set of per-language shards, so the retained shard volumes determine the language mixture. Table~\ref{tab:lm-scaleup-plan} summarizes the reported core suite, and Table~\ref{tab:lm-token-accounting} reports the 32K token-budget estimates per configuration.

\paragraph{Transformer architecture.}
All LM runs use the same decoder-only LitGPT/LLaMA-style architecture. The 182M, 230M, and 329M runs use 14L/12H/768d (head dimension 64, MLP 3072); the 230M and 329M variants differ from the 182M/32K setting only by the 64K and 128K vocabularies, respectively. The 1.3B runs use 24L/16H/2048d (head dimension 128, MLP 5504). Blocks use pre-norm \texttt{RMSNorm} ($\epsilon=10^{-5}$), sequential residuals (\texttt{parallel\_residual=false}), standard causal multi-head attention (MHA) with \texttt{n\_query\_groups=n\_head} (not grouped-query or multi-query attention), no query-key (QK) normalization, no sliding-window attention, and no attention/MLP/projection biases. Positional encoding is RoPE only, applied to all query/key head dimensions (\texttt{rotary\_percentage=1.0}, \texttt{rope\_base=10000}, \texttt{rope\_condense\_ratio=1}), with no learned absolute position embeddings. The feed-forward sublayer is a LLaMA gated SiLU MLP, $\mathrm{SiLU}(xW_1)\odot xW_2$, followed by an output projection. Token embeddings and the LM head are untied; the vocabulary is read from each tokenizer artifact, padded to a multiple of 512, and uses the same five special tokens listed in Appendix~\ref{app:ube-architecture}.

\begin{table}[ht]
\caption{Reported core LM suite on 8$\times$A100 80\,GB GPUs. ``Batch recipe'' is per-device microbatch $\times$ gradient-accumulation steps; all rows use sequence length 2048. The 182M and 1.3B rows form the primary two-scale UBE/BBPE matrix; the 329M row is a separate 128K composition check for S-UBE and S-BBPE\@.}
\label{tab:lm-scaleup-plan}
\centering
{\footnotesize
\setlength{\tabcolsep}{3pt}
\begin{tabular}{lcccccL{4.0cm}}
\toprule
Setting & Runs & Layers/heads/width & Params & Batch recipe & Global tok/step & Role \\
\midrule
182M & 6 & 14/12/768 & 182M & 32 $\times$ 1 & 524,288 & full coverage matrix (200K steps, 105B tok) \\
329M & 4 & 14/12/768 & 329M & 8 $\times$ 4 & 524,288 & 128K composition check (200K steps, 105B tok) \\
1.3B & 6 & 24/16/2048 & 1.35B actual & 8 $\times$ 1 & 131,072 & scale verification (205K steps, $\sim$26.9B tok) \\
\bottomrule
\end{tabular}
}
\end{table}

Thus, the 1.3B row processes $8$ GPUs $\times$ $8$ sequences per device $\times$ $1$ accumulation step $\times$ $2048$ tokens $=131{,}072$ global tokens per optimizer step. This row is a matched UBE/BBPE scale-verification experiment rather than the primary attribution setting.

\begin{table}[ht]
\caption{Estimated 32K LM token accounting under the shared 101-language mC4 release. Train counts use one-shard-per-language byte-weighted extrapolation; evaluation counts enumerate held-out terminal shards. Steps/epoch are train-token totals divided by 524,288 global tokens/step, and fixed 200K steps correspond to 104.9B training tokens.}
\label{tab:lm-token-accounting}
\centering
{\footnotesize
\setlength{\tabcolsep}{3pt}
\resizebox{\linewidth}{!}{%
\begin{tabular}{lcccccc}
\toprule
Tokenizer coverage & UBE train & BBPE train & UBE eval & BBPE eval & Steps/epoch & 200K coverage \\
\midrule
\texttt{En-only} on 101-lang mC4 & $\sim$\tcUBEOneLangTrainB{}B & $\sim$\tcBBPEOneLangTrainB{}B & $\sim$\tcUBEOneLangEvalB{}B & $\sim$\tcBBPEOneLangEvalB{}B & \tcUBEOneLangStepsPerEpoch{} / \tcBBPEOneLangStepsPerEpoch{} & \tcUBEOneLangTwoHundredKPct{}\% / \tcBBPEOneLangTwoHundredKPct{}\% \\
\texttt{6-lang} on 101-lang mC4 & $\sim$\tcUBESixLangTrainB{}B & $\sim$\tcBBPESixLangTrainB{}B & $\sim$\tcUBESixLangEvalB{}B & $\sim$\tcBBPESixLangEvalB{}B & \tcUBESixLangStepsPerEpoch{} / \tcBBPESixLangStepsPerEpoch{} & \tcUBESixLangTwoHundredKPct{}\% / \tcBBPESixLangTwoHundredKPct{}\% \\
\texttt{101-lang} on 101-lang mC4 & $\sim$\tcUBEOneOhOneLangTrainB{}B & $\sim$\tcBBPEOneOhOneLangTrainB{}B & $\sim$\tcUBEOneOhOneLangEvalB{}B & $\sim$\tcBBPEOneOhOneLangEvalB{}B & \tcUBEOneOhOneLangStepsPerEpoch{} / \tcBBPEOneOhOneLangStepsPerEpoch{} & \tcUBEOneOhOneLangTwoHundredKPct{}\% / \tcBBPEOneOhOneLangTwoHundredKPct{}\% \\
\bottomrule
\end{tabular}
}
\vspace{2pt}
\begin{minipage}{0.95\linewidth}
\raggedright\scriptsize
Entries with two values are UBE/BBPE\@. The 64K LM training runs (\texttt{En-only}/\texttt{6-lang}/\texttt{101-lang}) and 128K LM training runs (\texttt{101-lang} only) in Tables~\ref{tab:lm-eval-64k} and~\ref{tab:lm-eval-128k} share the same 101-language mC4 corpus and 200K-step budget. Their token totals follow the same one-shard-per-language methodology and trend slightly downward at larger vocabularies as more frequent patterns absorb into multi-byte merges (cf.\ the \texttt{All avg} column of Table~\ref{tab:gini-vs-tokens}).
\end{minipage}
}
\end{table}

\paragraph{Optimizer and training details.}
All LM runs use the same LitGPT-based trainer with AdamW ($\beta_1=0.9$, $\beta_2=0.95$), weight decay $0.1$, and gradient clipping at $\lVert g \rVert \leq 1.0$. Training uses a linear learning-rate (LR) warmup over 4{,}000 steps followed by cosine decay to $10\%$ of the peak learning rate. The 182M, 230M, and 329M settings use peak LR $6\times 10^{-4}$ with min LR $6\times 10^{-5}$; the 1.3B setting uses peak LR $3\times 10^{-4}$ with min LR $3\times 10^{-5}$. Sequence length is $2048$ throughout. Mixed-precision training is bf16-mixed with \texttt{torch.compile}. All single-seed LM runs use the same seed; the multi-seed stability study (Appendix~\ref{app:multiseed}) adds two independent seeds. During training, a lightweight held-out perplexity check runs every $5{,}000$ steps over $70$ iterations on the terminal evaluation shard. Dropout is disabled. Optimizer states are reset per run; no checkpoint resumption is used across tokenizer conditions.

\paragraph{Evaluation details.}
Held-out predictive loss is evaluated as per-language BPB on deterministic terminal-shard splits across all 101 mC4 language directories, with 1{,}600 context blocks per language to equalize representation. Let $L_{\mathrm{tok}}$ sum natural-log negative log-likelihood over all predicted targets and $L_{\mathrm{text}}$ sum it only over targets with positive raw-text byte counts, excluding EOS and other zero-byte targets. With $N_{\mathrm{tok}}$ predicted targets and $N_{\mathrm{byte}}$ raw \utfEightName{} bytes, the token-level perplexity is $\mathrm{PPL}=\exp(L_{\mathrm{tok}}/N_{\mathrm{tok}})$ and $\mathrm{BPB}=L_{\mathrm{text}}/(N_{\mathrm{byte}}\ln 2)$. Thus, token-level perplexity is not obtained by converting BPB alone. The BPB denominator uses the raw byte count of the underlying text for both UBE and BBPE, rather than a tokenizer-dependent encoded-byte count. Downstream evaluation uses the test splits for XNLI~\citep{conneau2018xnli} (15 languages), XCOPA~\citep{ponti2020xcopa} (11), Belebele~\citep{bandarkar2024belebele} (122), ARC-Easy/ARC-Challenge~\citep{clark2018arc}, and LAMBADA~\citep{paperno2016lambada}; HellaSwag~\citep{zellers2019hellaswag} and PIQA~\citep{bisk2020piqa} use their validation splits.

\subsection{Corpus and token-count accounting}
\label{app:corpus-accounting}

The primary corpus uses tiered per-language targets measured in stored Arrow data: approximately 100\,GiB for English, 30\,GiB each for Korean, Chinese, Japanese, Hindi, and Thai, and 1\,GiB for each of the remaining 95 languages, subject to data availability and shard granularity. As in other multilingual tokenizers, these targets depart from the natural size distribution of the source data: prior work resamples languages so that smaller ones are well represented in the vocabulary \citep{conneau2019xlm,xue2021mt5,costajussa2022nllb}, and one byte-level BPE study fixes English at 30\% of its tokenizer data \citep{abagyan2025onetokenizer}. Our tiers give English the largest share, give five languages whose primary scripts use 3-byte UTF-8 characters a data-rich partition, and keep the other 95 languages in tokenizer training at a smaller fixed size. Because the targets are byte counts, the content of a partition also depends on its byte premium \citep{arnett2024bitproblem}. Table~\ref{tab:corpus-accounting} summarizes the three tokenizer-training subsets, which share the same underlying shards and differ only in language coverage. Within each configuration--vocabulary comparison, tokenizer families use identical raw-text shards, allowing us to compare byte representations under a fixed, non-uniform language mixture; Appendix~\ref{app:corpus-regime} repeats the comparison on a 50\,MiB/language prefix. The LM corpus is held fixed to the full 101-language release across all primary and appendix LM runs.

\begin{table}[ht]
\centering
\small
\setlength{\tabcolsep}{4pt}
\caption{Tokenizer-training corpus size by configuration. Rows aggregate the language subset used to train each tokenizer; the LM corpus always uses the full 101-language mC4 release. Evaluation columns enumerate held-out terminal shards, document counts are byte-weighted one-shard estimates, and raw byte totals are exact Arrow file sizes.}
\label{tab:corpus-accounting}
\begin{tabular}{lrrrrrr}
\toprule
\multirow{2}{*}{Config} & \multirow{2}{*}{Langs} & \multicolumn{3}{c}{Train} & \multicolumn{2}{c}{Eval} \\
\cmidrule(lr){3-5} \cmidrule(lr){6-7}
 & & Shards & GiB & Docs & Shards & Docs \\
\midrule
\texttt{En-only}   & \corpusOneLangLangs{}       & \corpusOneLangTrainShards{}      & \corpusOneLangTrainGB{}      & \corpusOneLangTrainDocsM{}M & \corpusOneLangEvalShards{}      & \corpusOneLangEvalDocsM{}M \\
\texttt{6-lang}    & \corpusSixLangLangs{}       & \corpusSixLangTrainShards{}      & \corpusSixLangTrainGB{}      & \corpusSixLangTrainDocsM{}M & \corpusSixLangEvalShards{}      & \corpusSixLangEvalDocsM{}M \\
\texttt{101-lang}  & \corpusOneOhOneLangLangs{}  & \corpusOneOhOneLangTrainShards{} & \corpusOneOhOneLangTrainGB{} & \corpusOneOhOneLangTrainDocsM{}M & \corpusOneOhOneLangEvalShards{} & \corpusOneOhOneLangEvalDocsM{}M \\
\bottomrule
\end{tabular}
\end{table}

The \texttt{6-lang} set comprises \{en, zh, ko, ja, hi, th\}, chosen as a compact mixed-script stress set with Latin text (en), CJK scripts for zh/ko/ja, and Indic/Thai scripts for hi/th. It does not include a training language written in a 2-byte \utfEightName{} script such as Arabic, Hebrew, Greek, or Cyrillic. The half-data ablation in Appendix~\ref{app:halfdata-ablation} uses approximately half of the \corpusOneOhOneLangTrainGB{}\,GiB \texttt{101-lang} corpus ($\sim$\corpusOneOhOneLangHalfTrainGB{}\,GiB), obtained by per-language uniform shard downsampling at fraction $0.5$.

\paragraph{Tokenizer-dependent token counts.}
Token counts in Table~\ref{tab:lm-token-accounting} are estimated by one-shard-per-language byte-weighted sampling (middle interior train shard only, excluding the last potentially smaller shard). Evaluation shards are fully enumerated. Both tokenizers sample the same shard per language (fixed seed) for fair comparison. The table reports the 32K primary attribution scale and folds train-token totals, evaluation-token totals, steps per epoch, and 200K-step coverage into one accounting table. The 64K LM training runs cover all three coverage configs and the 128K LM training runs cover \texttt{101-lang} only (Tables~\ref{tab:lm-eval-64k} and~\ref{tab:lm-eval-128k}); they share the same 101-language mC4 release, the same one-shard-per-language sampling, and the same fixed 200K-step budget. Their per-language token-per-byte rates decrease with vocabulary size as more frequent patterns are absorbed into multi-byte merges (the \texttt{All avg} column of Table~\ref{tab:gini-vs-tokens} illustrates the same intrinsic scaling pattern under \texttt{101-lang}).

All six tokenizers are applied to the same 101-language mC4 corpus. \texttt{En-only} tokenizers inflate non-English text to more tokens because low-coverage spans remain closer to raw-byte decomposition, so the same raw bytes yield more training tokens per epoch. The 200K fixed-step budget keeps all runs under one epoch. The \texttt{101-lang} comparison is therefore the closest tokenizer-isolation setting, whereas \texttt{En-only} and \texttt{6-lang} should be read as fixed-compute stress tests that intentionally include raw-text exposure differences.

\section{Intrinsic examples and checks}
\label{app:intrinsic-examples-checks}

\subsection{Representative language examples}
\label{app:coverage-examples}

Table~\ref{tab:coverage-examples} lists representative languages outside the 6-language tokenizer training set where BBPE remains close to the encoding floor. Unlike the core \texttt{101-lang/32K} examples in the main text, this narrower \texttt{6-lang/32K} setup is intentionally a stress regime, so the same scripts show larger token savings when coverage is sparse. Table~\ref{tab:coverage-examples-vocab} extends the same six languages to the 16K and 64K \texttt{6-lang} vocabularies; the \texttt{6-lang} setting is not trained at 128K--256K (only \texttt{101-lang} is extended that far, see Appendix~\ref{app:config-ablation}). Table~\ref{tab:rep-language-vocab} extends the main-text 101-lang representative comparison (Table~\ref{tab:rep-language-main}) across all five vocabulary sizes (16K--256K).

\begin{table}[H]
\caption{Per-language UBE impact on \floresTwoHundredName{} in the \texttt{6-lang/32K} stress setting (mean tokens per sentence). The listed non-English languages are outside tokenizer training; UBE cuts token counts by \covExStressSavingMinPct{}--\covExStressSavingMaxPct{}\%, equivalent to \covExStressContentGainMinPct{}--\covExStressContentGainMaxPct{}\% more fixed-budget content. The 16K and 64K results for the same languages appear in Table~\ref{tab:coverage-examples-vocab}.}
\label{tab:coverage-examples}
\centering
\small
\setlength{\tabcolsep}{3pt}
\begin{tabular}{llrrrrl}
\toprule
Language & Region & Speakers & UBE & BBPE & $\Delta$\% & Script \\
\midrule
Amharic  & Ethiopia & $\sim$57M & \covExAmharicUbe{} & \covExAmharicBbpe{} & \covExAmharicDeltaPct{} & Ethiopic \\
Lao      & Laos     & $\sim$30M & \covExLaoUbe{} & \covExLaoBbpe{} & \covExLaoDeltaPct{} & Lao \\
Santali  & India    & $\sim$8M  & \covExOlChikiUbe{} & \covExOlChikiBbpe{} & \covExOlChikiDeltaPct{} & Ol~Chiki \\
Telugu   & India    & $\sim$82M & \covExTeluguUbe{} & \covExTeluguBbpe{} & \covExTeluguDeltaPct{} & Telugu \\
Kannada  & India    & $\sim$44M & \covExKannadaUbe{} & \covExKannadaBbpe{} & \covExKannadaDeltaPct{} & Kannada \\
Tibetan  & China/Nepal & $\sim$6M & \covExTibetanUbe{} & \covExTibetanBbpe{} & \covExTibetanDeltaPct{} & Tibetan \\
\midrule
English  & (ref.)   & --- & \covExEnUbe{} & \covExEnBbpe{} & \covExEnDeltaPct{} & Latin \\
\bottomrule
\end{tabular}
\vspace{2pt}
\begin{minipage}{0.95\linewidth}
\raggedright\scriptsize
\emph{Speaker-count note.} Approximate Ethnologue 27th edition figures may include macrolanguage clusters and L2 speakers; the Lao count includes closely related Isan varieties, and the Tibetan count covers the Tibetic language cluster rather than Standard Tibetan alone~\citep{ethnologue2024}.
\end{minipage}
\end{table}

\begin{table}[ht]
\caption{Per-language UBE impact on \floresTwoHundredName{} across the \texttt{6-lang} vocabulary sweep (16K/32K/64K; mean tokens per sentence). The same six representative non-English languages as Table~\ref{tab:coverage-examples} are shown. $\Delta$\% is computed as $(\mathrm{UBE} - \mathrm{BBPE}) / \mathrm{BBPE}$.}
\label{tab:coverage-examples-vocab}
\centering
\small
\setlength{\tabcolsep}{3pt}
\begin{tabular}{llrrrrrrrrr}
\toprule
\multirow{2}{*}{Language} & \multirow{2}{*}{Script} & \multicolumn{3}{c}{16K} & \multicolumn{3}{c}{32K} & \multicolumn{3}{c}{64K} \\
\cmidrule(lr){3-5} \cmidrule(lr){6-8} \cmidrule(lr){9-11}
 & & UBE & BBPE & $\Delta$\% & UBE & BBPE & $\Delta$\% & UBE & BBPE & $\Delta$\% \\
\midrule
Amharic & Ethiopic & \covExOneSixkAmharicUbe{} & \covExOneSixkAmharicBbpe{} & \covExOneSixkAmharicDeltaPct{} & \covExThreeTwokAmharicUbe{} & \covExThreeTwokAmharicBbpe{} & \covExThreeTwokAmharicDeltaPct{} & \covExSixFourkAmharicUbe{} & \covExSixFourkAmharicBbpe{} & \covExSixFourkAmharicDeltaPct{} \\
Lao & Lao & \covExOneSixkLaoUbe{} & \covExOneSixkLaoBbpe{} & \covExOneSixkLaoDeltaPct{} & \covExThreeTwokLaoUbe{} & \covExThreeTwokLaoBbpe{} & \covExThreeTwokLaoDeltaPct{} & \covExSixFourkLaoUbe{} & \covExSixFourkLaoBbpe{} & \covExSixFourkLaoDeltaPct{} \\
Santali & Ol~Chiki & \covExOneSixkOlChikiUbe{} & \covExOneSixkOlChikiBbpe{} & \covExOneSixkOlChikiDeltaPct{} & \covExThreeTwokOlChikiUbe{} & \covExThreeTwokOlChikiBbpe{} & \covExThreeTwokOlChikiDeltaPct{} & \covExSixFourkOlChikiUbe{} & \covExSixFourkOlChikiBbpe{} & \covExSixFourkOlChikiDeltaPct{} \\
Telugu & Telugu & \covExOneSixkTeluguUbe{} & \covExOneSixkTeluguBbpe{} & \covExOneSixkTeluguDeltaPct{} & \covExThreeTwokTeluguUbe{} & \covExThreeTwokTeluguBbpe{} & \covExThreeTwokTeluguDeltaPct{} & \covExSixFourkTeluguUbe{} & \covExSixFourkTeluguBbpe{} & \covExSixFourkTeluguDeltaPct{} \\
Kannada & Kannada & \covExOneSixkKannadaUbe{} & \covExOneSixkKannadaBbpe{} & \covExOneSixkKannadaDeltaPct{} & \covExThreeTwokKannadaUbe{} & \covExThreeTwokKannadaBbpe{} & \covExThreeTwokKannadaDeltaPct{} & \covExSixFourkKannadaUbe{} & \covExSixFourkKannadaBbpe{} & \covExSixFourkKannadaDeltaPct{} \\
Tibetan & Tibetan & \covExOneSixkTibetanUbe{} & \covExOneSixkTibetanBbpe{} & \covExOneSixkTibetanDeltaPct{} & \covExThreeTwokTibetanUbe{} & \covExThreeTwokTibetanBbpe{} & \covExThreeTwokTibetanDeltaPct{} & \covExSixFourkTibetanUbe{} & \covExSixFourkTibetanBbpe{} & \covExSixFourkTibetanDeltaPct{} \\
\midrule
English & Latin & \covExOneSixkEnUbe{} & \covExOneSixkEnBbpe{} & \covExOneSixkEnDeltaPct{} & \covExThreeTwokEnUbe{} & \covExThreeTwokEnBbpe{} & \covExThreeTwokEnDeltaPct{} & \covExSixFourkEnUbe{} & \covExSixFourkEnBbpe{} & \covExSixFourkEnDeltaPct{} \\
\bottomrule
\end{tabular}%
\end{table}

\begin{table}[ht]
\centering
\small
\caption{Representative per-language UBE impact on \floresTwoHundredName{} across \texttt{101-lang} vocabulary sizes (average tokens per sentence). Extends Table~\ref{tab:rep-language-main}.}
\label{tab:rep-language-vocab}
\begin{tabular}{lllrrr}
\toprule
Language & Script & Vocab & UBE & BBPE & $\Delta$\% \\
\midrule
Santali & Ol~Chiki & 16K & 237.326 & 341.312 & -30.467\% \\
 &  & 32K & 222.944 & 341.287 & -34.676\% \\
 &  & 64K & 212.692 & 341.271 & -37.677\% \\
 &  & 128K & 211.712 & 251.125 & -15.695\% \\
 &  & 256K & 207.379 & 228.583 & -9.276\% \\
\addlinespace[2pt]
Central Atlas Tamazight & Tifinagh & 16K & 192.112 & 269.200 & -28.636\% \\
 &  & 32K & 183.305 & 269.166 & -31.899\% \\
 &  & 64K & 182.243 & 269.123 & -32.282\% \\
 &  & 128K & 180.664 & 269.102 & -32.864\% \\
 &  & 256K & 180.648 & 203.206 & -11.101\% \\
\addlinespace[2pt]
Tibetan & Tibetan & 16K & 287.065 & 432.437 & -33.617\% \\
 &  & 32K & 286.737 & 336.804 & -14.865\% \\
 &  & 64K & 247.171 & 264.208 & -6.449\% \\
 &  & 128K & 230.166 & 214.252 & +7.428\% \\
 &  & 256K & 163.501 & 161.215 & +1.418\% \\
\addlinespace[2pt]
Arabic & Arabic & 16K & 68.952 & 69.872 & -1.317\% \\
 &  & 32K & 60.407 & 60.600 & -0.318\% \\
 &  & 64K & 52.653 & 52.753 & -0.189\% \\
 &  & 128K & 46.245 & 46.296 & -0.111\% \\
 &  & 256K & 40.890 & 40.905 & -0.036\% \\
\addlinespace[2pt]
English & Latin & 16K & 34.671 & 34.863 & -0.550\% \\
 &  & 32K & 31.068 & 31.160 & -0.295\% \\
 &  & 64K & 28.695 & 28.716 & -0.076\% \\
 &  & 128K & 27.158 & 27.174 & -0.058\% \\
 &  & 256K & 26.271 & 26.272 & -0.004\% \\
\addlinespace[2pt]
\bottomrule
\end{tabular}
\end{table}

\subsection{Naturally occurring mixed-script documents}
\label{app:mixed-script}

The main-text mixed-script claim is measured on \msNDocs{} naturally occurring mC4 documents drawn uniformly from \msNLangs{} languages (\texttt{ja}, \texttt{ko}, \texttt{zh}, \texttt{hi}, \texttt{th}, \texttt{ar}, \texttt{ru}, \texttt{bn}; 50 documents each), tokenized under the \texttt{6-lang/32K} tokenizer family. Relative to BBPE, UBE matches or slightly improves token counts in \msUbeBetterOrTie{} languages; Bengali is the only language that regresses (\msBengaliDeltaPct{}). Aggregated over all \msNDocs{} documents, UBE changes total tokens by \msUbeTotalDeltaPct{} relative to BBPE, while \bbpeSixteenName{} is \msBbpeSixteenTotalDeltaPct{} longer because Latin runs in the same documents inherit the uniform 2-byte floor. We report this slice at the matched \texttt{6-lang/32K} setting because it uses the same compact tokenizer as the 6-language stress examples in Table~\ref{tab:coverage-examples}; the qualitative \bbpeSixteenName{}/UBE Latin-floor pattern is consistent across vocabulary sizes (cf.\ \texttt{EN avg} and \texttt{Latin avg} columns of Table~\ref{tab:gini-vs-tokens}, and per-script-family rows of Table~\ref{tab:script-family} from 16K to 256K). The per-document inputs are selected deterministically by our sampling script from the same public mC4 subset used for tokenizer training, so the selection depends only on public inputs.

\subsection{Contamination checks}
\label{app:contamination}

Following prior overlap-based contamination audits such as \citet{arnett2025explaining}, we compare the mC4 scan corpus (\contamDocsM{} documents across \contamLangsChecked{} languages) against the evaluation sentences of each benchmark using two surface-form checks: exact document--sentence equality and a \contamPrefixChars{}-character prefix-substring scan. The \contamDocsM{} figure is the number of mC4 documents loaded by this scan, not the full tokenizer-training document estimate in Table~\ref{tab:corpus-accounting}. Table~\ref{tab:contamination-audit} reports the overlap counts.

\begin{table}[ht]
\centering
\small
\caption{Surface-form contamination audit against the mC4 scan corpus. Prefix overlaps use the \contamPrefixChars{}-character prefix-substring check described in the text.}
\label{tab:contamination-audit}
\begin{tabular}{lrrr}
\toprule
Evaluation suite & Eval sentences & Exact overlaps & Prefix overlaps \\
\midrule
FLORES-200 & \contamFloresSentences{} & \contamFloresOverlaps{} & \contamFloresSubstringOverlaps{} \\
UDHR       & \contamUdhrSentences{}   & \contamUdhrOverlaps{}   & \contamUdhrSubstringOverlaps{}   \\
SIB-200    & \contamSibSentences{}    & \contamSibOverlaps{}    & \contamSibSubstringOverlaps{}    \\
MGSM       & \contamMgsmSentences{}   & \contamMgsmOverlaps{}   & \contamMgsmSubstringOverlaps{}   \\
\bottomrule
\end{tabular}
\end{table}

The prefix-substring scan covers only sentences with at least \contamPrefixChars{} characters: \contamFloresSubstringEligibleSentences{} FLORES-200, \contamUdhrSubstringEligibleSentences{} UDHR, \contamSibSubstringEligibleSentences{} SIB-200, and \contamMgsmSubstringEligibleSentences{} MGSM sentences. The exact-match scan found no FLORES-200, SIB-200, or MGSM overlaps, and one UDHR overlap. Prefix overlaps are rare for FLORES-200 and SIB-200 and absent for MGSM, while UDHR has many prefix overlaps, consistent with its status as a short, widely reproduced public UN text. We therefore treat the UDHR result as a robustness stress test rather than as evidence from a fully held-out corpus.

\section{Intrinsic scaling and robustness diagnostics}
\label{app:intrinsic-scaling-diagnostics}

\subsection{Training config ablation}
\label{app:config-ablation}

\begin{table}[ht]
\caption{FLORES-200 language-level outcome summary across configurations (204 languages). Columns show languages where UBE produces fewer tokens (wins), ties, and more tokens (losses), plus the corresponding Gini change. The 128K and 256K rows are only run for \texttt{101-lang}.}
\label{tab:config-ablation-app}
\centering
\small
\begin{tabular}{llcr}
\toprule
Config & Vocab & UBE wins/ties/losses & $\Delta$Gini \\
\midrule
\texttt{En-only} & 16K & \eovOneSixkBetter{}\,/\,\eovOneSixkNeutral{}\,/\,\eovOneSixkWorse{} & \eovOneSixkGiniPct{} \\
\texttt{En-only} & 32K & \eovThreeTwokBetter{}\,/\,\eovThreeTwokNeutral{}\,/\,\eovThreeTwokWorse{} & \eovThreeTwokGiniPct{} \\
\texttt{En-only} & 64K & \eovSixFourkBetter{}\,/\,\eovSixFourkNeutral{}\,/\,\eovSixFourkWorse{} & \eovSixFourkGiniPct{} \\
\midrule
\texttt{6-lang} & 16K & \fullvOneSixkBetter{}\,/\,\fullvOneSixkNeutral{}\,/\,\fullvOneSixkWorse{} & \fullvOneSixkGiniPct{} \\
\texttt{6-lang} & 32K & \fullvThreeTwokBetter{}\,/\,\fullvThreeTwokNeutral{}\,/\,\fullvThreeTwokWorse{} & \fullvThreeTwokGiniPct{} \\
\texttt{6-lang} & 64K & \fullvSixFourkBetter{}\,/\,\fullvSixFourkNeutral{}\,/\,\fullvSixFourkWorse{} & \fullvSixFourkGiniPct{} \\
\midrule
\texttt{101-lang} & 16K & \oneOhOnevOneSixkBetter{}\,/\,\oneOhOnevOneSixkNeutral{}\,/\,\oneOhOnevOneSixkWorse{} & \oneOhOnevOneSixkGiniPct{} \\
\texttt{101-lang} & 32K & \oneOhOnevThreeTwokBetter{}\,/\,\oneOhOnevThreeTwokNeutral{}\,/\,\oneOhOnevThreeTwokWorse{} & \oneOhOnevThreeTwokGiniPct{} \\
\texttt{101-lang} & 64K & \oneOhOnevSixFourkBetter{}\,/\,\oneOhOnevSixFourkNeutral{}\,/\,\oneOhOnevSixFourkWorse{} & \oneOhOnevSixFourkGiniPct{} \\
\texttt{101-lang} & 128K & \oneOhOnevOneTwoEightkBetter{}\,/\,\oneOhOnevOneTwoEightkNeutral{}\,/\,\oneOhOnevOneTwoEightkWorse{} & \oneOhOnevOneTwoEightkGiniPct{} \\
\texttt{101-lang} & 256K & \oneOhOnevTwoFiveSixkBetter{}\,/\,\oneOhOnevTwoFiveSixkNeutral{}\,/\,\oneOhOnevTwoFiveSixkWorse{} & \oneOhOnevTwoFiveSixkGiniPct{} \\
\bottomrule
\end{tabular}
\end{table}

Across the 9 core tokenizer--vocabulary configurations (16K--64K), UBE reduces token-premium variance by \paperStatVarReductionMinPct{}--\paperStatVarReductionMaxPct{} and Gini coefficient by \paperStatGiniReductionMinPct{}--\paperStatGiniReductionMaxPct{}. For the primary dispersion statistic, a paired language bootstrap over non-English token premiums (10K resamples) places $\Delta$Gini (UBE $-$ BBPE) below zero in every core setting; representative 95\% CIs are [\enonlyvOneSixZeroZeroZeroGiniCIlo{}, \enonlyvOneSixZeroZeroZeroGiniCIhi{}] for \texttt{En-only/16K}, [\OneZeroOnelangvThreeTwoZeroZeroZeroGiniCIlo{}, \OneZeroOnelangvThreeTwoZeroZeroZeroGiniCIhi{}] for the primary \texttt{101-lang/32K} setting, and [\enzhkojahithvSixFourZeroZeroZeroGiniCIlo{}, \enzhkojahithvSixFourZeroZeroZeroGiniCIhi{}] for \texttt{6-lang/64K}. Separately, Wilcoxon signed-rank tests over language-level mean token counts are significant in \paperStatWSigCount{}/9 comparisons at $p < 0.01$; the same \paperStatWSigCount{}/9 token-count decision is unchanged under Holm correction at $\alpha=0.05$ and Benjamini--Hochberg false discovery rate (FDR) correction at $q=0.05$ over the nine prespecified core comparisons. The lone token-count exception is the near-tie \texttt{En-only/64K} setting (max $p = \paperStatWMaxP{}$). Theil-$T$ and Theil-$L$ also decrease in all nine settings, so the disparity ordering is not specific to Gini. The largest qualitative examples are not the typical case: outside the most floor-bound scripts, many per-language token-count differences are small, while the aggregate distribution still shifts consistently toward lower disparity.

Only the \texttt{101-lang} setting was extended to 256K, so the corresponding standalone and composition sweeps are collected in Tables~\ref{tab:gini-vs-tokens} (Appendix~\ref{app:gini-vs-tokens}) and~\ref{tab:gini-vs-tokens-superbpe} (Appendix~\ref{app:superbpe-composition}).

\subsection{Gini coefficient and absolute token counts}
\label{app:gini-vs-tokens}

We report two fixed diagnostic groups of 3-byte BMP configurations alongside overall results for all 204 configurations. CJK (5) comprises the Chinese (Simplified and Traditional), Cantonese, Japanese, and Korean configurations; Other BMP (7) comprises the Tibetan, Myanmar, Ol~Chiki, and Tifinagh configurations (Dzongkha, Tibetan, Burmese, Shan, Santali, Tamasheq, and Central Atlas Tamazight). In both groups, a core character costs three UTF-8 bytes, which UBE lowers to two base symbols. The groups differ in the other factor of a byte premium, the number of characters needed for the same content \citep{arnett2024bitproblem}: relative to English, parallel FLORES-200 text uses 0.31--0.51 times as many characters in CJK (5) but 0.89--1.42 times as many in Other BMP (7), and Other BMP (7) holds seven of the eight highest token premiums under GPT-4's cl100k\_base \citep{petrov2023unfairness}. Training coverage differs as well: the Chinese, Japanese, and Korean partitions each hold about 30\,GiB of the primary corpus, whereas Other BMP has only the 1\,GiB Burmese partition and no dedicated partition for its other six configurations. The split therefore separates scripts that BPE largely covers with learned character and multi-character tokens from scripts that remain near the encoding floor. Both groups are unchanged across vocabulary sizes and composition settings.

\begin{table}[ht]
\caption{FLORES-200 Gini for UBE and BBPE under the primary \texttt{101-lang} training regime. Table~\ref{tab:gini-vs-tokens} gives the token-count breakdown.}
\label{tab:headline-gini-app}
\centering
\small
\begin{tabular}{lcccc}
\toprule
Vocab & UBE Gini & BBPE Gini & $\Delta$Gini & UBE wins \\
\midrule
16K  & \oneOhOnevOneSixkGiniUBE{}  & \oneOhOnevOneSixkGiniBBPE{}  & \oneOhOnevOneSixkGiniPct{}  & \oneOhOnevOneSixkBetter{}/204 \\
32K  & \oneOhOnevThreeTwokGiniUBE{}  & \oneOhOnevThreeTwokGiniBBPE{}  & \oneOhOnevThreeTwokGiniPct{}  & \oneOhOnevThreeTwokBetter{}/204 \\
64K  & \oneOhOnevSixFourkGiniUBE{}  & \oneOhOnevSixFourkGiniBBPE{}  & \oneOhOnevSixFourkGiniPct{}  & \oneOhOnevSixFourkBetter{}/204 \\
128K & \oneOhOnevOneTwoEightkGiniUBE{} & \oneOhOnevOneTwoEightkGiniBBPE{} & \oneOhOnevOneTwoEightkGiniPct{} & \oneOhOnevOneTwoEightkBetter{}/204 \\
256K & \oneOhOnevTwoFiveSixkGiniUBE{} & \oneOhOnevTwoFiveSixkGiniBBPE{} & \oneOhOnevTwoFiveSixkGiniPct{} & \oneOhOnevTwoFiveSixkBetter{}/204 \\
\bottomrule
\end{tabular}
\end{table}

\begin{table}[ht]
\centering
\small
\caption{Gini coefficient and groupwise mean tokens per sentence on FLORES-200 (\texttt{101-lang}; 16K--256K). Latin avg covers Latin-script languages; CJK (5) and Other BMP (7) are the configuration groups defined in Appendix~\ref{app:gini-vs-tokens}. $^\dagger$\bbpeSixteenName{} lowers Gini via a uniform 2-byte floor that raises Latin token counts; SuperBPE compositions are reported separately.}
\label{tab:gini-vs-tokens}
\begin{tabular}{lrrrrrl}
\toprule
Tokenizer & EN avg & Latin avg & \shortstack{CJK\\(5)} & \shortstack{Other BMP\\(7)} & All avg & Gini \\
\midrule
UBE 16K             & \gvaUbeOneSixkEnDetailed{} & \gvaUbeOneSixkLatinDetailed{} & \gvaUbeOneSixkCjkDetailed{} & \gvaUbeOneSixkOtherBmpDetailed{} & \gvaUbeOneSixkAllDetailed{} & \gvaUbeOneSixkGiniDetailed{} \\
UBE 32K             & \gvaUbeThreeTwokEnDetailed{} & \gvaUbeThreeTwokLatinDetailed{} & \gvaUbeThreeTwokCjkDetailed{} & \gvaUbeThreeTwokOtherBmpDetailed{} & \gvaUbeThreeTwokAllDetailed{} & \gvaUbeThreeTwokGiniDetailed{} \\
UBE 64K             & \gvaUbeEnDetailed{} & \gvaUbeLatinDetailed{} & \gvaUbeCjkDetailed{} & \gvaUbeOtherBmpDetailed{} & \gvaUbeAllDetailed{} & \gvaUbeGiniDetailed{} \\
UBE 128K            & \gvaUbeOneTwoEightkEnDetailed{} & \gvaUbeOneTwoEightkLatinDetailed{} & \gvaUbeOneTwoEightkCjkDetailed{} & \gvaUbeOneTwoEightkOtherBmpDetailed{} & \gvaUbeOneTwoEightkAllDetailed{} & \gvaUbeOneTwoEightkGiniDetailed{} \\
UBE 256K            & \gvaUbeTwoFiveSixkEnDetailed{} & \gvaUbeTwoFiveSixkLatinDetailed{} & \gvaUbeTwoFiveSixkCjkDetailed{} & \gvaUbeTwoFiveSixkOtherBmpDetailed{} & \gvaUbeTwoFiveSixkAllDetailed{} & \gvaUbeTwoFiveSixkGiniDetailed{} \\
\midrule
BBPE 16K            & \gvaBbpeOneSixkEnDetailed{} & \gvaBbpeOneSixkLatinDetailed{} & \gvaBbpeOneSixkCjkDetailed{} & \gvaBbpeOneSixkOtherBmpDetailed{} & \gvaBbpeOneSixkAllDetailed{} & \gvaBbpeOneSixkGiniDetailed{} \\
BBPE 32K            & \gvaBbpeThreeTwokEnDetailed{} & \gvaBbpeThreeTwokLatinDetailed{} & \gvaBbpeThreeTwokCjkDetailed{} & \gvaBbpeThreeTwokOtherBmpDetailed{} & \gvaBbpeThreeTwokAllDetailed{} & \gvaBbpeThreeTwokGiniDetailed{} \\
BBPE 64K            & \gvaBbpeEnDetailed{} & \gvaBbpeLatinDetailed{} & \gvaBbpeCjkDetailed{} & \gvaBbpeOtherBmpDetailed{} & \gvaBbpeAllDetailed{} & \gvaBbpeGiniDetailed{} \\
BBPE 128K           & \gvaBbpeOneTwoEightkEnDetailed{} & \gvaBbpeOneTwoEightkLatinDetailed{} & \gvaBbpeOneTwoEightkCjkDetailed{} & \gvaBbpeOneTwoEightkOtherBmpDetailed{} & \gvaBbpeOneTwoEightkAllDetailed{} & \gvaBbpeOneTwoEightkGiniDetailed{} \\
BBPE 256K           & \gvaBbpeTwoFiveSixkEnDetailed{} & \gvaBbpeTwoFiveSixkLatinDetailed{} & \gvaBbpeTwoFiveSixkCjkDetailed{} & \gvaBbpeTwoFiveSixkOtherBmpDetailed{} & \gvaBbpeTwoFiveSixkAllDetailed{} & \gvaBbpeTwoFiveSixkGiniDetailed{} \\
\midrule
\bbpeSixteenName{} 16K  & \gvaBbpeSixteenOneSixkEnDetailed{} & \gvaBbpeSixteenOneSixkLatinDetailed{} & \gvaBbpeSixteenOneSixkCjkDetailed{} & \gvaBbpeSixteenOneSixkOtherBmpDetailed{} & \gvaBbpeSixteenOneSixkAllDetailed{} & \gvaBbpeSixteenOneSixkGiniDetailed{}$^\dagger$ \\
\bbpeSixteenName{} 32K  & \gvaBbpeSixteenThreeTwokEnDetailed{} & \gvaBbpeSixteenThreeTwokLatinDetailed{} & \gvaBbpeSixteenThreeTwokCjkDetailed{} & \gvaBbpeSixteenThreeTwokOtherBmpDetailed{} & \gvaBbpeSixteenThreeTwokAllDetailed{} & \gvaBbpeSixteenThreeTwokGiniDetailed{}$^\dagger$ \\
\bbpeSixteenName{} 64K  & \gvaBbpeSixteenSixFourkEnDetailed{} & \gvaBbpeSixteenSixFourkLatinDetailed{} & \gvaBbpeSixteenSixFourkCjkDetailed{} & \gvaBbpeSixteenSixFourkOtherBmpDetailed{} & \gvaBbpeSixteenSixFourkAllDetailed{} & \gvaBbpeSixteenSixFourkGiniDetailed{}$^\dagger$ \\
\bbpeSixteenName{} 128K & \gvaBbpeSixteenEnDetailed{} & \gvaBbpeSixteenLatinDetailed{} & \gvaBbpeSixteenCjkDetailed{} & \gvaBbpeSixteenOtherBmpDetailed{} & \gvaBbpeSixteenAllDetailed{} & \gvaBbpeSixteenGiniDetailed{}$^\dagger$ \\
\bbpeSixteenName{} 256K & \gvaBbpeSixteenTwoFiveSixkEnDetailed{} & \gvaBbpeSixteenTwoFiveSixkLatinDetailed{} & \gvaBbpeSixteenTwoFiveSixkCjkDetailed{} & \gvaBbpeSixteenTwoFiveSixkOtherBmpDetailed{} & \gvaBbpeSixteenTwoFiveSixkAllDetailed{} & \gvaBbpeSixteenTwoFiveSixkGiniDetailed{}$^\dagger$ \\
\bottomrule
\end{tabular}
\end{table}

\subsubsection{Why Gini rises with vocabulary size}
\label{app:gini-rises}

Figure~\ref{fig:premium-scaling} visualizes the mechanism behind the Gini increase observed in Table~\ref{tab:gini-vs-tokens}. Script families compress at vastly different rates. Across the \gsUbeVocabRange{} vocabulary sweep, English token counts fall by \gsUbeEnRedPct{}, whereas Hindi (Devanagari) token counts fall by only \gsUbeHiRedPct{}. Because premiums are ratios to English, Hindi's premium \emph{rises} from \gsUbeHiPremMin{} to \gsUbeHiPremMax{} (\gsUbeHiPremDelta{}) even though its absolute count decreases slightly. Meanwhile, languages that can still form long merges (Arabic: \gsUbeArRedPct{} reduction, premium \gsUbeArPremDelta{}; Korean: \gsUbeKoRedPct{}, \gsUbeKoPremDelta{}) converge toward English. The net effect is a widening premium distribution---\gsUbePremUp{} of \gsUbePremTotal{} languages see their premium increase---that drives Gini upward. This pattern is expected for BPE on a multilingual corpus when merge productivity is script-dependent: some script families absorb more productive merges per added vocabulary slot than others. UBE lowers the \emph{intercept} of the premium curve but does not remove this vocabulary-scaling effect.

\begin{figure}[ht]
\centering
\input{figures/premium_scaling.pgf}
\caption{Token premium (ratio to English) vs.\ vocabulary size for six script families under UBE (\texttt{101-lang}).}
\label{fig:premium-scaling}
\end{figure}

\subsubsection{First-order token-cost intuition}
\label{app:token-cost-model}

Complementing the empirical premium-scaling decomposition in Appendix~\ref{app:gini-rises}, a simple first-order cost sketch provides intuition for (i) the dataset ordering in Table~\ref{tab:cross-dataset-vocab} (UDHR $>$ FLORES-200 $>$ MGSM) and (ii) the monotonic attenuation of UBE's Gini reduction---that is, $\Delta$Gini approaching zero---as vocabulary grows (Table~\ref{tab:headline-gini-app}). We present it as an \emph{intuition}, not as a quantitative prediction: UBE changes the merge-slot budget and the observed vocabulary utilization (Appendix~\ref{app:vocab-diagnostics}), so the constant-coverage derivation below is a first-order sketch rather than a parameter-free forecast.

\paragraph{Setup.}
For a script $s$ and substrate $E \in \{\text{UBE}, \text{BBPE}\}$, let $f_s^{(E)}$ denote the corresponding main-text encoding-floor value. Under BBPE, $f_s^{(\text{BBPE})} = |\mathrm{utf8}(c)|$ for $c$ drawn from $s$, taking values in $\{1, 2, 3, 4\}$. Under UBE, the routing rule sets $f_s^{(\text{UBE})} = f_s^{(\text{BBPE})}$ for $f_s \leq 2$, $f_s^{(\text{UBE})} = 2$ for $f_s = 3$, and $f_s^{(\text{UBE})} = 4$ for $f_s = 4$. Let $c_s^{(E)} \in [0,1]$ denote the \emph{character-level merge coverage} under substrate $E$: the probability that a random character from script $s$ is absorbed by a learned BPE token whose per-character cost is below $f_s^{(E)}$. Let $m_s^{(E)}$ denote the expected token count per character among covered characters.

\paragraph{Model.}
The expected token cost per character under substrate $E$ is
\[
T_s^{(E)} \;\approx\; c_s^{(E)}\, m_s^{(E)} \;+\; \bigl(1 - c_s^{(E)}\bigr)\, f_s^{(E)}.
\]
This is a first-order approximation that treats coverage as a scalar and ignores correlations between adjacent characters.

\paragraph{UBE effect under a constant-coverage assumption.}
UBE's alphabet, merge-slot budget, and merge-utilization structure all differ from BBPE's (Appendix~\ref{app:vocab-diagnostics}), so $c_s^{(E)}$ and $m_s^{(E)}$ are genuinely substrate-dependent. If we simplify by assuming $c_s^{(\text{UBE})} \approx c_s^{(\text{BBPE})} =: c_s$ and $m_s^{(\text{UBE})} \approx m_s^{(\text{BBPE})} =: m_s$, the UBE--BBPE difference collapses, for BMP 3-byte scripts, to
\[
T_s^{(\text{UBE})} - T_s^{(\text{BBPE})} \;\approx\; (1 - c_s)\,\bigl(f_s^{(\text{UBE})} - f_s^{(\text{BBPE})}\bigr) \;=\; -(1 - c_s).
\]
For ASCII and 2-byte \utfEightName{} scripts, $f_s^{(\text{UBE})} = f_s^{(\text{BBPE})}$, so the first-order saving is zero. For BMP 3-byte scripts, by contrast, the floor change---and hence the first-order saving above---is scaled by the uncovered fraction $1 - c_s$.

\paragraph{Qualitative dataset ordering.}
Let $\bar u_D = \mathbb{E}_{s \sim D,\, f_s = 3}\bigl[1 - c_s^{(\text{BBPE})}\bigr]$ denote the mean uncovered fraction of 3-byte characters in dataset $D$ under BBPE\@. We do not measure $\bar u_D$; the following large, moderate, and near-zero labels follow from dataset coverage breadth alone:
\begin{itemize}
    \item UDHR (\udhrRobLangs{} corpus entries, many with no mC4 training data): $\bar u_D$ is expected to be large. Observed $\Delta$Gini at 32K: \udhrRobThreeTwokGiniPct{}.
    \item FLORES-200 (\floresRobLangs{} languages, mixed coverage): $\bar u_D$ is expected to be moderate. Observed: \floresRobThreeTwokGiniPct{}.
    \item MGSM (\mgsmRobLangs{} languages, all of which appear in the 101-language mC4 corpus): $\bar u_D$ is expected to be near zero. Observed: \mgsmRobThreeTwokGiniPct{}.
\end{itemize}
The observed ordering is consistent with this first-order sketch. We rank $\bar u_D$ from coverage breadth rather than measuring it directly; per-dataset measurement against the trained BBPE tokenizer is left to future work.

\paragraph{Decay with vocabulary size.}
Larger vocabularies increase $c_s^{(\text{BBPE})}$ for frequent patterns, shrinking $(1 - c_s^{(\text{BBPE})})$ and therefore the first-order UBE saving. This is consistent with $\Delta$Gini monotonically approaching zero from \oneOhOnevOneSixkGiniPct{} at 16K to \oneOhOnevTwoFiveSixkGiniPct{} at 256K (Table~\ref{tab:headline-gini-app}) and the per-script premium-scaling decomposition in Appendix~\ref{app:gini-rises}.

\paragraph{Beyond the floor-only saving.}
Under the constant-coverage assumption above, replacing a 3-byte floor with 2 bytes yields at most a $1/3$ premium reduction. Empirically, Santali at \texttt{101-lang/64K} has a UBE/BBPE token-count ratio below the corresponding $2/3$ floor-only limit (Table~\ref{tab:best-worst-flores}), exceeding this floor-only saving. UBE's \utfSixteenName{} path therefore enables BPE merges that absorb content beyond what BBPE's \utfEightName{} path captures, so the constant-coverage assumption underestimates the realized saving.

\subsection{SuperBPE composition results}
\label{app:superbpe-composition}

Complementing the standalone UBE/BBPE and \bbpeSixteenName{} results in Table~\ref{tab:gini-vs-tokens}, Table~\ref{tab:gini-vs-tokens-superbpe} reports the full S-UBE/S-BBPE composition sweep. The table adds two details to the main-text summary: among the SuperBPE composition rows, S-UBE has lower Gini than S-BBPE at every vocabulary size, while plain UBE keeps fewer total FLORES-200 tokens than S-UBE at 16K--64K; and the bounded, sign-changing interaction term supports a qualitative complementarity claim rather than exact additivity.

\paragraph{Implementation of S-UBE and S-BBPE.}
We implement S-UBE and S-BBPE as SuperBPE-style \emph{extension} runs, not as separately designed tokenizers. Each reported target vocabulary size $V \in \{16\text{K}, 32\text{K}, 64\text{K}, 128\text{K}, 256\text{K}\}$ starts from an already trained Stage~1 tokenizer of the same substrate at vocabulary size $V/2$ (UBE for S-UBE, BBPE for S-BBPE), loads its \texttt{merges.txt} as ordered seed merges, and then continues BPE training until the total vocabulary reaches $V$. The Stage~1 seed tokenizers are intermediate artifacts only; all reported S-UBE/S-BBPE result rows use the 16K--256K target vocabularies. This mirrors the two-stage structure of SuperBPE: Stage~1 learns an ordinary whitespace-splitting BPE tokenizer over the chosen substrate, and Stage~2 reuses those merges while relaxing the pre-tokenizer so that cross-word merge candidates can be selected.

The Stage~2 implementation is shared by the two composed variants. Our \texttt{train\_superbpe.py} script first regenerates shard-local count states using a permissive regex pre-tokenizer rather than the normal GPT-2 whitespace-splitting regex; these states cannot be reused from UBE/BBPE training because they contain multi-word chunks that ordinary pre-tokenization never exposes. Shard-local pre-tokenized chunks occurring fewer than five times are discarded before aggregation; new merges use a minimum frequency of two. It then merges the shard states, initializes the BPE trainer with the substrate-specific base alphabet and the Stage~1 seed merges, replays the seed merges in their original order, rebuilds the pair-count priority queue from the permissive-regex corpus state, and resumes the standard greedy BPE merge loop. The only substrate-specific branch is the byte representation used before BPE: S-BBPE uses the standard \utfEightName{} byte-level alphabet, pre-tokenizer, and decoder; S-UBE uses UBE's UniversalByteLevel alphabet, pre-tokenizer, and decoder, so routed \utfSixteenName{} PUA symbols participate in the same SuperBPE Stage~2 merge loop. In both cases we retain SuperBPE's colon-space block and four-word cap (at most four whitespace-separated words per merged token), which avoids unbounded multiword tokens; the reported tokenizer sidecars record the base type, seed-merge count, selected shard-state count, target vocabulary, and these SuperBPE constraints.

\begin{table}[ht]
\centering
\small
\caption{SuperBPE composition results on FLORES-200 (204 evaluation languages; tokenizers trained under the \texttt{101-lang} mC4 regime; 16K--256K vocab). Format matches Table~\ref{tab:gini-vs-tokens}.}
\label{tab:gini-vs-tokens-superbpe}
\begin{tabular}{lrrrrrl}
\toprule
Tokenizer & EN avg & Latin avg & \shortstack{CJK\\(5)} & \shortstack{Other BMP\\(7)} & All avg & Gini \\
\midrule
SuperBPE UBE 16K    & \gvaSbpeUbeOneSixkEnDetailed{} & \gvaSbpeUbeOneSixkLatinDetailed{} & \gvaSbpeUbeOneSixkCjkDetailed{} & \gvaSbpeUbeOneSixkOtherBmpDetailed{} & \gvaSbpeUbeOneSixkAllDetailed{} & \gvaSbpeUbeOneSixkGiniDetailed{} \\
SuperBPE UBE 32K    & \gvaSbpeUbeThreeTwokEnDetailed{} & \gvaSbpeUbeThreeTwokLatinDetailed{} & \gvaSbpeUbeThreeTwokCjkDetailed{} & \gvaSbpeUbeThreeTwokOtherBmpDetailed{} & \gvaSbpeUbeThreeTwokAllDetailed{} & \gvaSbpeUbeThreeTwokGiniDetailed{} \\
SuperBPE UBE 64K    & \gvaSbpeUbeEnDetailed{} & \gvaSbpeUbeLatinDetailed{} & \gvaSbpeUbeCjkDetailed{} & \gvaSbpeUbeOtherBmpDetailed{} & \gvaSbpeUbeAllDetailed{} & \gvaSbpeUbeGiniDetailed{} \\
SuperBPE UBE 128K   & \gvaSbpeUbeOneTwoEightkEnDetailed{} & \gvaSbpeUbeOneTwoEightkLatinDetailed{} & \gvaSbpeUbeOneTwoEightkCjkDetailed{} & \gvaSbpeUbeOneTwoEightkOtherBmpDetailed{} & \gvaSbpeUbeOneTwoEightkAllDetailed{} & \gvaSbpeUbeOneTwoEightkGiniDetailed{} \\
SuperBPE UBE 256K   & \gvaSbpeUbeTwoFiveSixkEnDetailed{} & \gvaSbpeUbeTwoFiveSixkLatinDetailed{} & \gvaSbpeUbeTwoFiveSixkCjkDetailed{} & \gvaSbpeUbeTwoFiveSixkOtherBmpDetailed{} & \gvaSbpeUbeTwoFiveSixkAllDetailed{} & \gvaSbpeUbeTwoFiveSixkGiniDetailed{} \\
\midrule
SuperBPE BBPE 16K   & \gvaSbpeBbpeOneSixkEnDetailed{} & \gvaSbpeBbpeOneSixkLatinDetailed{} & \gvaSbpeBbpeOneSixkCjkDetailed{} & \gvaSbpeBbpeOneSixkOtherBmpDetailed{} & \gvaSbpeBbpeOneSixkAllDetailed{} & \gvaSbpeBbpeOneSixkGiniDetailed{} \\
SuperBPE BBPE 32K   & \gvaSbpeBbpeThreeTwokEnDetailed{} & \gvaSbpeBbpeThreeTwokLatinDetailed{} & \gvaSbpeBbpeThreeTwokCjkDetailed{} & \gvaSbpeBbpeThreeTwokOtherBmpDetailed{} & \gvaSbpeBbpeThreeTwokAllDetailed{} & \gvaSbpeBbpeThreeTwokGiniDetailed{} \\
SuperBPE BBPE 64K   & \gvaSbpeBbpeEnDetailed{} & \gvaSbpeBbpeLatinDetailed{} & \gvaSbpeBbpeCjkDetailed{} & \gvaSbpeBbpeOtherBmpDetailed{} & \gvaSbpeBbpeAllDetailed{} & \gvaSbpeBbpeGiniDetailed{} \\
SuperBPE BBPE 128K  & \gvaSbpeBbpeOneTwoEightkEnDetailed{} & \gvaSbpeBbpeOneTwoEightkLatinDetailed{} & \gvaSbpeBbpeOneTwoEightkCjkDetailed{} & \gvaSbpeBbpeOneTwoEightkOtherBmpDetailed{} & \gvaSbpeBbpeOneTwoEightkAllDetailed{} & \gvaSbpeBbpeOneTwoEightkGiniDetailed{} \\
SuperBPE BBPE 256K  & \gvaSbpeBbpeTwoFiveSixkEnDetailed{} & \gvaSbpeBbpeTwoFiveSixkLatinDetailed{} & \gvaSbpeBbpeTwoFiveSixkCjkDetailed{} & \gvaSbpeBbpeTwoFiveSixkOtherBmpDetailed{} & \gvaSbpeBbpeTwoFiveSixkAllDetailed{} & \gvaSbpeBbpeTwoFiveSixkGiniDetailed{} \\
\bottomrule
\end{tabular}
\end{table}

\paragraph{Interaction-term stability.}
The interaction term of the $2 \times 2$ factorial, defined on signed relative-Gini changes as combined\% $-$ UBE\% $-$ SuperBPE\% (negative individual terms correspond to Gini reductions), is bounded but sign-changing across vocabulary sizes: \gvaInteractionOneSixk{}\% at 16K, \gvaInteractionThreeTwok{}\% at 32K, \gvaInteractionSixFourk{}\% at 64K, \gvaInteractionOneTwoEightk{}\% at 128K, and \gvaInteractionTwoFiveSixk{}\% at 256K (mean \gvaInteractionMean{}\%). A positive value indicates that the composition falls short of the summed individual reductions (interference); a negative value indicates that the composition exceeds the sum (mild synergy). This pattern supports a qualitative complementarity interpretation, not a uniform near-additive law: small-vocabulary alphabet overhead and larger-vocabulary cross-word merge opportunities affect the sign of the interaction. At 64K, a bootstrap over \gvaInteractionSixFourkBootN{} languages (10K resamples) places the interaction at \gvaInteractionSixFourk{}\% with a 95\% CI of [\gvaInteractionSixFourkCILow{}\%, \gvaInteractionSixFourkCIHigh{}\%], indicating that this negative interaction point is small but distinguishable from zero. We do not claim a uniform interaction sign across vocabulary sizes.

\subsection{Data-to-vocabulary ratio ablation}
\label{app:halfdata-ablation}

Figure~\ref{fig:halfdata-ablation} plots the Gini coefficient as a function of vocabulary size for full-data (\corpusOneOhOneLangTrainGB{}\,GiB) and half-data ($\sim$\corpusOneOhOneLangHalfTrainGB{}\,GiB) tokenizers across all four families. Full and half curves nearly overlap at every vocabulary size for all families, with the median $|\Delta\text{Gini}|$ across \hdNComparisons{} comparisons of just \hdMedianAbsDelta{} (max \hdMaxAbsDelta{} at S-BBPE 16K). This supports the interpretation that the overall Gini increase observed for UBE and BBPE is driven mainly by encoding-floor asymmetry and script-dependent merge productivity, rather than by insufficient tokenizer-training data at large vocabulary sizes.

\begin{figure}[ht]
\centering
\input{figures/halfdata_ablation.pgf}
\caption{Gini coefficient vs.\ vocabulary size for full-data (\corpusOneOhOneLangTrainGB{}\,GiB) and half-data ($\sim$\corpusOneOhOneLangHalfTrainGB{}\,GiB) tokenizer training (\texttt{101-lang}). Solid markers denote full data and open markers denote half data.}
\label{fig:halfdata-ablation}
\end{figure}

\begin{table}[t]
\centering
\small
\setlength{\tabcolsep}{5pt}
\caption{Best and worst 10 languages by UBE/BBPE token ratio on FLORES-200 (\texttt{101-lang}; UBE/BBPE: 64K, BBPE16: 128K). Ratio $<1$ means UBE uses fewer tokens; ratio $>1$ means UBE uses more. BBPE16 is included as a cross-substrate reference.}
\label{tab:best-worst-flores}
\begin{tabular}{lrrrr}
\toprule
Language & UBE avg & BBPE avg & BBPE16 (128K) avg & UBE/BBPE \\
\midrule
\multicolumn{5}{l}{\textit{Top 10: UBE helps most}} \\
\midrule
        Santali & 212.692 & 341.271 & 222.620 & 0.6232 \\
        Central Atlas Tamazight & 182.243 & 269.123 & 183.143 & 0.6772 \\
        Tamashek (Tifinagh) & 185.609 & 270.571 & 187.934 & 0.6860 \\
        Dzongkha & 265.685 & 286.182 & 267.341 & 0.9284 \\
        Tibetan & 247.171 & 264.208 & 247.411 & 0.9355 \\
        Kabyle & 62.457 & 62.967 & 81.116 & 0.9919 \\
        Sicilian & 52.945 & 53.291 & 75.957 & 0.9935 \\
        Lao & 87.657 & 88.180 & 83.831 & 0.9941 \\
        Armenian & 64.792 & 65.068 & 63.260 & 0.9957 \\
        Czech & 47.946 & 48.132 & 53.918 & 0.9961 \\
\midrule
\multicolumn{5}{l}{\textit{Bottom 10: UBE helps least (or regresses)}} \\
\midrule
        Japanese & 40.783 & 40.630 & 37.926 & 1.0037 \\
        Yoruba & 70.871 & 70.604 & 77.817 & 1.0038 \\
        Chinese (Simplified) & 35.015 & 34.764 & 33.126 & 1.0072 \\
        Chinese (Traditional) & 39.077 & 38.741 & 36.869 & 1.0087 \\
        Cantonese & 38.206 & 37.856 & 36.009 & 1.0092 \\
        Mossi & 58.559 & 57.896 & 80.328 & 1.0115 \\
        Odia & 135.328 & 133.265 & 121.959 & 1.0155 \\
        Georgian & 71.468 & 70.135 & 64.738 & 1.0190 \\
        Amharic & 61.581 & 60.261 & 55.687 & 1.0219 \\
        Shan & 218.032 & 204.210 & 190.272 & 1.0677 \\
\bottomrule
\end{tabular}
\end{table}

\subsection{Per-language distribution and mechanism diagnostics}
\label{app:per-lang-distribution}

The full 16K--256K numerical sweep appears in Tables~\ref{tab:rep-language-vocab},~\ref{tab:headline-gini-app}, and~\ref{tab:gini-vs-tokens}, with complete per-language values in Appendix~\ref{app:per-language-results}. Table~\ref{tab:best-worst-flores} lists the top 10 and bottom 10 languages by UBE/BBPE token ratio under the representative \texttt{101-lang/64K} setting, using raw mean token counts. Figure~\ref{fig:best-worst-vocab-panels} shows the best and worst five languages across all 16K--256K vocabulary sizes. Most languages cluster near parity (ratio $\approx 1.0$), with a left tail of substantial improvements concentrated in low-resource 3-byte scripts. Across vocabulary sizes, the regression tail stays narrow and concentrated in a small recurring set of languages---chiefly Georgian, Amharic, Shan, and Tibetan, with Mossi (\texttt{mos\_Latn}) as the Latin-script language that regresses most consistently across 32K--256K---and peaks at \repLangOneTwoEightkTibetanDeltaPct{} for Tibetan at 128K. Dzongkha shares the Tibetan script but does not regress repeatedly; it improves at every vocabulary size except 128K.

\FloatBarrier

\begin{figure}[htbp]
\centering
\resizebox{\linewidth}{!}{\input{figures/best_worst_langs_panels.pgf}}
\caption{Best and worst five languages by UBE/BBPE token-count ratio across the full \texttt{101-lang} vocabulary sweep (16K--256K). Bars start at the vertical line, which marks equal token counts (ratio~1); red bars mark ratios below 1 (UBE uses fewer tokens) and blue bars ratios of 1 or above.}
\label{fig:best-worst-vocab-panels}
\end{figure}

\subsubsection{Token composition and decomposition}
\label{app:token-composition}

\begingroup
\let\origtable\table
\let\endorigtable\endtable
\renewenvironment{table}[1][]{\origtable[H]}{\endorigtable}
\begin{table}[t]
\centering
\small
\caption{Token composition for the scripts with the largest BBPE-to-UBE token-count gaps in the \texttt{101-lang}/64K setting. Per-script-family token counts across all 16K--256K vocabularies appear in Table~\ref{tab:script-family}.}
\label{tab:token-composition}
\begin{tabular}{llcccccc}
\toprule
Script & Language & Codepoint & \multicolumn{2}{c}{Floor (bytes/char)} & \multicolumn{2}{c}{Avg tokens/sent} & Save \\
\cmidrule(lr){4-5} \cmidrule(lr){6-7}
 & & & \utfEightName{} & \utfSixteenName{} & UBE & BBPE & \\
\midrule
        Ol Chiki & Santali & \texttt{U+1C5E} & 3 & 2 & 212.692 & 341.271 & 37.7\% \\
        Tifinagh & Central Atlas Tamazight & \texttt{U+2D30} & 3 & 2 & 182.243 & 269.123 & 32.3\% \\
        Tibetan & Dzongkha & \texttt{U+0F42} & 3 & 2 & 265.685 & 286.182 & 7.2\% \\
\bottomrule
\end{tabular}
\end{table}

\begin{table}[t]
\centering
\small
\caption{Word- and sentence-level tokenization examples in the \texttt{101-lang}/64K setting. ``Base'' and ``Tokens'' count pre-merge symbols and final BPE tokens. ``Word'' gives a romanized form; ``(\emph{n}~chars)'' denotes sentence fragments. Table~\ref{tab:token-decomposition-vocab} extends the same examples to all 16K--256K vocabularies.}
\label{tab:token-decomposition}
\begin{tabular}{llrcccc@{\hspace{6pt}}c}
\toprule
Script & Word & Chars & \multicolumn{2}{c}{UBE} & \multicolumn{2}{c}{BBPE} & Save \\
\cmidrule(lr){4-5} \cmidrule(lr){6-7}
 & & & Base & Tokens & Base & Tokens & \\
\midrule
        Ol Chiki & ona & 3 & 6 & 6 & 9 & 9 & 33\% \\
        Ol Chiki & kemikal & 7 & 14 & 12 & 21 & 21 & 43\% \\
        Tifinagh & sin & 3 & 6 & 6 & 9 & 9 & 33\% \\
        Tifinagh & acukula & 7 & 14 & 14 & 21 & 21 & 33\% \\
\midrule
        Ol Chiki & (24 chars) & 24 & 45 & 39 & 66 & 63 & 38\% \\
        Tifinagh & (18 chars) & 18 & 34 & 31 & 50 & 48 & 35\% \\
\bottomrule
\end{tabular}
\end{table}

\endgroup

Tables~\ref{tab:token-composition} and~\ref{tab:token-decomposition} illustrate the mechanism at the word level. For short Ol~Chiki and Tifinagh words, BBPE produces exactly \utfEightHighFloor{} tokens per character because it has learned virtually no merges for these scripts even at 64K vocabulary: each character's \utfEightHighFloor{} \utfEightName{} bytes remain as separate base symbols. UBE starts at \utfEightMidFloor{} base symbols per character via the \utfSixteenName{} path, and BPE training has additionally learned cross-character byte merges, pushing the token count below the \utfEightMidFloor{}-per-character baseline. For a \tdSentOneChars{}-character Ol~Chiki sentence fragment, the combined floor and merge advantage yields \tdSentOneBbpeTokens{} BBPE tokens versus \tdSentOneUbeTokens{} UBE tokens (\tdSentOneSavePct{} reduction). Table~\ref{tab:token-decomposition-vocab} extends the same word- and sentence-level examples across the full 16K--256K vocabulary sweep, and Table~\ref{tab:script-family} reports the corresponding per-script-family averages on FLORES-200.

\begin{table}[ht]
\centering
\footnotesize
\setlength{\tabcolsep}{3pt}
\caption{Word- and sentence-level token counts across the 16K--256K vocabulary sweep (\texttt{101-lang}), using the same inputs as Table~\ref{tab:token-decomposition}.}
\label{tab:token-decomposition-vocab}
\resizebox{\textwidth}{!}{%
\begin{tabular}{llrrrrrrrrrrr}
\toprule
\multirow{2}{*}{Script} & \multirow{2}{*}{Word} & \multirow{2}{*}{Chars} & \multicolumn{2}{c}{16K} & \multicolumn{2}{c}{32K} & \multicolumn{2}{c}{64K} & \multicolumn{2}{c}{128K} & \multicolumn{2}{c}{256K} \\
\cmidrule(lr){4-5} \cmidrule(lr){6-7} \cmidrule(lr){8-9} \cmidrule(lr){10-11} \cmidrule(lr){12-13}
 & & & UBE & BBPE & UBE & BBPE & UBE & BBPE & UBE & BBPE & UBE & BBPE \\
\midrule
        Ol Chiki & ona & 3 & 6 & 9 & 6 & 9 & 6 & 9 & 6 & 6 & 6 & 6 \\
        Ol Chiki & kemikal & 7 & 14 & 21 & 13 & 21 & 12 & 21 & 12 & 14 & 10 & 14 \\
        Tifinagh & sin & 3 & 6 & 9 & 6 & 9 & 6 & 9 & 6 & 9 & 6 & 6 \\
        Tifinagh & acukula & 7 & 14 & 21 & 14 & 21 & 14 & 21 & 14 & 21 & 14 & 14 \\
\midrule
        Ol Chiki & (24 chars) & 24 & 42 & 63 & 40 & 63 & 39 & 63 & 39 & 45 & 38 & 42 \\
        Tifinagh & (18 chars) & 18 & 31 & 48 & 31 & 48 & 31 & 48 & 31 & 48 & 31 & 34 \\
\bottomrule
\end{tabular}%
}
\end{table}

\FloatBarrier

\subsubsection{Per-script-family analysis}
\label{app:script-family}

Table~\ref{tab:script-family} breaks down the \texttt{101-lang} FLORES-200 results by Unicode script family across all vocabulary sizes (16K--256K). UBE's largest absolute token reductions are in Ol~Chiki, Tifinagh, and Tibetan---all 3-byte \utfEightName{} scripts with no language-specific mC4 training data, where BBPE remains near its encoding floor. For high-coverage scripts (Latin, Cyrillic, Arabic, Devanagari), UBE's effect is negligible ($<1$ token) at every vocabulary size, consistent with the typical broad-coverage outcome being near-tie behavior rather than large per-language swings.

\begingroup
\footnotesize
\begin{longtable}{lrrlrrrrr}
\caption{\normalsize Per-script-family average token counts on FLORES-200 (\texttt{101-lang}; 16K--256K; 29 scripts). $\Delta = \mathrm{UBE} - \mathrm{BBPE}$ (negative = fewer UBE tokens).}
\label{tab:script-family}\\
\toprule
Script family & Langs & \utfEightName{} & Metric & 16K & 32K & 64K & 128K & 256K \\
\midrule
\endfirsthead
\multicolumn{9}{c}{Table \thetable{} (continued)}\\
\toprule
Script family & Langs & \utfEightName{} & Metric & 16K & 32K & 64K & 128K & 256K \\
\midrule
\endhead
\bottomrule
\endfoot
Latin & 127 & 1--2 & UBE & 62.163 & 56.627 & 52.500 & 48.722 & 45.097 \\*
 &  &  & BBPE & 62.373 & 56.743 & 52.540 & 48.751 & 45.100 \\*
 &  &  & $\Delta$ & $-0.210$ & $-0.117$ & $-0.040$ & $-0.029$ & $-0.003$ \\
\addlinespace[2pt]
Arabic & 22 & 2 & UBE & 77.610 & 69.549 & 62.741 & 56.453 & 51.336 \\*
 &  &  & BBPE & 78.312 & 69.687 & 62.817 & 56.489 & 51.358 \\*
 &  &  & $\Delta$ & $-0.702$ & $-0.138$ & $-0.076$ & $-0.036$ & $-0.022$ \\
\addlinespace[2pt]
Cyrillic & 12 & 2 & UBE & 75.391 & 65.828 & 57.626 & 50.389 & 44.300 \\*
 &  &  & BBPE & 75.875 & 66.044 & 57.703 & 50.435 & 44.315 \\*
 &  &  & $\Delta$ & $-0.484$ & $-0.216$ & $-0.077$ & $-0.046$ & $-0.015$ \\
\addlinespace[2pt]
Hebrew & 2 & 2 & UBE & 74.512 & 63.175 & 55.843 & 48.783 & 43.206 \\*
 &  &  & BBPE & 74.965 & 63.530 & 55.976 & 48.797 & 43.216 \\*
 &  &  & $\Delta$ & $-0.453$ & $-0.355$ & $-0.133$ & $-0.014$ & $-0.010$ \\
\addlinespace[2pt]
Greek & 1 & 2 & UBE & 90.479 & 77.161 & 66.125 & 56.125 & 48.226 \\*
 &  &  & BBPE & 90.726 & 77.342 & 66.307 & 56.179 & 48.243 \\*
 &  &  & $\Delta$ & $-0.247$ & $-0.181$ & $-0.183$ & $-0.054$ & $-0.017$ \\
\addlinespace[2pt]
Armenian & 1 & 2 & UBE & 91.947 & 78.250 & 64.792 & 54.347 & 46.934 \\*
 &  &  & BBPE & 92.957 & 78.649 & 65.068 & 54.425 & 46.963 \\*
 &  &  & $\Delta$ & $-1.010$ & $-0.399$ & $-0.277$ & $-0.078$ & $-0.030$ \\
\addlinespace[2pt]
Devanagari & 10 & 3 & UBE & 93.056 & 89.929 & 88.237 & 86.705 & 85.871 \\*
 &  &  & BBPE & 93.438 & 90.036 & 88.247 & 86.756 & 85.869 \\*
 &  &  & $\Delta$ & $-0.382$ & $-0.107$ & $-0.010$ & $-0.051$ & $+0.001$ \\
\addlinespace[2pt]
Bengali & 3 & 3 & UBE & 119.222 & 110.351 & 104.794 & 100.288 & 96.991 \\*
 &  &  & BBPE & 119.609 & 110.142 & 104.888 & 99.876 & 96.733 \\*
 &  &  & $\Delta$ & $-0.386$ & $+0.209$ & $-0.095$ & $+0.412$ & $+0.258$ \\
\addlinespace[2pt]
Ethiopic & 2 & 3 & UBE & 87.370 & 75.036 & 65.440 & 57.024 & 49.441 \\*
 &  &  & BBPE & 87.440 & 74.330 & 64.830 & 56.061 & 48.660 \\*
 &  &  & $\Delta$ & $-0.071$ & $+0.706$ & $+0.610$ & $+0.962$ & $+0.781$ \\
\addlinespace[2pt]
Tibetan & 2 & 3 & UBE & 299.469 & 298.116 & 256.428 & 237.486 & 170.774 \\*
 &  &  & BBPE & 452.168 & 352.053 & 275.195 & 224.709 & 169.761 \\*
 &  &  & $\Delta$ & $-152.699$ & $-53.937$ & $-18.767$ & $+12.776$ & $+1.013$ \\
\addlinespace[2pt]
Myanmar & 2 & 3 & UBE & 188.222 & 173.614 & 167.616 & 139.157 & 129.968 \\*
 &  &  & BBPE & 186.831 & 172.278 & 160.686 & 141.021 & 129.805 \\*
 &  &  & $\Delta$ & $+1.391$ & $+1.335$ & $+6.930$ & $-1.865$ & $+0.163$ \\
\addlinespace[2pt]
Tifinagh & 2 & 3 & UBE & 193.081 & 185.780 & 183.926 & 182.474 & 182.395 \\*
 &  &  & BBPE & 270.173 & 269.911 & 269.847 & 269.815 & 203.886 \\*
 &  &  & $\Delta$ & $-77.092$ & $-84.130$ & $-85.921$ & $-87.341$ & $-21.491$ \\
\addlinespace[2pt]
Han (Trad.) & 2 & 3 & UBE & 49.420 & 43.913 & 38.641 & 34.964 & 31.997 \\*
 &  &  & BBPE & 50.239 & 44.039 & 38.298 & 34.557 & 31.617 \\*
 &  &  & $\Delta$ & $-0.819$ & $-0.125$ & $+0.343$ & $+0.408$ & $+0.380$ \\
\addlinespace[2pt]
Han (Simp.) & 1 & 3 & UBE & 44.910 & 39.359 & 35.015 & 31.574 & 28.664 \\*
 &  &  & BBPE & 44.920 & 39.183 & 34.764 & 31.310 & 28.437 \\*
 &  &  & $\Delta$ & $-0.010$ & $+0.176$ & $+0.251$ & $+0.264$ & $+0.227$ \\
\addlinespace[2pt]
Hangul & 1 & 3 & UBE & 44.092 & 38.633 & 34.112 & 30.377 & 27.495 \\*
 &  &  & BBPE & 44.539 & 38.766 & 34.193 & 30.375 & 27.481 \\*
 &  &  & $\Delta$ & $-0.447$ & $-0.132$ & $-0.081$ & $+0.003$ & $+0.014$ \\
\addlinespace[2pt]
Japanese & 1 & 3 & UBE & 52.419 & 46.273 & 40.783 & 36.098 & 32.338 \\*
 &  &  & BBPE & 52.668 & 46.194 & 40.630 & 35.965 & 32.145 \\*
 &  &  & $\Delta$ & $-0.249$ & $+0.079$ & $+0.152$ & $+0.132$ & $+0.193$ \\
\addlinespace[2pt]
Khmer & 1 & 3 & UBE & 145.965 & 135.804 & 127.642 & 121.825 & 118.121 \\*
 &  &  & BBPE & 145.946 & 136.047 & 127.747 & 121.860 & 118.116 \\*
 &  &  & $\Delta$ & $+0.020$ & $-0.243$ & $-0.105$ & $-0.035$ & $+0.005$ \\
\addlinespace[2pt]
Lao & 1 & 3 & UBE & 115.340 & 99.961 & 87.657 & 79.355 & 72.767 \\*
 &  &  & BBPE & 115.791 & 100.289 & 88.180 & 79.768 & 72.899 \\*
 &  &  & $\Delta$ & $-0.451$ & $-0.327$ & $-0.523$ & $-0.413$ & $-0.132$ \\
\addlinespace[2pt]
Thai & 1 & 3 & UBE & 75.640 & 69.237 & 64.534 & 60.990 & 58.107 \\*
 &  &  & BBPE & 75.850 & 69.173 & 64.612 & 60.969 & 58.122 \\*
 &  &  & $\Delta$ & $-0.209$ & $+0.064$ & $-0.078$ & $+0.021$ & $-0.015$ \\
\addlinespace[2pt]
Telugu & 1 & 3 & UBE & 114.800 & 110.878 & 105.106 & 100.440 & 98.118 \\*
 &  &  & BBPE & 114.948 & 111.162 & 105.057 & 100.465 & 98.162 \\*
 &  &  & $\Delta$ & $-0.147$ & $-0.284$ & $+0.048$ & $-0.026$ & $-0.044$ \\
\addlinespace[2pt]
Tamil & 1 & 3 & UBE & 130.214 & 122.921 & 118.113 & 115.209 & 113.794 \\*
 &  &  & BBPE & 130.302 & 123.013 & 118.227 & 115.250 & 113.797 \\*
 &  &  & $\Delta$ & $-0.088$ & $-0.092$ & $-0.115$ & $-0.041$ & $-0.003$ \\
\addlinespace[2pt]
Kannada & 1 & 3 & UBE & 121.886 & 114.405 & 108.854 & 104.483 & 101.329 \\*
 &  &  & BBPE & 121.878 & 115.094 & 109.013 & 104.505 & 101.329 \\*
 &  &  & $\Delta$ & $+0.008$ & $-0.689$ & $-0.159$ & $-0.022$ & $0.000$ \\
\addlinespace[2pt]
Malayalam & 1 & 3 & UBE & 132.650 & 126.980 & 120.760 & 115.976 & 113.153 \\*
 &  &  & BBPE & 132.444 & 126.831 & 120.714 & 115.951 & 113.167 \\*
 &  &  & $\Delta$ & $+0.207$ & $+0.149$ & $+0.045$ & $+0.026$ & $-0.014$ \\
\addlinespace[2pt]
Sinhala & 1 & 3 & UBE & 110.344 & 102.037 & 95.809 & 91.664 & 88.621 \\*
 &  &  & BBPE & 110.516 & 102.125 & 95.822 & 91.593 & 88.624 \\*
 &  &  & $\Delta$ & $-0.172$ & $-0.088$ & $-0.013$ & $+0.071$ & $-0.003$ \\
\addlinespace[2pt]
Odia & 1 & 3 & UBE & 188.713 & 161.316 & 135.328 & 118.017 & 109.633 \\*
 &  &  & BBPE & 182.258 & 164.863 & 133.265 & 118.017 & 108.809 \\*
 &  &  & $\Delta$ & $+6.456$ & $-3.546$ & $+2.063$ & $0.000$ & $+0.824$ \\
\addlinespace[2pt]
Gujarati & 1 & 3 & UBE & 105.362 & 99.345 & 93.423 & 89.804 & 86.962 \\*
 &  &  & BBPE & 105.835 & 98.813 & 93.399 & 89.846 & 86.943 \\*
 &  &  & $\Delta$ & $-0.473$ & $+0.532$ & $+0.024$ & $-0.042$ & $+0.020$ \\
\addlinespace[2pt]
Gurmukhi & 1 & 3 & UBE & 106.408 & 98.947 & 93.255 & 88.974 & 86.336 \\*
 &  &  & BBPE & 107.065 & 98.726 & 93.199 & 89.055 & 86.388 \\*
 &  &  & $\Delta$ & $-0.657$ & $+0.220$ & $+0.056$ & $-0.081$ & $-0.052$ \\
\addlinespace[2pt]
Georgian & 1 & 3 & UBE & 100.074 & 85.117 & 71.468 & 59.405 & 49.793 \\*
 &  &  & BBPE & 97.688 & 82.745 & 70.135 & 58.003 & 48.937 \\*
 &  &  & $\Delta$ & $+2.386$ & $+2.372$ & $+1.333$ & $+1.402$ & $+0.857$ \\
\addlinespace[2pt]
Ol Chiki & 1 & 3 & UBE & 237.326 & 222.944 & 212.692 & 211.712 & 207.379 \\*
 &  &  & BBPE & 341.312 & 341.287 & 341.271 & 251.125 & 228.583 \\*
 &  &  & $\Delta$ & $-103.986$ & $-118.343$ & $-128.579$ & $-39.413$ & $-21.204$ \\
\addlinespace[2pt]
\end{longtable}
\endgroup

\FloatBarrier

\subsubsection{Vocabulary utilization diagnostics}
\label{app:vocab-diagnostics}

Table~\ref{tab:vocab-diagnostics} reports vocabulary utilization metrics for the four \texttt{101-lang/32K} tokenizers. A dead merge is a learned merge token with count exactly zero when the finalized tokenizer retokenizes the frequency-weighted, merged filtered training word table used to produce its training-time \texttt{token\_counts} metadata. UBE's \vdUbeAlphTokens{}-symbol alphabet does not appear to strand merge capacity: active utilization is \vdUbeActiveUtil{}, all 256 PUA symbols are used, and UBE has fewer dead merge tokens (\vdUbeDeadMerges{}) than BBPE (\vdBbpeDeadMerges{}). Roughly \vdUbePuaPuaPct{} of UBE's merges join two pure PUA tokens (\utfSixteenName{}-path merges), indicating that the \utfSixteenName{} path learns productive subword patterns. The higher dead-merge counts for S-UBE and S-BBPE (\vdSbpeUbeDeadMerges{}/\vdSbpeBbpeDeadMerges{} vs.\ \vdUbeDeadMerges{}/\vdBbpeDeadMerges{}) reflect SuperBPE's permissive cross-word merges, which create long-tail tokens that are not emitted in this final training-time retokenization. Mixed-alphabet results account for \vdUbeCrossPct{} of merges, including merges with an already mixed operand; the remainder stay entirely within one alphabet. UBE also produces $2\times$ more depth-1 merges (\vdUbeDepthOnePct{} vs.\ \vdBbpeDepthOnePct{}) because routed BMP characters can be completed after a single merge from 2 \utfSixteenName{} bytes, whereas BBPE typically needs 2 successive merges to collapse a 3-byte \utfEightName{} decomposition before character-spanning subword merges become available. Depth-1 share is not a standalone quality metric; here, the higher share reflects that the alphabet has already paid much of the floor cost, so fewer hierarchical merges are needed to reach character-level units. Together with \vdUbeActiveUtil{} active utilization and use of all PUA symbols, this indicates that the extra alphabet capacity is productively spent rather than stranded.

\begin{table}[ht]
\centering
\small
\caption{Vocabulary utilization diagnostics (\texttt{101-lang/32K}). Other vocabulary sizes are in Table~\ref{tab:vocab-diagnostics-vocab}.}
\label{tab:vocab-diagnostics}
\begin{tabular}{lrrrr}
\toprule
Metric & UBE & BBPE & S-UBE & S-BBPE \\
\midrule
Alphabet tokens & \vdUbeAlphTokens{} & \vdBbpeAlphTokens{} & \vdSbpeUbeAlphTokens{} & \vdSbpeBbpeAlphTokens{} \\
Merge tokens & \vdUbeMergeTokens{} & \vdBbpeMergeTokens{} & \vdSbpeUbeMergeTokens{} & \vdSbpeBbpeMergeTokens{} \\
Dead merges & \vdUbeDeadMerges{} & \vdBbpeDeadMerges{} & \vdSbpeUbeDeadMerges{} & \vdSbpeBbpeDeadMerges{} \\
Active utilization & \vdUbeActiveUtil{} & \vdBbpeActiveUtil{} & \vdSbpeUbeActiveUtil{} & \vdSbpeBbpeActiveUtil{} \\
\midrule
Pure PUA--PUA merges & \vdUbePuaPuaPct{} & --- & \vdSbpeUbePuaPuaPct{} & --- \\
GPT-2--GPT-2 merges & \vdUbeGptGptPct{} & --- & \vdSbpeUbeGptGptPct{} & --- \\
Mixed-alphabet merges & \vdUbeCrossPct{} & --- & \vdSbpeUbeCrossPct{} & --- \\
\midrule
Depth-1 merges & \vdUbeDepthOnePct{} & \vdBbpeDepthOnePct{} & \vdSbpeUbeDepthOnePct{} & \vdSbpeBbpeDepthOnePct{} \\
Max merge depth & \vdUbeMaxDepth{} & \vdBbpeMaxDepth{} & \vdSbpeUbeMaxDepth{} & \vdSbpeBbpeMaxDepth{} \\
\bottomrule
\end{tabular}
\end{table}

\begin{table}[ht]
\centering
\footnotesize
\setlength{\tabcolsep}{3.5pt}
\caption{Vocabulary utilization diagnostics across vocabulary sizes (\texttt{101-lang}; 16K--256K), extending Table~\ref{tab:vocab-diagnostics}.}
\label{tab:vocab-diagnostics-vocab}
\begin{tabular}{llrrrrr}
\toprule
Family & Metric & 16K & 32K & 64K & 128K & 256K \\
\midrule
\multirow{5}{*}{UBE}
 & Dead merges & \vdUbeOneSixkDeadMerges{} & \vdUbeThreeTwokDeadMerges{} & \vdUbeSixFourkDeadMerges{} & \vdUbeOneTwoEightkDeadMerges{} & \vdUbeTwoFiveSixkDeadMerges{} \\
 & Active util. & \vdUbeOneSixkActiveUtilDetailed{} & \vdUbeThreeTwokActiveUtilDetailed{} & \vdUbeSixFourkActiveUtilDetailed{} & \vdUbeOneTwoEightkActiveUtilDetailed{} & \vdUbeTwoFiveSixkActiveUtilDetailed{} \\
 & Pure PUA & \vdUbeOneSixkPuaPuaPctDetailed{} & \vdUbeThreeTwokPuaPuaPctDetailed{} & \vdUbeSixFourkPuaPuaPctDetailed{} & \vdUbeOneTwoEightkPuaPuaPctDetailed{} & \vdUbeTwoFiveSixkPuaPuaPctDetailed{} \\
 & Mixed alph. & \vdUbeOneSixkCrossPctDetailed{} & \vdUbeThreeTwokCrossPctDetailed{} & \vdUbeSixFourkCrossPctDetailed{} & \vdUbeOneTwoEightkCrossPctDetailed{} & \vdUbeTwoFiveSixkCrossPctDetailed{} \\
 & Depth-1 & \vdUbeOneSixkDepthOnePctDetailed{} & \vdUbeThreeTwokDepthOnePctDetailed{} & \vdUbeSixFourkDepthOnePctDetailed{} & \vdUbeOneTwoEightkDepthOnePctDetailed{} & \vdUbeTwoFiveSixkDepthOnePctDetailed{} \\
\midrule
\multirow{3}{*}{BBPE}
 & Dead merges & \vdBbpeOneSixkDeadMerges{} & \vdBbpeThreeTwokDeadMerges{} & \vdBbpeSixFourkDeadMerges{} & \vdBbpeOneTwoEightkDeadMerges{} & \vdBbpeTwoFiveSixkDeadMerges{} \\
 & Active util. & \vdBbpeOneSixkActiveUtilDetailed{} & \vdBbpeThreeTwokActiveUtilDetailed{} & \vdBbpeSixFourkActiveUtilDetailed{} & \vdBbpeOneTwoEightkActiveUtilDetailed{} & \vdBbpeTwoFiveSixkActiveUtilDetailed{} \\
 & Depth-1 & \vdBbpeOneSixkDepthOnePctDetailed{} & \vdBbpeThreeTwokDepthOnePctDetailed{} & \vdBbpeSixFourkDepthOnePctDetailed{} & \vdBbpeOneTwoEightkDepthOnePctDetailed{} & \vdBbpeTwoFiveSixkDepthOnePctDetailed{} \\
\midrule
\multirow{5}{*}{S-UBE}
 & Dead merges & \vdSbpeUbeOneSixkDeadMerges{} & \vdSbpeUbeThreeTwokDeadMerges{} & \vdSbpeUbeSixFourkDeadMerges{} & \vdSbpeUbeOneTwoEightkDeadMerges{} & \vdSbpeUbeTwoFiveSixkDeadMerges{} \\
 & Active util. & \vdSbpeUbeOneSixkActiveUtilDetailed{} & \vdSbpeUbeThreeTwokActiveUtilDetailed{} & \vdSbpeUbeSixFourkActiveUtilDetailed{} & \vdSbpeUbeOneTwoEightkActiveUtilDetailed{} & \vdSbpeUbeTwoFiveSixkActiveUtilDetailed{} \\
 & Pure PUA & \vdSbpeUbeOneSixkPuaPuaPctDetailed{} & \vdSbpeUbeThreeTwokPuaPuaPctDetailed{} & \vdSbpeUbeSixFourkPuaPuaPctDetailed{} & \vdSbpeUbeOneTwoEightkPuaPuaPctDetailed{} & \vdSbpeUbeTwoFiveSixkPuaPuaPctDetailed{} \\
 & Mixed alph. & \vdSbpeUbeOneSixkCrossPctDetailed{} & \vdSbpeUbeThreeTwokCrossPctDetailed{} & \vdSbpeUbeSixFourkCrossPctDetailed{} & \vdSbpeUbeOneTwoEightkCrossPctDetailed{} & \vdSbpeUbeTwoFiveSixkCrossPctDetailed{} \\
 & Depth-1 & \vdSbpeUbeOneSixkDepthOnePctDetailed{} & \vdSbpeUbeThreeTwokDepthOnePctDetailed{} & \vdSbpeUbeSixFourkDepthOnePctDetailed{} & \vdSbpeUbeOneTwoEightkDepthOnePctDetailed{} & \vdSbpeUbeTwoFiveSixkDepthOnePctDetailed{} \\
\midrule
\multirow{3}{*}{S-BBPE}
 & Dead merges & \vdSbpeBbpeOneSixkDeadMerges{} & \vdSbpeBbpeThreeTwokDeadMerges{} & \vdSbpeBbpeSixFourkDeadMerges{} & \vdSbpeBbpeOneTwoEightkDeadMerges{} & \vdSbpeBbpeTwoFiveSixkDeadMerges{} \\
 & Active util. & \vdSbpeBbpeOneSixkActiveUtilDetailed{} & \vdSbpeBbpeThreeTwokActiveUtilDetailed{} & \vdSbpeBbpeSixFourkActiveUtilDetailed{} & \vdSbpeBbpeOneTwoEightkActiveUtilDetailed{} & \vdSbpeBbpeTwoFiveSixkActiveUtilDetailed{} \\
 & Depth-1 & \vdSbpeBbpeOneSixkDepthOnePctDetailed{} & \vdSbpeBbpeThreeTwokDepthOnePctDetailed{} & \vdSbpeBbpeSixFourkDepthOnePctDetailed{} & \vdSbpeBbpeOneTwoEightkDepthOnePctDetailed{} & \vdSbpeBbpeTwoFiveSixkDepthOnePctDetailed{} \\
\bottomrule
\end{tabular}
\end{table}
\FloatBarrier

\subsubsection{Cross-dataset robustness by vocabulary size}
\label{app:cross-dataset-vocab}

Table~\ref{tab:cross-dataset-vocab} reports UBE's Gini change across all four evaluation corpora and all available vocabulary sizes under the \texttt{101-lang} regime (16K through 256K). UBE's advantage is generally larger on UDHR from 32K onward, while the FLORES-200 reduction is marginally larger at 16K. The NLTK UDHR corpus has \udhrRobLangs{} entries, nearly twice the number of FLORES-200 languages, and includes many scripts with no mC4 training data. In these byte-fallback-heavy settings, the encoding-floor reduction is most visible because BBPE has learned few or no compensating merges.

\begin{table}[ht]
\centering
\small
\caption{Cross-dataset Gini comparison of UBE and BBPE across vocabulary sizes (\texttt{101-lang}; 16K--256K).}
\label{tab:cross-dataset-vocab}
\begin{tabular}{llrrr}
\toprule
Dataset & Vocab & UBE Gini & BBPE Gini & $\Delta$Gini \\
\midrule
FLORES-200 & 16K & \floresRobOneSixkGiniUBE{} & \floresRobOneSixkGiniBBPE{} & \floresRobOneSixkGiniPctDetailed{} \\
FLORES-200 & 32K & \floresRobThreeTwokGiniUBE{} & \floresRobThreeTwokGiniBBPE{} & \floresRobThreeTwokGiniPctDetailed{} \\
FLORES-200 & 64K & \floresRobGiniUBE{} & \floresRobGiniBBPE{} & \floresRobSixFourkGiniPctDetailed{} \\
FLORES-200 & 128K & \floresRobOneTwoEightkGiniUBE{} & \floresRobOneTwoEightkGiniBBPE{} & \floresRobOneTwoEightkGiniPctDetailed{} \\
FLORES-200 & 256K & \floresRobTwoFiveSixkGiniUBE{} & \floresRobTwoFiveSixkGiniBBPE{} & \floresRobTwoFiveSixkGiniPctDetailed{} \\
\midrule
UDHR & 16K & \udhrRobOneSixkGiniUBE{} & \udhrRobOneSixkGiniBBPE{} & \udhrRobOneSixkGiniPctDetailed{} \\
UDHR & 32K & \udhrRobThreeTwokGiniUBE{} & \udhrRobThreeTwokGiniBBPE{} & \udhrRobThreeTwokGiniPctDetailed{} \\
UDHR & 64K & \udhrRobGiniUBE{} & \udhrRobGiniBBPE{} & \udhrRobSixFourkGiniPctDetailed{} \\
UDHR & 128K & \udhrRobOneTwoEightkGiniUBE{} & \udhrRobOneTwoEightkGiniBBPE{} & \udhrRobOneTwoEightkGiniPctDetailed{} \\
UDHR & 256K & \udhrRobTwoFiveSixkGiniUBE{} & \udhrRobTwoFiveSixkGiniBBPE{} & \udhrRobTwoFiveSixkGiniPctDetailed{} \\
\midrule
SIB-200 & 16K & \sibRobOneSixkGiniUBE{} & \sibRobOneSixkGiniBBPE{} & \sibRobOneSixkGiniPctDetailed{} \\
SIB-200 & 32K & \sibRobThreeTwokGiniUBE{} & \sibRobThreeTwokGiniBBPE{} & \sibRobThreeTwokGiniPctDetailed{} \\
SIB-200 & 64K & \sibRobGiniUBE{} & \sibRobGiniBBPE{} & \sibRobSixFourkGiniPctDetailed{} \\
SIB-200 & 128K & \sibRobOneTwoEightkGiniUBE{} & \sibRobOneTwoEightkGiniBBPE{} & \sibRobOneTwoEightkGiniPctDetailed{} \\
SIB-200 & 256K & \sibRobTwoFiveSixkGiniUBE{} & \sibRobTwoFiveSixkGiniBBPE{} & \sibRobTwoFiveSixkGiniPctDetailed{} \\
\midrule
MGSM & 16K & \mgsmRobOneSixkGiniUBE{} & \mgsmRobOneSixkGiniBBPE{} & \mgsmRobOneSixkGiniPctDetailed{} \\
MGSM & 32K & \mgsmRobThreeTwokGiniUBE{} & \mgsmRobThreeTwokGiniBBPE{} & \mgsmRobThreeTwokGiniPctDetailed{} \\
MGSM & 64K & \mgsmRobGiniUBE{} & \mgsmRobGiniBBPE{} & \mgsmRobSixFourkGiniPctDetailed{} \\
MGSM & 128K & \mgsmRobOneTwoEightkGiniUBE{} & \mgsmRobOneTwoEightkGiniBBPE{} & \mgsmRobOneTwoEightkGiniPctDetailed{} \\
MGSM & 256K & \mgsmRobTwoFiveSixkGiniUBE{} & \mgsmRobTwoFiveSixkGiniBBPE{} & \mgsmRobTwoFiveSixkGiniPctDetailed{} \\
\bottomrule
\end{tabular}
\end{table}

\FloatBarrier

\section{Statistical and comparative analyses}
\label{app:comparative-analyses}

This section examines statistical reliability, alternative tokenizer designs, Unicode fidelity, and exposure-matched language modeling. Unless stated otherwise, learned-tokenizer rows use a 32K vocabulary and report final BPE output tokens, whereas representation-only rows report base-symbol positions before any learned BPE. We keep those measurement units separate.

\subsection{Intrinsic significance analysis}
\label{app:intrinsic-significance}

For the $N$ non-English language premiums, Gini is computed as $G = \frac{1}{2 N^2 \bar{\pi}} \sum_{i,j} |\pi_i - \pi_j|$, where $\bar{\pi}$ is the mean premium. Premium variance uses the sample-corrected $N-1$ denominator and, unlike Gini, depends on the normalization factor.

For each of the nine prespecified core comparisons, the two-sided Wilcoxon signed-rank test uses 204 paired language--script mean token counts, including English. Holm and Benjamini--Hochberg (BH) corrections are applied jointly across those nine tests. The paired bootstrap resamples the 203 non-English language rows with replacement 10,000 times, applies each sampled index to both tokenizers, and reports the percentile interval for $\mathrm{Gini}(\mathrm{UBE})-\mathrm{Gini}(\mathrm{BBPE})$. Table~\ref{tab:cr-intrinsic-statistics} shows that all nine Gini intervals lie below zero; the same eight of nine Wilcoxon comparisons remain significant under the raw, Holm-adjusted, and BH-adjusted criteria. The exception is the token-count test at \texttt{En-only/64K}; its negative Gini interval is not contradictory because the two procedures estimate different statistics.

\begin{table}[ht]
\centering
\scriptsize
\setlength{\tabcolsep}{2.7pt}
\caption{Statistical analysis of the nine core FLORES-200 comparisons. The Gini CI uses paired resampling over non-English premiums; Wilcoxon uses paired mean token counts.}
\label{tab:cr-intrinsic-statistics}
\resizebox{\linewidth}{!}{\begin{tabular}{llrrrrrl}
\toprule
Corpus & Vocab & Wilcoxon $p$ & Holm $p$ & BH $q$ & UBE Gini & BBPE Gini & $\Delta$Gini 95\% CI \\
\midrule
En-only & 16K & $9.21\!\times\!10^{-6}$ & $2.76\!\times\!10^{-5}$ & $1.18\!\times\!10^{-5}$ & 0.3010 & 0.3495 & [-0.0616, -0.0323] \\
En-only & 32K & $1.85\!\times\!10^{-3}$ & $3.70\!\times\!10^{-3}$ & $2.08\!\times\!10^{-3}$ & 0.2978 & 0.3380 & [-0.0537, -0.0244] \\
En-only & 64K & $3.19\!\times\!10^{-1}$ & $3.19\!\times\!10^{-1}$ & $3.19\!\times\!10^{-1}$ & 0.2950 & 0.3183 & [-0.0353, -0.0109] \\
6-lang & 16K & $2.46\!\times\!10^{-33}$ & $2.21\!\times\!10^{-32}$ & $2.21\!\times\!10^{-32}$ & 0.2703 & 0.2960 & [-0.0376, -0.0139] \\
6-lang & 32K & $4.12\!\times\!10^{-26}$ & $2.88\!\times\!10^{-25}$ & $1.24\!\times\!10^{-25}$ & 0.2522 & 0.2717 & [-0.0325, -0.0074] \\
6-lang & 64K & $2.34\!\times\!10^{-22}$ & $1.17\!\times\!10^{-21}$ & $4.20\!\times\!10^{-22}$ & 0.2442 & 0.2603 & [-0.0299, -0.0044] \\
101-lang & 16K & $1.28\!\times\!10^{-29}$ & $1.02\!\times\!10^{-28}$ & $5.75\!\times\!10^{-29}$ & 0.1927 & 0.2194 & [-0.0467, -0.0063] \\
101-lang & 32K & $1.19\!\times\!10^{-22}$ & $7.13\!\times\!10^{-22}$ & $2.68\!\times\!10^{-22}$ & 0.1979 & 0.2183 & [-0.0358, -0.0047] \\
101-lang & 64K & $1.54\!\times\!10^{-20}$ & $6.18\!\times\!10^{-20}$ & $2.32\!\times\!10^{-20}$ & 0.2043 & 0.2225 & [-0.0364, -0.0012] \\
\bottomrule
\end{tabular}
}
\end{table}

\subsection{Tokenizer comparisons and composition}
\label{app:tokenizer-composition}

The primary-corpus block of Table~\ref{tab:cr-composition} summarizes the three \texttt{101-lang/32K} learned-token comparisons; MYTE uses fixed maps without learned BPE. The SCRIPT-style tokenizers use all 2,273 primary training shards, with no additional per-language cap, and deterministic BPE training under the same script/category boundary policy. Replacing BBPE with UBE lowers their token-premium Gini from 0.2036 to 0.1824; the corresponding standard and SuperBPE-style results are in Tables~\ref{tab:gini-vs-tokens} and~\ref{tab:gini-vs-tokens-superbpe}.

Table~\ref{tab:cr-primary-cohorts} partitions all 204 configurations into five disjoint script cohorts. South Asian comprises Bengali, Devanagari, Gujarati, Gurmukhi, Kannada, Malayalam, Odia, Sinhala, Tamil, and Telugu scripts; Remaining is the complement of the four named cohorts. Each configuration contains 1,012 sentences, so the overall mean weights cohort means by their configuration counts. SCRIPT-style's lower overall mean relative to standard boundaries is driven by the South Asian cohort, despite higher English and Other BMP means (Table~\ref{tab:cr-composition}). Within each boundary policy, the effect of substituting UBE varies across cohorts.

\begin{table}[ht]
\caption{Primary-corpus FLORES-200 breakdown (101-language mC4, 32K vocabularies). Entries are final tokens/sentence; $n$ counts language--script configurations.}
\label{tab:cr-primary-cohorts}
\centering
\small
\begin{tabular}{@{}lrrrrrrr@{}}
\toprule
 & & \multicolumn{2}{c}{Standard} & \multicolumn{2}{c}{SuperBPE-style} & \multicolumn{2}{c}{SCRIPT-style} \\
\cmidrule(lr){3-4}\cmidrule(lr){5-6}\cmidrule(lr){7-8}
Script cohort & $n$ & UBE & BBPE & UBE & BBPE & UBE & BBPE \\
\midrule
Latin/Arabic/Cyrillic & 161 & \textbf{59.078} & 59.205 & \textbf{64.262} & 64.534 & \textbf{59.719} & 59.832 \\
South Asian & 21 & \textbf{103.199} & 103.401 & 91.514 & \textbf{91.472} & 78.385 & \textbf{77.316} \\
Han/Japanese & 4 & 43.364 & \textbf{43.363} & 43.465 & \textbf{43.455} & 42.801 & \textbf{42.722} \\
Khmer/Lao/Myanmar/Thai & 5 & 130.446 & \textbf{130.013} & 115.093 & \textbf{113.257} & \textbf{124.854} & 126.617 \\
Remaining & 13 & \textbf{134.332} & 164.495 & \textbf{130.662} & 160.236 & \textbf{139.682} & 170.053 \\
\midrule
All & 204 & \textbf{69.857} & 71.889 & \textbf{72.137} & 74.187 & \textbf{68.001} & 69.957 \\
\bottomrule
\end{tabular}

\end{table}

The following tables separately report capped-corpus tokenizer comparisons and the MYTE base-symbol results.

\paragraph{SCRIPT-BPE.}
SCRIPT-BPE changes three axes jointly: its block/index representation, script/category pre-tokenization, and character-preserving merge constraint~\citep{land2025script}. UBE changes only the reversible base-symbol representation and uses neither a language identifier nor a Unicode script table. Table~\ref{tab:cr-scriptbpe} reports the five-tokenizer comparison under the capped-corpus setting. Under SCRIPT-style pre-tokenization, replacing BBPE with UBE lowers token-premium Gini from 0.2048 to 0.1832 (95\% paired-language bootstrap CI for UBE minus BBPE: $[-0.038, -0.004]$; 20,000 resamples over 203 non-English configurations). The pairwise breakdown in Table~\ref{tab:cr-scriptbpe-cohorts} shows that UBE is lower in 169/204 FLORES-200 configurations, while SCRIPT-BPE's advantages are concentrated in specific script cohorts. Across five matched tokenizer settings (50 MiB/language at 16K and 32K; 300 MiB/language at 16K, 32K, and 64K), the same 163/204 configurations use fewer UBE tokens than SCRIPT-BPE in every setting. On the natural-language input fields of XNLI, XCOPA, and MGSM, UBE is lower for all 25 Latin/Arabic/Cyrillic language--corpus pairs, whereas SCRIPT-BPE is lower for 11/12 other-script pairs.

\begin{table}[ht]
\caption{SCRIPT-BPE comparison on FLORES-200. Means are final BPE tokens/sentence; CJK (5) and Other BMP (7) are the configuration groups defined in Appendix~\ref{app:gini-vs-tokens}. Gini uses 203 English-normalized non-English premiums. Tokenizer training uses the same 101-language mC4 prefix (approximately 50 MiB of UTF-8 text per language; 32K vocabulary). Change rows compare UBE with BBPE under the same boundary policy; ``Lowest'' counts wins among all five methods.}
\label{tab:cr-scriptbpe}
\centering
\small
\setlength{\tabcolsep}{3pt}
\begin{tabular}{@{}llrrrrrr@{}}
\toprule
Pre-tokenization & Method & English & \shortstack{CJK\\(5)} & \shortstack{Other BMP\\(7)} & Overall & Gini & Lowest \\
\midrule
Standard GPT-2 & UBE (ours) & \textbf{36.938} & 57.385 & \textbf{222.054} & 66.079 & 0.2086 & \textbf{162}/204 \\
Standard GPT-2 & BBPE & 36.959 & 58.956 & 272.617 & 67.875 & 0.2281 & 6/204 \\
Official script/category & SCRIPT-BPE & 37.982 & \textbf{51.069} & 223.140 & \textbf{63.785} & \textbf{0.1785} & 25/204 \\
SCRIPT-style & UBE (ours) & 37.539 & 56.741 & 223.497 & 63.918 & 0.1832 & 4/204 \\
SCRIPT-style & BBPE & 37.569 & 58.133 & 281.154 & 65.677 & 0.2048 & 7/204 \\
\bottomrule
\end{tabular}

\par\smallskip
\begin{tabular}{@{}lrrr@{}}
\toprule
UBE change vs. matched BBPE & \shortstack{CJK\\(5)} & \shortstack{Other BMP\\(7)} & Overall \\
\midrule
Standard GPT-2 & -2.66\% & -18.55\% & -2.65\% \\
SCRIPT-style & -2.39\% & -20.51\% & -2.68\% \\
\bottomrule
\end{tabular}

\end{table}

\begin{table}[ht]
\centering
\small
\setlength{\tabcolsep}{3pt}
\caption{Pairwise UBE/SCRIPT-BPE win counts (32K tokenizers trained on 101-language mC4 with approximately 50 MiB of UTF-8 text per language).}
\label{tab:cr-scriptbpe-cohorts}
\begin{tabular}{llrrr}
\toprule
Evaluation & Script cohort & Comparisons & UBE lower & SCRIPT-BPE lower \\
\midrule
FLORES-200 & Latin/Arabic/Cyrillic & 161 & \textbf{156} & 5 \\
FLORES-200 & South Asian scripts & 21 & 1 & \textbf{20} \\
FLORES-200 & Han/Japanese & 4 & 0 & \textbf{4} \\
FLORES-200 & Khmer/Lao/Myanmar/Thai & 5 & 0 & \textbf{5} \\
FLORES-200 & Remaining scripts & 13 & \textbf{12} & 1 \\
FLORES-200 & All & 204 & \textbf{169} & 35 \\
XNLI/XCOPA/MGSM & Latin/Arabic/Cyrillic & 25 & \textbf{25} & 0 \\
XNLI/XCOPA/MGSM & Other scripts & 12 & 1 & \textbf{11} \\
XNLI/XCOPA/MGSM & All & 37 & \textbf{26} & 11 \\
\bottomrule
\end{tabular}
\end{table}

\paragraph{Boundary-policy comparison.}
To separate base representation from boundary policy, Table~\ref{tab:cr-superbpe} crosses UBE/BBPE with standard/SuperBPE-style boundaries using the same 101-language mC4 prefix (approximately 50 MiB/language) and 32K vocabulary size as Table~\ref{tab:cr-scriptbpe}. Each SuperBPE-style tokenizer extends the corresponding 16K standard tokenizer to 32K with relaxed boundaries, using the two-stage recipe in Appendix~\ref{app:superbpe-composition}. Both stages use deterministic tie-breaking. Replacing BBPE with UBE lowers every reported quantity within either fixed boundary policy. With SuperBPE-style boundaries, the Other BMP mean falls by 19.7\%, the CJK mean by 3.8\%, the overall mean by 2.4\%, and Gini by 9.4\%.

\begin{table}[ht]
\centering
\small
\setlength{\tabcolsep}{3pt}
\caption{Representation-by-boundary composition on FLORES-200 (101-language mC4, approximately 50 MiB/language, 32K vocabulary). Means are final BPE output tokens per sentence; CJK (5) and Other BMP (7) are the configuration groups defined in Appendix~\ref{app:gini-vs-tokens}. Gini uses 203 English-normalized non-English premiums.}
\label{tab:cr-superbpe}
\begin{tabular}{llrrrrr}
\toprule
Boundary policy & Representation & English & \shortstack{CJK\\(5)} & \shortstack{Other BMP\\(7)} & Overall & Gini \\
\midrule
Standard & UBE (ours) & \textbf{36.938} & \textbf{57.385} & \textbf{222.054} & \textbf{66.079} & \textbf{0.2086} \\
Standard & BBPE & 36.959 & 58.956 & 272.617 & 67.875 & 0.2281 \\
SuperBPE-style & UBE (ours) & \textbf{39.524} & \textbf{58.576} & \textbf{189.674} & \textbf{67.208} & \textbf{0.1736} \\
SuperBPE-style & BBPE & 39.586 & 60.871 & 236.155 & 68.858 & 0.1916 \\
\bottomrule
\end{tabular}

\end{table}

\paragraph{MYTE composition.}
MYTE is a morphology-driven 256-symbol byte representation learned from language-specific resources and used directly as model input~\citep{limisiewicz2024myte}. We apply its released decomposition and morphology-merge maps unchanged, retain the generated code bytes, and route only residual characters through UBE. The compact 512-symbol MYTE-UBE representation shares one 256-symbol region between residual UTF-8 bytes and MYTE code bytes, which are both copied verbatim into the post-MYTE byte stream, and keeps a separate 256-symbol region for routed UTF-16 bytes. No learned BPE is applied to Table~\ref{tab:cr-myte}; one emitted base symbol is one model-input position. MYTE-UBE reduces the CJK mean by 24.08\%, the Other BMP mean by 29.13\%, and total positions by 4.58\% relative to MYTE, is lower in 201/204 configurations, ties in three, and is higher in none. On all 206,448 FLORES-200 sentences, MYTE-UBE reconstructs the post-MYTE byte stream exactly and matches MYTE's released decoder output.

\begin{table}[ht]
\centering
\small
\setlength{\tabcolsep}{3pt}
\caption{Representation-only FLORES-200 comparison. Means and totals are base-symbol positions, not learned BPE output tokens; CJK (5) and Other BMP (7) are the configuration groups defined in Appendix~\ref{app:gini-vs-tokens}.}
\label{tab:cr-myte}
\begin{tabular}{lrrrrrr}
\toprule
Base representation & Alphabet & English & \shortstack{CJK\\(5)} & \shortstack{Other BMP\\(7)} & Overall & Total positions \\
\midrule
UBE (ours) & 512 & 130.481 & \textbf{91.252} & 277.056 & 176.088 & 36,353,040 \\
UTF-8 bytes & 256 & 130.530 & 133.665 & 406.632 & 196.106 & 40,485,643 \\
MYTE-UBE (ours) & 512 & \textbf{109.094} & 94.610 & \textbf{247.622} & \textbf{141.354} & \textbf{29,182,333} \\
MYTE & 256 & 109.140 & 124.612 & 349.398 & 148.142 & 30,583,555 \\
\bottomrule
\end{tabular}

\end{table}

\subsection{Unicode fidelity}

We audit all 1,112,064 Unicode scalar values and every unique sequence in the Unicode 17 normalization and grapheme-break test files~\citep{unicode2025} and the emoji test file~\citep{unicode2025uts51}. For UBE and MYTE, recovery requires $\mathrm{decode}(\mathrm{encode}(x))=x$; SCRIPT-BPE is also evaluated after its configured NFC normalization and within the character set supported by its released implementation. Across all 1,155,724 suite-level inputs, UBE recovers every raw input and MYTE recovers 1,138,306. SCRIPT-BPE recovers every assigned-through-Unicode-16 scalar in its released valid-character set after NFC, but filters Private-Use and unassigned scalars, and Unicode-17 additions fall outside that released set (Table~\ref{tab:cr-unicode-fidelity}). Under the stricter raw-identity criterion, SCRIPT-BPE recovers 153,943/155,063 scalars in that valid set; NFC changes the remaining 1,120.

\begin{table}[htbp]
\centering
\scriptsize
\setlength{\tabcolsep}{3pt}
\caption{Deterministic Unicode 17 fidelity audit (UBE/SCRIPT-BPE tokenizers from Table~\ref{tab:cr-scriptbpe}). ``After NFC'' includes SCRIPT-BPE's normalization; no aggregate is reported across excluded inputs. Sequence files are versioned stress tests, not conformance claims.}
\label{tab:cr-unicode-fidelity}
\resizebox{\linewidth}{!}{\begin{tabular}{lrrrr}
\toprule
Unicode 17 input set & Inputs & \shortstack{UBE\\raw exact} & \shortstack{SCRIPT-BPE\\after NFC} & \shortstack{MYTE\\raw exact} \\
\midrule
\begin{tabular}[t]{@{}l@{}}Assigned through Unicode 16\\(excluding Private-Use)\end{tabular} & 155,063 & \textbf{All recovered} & \textbf{All recovered} & 154,023/155,063 \\
Assigned in Unicode 17 & 4,803 & \textbf{All recovered} & Outside released coverage & \textbf{All recovered} \\
Private-Use (Co) & 137,468 & \textbf{All recovered} & Filtered by design & \textbf{All recovered} \\
Unassigned/reserved (Cn) & 814,730 & \textbf{All recovered} & Filtered by design & \textbf{All recovered} \\
\begin{tabular}[t]{@{}l@{}}\texttt{NormalizationTest.txt}\\unique inputs\end{tabular} & 37,671 & \textbf{All recovered} & 166 outside coverage & 21,373/37,671 \\
\begin{tabular}[t]{@{}l@{}}\texttt{GraphemeBreakTest.txt}\\unique inputs\end{tabular} & 764 & \textbf{All recovered} & 74 outside coverage & 684/764 \\
\begin{tabular}[t]{@{}l@{}}\texttt{emoji-test.txt}\\unique inputs\end{tabular} & 5,225 & \textbf{All recovered} & 67 outside coverage & \textbf{All recovered} \\
All suite-level cases & 1,155,724 & \textbf{All recovered} & \begin{tabular}[t]{@{}r@{}}No aggregate across\\excluded inputs\end{tabular} & 1,138,306/1,155,724 \\
\bottomrule
\end{tabular}
}
\end{table}

\subsection{Source-byte-matched language-model comparison}

Table~\ref{tab:cr-source-matched-accuracy} compares UBE, BBPE, and BBPE16 at matched cumulative supervised UTF-8 source-byte exposure. All three checkpoints use the same frozen 101-language mC4 corpus and data-loading protocol, 182M architecture, 32K vocabulary, a shared seed, and the same 807,999,969-byte held-out source. The selected BBPE and BBPE16 checkpoints differ from UBE's 361,441,233,445-byte target by less than 0.0005\%. UBE's BPB is 0.91\% below BBPE and 0.78\% below BBPE16; BBPE16 requires 18.61\% more model-input training tokens to reach the same source-byte exposure. Task-level accuracy leaders vary.

\begin{table}[ht]
\centering
\small
\setlength{\tabcolsep}{2.2pt}
\caption{Source-byte-matched language-model comparison. Top: training exposure (steps, model-input positions, supervised UTF-8 bytes) and held-out BPB/BPB Gini. Bottom: zero-shot accuracy (\%); English mean weights the five English tasks equally.}
\label{tab:cr-source-matched-accuracy}
\begin{tabular}{@{}c@{}}
\begin{tabular}{lrrrrr}
\toprule
Tokenizer & Steps & \begin{tabular}[b]{@{}c@{}}Model-input\\positions\end{tabular} & \begin{tabular}[b]{@{}c@{}}Supervised\\UTF-8 bytes\end{tabular} & \begin{tabular}[b]{@{}c@{}}Common-text\\BPB$\downarrow$\end{tabular} & BPB Gini$\downarrow$ \\
\midrule
\begin{tabular}[t]{@{}l@{}}UBE (ours,\\primary tokenizer)\end{tabular} & 200,000 & 104,857,600,000 & 361,441,233,445 & \textbf{1.5298} & \textbf{0.1735} \\
BBPE & 200,437 & 105,086,713,856 & 361,439,652,484 & 1.5439 & 0.1740 \\
BBPE16 & 237,228 & 124,375,793,664 & 361,440,525,559 & 1.5418 & 0.1882 \\
\bottomrule
\end{tabular} \\
\addlinespace
\begin{tabular}{lrrrrrrrrr}
\toprule
Tokenizer & XNLI & XCOPA & Bele & ARC-E & ARC-C & HSwag & PIQA & LAMB & \begin{tabular}[b]{@{}c@{}}English\\mean\end{tabular} \\
\midrule
\begin{tabular}[t]{@{}l@{}}UBE (ours,\\primary tokenizer)\end{tabular} & \textbf{34.4} & 51.8 & 23.7 & \textbf{35.9} & 17.8 & \textbf{27.9} & 60.1 & 23.1 & \textbf{33.0} \\
BBPE & \textbf{34.4} & 51.5 & \textbf{24.0} & 35.3 & 16.8 & \textbf{27.9} & 60.3 & \textbf{23.3} & 32.7 \\
BBPE16 & \textbf{34.4} & \textbf{51.9} & 23.3 & 35.5 & \textbf{18.0} & 27.6 & \textbf{60.8} & 19.5 & 32.3 \\
\bottomrule
\end{tabular}
\end{tabular}
\end{table}
\FloatBarrier

\subsection{Five-tokenizer language-model comparison at matched exposure}
\label{app:matched-exposure-lm}

Table~\ref{tab:cr-matched-exposure-lm} extends the model-level comparison to five 181,688,064-parameter models with 32K vocabularies. Their BPE vocabularies are trained on the same 101-language mC4 prefix (approximately 50 MiB of UTF-8 text per language); all runs share the random seed, 101-language training stream, Transformer blocks, optimizer, evaluation suite, and 807,999,969-byte held-out source. Training steps vary to match UBE's trainer-recorded target of 283,363,416,364 supervised UTF-8 bytes. These tokenizers and the exposure target differ from those in the source-byte-matched comparison above, so BPB values should not be compared across the two protocols.

Multilingual mean equally weights XNLI, XCOPA, and Belebele; English mean equally weights ARC-Easy, ARC-Challenge, HellaSwag, PIQA, and LAMBADA. UBE has the lowest BPB Gini and the highest multilingual and English benchmark-group means, while SCRIPT-BPE has the lowest common-text BPB; UBE's BPB is lower than BBPE and MYTE+BPE. The released 256-symbol MYTE representation is not included directly because matching the target at its observed supervised-byte rate would require approximately 520,943 steps, 2.60$\times$ UBE's 200,000-step run. We instead use a derived, parameter-matched 32K BPE variant over the released MYTE representation. Because MYTE+BPE and SCRIPT-BPE can canonicalize or filter inputs, their common-text BPB is representation-conditioned loss per source byte rather than lossless raw-text code length.

\begin{table}[ht]
\centering
\small
\setlength{\tabcolsep}{5pt}
\caption{Language-model comparison across five tokenizers (182M parameters; 32K vocabularies trained on approximately 50 MiB/language). Single-seed runs match 283.36B supervised UTF-8 bytes. Group means weight benchmarks equally. BPB for MYTE+BPE and SCRIPT-BPE is representation-conditioned; see Appendix~\ref{app:matched-exposure-lm} for details.}
\label{tab:cr-matched-exposure-lm}
\begin{tabular}{lrrrr}
\toprule
Method & Common-text BPB$\downarrow$ & BPB Gini$\downarrow$ & Multi. mean$\uparrow$ & English mean$\uparrow$ \\
\midrule
UBE (ours) & 1.5643 & \textbf{0.1937} & \textbf{36.8} & \textbf{33.0} \\
BBPE & 1.5908 & 0.2006 & 36.6 & 32.7 \\
BBPE16 & 1.5561 & 0.1996 & 36.3 & 32.5 \\
MYTE+BPE (derived 32K) & 1.6474 & 0.2029 & 36.5 & 32.2 \\
SCRIPT-BPE & \textbf{1.5401} & 0.1951 & 36.4 & 32.9 \\
\bottomrule
\end{tabular}
\end{table}

\FloatBarrier

\subsection{Evaluation coverage and code-heavy inputs}

The downstream LM benchmarks XNLI, Belebele, and XCOPA, as well as the tokenizer-level MGSM robustness corpus, include languages whose primary scripts use 3-byte BMP encodings; none of the listed evaluations includes a primary 4-byte non-BMP group (Table~\ref{tab:cr-benchmark-coverage}). Held-out BPB covers all 101 mC4 languages, including 20 primary 3-byte entries. These counts describe each evaluation's standard language entries rather than the number of LM training languages.

\begin{table}[ht]
\centering
\small
\setlength{\tabcolsep}{4pt}
\caption{Language-entry coverage by primary encoding-floor group. Each English benchmark denotes ARC-Easy, ARC-Challenge, HellaSwag, PIQA, or LAMBADA separately.}
\label{tab:cr-benchmark-coverage}
\begin{tabular}{lrrr}
\toprule
Evaluation & Entries & Primary 3-byte BMP & Primary 4-byte non-BMP \\
\midrule
XNLI & 15 & 3 & 0 \\
Belebele & 122 & 26 & 0 \\
XCOPA & 11 & 3 & 0 \\
MGSM & 11 & 5 & 0 \\
Each English benchmark & 1 & 0 & 0 \\
Held-out BPB & 101 & 20 & 0 \\
\bottomrule
\end{tabular}

\end{table}

Because retaining the ASCII floor is central to UBE's design, we also test code-heavy multilingual inputs using 100 SWE-bench tasks~\citep{jimenez2024swebench} and the ten MAPS translations in the public release~\citep{hofman2026maps} (the Chinese translation described in the MAPS paper is not released). Token counts use the primary \texttt{101-lang/32K} UBE and BBPE tokenizers and a 32K BBPE16 tokenizer trained on the same corpus. Aggregated within each language over the 100 aligned tasks, UBE uses fewer final BPE tokens than BBPE for English and every one of the ten translations under both input conditions. Table~\ref{tab:cr-coding-workload} aggregates the 1,000 non-English inputs; intervals use 10,000 paired bootstrap resamples of the 100 aligned task clusters. UBE remains slightly below BBPE even when complete reference-edit file contents make the inputs 99.24\% ASCII bytes, whereas uniform UTF-16 BBPE16 is substantially longer.

\begin{table}[ht]
\centering
\scriptsize
\setlength{\tabcolsep}{3pt}
\caption{Final BPE output-token changes on ten MAPS translations of 100 SWE-bench tasks, using 32K tokenizers trained on the primary 101-language corpus. Negative values favor the named method over BBPE.}
\label{tab:cr-coding-workload}
\resizebox{\linewidth}{!}{\begin{tabular}{lrrr}
\toprule
Input content & ASCII-byte share & UBE $\Delta$ vs. BBPE & BBPE16 $\Delta$ vs. BBPE \\
\midrule
Issue text & 73.91\% & -0.27\% [-0.47\%, -0.15\%] & +21.92\% [+20.56\%, +23.19\%] \\
Issue text + edit-file contents & 99.24\% & -0.18\% [-0.21\%, -0.15\%] & +24.86\% [+24.35\%, +25.40\%] \\
\bottomrule
\end{tabular}
}
\end{table}

\FloatBarrier

\subsection{Corpus-regime sensitivity}
\label{app:corpus-regime}

\paragraph{Protocol.}
We compare the six learned-token composition variants in Table~\ref{tab:cr-composition} across two tokenizer-training regimes: the primary 101-language corpus and an approximately 50 MiB UTF-8-text prefix per language, always at 32K vocabulary size. The cap jointly changes corpus volume and language mixture; it is not an isolated intervention on language balance. Equal byte budgets also do not imply equal semantic content across scripts. The evaluation keeps all 1,012 aligned FLORES-200 devtest sentences in each of the 204 language--script configurations fixed. Following the distinction between parallel-content token counts and word-based fertility~\citep{arnett2025explaining}, we report final output tokens per sentence, without added special tokens, and group by the dataset's script tags. These tags identify configurations, not a character-level partition of mixed-script sentences. Fixed MYTE maps do not depend on the mC4 cap and remain a separate base-symbol comparison (Table~\ref{tab:cr-myte}).

\paragraph{Redistribution across scripts.}
The capped vocabularies use fewer tokens in \regimeMinLower{}--\regimeMaxLower{} of the 204 configurations, depending on boundary policy and representation. The aggregate improvement is therefore not an English or CJK improvement: both groups require more tokens under the cap, whereas the 160 non-English Latin, Arabic, and Cyrillic configurations account for \regimeLatinArabicCyrillicDropShareRange{}\% of the net decrease in the overall mean. Table~\ref{tab:cr-regime-scripts} reports every script tag, including increases for well-covered primary-corpus languages and decreases elsewhere. Within each fixed corpus and boundary policy, substituting UBE for BBPE lowers token-premium Gini by \regimeGiniUbeReductionRange{}\%. Its CJK effect is also corpus-dependent: it is near zero under the primary recipe but reduces CJK tokens under every capped boundary policy (Table~\ref{tab:cr-regime-intervals}). The representation benefit thus persists when the English-heavy training recipe is replaced by the capped one.

\paragraph{Why the cap redistributes tokens.}
The cap changes the training mixture: English falls from \regimeEnglishSharePrimary{}\% to \regimeEnglishShareCapped{}\%, Chinese, Japanese, and Korean from \regimeCjkSharePrimary{}\% to \regimeCjkShareCapped{}\%, and Hindi and Thai from \regimeHindiThaiSharePrimary{}\% to \regimeHindiThaiShareCapped{}\%, while the remaining 95 languages rise from \regimeRemainingSharePrimary{}\% to \regimeRemainingShareCapped{}\%. BPE adds merges in order of corpus frequency, so a language's share sets how much of the shared vocabulary serves its text~\citep{conneau2019xlm,costajussa2022nllb,hayase2024datamixture}. Consistently, English uses fewer distinct tokens on identical text under the cap (see below), and the fraction of CJK character occurrences split below the character rises from \regimeCjkSplitPrimaryRange{}\% to \regimeCjkSplitCappedRange{}\%. In Other BMP, the Myanmar-script configurations, whose Burmese share rises from \regimeBurmeseSharePrimary{}\% to \regimeBurmeseShareCapped{}\%, use \regimeMyanmarReductionRange{}\% fewer tokens, whereas Tibetan, Ol~Chiki, and Tifinagh are not used by any training language and change by at most \regimeAbsentScriptMaxChange{}\%. Total volume falls with the mixture, so these shares describe the redistribution rather than isolate its cause.

\begingroup
\small
\setlength{\tabcolsep}{4pt}
\begin{longtable}{@{}lrlrrrrrr@{}}
\caption{\normalsize Mean output tokens per FLORES-200 sentence by script tag for the six composition variants (32K vocabularies), with tokenizers trained on the primary corpus or on the 50 MiB/language prefix. Both rows of a script use the same sentences; $n$ counts configurations, and Latin includes English.}
\label{tab:cr-regime-scripts}\\
\toprule
 & & & \multicolumn{2}{c}{Standard} & \multicolumn{2}{c}{SuperBPE-style} & \multicolumn{2}{c}{SCRIPT-style} \\
\cmidrule(lr){4-5}\cmidrule(lr){6-7}\cmidrule(lr){8-9}
FLORES script & $n$ & Corpus & UBE & BBPE & UBE & BBPE & UBE & BBPE \\
\midrule
\endfirsthead
\multicolumn{9}{c}{Table \thetable{} (continued)}\\
\toprule
 & & & \multicolumn{2}{c}{Standard} & \multicolumn{2}{c}{SuperBPE-style} & \multicolumn{2}{c}{SCRIPT-style} \\
\cmidrule(lr){4-5}\cmidrule(lr){6-7}\cmidrule(lr){8-9}
FLORES script & $n$ & Corpus & UBE & BBPE & UBE & BBPE & UBE & BBPE \\
\midrule
\endhead
\bottomrule
\endfoot
Arabic & 22 & Primary & 69.549 & 69.687 & 77.190 & 77.862 & 70.394 & 70.450 \\*
 &  & 50 MiB & 62.995 & 63.039 & 68.894 & 68.940 & 63.335 & 63.413 \\
\addlinespace[2pt]
Armenian & 1 & Primary & 78.250 & 78.649 & 91.867 & 92.875 & 79.762 & 80.015 \\*
 &  & 50 MiB & 63.933 & 64.063 & 73.844 & 74.226 & 64.669 & 64.707 \\
\addlinespace[2pt]
Bengali & 3 & Primary & 110.351 & 110.142 & 114.986 & 115.807 & 101.364 & 94.897 \\*
 &  & 50 MiB & 105.400 & 105.439 & 104.012 & 104.017 & 88.834 & 83.498 \\
\addlinespace[2pt]
Cyrillic & 12 & Primary & 65.828 & 66.044 & 75.168 & 75.626 & 66.698 & 66.912 \\*
 &  & 50 MiB & 57.978 & 58.022 & 64.967 & 65.036 & 58.779 & 58.830 \\
\addlinespace[2pt]
Devanagari & 10 & Primary & 89.929 & 90.036 & 67.171 & 67.198 & 58.677 & 58.790 \\*
 &  & 50 MiB & 94.952 & 94.933 & 85.407 & 85.468 & 74.666 & 71.307 \\
\addlinespace[2pt]
Ethiopic & 2 & Primary & 75.036 & 74.330 & 87.280 & 87.651 & 76.581 & 75.698 \\*
 &  & 50 MiB & 65.912 & 65.060 & 74.994 & 73.750 & 66.837 & 65.781 \\
\addlinespace[2pt]
Georgian & 1 & Primary & 85.117 & 82.745 & 100.157 & 97.806 & 86.253 & 83.613 \\*
 &  & 50 MiB & 73.824 & 73.454 & 85.892 & 84.165 & 74.694 & 74.425 \\
\addlinespace[2pt]
Greek & 1 & Primary & 77.161 & 77.342 & 90.232 & 90.522 & 78.051 & 78.100 \\*
 &  & 50 MiB & 67.721 & 67.772 & 77.343 & 77.516 & 68.462 & 68.509 \\
\addlinespace[2pt]
Gujarati & 1 & Primary & 99.345 & 98.813 & 97.409 & 97.932 & 84.574 & 81.210 \\*
 &  & 50 MiB & 93.346 & 93.236 & 85.798 & 83.561 & 69.764 & 67.950 \\
\addlinespace[2pt]
Gurmukhi & 1 & Primary & 98.947 & 98.726 & 94.967 & 95.315 & 83.107 & 82.551 \\*
 &  & 50 MiB & 92.489 & 92.596 & 80.943 & 81.054 & 68.602 & 68.986 \\
\addlinespace[2pt]
Hangul & 1 & Primary & 38.633 & 38.766 & 43.622 & 44.159 & 39.314 & 39.381 \\*
 &  & 50 MiB & 53.995 & 55.464 & 63.085 & 66.668 & 54.941 & 56.510 \\
\addlinespace[2pt]
Han (simplified) & 1 & Primary & 39.359 & 39.183 & 37.628 & 37.506 & 39.681 & 39.416 \\*
 &  & 50 MiB & 53.877 & 54.926 & 51.170 & 51.963 & 53.694 & 54.287 \\
\addlinespace[2pt]
Han (traditional) & 2 & Primary & 43.913 & 44.039 & 43.901 & 43.958 & 43.812 & 43.836 \\*
 &  & 50 MiB & 56.774 & 59.484 & 55.337 & 58.238 & 56.514 & 58.666 \\
\addlinespace[2pt]
Hebrew & 2 & Primary & 63.175 & 63.530 & 74.486 & 74.938 & 64.405 & 64.614 \\*
 &  & 50 MiB & 54.887 & 54.912 & 61.632 & 61.747 & 55.925 & 56.023 \\
\addlinespace[2pt]
Japanese & 1 & Primary & 46.273 & 46.194 & 48.428 & 48.396 & 43.900 & 43.797 \\*
 &  & 50 MiB & 65.507 & 65.421 & 67.953 & 69.249 & 62.043 & 62.537 \\
\addlinespace[2pt]
Khmer & 1 & Primary & 135.804 & 136.047 & 117.627 & 116.502 & 112.703 & 113.635 \\*
 &  & 50 MiB & 127.890 & 127.778 & 96.629 & 96.395 & 90.826 & 93.156 \\
\addlinespace[2pt]
Kannada & 1 & Primary & 114.405 & 115.094 & 105.599 & 105.614 & 90.495 & 87.544 \\*
 &  & 50 MiB & 108.404 & 108.449 & 94.325 & 94.241 & 74.691 & 72.421 \\
\addlinespace[2pt]
Lao & 1 & Primary & 99.961 & 100.289 & 102.399 & 103.144 & 108.217 & 103.767 \\*
 &  & 50 MiB & 85.275 & 85.630 & 82.771 & 85.199 & 84.385 & 79.887 \\
\addlinespace[2pt]
Latin & 127 & Primary & 56.627 & 56.743 & 60.992 & 61.177 & 57.210 & 57.324 \\*
 &  & 50 MiB & 52.236 & 52.279 & 55.379 & 55.435 & 52.710 & 52.749 \\
\addlinespace[2pt]
Malayalam & 1 & Primary & 126.980 & 126.831 & 96.841 & 96.669 & 92.365 & 88.169 \\*
 &  & 50 MiB & 123.319 & 123.284 & 86.055 & 85.563 & 83.323 & 78.650 \\
\addlinespace[2pt]
Myanmar & 2 & Primary & 173.614 & 172.278 & 147.969 & 143.524 & 173.686 & 178.840 \\*
 &  & 50 MiB & 161.930 & 160.966 & 119.463 & 119.620 & 151.898 & 159.353 \\
\addlinespace[2pt]
Ol Chiki & 1 & Primary & 222.944 & 341.287 & 230.274 & 341.440 & 242.811 & 364.459 \\*
 &  & 50 MiB & 239.684 & 341.934 & 242.051 & 342.068 & 251.140 & 364.458 \\
\addlinespace[2pt]
Odia & 1 & Primary & 161.316 & 164.863 & 187.490 & 182.281 & 148.375 & 164.074 \\*
 &  & 50 MiB & 172.301 & 171.531 & 192.737 & 187.334 & 152.069 & 164.080 \\
\addlinespace[2pt]
Sinhala & 1 & Primary & 102.037 & 102.125 & 106.787 & 107.439 & 88.953 & 84.163 \\*
 &  & 50 MiB & 95.973 & 96.134 & 95.025 & 95.128 & 74.749 & 69.862 \\
\addlinespace[2pt]
Tamil & 1 & Primary & 122.921 & 123.013 & 115.659 & 115.820 & 77.683 & 75.619 \\*
 &  & 50 MiB & 121.695 & 121.851 & 104.768 & 104.531 & 73.694 & 70.895 \\
\addlinespace[2pt]
Telugu & 1 & Primary & 110.878 & 111.162 & 100.376 & 100.439 & 89.684 & 87.714 \\*
 &  & 50 MiB & 104.795 & 104.852 & 85.994 & 86.088 & 73.845 & 71.519 \\
\addlinespace[2pt]
Tifinagh & 2 & Primary & 185.780 & 269.911 & 191.536 & 270.293 & 204.546 & 289.680 \\*
 &  & 50 MiB & 196.078 & 270.108 & 195.998 & 270.280 & 204.581 & 289.885 \\
\addlinespace[2pt]
Thai & 1 & Primary & 69.237 & 69.173 & 59.498 & 59.591 & 55.979 & 58.003 \\*
 &  & 50 MiB & 84.894 & 84.610 & 79.968 & 79.458 & 85.782 & 87.061 \\
\addlinespace[2pt]
Tibetan & 2 & Primary & 298.116 & 352.053 & 217.926 & 275.250 & 299.304 & 352.571 \\*
 &  & 50 MiB & 299.339 & 352.121 & 227.372 & 265.607 & 300.190 & 352.571 \\
\end{longtable}
\endgroup

\paragraph{Absolute cost and English normalization.}
Let $m_r(\ell)$ be a language's mean token count under corpus regime $r$, and $p_r(\ell)=m_r(\ell)/m_r(\mathrm{English})$. Then
\begin{equation}
\log\frac{p_{\mathrm{cap}}(\ell)}{p_{\mathrm{primary}}(\ell)}
=\log\frac{m_{\mathrm{cap}}(\ell)}{m_{\mathrm{primary}}(\ell)}
-\log\frac{m_{\mathrm{cap}}(\mathrm{English})}{m_{\mathrm{primary}}(\mathrm{English})}.
\label{eq:regime-premium-decomposition}
\end{equation}
For standard BBPE, the mean non-English premium falls from \regimePrimaryPremium{} to \regimeCappedPremium{}, yet \regimeMaskedConfigurations{} non-English configurations have lower premiums while using more tokens. Gini is invariant to the common English denominator: its increase from \regimePrimaryGini{} to \regimeCappedGini{} reflects a change in the distribution of non-English counts, not English normalization. Reporting both absolute counts and dispersion avoids treating a higher English cost as a multilingual compression gain.

\paragraph{Vocabulary use on identical text.}
The number of distinct token IDs used on the English evaluation sentences falls from \regimePrimaryUbeActive{} to \regimeCappedUbeActive{} for standard UBE and from \regimePrimaryBbpeActive{} to \regimeCappedBbpeActive{} for standard BBPE. These are observed active types on the same held-out text, not the number of vocabulary entries exclusively allocated to English. Together with the count changes, they describe a different use of the fixed vocabulary budget; they do not isolate a causal effect of language balance from corpus volume.

\paragraph{Capped-corpus comparisons.}
Table~\ref{tab:cr-regime-capped} adds official SCRIPT-BPE and derived MYTE+BPE to the six composition variants under the same capped training condition. The seven disjoint cohorts cover all 204 configurations. South Asian uses the ten script tags listed in Appendix~\ref{app:tokenizer-composition}; CJK includes Han (simplified and traditional), Japanese, and Hangul; Southeast Asian includes Khmer, Lao, Myanmar, and Thai. Other is the complement of these named groups. SCRIPT-BPE and MYTE+BPE retain their respective text transforms, whose fidelity is evaluated separately in Table~\ref{tab:cr-unicode-fidelity}. SCRIPT-BPE's smallest overall mean coexists with higher English and non-English Latin counts than standard UBE, while SCRIPT-style boundaries chiefly reduce South Asian counts. These cohort means expose differences that an overall average alone cannot identify.

\begin{table}[htbp]
\centering
\small
\setlength{\tabcolsep}{2.5pt}
\caption{Capped-corpus output tokens per sentence on disjoint FLORES cohorts (approximately 50 MiB/language; 32K vocabularies). SE denotes Southeast; Other is the residual cohort. Overall means weight cohorts by $n$.}
\label{tab:cr-regime-capped}
\begin{tabular}{@{}lrrrrrrrr@{}}
\toprule
Method & EN & \shortstack{Other\\Latin} & \shortstack{Arabic/\\Cyrillic} & \shortstack{South\\Asian} & CJK & \shortstack{SE\\Asian} & Other & All \\
$n$ & 1 & 126 & 34 & 21 & 5 & 5 & 12 & 204 \\
\midrule
Standard UBE & 36.938 & \textbf{52.357} & \textbf{61.224} & 103.716 & 57.385 & 124.384 & 139.800 & 66.079 \\
Standard BBPE & 36.959 & 52.401 & 61.269 & 103.694 & 58.956 & 123.990 & 169.302 & 67.875 \\
SuperBPE-style UBE & 39.524 & 55.505 & 67.508 & 94.845 & 58.576 & \textbf{99.659} & \textbf{133.260} & 67.208 \\
SuperBPE-style BBPE & 39.586 & 55.561 & 67.562 & 94.487 & 60.871 & 100.058 & 160.062 & 68.858 \\
SCRIPT-style UBE & 37.539 & 52.831 & 61.727 & 80.186 & 56.741 & 112.958 & 142.836 & 63.918 \\
SCRIPT-style BBPE & 37.569 & 52.869 & 61.795 & \textbf{77.520} & 58.133 & 115.762 & 175.052 & 65.677 \\
SCRIPT-BPE & 37.982 & 53.276 & 62.648 & 78.162 & \textbf{51.069} & 101.340 & 144.001 & \textbf{63.785} \\
MYTE+BPE & \textbf{36.661} & 52.893 & 61.802 & 104.538 & 58.360 & 124.674 & 159.254 & 67.764 \\
\bottomrule
\end{tabular}

\end{table}

\paragraph{Paired uncertainty.}
For each UBE/BBPE pair, Table~\ref{tab:cr-regime-intervals} resamples the 1,012 aligned sentence indices jointly across all languages and both tokenizers, with 2,000 bootstrap replicates. Each replicate computes the percentage change in the cohort's mean count. The percentile intervals describe sentence-sampling uncertainty conditional on these saved vocabularies and the fixed language set, not variability across tokenizer retraining. All six English and overall intervals favor UBE. For CJK, all three capped intervals favor UBE, whereas the primary SCRIPT-style pair has a small increase. As in the main LM results, shorter output sequences alone do not establish downstream quality gains~\citep{schmidt2024compression}.

\paragraph{Why the CJK effect depends on the corpus.}
UBE's shorter floor saves tokens only where BBPE needs three or more byte pieces for a character. Under the primary corpus this happens for \regimeCjkBbpeThreePiecePrimaryRange{}\% of Han, kana, and Hangul occurrences in the CJK (5) sentences, because frequent characters are whole tokens and partial-character byte tokens cover most others in two pieces~\citep{wang2020bbpe}; under the 50~MiB/language cap it happens for \regimeCjkBbpeThreePieceCappedRange{}\%, and UBE lowers CJK counts by \regimeCjkUbeReductionRange{}\%. UBE also splits more characters whose whole-character token is in its vocabulary (\regimeCjkUbeWholeSplitPrimaryRange{}\% of occurrences under the primary corpus, versus \regimeCjkBbpeWholeSplitPrimaryRange{}\% for BBPE). \utfEightName{} distinguishes lead from continuation bytes, but UBE writes both bytes of each \utfSixteenName{} code unit with the same 256 PUA symbols, so the high byte of one character and the low byte of the next can form a learned merge, sometimes another character's token (in \begin{CJK}{UTF8}{gbsn}典广\end{CJK} they spell \begin{CJK}{UTF8}{gbsn}网\end{CJK}, U+7F51). Under the primary corpus the CJK (5) mean stays within 0.3\% of BBPE; per configuration, UBE is higher only for Simplified Chinese and Japanese, where BBPE splits the fewest characters into three pieces. SCRIPT's two-symbol encoding likewise matches byte-level BPE on Chinese and Japanese once both are trained with the same pretokenizer~\citep{land2025script}.

\begin{table}[htbp]
\centering
\small
\setlength{\tabcolsep}{4pt}
\caption{BBPE-to-UBE token-count changes (\%) and 95\% paired-sentence bootstrap intervals at fixed corpus and boundary policy. Negative values favor UBE.}
\label{tab:cr-regime-intervals}
\begin{tabular}{@{}llrrr@{}}
\toprule
Corpus & Boundary & English & CJK (5) & All (204) \\
\midrule
Primary & Standard & -0.29 [-0.36, -0.23] & -0.06 [-0.13, +0.01] & -2.83 [-2.84, -2.81] \\
 & SuperBPE-style & -0.51 [-0.60, -0.43] & -0.23 [-0.31, -0.14] & -2.76 [-2.78, -2.75] \\
 & SCRIPT-style & -0.14 [-0.19, -0.10] & +0.12 [+0.05, +0.18] & -2.80 [-2.81, -2.78] \\
\addlinespace[3pt]
50 MiB & Standard & -0.06 [-0.08, -0.04] & -2.66 [-2.78, -2.55] & -2.65 [-2.66, -2.63] \\
 & SuperBPE-style & -0.16 [-0.20, -0.12] & -3.77 [-3.88, -3.67] & -2.40 [-2.41, -2.38] \\
 & SCRIPT-style & -0.08 [-0.11, -0.05] & -2.39 [-2.49, -2.30] & -2.68 [-2.70, -2.66] \\
\bottomrule
\end{tabular}

\end{table}
\FloatBarrier

\section{LM and inference benchmark results}
\label{app:lm-inference-results}

\subsection{Full LM benchmark results}
\label{app:lm-eval-full}

Table~\ref{tab:lm-eval-full} reports the complete per-benchmark zero-shot accuracy for the main 182M/32K LM runs and adds the SuperBPE composition rows for UBE and BBPE (S-UBE and S-BBPE) omitted from the main text. Tables~\ref{tab:lm-eval-1.3b},~\ref{tab:lm-eval-64k}, and~\ref{tab:lm-eval-128k} report the corresponding 1.3B, 64K, and 128K variants.

\begin{table}[ht]
\centering
\footnotesize
\setlength{\tabcolsep}{2.5pt}
\caption{Full LM benchmark results (182M, 32K, 200K mC4 steps). Benchmark columns report zero-shot accuracy. XNLI/Belebele/XCOPA are multilingual; ARC-Easy/ARC-Challenge, HellaSwag, PIQA, and LAMBADA are English-only.}
\label{tab:lm-eval-full}
\begin{tabular}{llcccccccccc}
\toprule
Config & Tok & BPB & BPB Gini & XNLI & Bele & XCOPA & ARC-E & ARC-C & HSwag & PIQA & LAMB \\
\midrule
En-only & UBE & \textbf{1.4698} & \textbf{0.1989} & \textbf{34.4} & 23.0 & \textbf{51.5} & \textbf{38.5} & 17.0 & \textbf{27.8} & \textbf{62.7} & 22.5 \\
En-only & BBPE & 1.4909 & 0.1993 & 34.3 & \textbf{23.6} & 51.4 & 38.1 & \textbf{18.3} & 27.5 & 61.9 & \textbf{23.6} \\
\midrule
6-lang & UBE & 1.3760 & \textbf{0.1880} & \textbf{34.9} & 23.2 & \textbf{52.2} & \textbf{36.7} & \textbf{18.4} & \textbf{28.0} & \textbf{62.0} & \textbf{23.1} \\
6-lang & BBPE & \textbf{1.3680} & 0.1930 & 34.6 & \textbf{23.5} & 52.0 & 35.6 & 17.1 & 27.4 & 61.3 & \textbf{23.1} \\
\midrule
101-lang & UBE & 1.5237 & \textbf{0.1739} & \textbf{34.4} & 23.7 & 51.8 & \textbf{35.9} & \textbf{17.8} & \textbf{27.9} & \textbf{60.1} & \textbf{23.1} \\
101-lang & BBPE & \textbf{1.5174} & 0.1762 & 34.3 & \textbf{24.8} & \textbf{52.0} & 35.5 & 17.0 & 27.7 & 59.3 & 22.8 \\
\midrule
101-lang (SuperBPE) & S-UBE & \textbf{1.4607} & 0.1822 & 34.1 & \textbf{24.5} & 52.1 & 33.0 & 17.7 & \textbf{27.7} & \textbf{59.7} & \textbf{23.9} \\
101-lang (SuperBPE) & S-BBPE & 1.4876 & \textbf{0.1802} & \textbf{34.3} & 24.4 & \textbf{52.4} & \textbf{33.7} & \textbf{18.1} & 27.6 & \textbf{59.7} & 22.2 \\
\bottomrule
\end{tabular}
\end{table}

\begin{table}[ht]
\centering
\small
\setlength{\tabcolsep}{2.5pt}
\caption{LM evaluation (1.3B, 32K vocabulary, 205K Chinchilla-style mC4 steps) across the same three tokenizer-training configurations as Table~\ref{tab:lm-eval}. Lower-coverage rows (\texttt{En-only}, \texttt{6-lang}) are fixed-compute stress tests. Benchmark columns report zero-shot accuracy.}
\label{tab:lm-eval-1.3b}
\begin{tabular}{llcccccccccc}
\toprule
Config & Tok & BPB & BPB Gini & XNLI & Bele & XCOPA & ARC-E & ARC-C & HSwag & PIQA & LAMB \\
\midrule
En-only & UBE & \textbf{1.3668} & \textbf{0.1956} & \textbf{34.8} & \textbf{22.9} & \textbf{52.8} & 37.3 & \textbf{19.0} & 28.2 & 62.3 & \textbf{27.0} \\
En-only & BBPE & 1.4193 & 0.1971 & 34.5 & \textbf{22.9} & 52.3 & \textbf{39.4} & 17.4 & \textbf{28.5} & \textbf{62.6} & 25.6 \\
\midrule
6-lang & UBE & 1.3106 & 0.1973 & \textbf{35.0} & \textbf{23.9} & 52.6 & 39.6 & 18.2 & \textbf{29.2} & 63.0 & 28.6 \\
6-lang & BBPE & \textbf{1.3071} & \textbf{0.1956} & 34.8 & 23.1 & \textbf{52.9} & \textbf{40.4} & \textbf{18.9} & \textbf{29.2} & \textbf{63.5} & \textbf{28.9} \\
\midrule
101-lang & UBE & 1.4707 & 0.1983 & 34.5 & \textbf{23.4} & \textbf{51.9} & \textbf{39.9} & \textbf{18.4} & \textbf{29.1} & 63.2 & 29.1 \\
101-lang & BBPE & \textbf{1.4673} & \textbf{0.1972} & \textbf{35.0} & 23.0 & 51.1 & 39.1 & 18.1 & \textbf{29.1} & \textbf{64.0} & \textbf{29.9} \\
\bottomrule
\end{tabular}
\end{table}

\begin{table}[ht]
\centering
\small
\setlength{\tabcolsep}{2.5pt}
\caption{LM evaluation (230M, 64K vocabulary, 200K mC4 steps) across the same three tokenizer-training configurations as Table~\ref{tab:lm-eval}. Benchmark columns report zero-shot accuracy.}
\label{tab:lm-eval-64k}
\begin{tabular}{llcccccccccc}
\toprule
Config & Tok & BPB & BPB Gini & XNLI & Bele & XCOPA & ARC-E & ARC-C & HSwag & PIQA & LAMB \\
\midrule
En-only & UBE & \textbf{1.4588} & \textbf{0.2002} & \textbf{34.2} & \textbf{23.2} & 51.9 & \textbf{40.2} & \textbf{18.4} & \textbf{27.6} & \textbf{61.9} & \textbf{22.6} \\
En-only & BBPE & 1.4879 & 0.2006 & \textbf{34.2} & 23.0 & \textbf{52.1} & \textbf{40.2} & 18.1 & \textbf{27.6} & 61.2 & 20.6 \\
\midrule
6-lang & UBE & 1.4456 & 0.1812 & 34.5 & 22.9 & 51.6 & \textbf{38.1} & \textbf{19.3} & 27.8 & \textbf{62.1} & \textbf{22.8} \\
6-lang & BBPE & \textbf{1.4329} & \textbf{0.1751} & \textbf{34.7} & \textbf{23.0} & \textbf{52.0} & 37.7 & 18.9 & \textbf{28.0} & 61.3 & 22.4 \\
\midrule
101-lang & UBE & 1.4435 & 0.2047 & \textbf{34.8} & \textbf{23.4} & \textbf{51.9} & 38.2 & \textbf{18.1} & 27.6 & 61.0 & \textbf{24.7} \\
101-lang & BBPE & \textbf{1.4431} & \textbf{0.2037} & \textbf{34.8} & 23.2 & 51.5 & \textbf{39.5} & 17.6 & \textbf{28.1} & \textbf{61.4} & 23.2 \\
\bottomrule
\end{tabular}
\end{table}

\begin{table}[ht]
\centering
\small
\setlength{\tabcolsep}{2.5pt}
\caption{LM evaluation (329M, 128K vocabulary, 200K mC4 steps; 101-lang). S-UBE and S-BBPE provide SuperBPE-style composition checks. Benchmark columns report zero-shot accuracy.}
\label{tab:lm-eval-128k}
\begin{tabular}{llcccccccccc}
\toprule
Config & Tok & BPB & BPB Gini & XNLI & Bele & XCOPA & ARC-E & ARC-C & HSwag & PIQA & LAMB \\
\midrule
101-lang & UBE & \textbf{1.5028} & \textbf{0.2021} & \textbf{34.9} & \textbf{23.1} & \textbf{51.6} & 40.0 & \textbf{19.4} & \textbf{28.7} & 62.2 & \textbf{26.7} \\
101-lang & BBPE & 1.5044 & 0.2029 & 34.6 & 23.0 & 51.4 & \textbf{41.0} & \textbf{19.4} & 28.3 & \textbf{62.4} & 25.5 \\
\midrule
101-lang & S-UBE & \textbf{1.4929} & 0.1914 & \textbf{34.3} & 23.4 & \textbf{52.4} & \textbf{39.9} & 17.8 & 28.2 & \textbf{62.6} & 26.7 \\
101-lang & S-BBPE & 1.5064 & \textbf{0.1912} & \textbf{34.3} & \textbf{24.2} & 52.3 & 39.4 & \textbf{18.0} & \textbf{28.4} & 60.8 & \textbf{29.9} \\
\bottomrule
\end{tabular}
\end{table}

\FloatBarrier

\subsection{Multi-seed stability}
\label{app:multiseed}

To verify that the 182M results are not overly sensitive to random initialization, we repeat the \texttt{101-lang/32K} configuration (182M) with \lmMultiseedNSeeds{} independent seeds (Table~\ref{tab:lm-eval-multiseed}). Absolute macro BPB varies across seeds, but the paired same-seed UBE--BBPE gap is small: mean $\Delta$BPB is \lmMsBpbDeltaMean{}, and the largest absolute same-seed gap is \lmMsBpbDeltaMax{}. We therefore interpret the multi-seed result as showing no systematic tokenizer-level quality advantage in this setting.

\begin{table}[ht]
\centering
\small
\setlength{\tabcolsep}{4pt}
\caption{Multi-seed LM evaluation (182M, 32K vocabulary, 101-lang, 3 seeds). Mean rows report mean $\pm$ cross-seed standard deviation for BPB, BPB Gini, and benchmarks. Benchmark columns report zero-shot accuracy. Matching run labels use the same seed.}
\label{tab:lm-eval-multiseed}
\begin{tabular}{llrrrrr}
\toprule
\multicolumn{7}{l}{Language modeling and multilingual accuracy} \\
Tok & Run & BPB & BPB Gini & XNLI & Bele & XCOPA \\
\midrule
UBE & A & 1.4717 & 0.1879 & 34.9 & 23.5 & 51.8 \\
 & B & 1.4747 & 0.1849 & 34.7 & 23.4 & 51.7 \\
 & Primary & 1.5237 & 0.1739 & 34.4 & 23.7 & 51.8 \\
\cmidrule{2-7}
 & mean & 1.4900{$\pm$0.0292} & 0.1822{$\pm$0.0074} & 34.7{$\pm$0.3} & 23.5{$\pm$0.2} & 51.8{$\pm$0.0} \\
\midrule
BBPE & A & 1.4674 & 0.1863 & 34.7 & 23.6 & 51.8 \\
 & B & 1.4715 & 0.1833 & 34.3 & 24.5 & 51.8 \\
 & Primary & 1.5174 & 0.1762 & 34.3 & 24.8 & 52.0 \\
\cmidrule{2-7}
 & mean & 1.4854{$\pm$0.0278} & 0.1819{$\pm$0.0052} & 34.4{$\pm$0.2} & 24.3{$\pm$0.7} & 51.9{$\pm$0.1} \\
\bottomrule
\end{tabular}
\par\medskip
\begin{tabular}{llrrrrr}
\toprule
\multicolumn{7}{l}{English accuracy} \\
Tok & Run & ARC-E & ARC-C & HSwag & PIQA & LAMB \\
\midrule
UBE & A & 37.2 & 17.6 & 27.8 & 60.9 & 24.4 \\
 & B & 36.8 & 17.8 & 27.9 & 61.4 & 24.0 \\
 & Primary & 35.9 & 17.8 & 27.9 & 60.1 & 23.1 \\
\cmidrule{2-7}
 & mean & 36.6{$\pm$0.7} & 17.7{$\pm$0.1} & 27.9{$\pm$0.1} & 60.8{$\pm$0.7} & 23.9{$\pm$0.7} \\
\midrule
BBPE & A & 36.7 & 17.4 & 27.9 & 60.3 & 24.7 \\
 & B & 36.3 & 17.8 & 27.7 & 60.0 & 25.1 \\
 & Primary & 35.5 & 17.0 & 27.7 & 59.3 & 22.8 \\
\cmidrule{2-7}
 & mean & 36.2{$\pm$0.6} & 17.4{$\pm$0.4} & 27.8{$\pm$0.1} & 59.9{$\pm$0.5} & 24.2{$\pm$1.2} \\
\bottomrule
\end{tabular}
\end{table}

\subsection{Context-window and prompt-processing benchmark}
\label{app:inference-benchmark}

Table~\ref{tab:inference-benchmark} reports same-content prompt-processing measurements for UBE and BBPE\@. We include this benchmark as a deployment-side check of the main context-window claim: shorter prompts for the same semantic content move byte-fallback-heavy scripts farther from the truncation boundary. We describe the full methodology below, informed by the benchmarking principles of \citet{hoefler2015scientific} and \citet{georges2007statistically}.

\begin{table}[ht]
\centering
\small
\caption{Same-content prompt-processing benchmark on 5-sentence FLORES-200 bundles (\texttt{6-lang/32K}, 182M, A100 80\,GB, batch~1, FP32, \texttt{torch.compile}). Token columns are bundle lengths; rates use 30 trials with 95\% CI half-widths over 28 samples after symmetric min/max trimming. $\Delta$tok = (UBE$-$BBPE)/BBPE and $\Delta$spd = relative sequence-rate change.}
\label{tab:inference-benchmark}
\setlength{\tabcolsep}{3pt}
\resizebox{\linewidth}{!}{%
\begin{tabular}{lrrrrrr}
\toprule
Language & UBE tok & BBPE tok & $\Delta$tok & UBE rate (seq/s, $\pm$95\% CI) & BBPE rate (seq/s, $\pm$95\% CI) & $\Delta$spd \\
\midrule
Amharic & 976 & 1438 & -32.1\% & 46.0\,$\pm$\,0.02 & 31.8\,$\pm$\,0.01 & +45.01\% \\
Arabic & 957 & 975 & -1.8\% & 48.1\,$\pm$\,0.02 & 46.1\,$\pm$\,0.01 & +4.29\% \\
English & 223 & 226 & -1.3\% & 151.0\,$\pm$\,0.41 & 145.3\,$\pm$\,0.15 & +3.90\% \\
Hindi & 672 & 672 & 0.0\% & 65.3\,$\pm$\,0.02 & 65.2\,$\pm$\,0.03 & +0.08\% \\
Japanese & 347 & 354 & -2.0\% & 108.7\,$\pm$\,0.05 & 107.5\,$\pm$\,0.05 & +1.07\% \\
Korean & 253 & 260 & -2.7\% & 143.9\,$\pm$\,0.10 & 140.1\,$\pm$\,0.10 & +2.76\% \\
Burmese & 1074 & 1080 & -0.6\% & 43.4\,$\pm$\,0.01 & 43.4\,$\pm$\,0.01 & +0.01\% \\
Russian & 811 & 822 & -1.3\% & 54.7\,$\pm$\,0.02 & 54.6\,$\pm$\,0.01 & +0.12\% \\
Thai & 480 & 482 & -0.4\% & 85.9\,$\pm$\,0.03 & 85.5\,$\pm$\,0.04 & +0.49\% \\
Chinese (Simplified) & 278 & 276 & +0.7\% & 139.5\,$\pm$\,0.09 & 139.4\,$\pm$\,0.10 & +0.04\% \\
\bottomrule
\end{tabular}%
}
\end{table}

\paragraph{Hardware and software.}
All measurements are conducted on a single NVIDIA A100 80\,GB GPU with exclusive access, using PyTorch \texttt{torch.compile} (Inductor backend, \texttt{dynamic=True}), batch size~1, FP32 precision, and single-stream scheduling consistent with the MLPerf Inference single-stream scenario~\citep{reddi2020mlperf}. The \texttt{6-lang/32K} tokenizer configuration is used to separate well-covered scripts (en, zh, ko, ja, hi, th) from byte-fallback-heavy scripts (my, am, ar, ru).

\paragraph{Timing and trials.}
Each language is measured independently with CUDA event timing. Following \citet{hoefler2015scientific}, we report the timing mechanism and explicitly control for timer overhead, synchronization, warmup, and measurement variability. We execute 5 warmup iterations (to stabilize \texttt{torch.compile} code generation, cuDNN autotuning, and GPU thermal state), followed by \infNTrials{} independent trials of 20 forward passes each for each tokenizer condition. We apply symmetric min/max outlier removal (1 from each tail, $\sim$3.3\% trim) to guard against cold-cache and scheduling artifacts. Following \citet{georges2007statistically}, we report $t$-distribution confidence intervals, appropriate for the \infNEffective{} effective samples remaining after trimming ($n < 30$). The primary summary is the trimmed mean $\pm$ 95\% CI; the significance tests below use all \infNTrials{} trials without trimming.

\paragraph{Prompt-processing measurement.}
We report only prompt processing. Transformer prefill contains a self-attention term that scales as $O(n^2 d)$ in sequence length~\citep{vaswani2017attention}; I/O-optimized implementations reduce memory traffic but do not remove this sequence-length dependence~\citep{dao2022flashattention}. Prompt processing is therefore the most direct deployment-side measurement we report for tokenizer-induced context-window pressure and truncation risk. Autoregressive decode, by contrast, is often memory-bandwidth-bound~\citep{shazeer2019fast,pope2023efficiently}, with per-step attention cost scaling linearly in key--value (KV) cache length, so input-length reductions still help but through a different cost path.

\paragraph{Expected vs.\ observed speedup.}
Let $r = n_{\text{UBE}}/n_{\text{BBPE}}$ denote the UBE/BBPE token-count ratio on the same prompt (derivable from the $\Delta$tok column in Table~\ref{tab:inference-benchmark}). The floating-point operation (FLOP)-only scaling terms $1/r - 1$ for MLP-dominated prefill ($O(n d^2)$) and $1/r^2 - 1$ for attention-dominated prefill ($O(n^2 d)$) provide reference points, but launch overheads, embedding/output layers, and memory effects shift the observed rate. At the width tested here ($d = 768$, 182M scale) and the prompt lengths in Table~\ref{tab:inference-benchmark}, the MLP term $n d^2$ is comparable to or larger than the attention term $n^2 d$ for most languages, so observed rates fall closer to linear-in-$n$ than to quadratic. This matches the data: Amharic (\infDeltaTokamhEthi{}, \infDeltaSeqamhEthi{}) shows the largest speedup; Hindi and Thai, whose token counts barely change, show small but statistically significant speed changes (\infDeltaSeqhinDeva{}, \infDeltaSeqthaThai{}); and English and Arabic deviate from FLOP-only estimates, which we attribute to fixed and kernel-level effects.

\paragraph{Fair comparison metric.}
We report sequences/second (seq/s) rather than tokens/second. Because different tokenizers produce different token counts for the same text---with disparities up to $15\times$ across languages~\citep{petrov2023unfairness}---tokens/second conflates token count with processing rate. Normalizing by the original text content, following prior content-matched multilingual cost comparisons~\citep{ahia2023languages}, yields a metric that directly compares how quickly each system ingests the same natural-language input under the same model and hardware budget.

\paragraph{Statistical tests.}
For each language, we compute Welch's $t$-test (unpaired, unequal variance) and Mann--Whitney $U$ as a distribution-free check. Welch's test is appropriate because UBE and BBPE are benchmarked as separate timing samples and produce different sequence lengths for the same text. To control for multiple comparisons across 10 languages, we apply Benjamini--Hochberg false discovery rate (FDR) correction at $\alpha = 0.05$. The compact table reports rates and confidence intervals; after FDR correction, \infNSigLangsBH{}/10 language pairs are significant, while Chinese (Simplified) and Burmese are not, matching their near-zero measured speed changes. Error bars capture run-to-run variability under otherwise fixed hardware and software conditions; residual variance arises primarily from GPU scheduling, thermal drift, and measurement noise.

\FloatBarrier
\section{Reproducibility and compute disclosure}
\label{app:supplementary}

\codeavailability{} The five-way LM comparison in Table~\ref{tab:cr-matched-exposure-lm} uses private training and evaluation infrastructure; its runner, runtime, and configurations are not part of this release. Raw web-corpus shards are reconstructed from public sources rather than redistributed.

\subsection{Tokenizer artifact card}
\label{app:tokenizer-card}

The UBE tokenizer JSONs used in this work are documented as follows:
\begin{itemize}
    \item \textbf{Scope and use.} The paper's FLORES-200 evaluations and LM runs; each artifact is a standard Hugging Face \texttt{tokenizer.json} with unchanged BPE merge semantics.
    \item \textbf{Data and procedure.} Fixed-capped mC4 subsets and two-phase BPE training are documented in Appendix Sections~\ref{app:tokenizer-training} and~\ref{app:corpus-accounting}.
    \item \textbf{Evaluation and limitations.} Cross-dataset robustness, LM evaluation, and per-language FLORES-200 tables are reported in Appendix Sections~\ref{app:cross-dataset-vocab},~\ref{app:lm-eval-full}, and~\ref{app:per-language-results}; limitations are collected in Appendix~\ref{app:additional-limitations}.
    \item \textbf{License.} The UBE implementation builds on the Hugging Face \texttt{tokenizers} library and follows its Apache~2.0 license (Table~\ref{tab:asset-licenses-software}).
\end{itemize}

\subsection{Compute disclosure}
\label{app:compute-disclosure}
The primary 28-run LM suite and its single-GPU evaluations ran on internal Linux nodes with NVIDIA A100 80\,GB GPUs. Tokenizer training used our two-phase CPU pipeline. Table~\ref{tab:compute-disclosure} gives approximate active compute for this suite only; it excludes the additional comparisons in Tables~\ref{tab:cr-source-matched-accuracy} and~\ref{tab:cr-matched-exposure-lm}. For LM rows, GPU-hours per run are wall-clock hours multiplied by 8 GPUs; wall-clock ranges are coarse model-size-adjusted estimates based on the 150M/450M/1.2B reference budgets and exclude queueing, download, preprocessing, and checkpoint-transfer time.

\begin{table}[ht]
\centering
\footnotesize
\setlength{\tabcolsep}{4pt}
\caption{Approximate active compute budget for the primary 28-run LM suite. GPU-hours per run are wall-clock hours multiplied by 8 GPUs for LM training rows.}
\label{tab:compute-disclosure}
\begin{tabular}{L{3.5cm}L{1.7cm}L{0.9cm}L{1.8cm}L{1.8cm}L{2.0cm}}
\toprule
Stage & Configured steps & Runs & Wall-clock/run & GPU-hours/run & Total GPU-hours \\
\midrule
182M/32K LM pre-training & \lmOneEightTwoMConfiguredSteps{} & \lmOneEightTwoMRuns{} & \lmOneEightTwoMWallClockRange{} & \lmOneEightTwoMGpuHrsPerRun{} & \lmOneEightTwoMGpuHrs{} \\
230M/64K LM pre-training & \lmTwoThreeZeroMConfiguredSteps{} & \lmTwoThreeZeroMRuns{} & \lmTwoThreeZeroMWallClockRange{} & \lmTwoThreeZeroMGpuHrsPerRun{} & \lmTwoThreeZeroMGpuHrs{} \\
329M/128K LM pre-training & \lmThreeTwoNineMConfiguredSteps{} & \lmThreeTwoNineMRuns{} & \lmThreeTwoNineMWallClockRange{} & \lmThreeTwoNineMGpuHrsPerRun{} & \lmThreeTwoNineMGpuHrs{} \\
1.3B/32K LM pre-training & \lmOneThreeBConfiguredSteps{} & \lmOneThreeBRuns{} & \lmOneThreeBWallClockRange{} & \lmOneThreeBGpuHrsPerRun{} & \lmOneThreeBGpuHrs{} \\
Downstream eval.\ + inference & n/a & agg. & $<$\downstreamGpuHrs{} h & $<$\downstreamGpuHrs{} & $<$\downstreamGpuHrs{} \\
\bottomrule
\end{tabular}
\end{table}

Multiplying the primary-suite per-run ranges by their run counts gives approximately \totalAcceleratorHrs{} A100~80\,GB GPU-hours, plus $<$\downstreamGpuHrs{} GPU-hours for downstream evaluation and inference benchmarks. The CPU-only tokenizer matrix used about \cpuTokTrainHrs{} hours wall-clock and \cpuTokWorkerHrs{} CPU-worker-hours; intrinsic evaluation, contamination checks, and statistics used about \cpuEvalCoreHrs{} additional CPU-hours. The primary-corpus SCRIPT-style vocabulary construction additionally took 4.09 hours for UBE and 9.23 hours for BBPE using eight CPU threads, excluding the preceding collection of corpus statistics.

Table~\ref{tab:cr-source-matched-accuracy} reuses the primary UBE checkpoint and adds two training runs: BBPE on H100 and BBPE16 on A100. Table~\ref{tab:additional-compute} reports the recorded training times for the five additional runs underlying Table~\ref{tab:cr-matched-exposure-lm}. The available records do not establish complete accelerator-time accounting for these additional comparisons, so they are not folded into the primary-suite A100 estimate.

\begin{table}[ht]
\centering
\small
\caption{Recorded training wall-clock time for the five-tokenizer matched-exposure comparison (Table~\ref{tab:cr-matched-exposure-lm}), separate from the primary-suite budget.}
\label{tab:additional-compute}
\begin{tabular}{lrr}
\toprule
Method & Steps & Training time (hours) \\
\midrule
UBE & 200{,}000 & 12.94 \\
BBPE & 178{,}549 & 24.76 \\
BBPE16 & 177{,}699 & 24.68 \\
MYTE+BPE & 179{,}526 & 12.06 \\
SCRIPT-BPE & 170{,}191 & 11.39 \\
\bottomrule
\end{tabular}
\end{table}

\FloatBarrier

\section{Positioning, related work, and limitations}
\label{app:prior-work-scale}

Table~\ref{tab:prior-work-scale} summarizes the experimental scope of selected tokenizer studies. UBE combines tokenizer training on 101 languages, intrinsic evaluation across 204 language configurations, and a primary suite of \lmTotalReportedLmRuns{} LM-training runs. For \citet{abagyan2025onetokenizer}, the two multilingual benchmarks are Belebele and M-MMLU; Aya denotes the separate open-ended generation evaluation.

\begin{table}[ht]
\centering
\footnotesize
\setlength{\tabcolsep}{3pt}
\caption{Experimental scope of selected tokenizer studies. Tok.\ langs = tokenizer-training languages, eval langs = intrinsic-evaluation languages, and benchmarks = downstream benchmarks. NR = intrinsic-evaluation language count not separately reported; mono/multi = monolingual/multilingual. The UBE row covers the primary 28-run suite.}
\label{tab:prior-work-scale}
\begin{tabular}{@{}llcccccc@{}}
\toprule
Paper & Venue & \begin{tabular}[c]{@{}c@{}}Tok.\\langs\end{tabular} & \begin{tabular}[c]{@{}c@{}}Eval\\langs\end{tabular} & LM size & LM tokens & Benchmarks & Vocab \\
\midrule
Arnett+ & NeurIPS'25 & 97 & 97 & --- & --- & --- & 8--262K \\
Schmidt+ & EMNLP'24 & 1 (en) & 1 & \begin{tabular}[c]{@{}c@{}}350M--2.4B\\(64 runs)\end{tabular} & 200B & 10 EN & 33--49K \\
SuperBPE & COLM'25 & 1 (en) & 1 & 680M--11B & \begin{tabular}[c]{@{}c@{}}$\sim$330B\\(8B baseline)\end{tabular} & 30 EN & 200K \\
Abagyan+ & arXiv'25 & 62 & NR & 3.3B & 100B+10.5B & \begin{tabular}[c]{@{}c@{}}2 multi+11 EN\\+ Aya\end{tabular} & 100--250K \\
Land \& Arnett & TokShop'25 & 12 (mono) & 12 (mono) & --- & --- & --- & \begin{tabular}[c]{@{}c@{}}64K (mono);\\256K merges\\(multi)\end{tabular} \\
MAGNET & NeurIPS'24 & 9 & 9 & 126M & $\sim$10B bytes & 4 multi+2 & \begin{tabular}[c]{@{}c@{}}50--250K\\(BPE baselines)\end{tabular} \\
\midrule
UBE (ours) & this work & 101 & 204 & \begin{tabular}[c]{@{}c@{}}182M--1.35B\\(\lmTotalReportedLmRuns{} runs)\end{tabular} & 105B+27B & 3 multi+5 EN & 16--256K \\
\bottomrule
\end{tabular}

\vspace{2pt}
\begin{minipage}{0.95\linewidth}
\raggedright\scriptsize
Sources: Arnett+~\citep{arnett2025explaining}; Schmidt+~\citep{schmidt2024compression}; SuperBPE~\citep{liu2025superbpe}; Abagyan+~\citep{abagyan2025onetokenizer}; Land \& Arnett~\citep{land2025script}; MAGNET~\citep{ahia2024magnet}. X+ denotes X et al.
\end{minipage}
\end{table}

\subsection{Public large language model tokenizer comparison}
\label{app:public-llm-gini}

Table~\ref{tab:public-llm-gini} reports the FLORES-200 Gini coefficient and total token count for six tokenizers used by public LLMs, corresponding to the reference points in Figure~\ref{fig:gini-vs-tokens}. These tokenizers span a wide vocabulary range (100K--248K) and represent distinct design philosophies: tiktoken-based or tiktoken-derived BPE (GPT-4, GPT-4o, Llama~3), Qwen-family BPE variants, and Mistral~NeMo's Tekken tokenizer. The Qwen~2.5 row uses the tokenizer released with Qwen2.5-7B, which is identical to the Qwen2.5-3B tokenizer; the Qwen~3.6 row follows the official model-card/configuration metadata for the larger Qwen3.6-27B multimodal tokenizer and its special-token metadata~\citep{qwenteam2026qwen36card}. All measurements use the same 204-language FLORES-200 devtest evaluation as the paper's intrinsic benchmarks.

\begin{table}[ht]
\centering
\footnotesize
\setlength{\tabcolsep}{4pt}
\caption{FLORES-200 Gini coefficient and total token count for tokenizers used by public LLMs, included as Figure~\ref{fig:gini-vs-tokens} reference points.}
\label{tab:public-llm-gini}
\begin{tabular}{lrrrrl}
\toprule
Tokenizer & Vocab & EN avg & Total (M) & Gini & Type \\
\midrule
GPT-4o (o200k)    & \pltGptfouroVocab{} & \pltGptfouroEnAvg{} & \pltGptfouroTotalM{} & \pltGptfouroGini{} & tiktoken \\
GPT-4 (cl100k)    & \pltGptfourVocab{} & \pltGptfourEnAvg{} & \pltGptfourTotalM{} & \pltGptfourGini{} & tiktoken \\
Llama~3           & \pltLlamathreeVocab{} & \pltLlamathreeEnAvg{} & \pltLlamathreeTotalM{} & \pltLlamathreeGini{} & tiktoken-BPE \\
Qwen~2.5          & \pltQwenTwoFiveVocab{} & \pltQwenTwoFiveEnAvg{} & \pltQwenTwoFiveTotalM{} & \pltQwenTwoFiveGini{} & BPE \\
Qwen~3.6 multimodal tokenizer$^{\dagger}$ & \pltQwenthreesixVocab{} & \pltQwenthreesixEnAvg{} & \pltQwenthreesixTotalM{} & \pltQwenthreesixGini{} & BPE \\
Mistral~NeMo      & \pltMistralVocab{} & \pltMistralEnAvg{} & \pltMistralTotalM{} & \pltMistralGini{} & Tekken \\
\bottomrule
\end{tabular}

\vspace{2pt}
\begin{minipage}{0.95\linewidth}
\raggedright\scriptsize
$^{\dagger}$ The Qwen~3.6 multimodal tokenizer is taken from the \texttt{Qwen/Qwen3.6-27B} release. The official configuration reports a padded \texttt{vocab\_size} of 248{,}320, while this table reports the measured tokenizer vocabulary scale used by our FLORES-200 token-count script (\pltQwenthreesixVocab{}); the row is therefore an upper-end reference point and is not directly comparable to the Qwen~2.5 text-model row.\par
Tokenizer citations: tiktoken~\citep{openai2023tiktoken}; Llama~3~\citep{grattafiori2024llama3}; Qwen~2.5 report and tokenizer metadata~\citep{yang2024qwen25report,qwenteam2024qwen25card}; Qwen~3.6 model-card metadata~\citep{qwenteam2026qwen36card}; Mistral~NeMo release and tokenizer metadata~\citep{mistralai2024mistralnemo,mistralai2024mistralnemocard}.
\end{minipage}
\end{table}

\subsection{Representative compact language models and tokenizer scales}
\label{app:compact-lm-tokenizers}

Public papers, technical reports, and model cards rarely disclose the exact number of languages used for tokenizer training. Table~\ref{tab:compact-lm-tokenizers} therefore reports the public language scope stated in official papers, technical reports, model cards, blogs, or configs rather than the tokenizer-training language count, and should be read as a deployment-facing survey rather than a like-for-like tokenizer-training comparison. When a row spans a model family, the Scale column reports the public training-token range across the listed parameter sizes. The pattern is nevertheless informative. Many English-centric compact models still operate with 32K--50K vocabularies, while compact public multilingual models either remain 32K-class (Phi-3.5-mini) or jump to much larger family tokenizers (Llama~3.2, Qwen~2.5). This is the regime split most relevant to UBE: small or moderate vocabularies leave less room to absorb script-specific merge coverage, whereas very large multilingual vocabularies partially compensate through scale alone. Table~\ref{tab:compact-lm-tokenizers} compiles representative public examples; sources are listed below the table to identify the source for each row's language scope, vocabulary scale, and training scale.

\begin{table}[ht]
\centering
\footnotesize
\setlength{\tabcolsep}{4pt}
\caption{Representative compact public LMs grouped by tokenizer vocabulary scale: compact (32K--50K), large (100K+), and family-inherited (128K+). Tok.\ source identifies the tokenizer source or family.}
\label{tab:compact-lm-tokenizers}
\begin{tabular}{@{}lllrll@{}}
\toprule
Model & Lang scope & Params & Vocab & Tok.\ source & Scale \\
\midrule
\multicolumn{6}{l}{\emph{Compact tokenizer (32K--50K)}} \\
SmolLM2 & English & 135M--1.7B & 49,152 & self-trained & 2T--11T tok \\
MobileLLM & English & 125M--1B & 32,000 & Llama~2 & 1T tok \\
Pythia & English & 160M--2.8B & 50,277 & GPT-NeoX & 300B tok \\
TinyLlama & English + code & 1.1B & 32,000 & Llama~2 & 3T tok \\
Phi-3.5-mini & 23 langs & 3.8B & 32,064 & Llama~2 & 3.4T tok \\
\midrule
\multicolumn{6}{l}{\emph{Large tokenizer (100K+)}} \\
StableLM~2 & 7 langs & 1.6B & 100,352 & Arcade100k (cl100k ext.) & 2T tok \\
Phi-4-mini & 23 langs & 3.8B & 200,064 & o200k & 5T tok \\
\midrule
\multicolumn{6}{l}{\emph{Inherited from larger model (128K+)}} \\
Llama~3.2 & 8 langs & 1B--3B & 128,256 & Llama~3.1 & 9T tok \\
SmolLM3 & 6 langs & 3B & 128,256 & Llama~3.2 & 11T tok \\
Qwen~2.5 & 29+ langs & 0.5B--3B & 151,936\rlap{$^{\ast}$} & Qwen family & 18T tok \\
BLOOM & 46 langs & 560M--3B & 250,680 & BLOOM-176B & 341B tok \\
\bottomrule
\end{tabular}

\vspace{2pt}
\begin{minipage}{0.95\linewidth}
\raggedright\scriptsize
$^{\ast}$\,Padded \texttt{vocab\_size} from the \texttt{Qwen/Qwen2.5-3B} \texttt{config.json} (\url{https://huggingface.co/Qwen/Qwen2.5-3B}); the Qwen~2.5 technical report~\citep{yang2024qwen25report} reports 151{,}643 regular tokens plus 22 control tokens (151{,}665 in total). Pythia's configuration pads its 50{,}277-token vocabulary to 50{,}304.\par
Sources: SmolLM2~\citep{allal2025smollm2}; MobileLLM~\citep{liu2024mobilellm}; Pythia~\citep{biderman2023pythia}; TinyLlama~\citep{zhang2024tinyllama}; Phi-3.5-mini~\citep{microsoft2024phi35mini}; StableLM~2~\citep{bellagente2024stablelm2}; Phi-4-mini~\citep{abouelenin2025phi4mini}; Llama~3.2~\citep{meta2024llama32card,grattafiori2024llama3}; SmolLM3~\citep{bakouch2025smollm3,huggingfacetb2025smollm3card}; Qwen~2.5~\citep{yang2024qwen25report,qwenteam2024qwen25card}; BLOOM~\citep{bigscience2022bloom}.
\end{minipage}
\end{table}

\subsection{Extended related work}
\label{app:extended-related-work}

\paragraph{Script-specific tokenization.}
A growing cluster of work addresses tokenization for specific script families. \citet{rana2025indicsuper} propose MUTANT, a recipe for multilingual tokenizer design with language-aware pre-tokenization and subword/multiword-aware training; their MUTANT-Indic tokenizer improves average fertility by \relatedRanaFertilityGainPct{} over \llamaFourName{} across English, \relatedRanaIndianLangCount{} Indian languages, and code. \citet{brahma2025morphtok} propose Constrained BPE (CBPE), which enforces script-specific constraints for Indic scripts---keeping dependent vowels cohesive with base consonants. \citet{velayuthan2025gpe} replace byte-level character extraction with grapheme-aware units, benefiting abugida scripts whose written units (graphemes) span multiple code points. \citet{artofbreaking2025indic} systematically study vocabulary size, pre-tokenization rules, and training-corpus composition for Indic scripts, linking these choices to both token-to-word efficiency and model quality. More recently, \citet{wwho2026sgpe} proposes a three-layer syllable-aware architecture that separates script-specific segmentation from BPE compression and reports large token reductions for Sinhala and Hindi. These papers show that script-specific solutions are increasingly common, but each targets a particular script family. UBE instead applies one reversible, language-agnostic routing rule based on \utfEightName{} byte length, without requiring per-language or per-script rules.

\paragraph{Byte-level model architectures.}
An alternative strategy is to reduce reliance on a fixed tokenizer altogether. CANINE~\citep{clark2022canine} operates directly on Unicode characters, and \byTFiveName{}~\citep{xue2022byt5} operates on \utfEightName{} bytes. MAGNET~\citep{ahia2024magnet} learns adaptive segmentation boundaries within the model. SpaceByte~\citep{slagle2024spacebyte} and BLT~\citep{pagnoni2025blt} show that raw-byte language modeling can become competitive when the architecture allocates computation unevenly across a byte stream. More recently, Bolmo~\citep{minixhofer2025bolmo} introduces a family of open byte-level LMs using a two-stage ``byteification'' procedure that converts subword models into byte-level models with minimal additional training. FLEXITOKENS~\citep{owodunni2025flexitokens} uses learnable boundary predictors between bytes, reducing over-fragmentation and improving performance on token-classification and generative tasks. H-Net~\citep{hwang2025hnet} learns content- and context-dependent byte segmentation jointly with the model, outperforming BPE-tokenized Transformers with especially large gains for Chinese and code. \mrTFiveName{}~\citep{kallini2025mrt5} adds a learned delete gate to \byTFiveName{} that dynamically shortens byte sequences, adapting compression rates per language. These approaches show that segmentation can be moved into the model, but they change the model's architecture or input granularity and are generally not direct replacements in existing subword-tokenizer training and inference pipelines.

\subsection{Additional scope and limitations}
\label{app:additional-limitations}

The following limitations supplement the core scope statement in the main text.

\paragraph{Potential societal impacts.}\label{app:broader-impacts} The intended positive impact of UBE is more equitable multilingual access: by lowering token budgets for scripts that are structurally penalized under \utfEightName{} BBPE, the method can increase usable context and reduce truncation pressure, compute cost, and latency disparities for users whose languages rely on BMP 3-byte \utfEightName{} encodings and therefore remain close to the byte fallback floor. The main negative risk is that the same context-capacity gains can be folded into more capable multilingual LMs, including systems that may be deployed for surveillance, misinformation, or other harmful applications. We therefore frame UBE as a tokenizer-level context-capacity and allocative-fairness intervention rather than as a complete solution to broader social fairness or misuse risks.

\paragraph{Evaluation methodology.} Our primary evaluation relies on translated parallel text (\floresTwoHundredName{}), which may exhibit translationese artifacts; we check consistency on three additional corpora (UDHR, SIB-200, MGSM), using corpus-level aggregation without sentence alignment. Token premiums are computed relative to English throughout, and both Gini and variance exclude the English reference row. For a fixed set of non-English languages, changing only the common normalization factor leaves Gini unchanged; it does not imply invariance to pairwise-ratio aggregation.

\paragraph{Low-coverage training regimes.} UBE's dual alphabet reserves \bbpeAlphabetSize{} extra initial symbols relative to BBPE, so at very narrow tokenizer-training coverage and small vocabularies (the \texttt{En-only/16K} stress regime) UBE wins in only \eovOneSixkBetter{} of 204 languages; the main-text \hyperref[sec:alphabet-tradeoff]{alphabet-separation trade-off discussion} notes that the absolute per-language regressions are small (median \eovThreeTwokWorseMedianDelta{} tokens per sentence at \texttt{En-only/32K}) and that Gini still falls at \texttt{En-only/16K} (\eovOneSixkGiniPct{}), but users of very-narrow-coverage compact tokenizers should expect a reduced per-language win rate rather than a uniform improvement.

\paragraph{Downstream coverage for most-affected scripts.} The largest intrinsic token-count reductions are on low-coverage 3-byte scripts such as Santali, Central Atlas Tamazight, and Tibetan. These languages are not included in the 101-language mC4 LM corpus used for downstream training and evaluation, so the downstream experiments test the tokenizer mechanism indirectly rather than measuring gains for those exact languages.

\paragraph{SuperBPE composition.} Because the SuperBPE composition changes both encoding substrate and pre-tokenization, we interpret the interaction analysis narrowly. In the $2 \times 2$ factorial design, the interaction term is bounded but sign-changing (\gvaInteractionMean{}\% on average across 16K--256K; per-vocabulary values in Appendix~\ref{app:superbpe-composition}), supporting qualitative complementarity rather than uniform additivity in the setting we test. Our SuperBPE re-implementation follows the reference recipe~\citep{liu2025superbpe}: the same permissive regex, the four-word cap on merged tokens (Appendix~A.1.2 of that paper), and the colon-space block (Appendix~A.1.4).

\paragraph{Cross-script byte sharing.} UBE's dual-alphabet design forgoes a form of coincidental sharing available to BBPE's single \utfEightName{} byte alphabet: BBPE can sometimes learn merges that reuse shared byte values across scripts or around universal characters such as whitespace and punctuation. This occasionally helps high-coverage languages like Georgian and Amharic, whereas UBE keeps \utfEightName{}-path and \utfSixteenName{}-path bytes distinct to preserve marker-free decoding (see the main-text \hyperref[sec:alphabet-tradeoff]{alphabet-separation trade-off discussion}); future work could explore selective sharing for a small set of universal characters to mitigate these regressions while preserving the disjoint-alphabet decoding guarantee.

\paragraph{Pre-tokenizer dependence.} The core intrinsic matrix uses the GPT-2 regex pre-tokenizer; results under other boundary policies may differ. The SuperBPE- and SCRIPT-style matched comparisons in Appendix~\ref{app:comparative-analyses} show that replacing BBPE with UBE still reduces the reported sequence-length quantities under two substantially different pre-tokenization policies, but this does not establish invariance to all possible boundary rules.

\paragraph{Character routing and grapheme clusters.} UBE routes characters independently; it is not a grapheme-aware tokenizer and does not normalize canonically equivalent strings beyond the configured upstream pre-tokenizer/normalizer. Scripts whose written units span multiple characters, zero-width joiner (ZWJ) sequences, and combining-mark sequences may therefore still require multiple routed units even when each character follows the intended encoding path.

\paragraph{Comparison scope.} The downstream LM experiments focus on isolating the byte-substrate change within otherwise matched training and evaluation pipelines. Appendix~\ref{app:comparative-analyses} includes tokenizer-level SCRIPT-BPE comparisons, a representation-only MYTE composition, a source-byte-matched LM comparison including BBPE16, and a separate matched-exposure model-level comparison with SCRIPT-BPE and a derived 32K MYTE+BPE variant. The latter changes several tokenizer design axes jointly and uses one matched seed, so we interpret it as a comparison across tokenizer designs rather than causal isolation of the representation. Direct model-level comparisons with the released 256-symbol MYTE itself~\citep{limisiewicz2024myte} and Bit-level BPE~\citep{moon2025bitbpe} remain future work.

\paragraph{Non-BMP and symbol-heavy text.} Because 4-byte \utfEightName{} code points remain 4 bytes under \utfSixteenName{} surrogate pairs, UBE provides no encoding-floor reduction for supplementary-plane characters. We have not evaluated UBE on symbol-heavy social-media or chat domains, so we do not claim robustness for those settings.

\paragraph{Streaming and constrained decoding.} The dual-buffer decoder's streaming and constrained-decoding requirements are detailed in Appendix~\ref{app:decoder}; our implementation supports both modes, but we have not benchmarked constrained-decoding throughput.

\paragraph{Matched shard-local singleton filtering.} The two-phase distributed BPE pipeline applies the same shard-local singleton filtering before global merge selection for both UBE and BBPE\@. Because filtering operates at the word level---and both tokenizers share the same Stage~1 regex split, word boundaries, and per-word frequencies on the same text---it removes the same raw word types in both conditions; only the byte-level representation of retained words differs. The filtering step therefore does not act as a confound between UBE and BBPE\@.

\section{Appendix long tables}
\label{app:appendix-longtables}

\subsection{Existing assets and licenses}
\label{app:asset-licenses}
Tables~\ref{tab:asset-licenses-data} and~\ref{tab:asset-licenses-software} summarize the external datasets and software assets used in the paper, together with the license or usage constraint that governed their use in our experiments. To keep long tables at the very end of the appendix, we collect those inventories here. We do not redistribute web-scale corpora such as mC4; our experiments use subsets reconstructed locally from the public sources.

{\footnotesize
\setlength{\tabcolsep}{3pt}
\begin{longtable}{L{1.7cm}L{2.6cm}L{1.6cm}L{5.8cm}}
\caption{Existing dataset and benchmark assets used in this work.}
\label{tab:asset-licenses-data}\\
\toprule
Asset & Access point / version used & License & Terms / handling in this work \\
\midrule
\endfirsthead
\multicolumn{4}{l}{\tablename\ \thetable\ -- continued from previous page}\\
\toprule
Asset & Access point / version used & License & Terms / handling in this work \\
\midrule
\endhead
\midrule
\multicolumn{4}{r}{Continued on next page}\\
\endfoot
\bottomrule
\endlastfoot
mC4 pretraining corpus & Hugging Face \nolinkurl{allenai/c4} multilingual; deterministic 101-language local Arrow mirror & ODC-BY & Derived from Common Crawl snapshots. Fixed capped subsets are used for tokenizer and LM training; the local Arrow mirror is reconstructed from public sources and not redistributed. \\
FLORES-200 & Hugging Face \nolinkurl{facebook/flores}, devtest split & CC BY-SA 4.0 & Used for the primary parallel intrinsic benchmark. \\
UDHR & NLTK \nolinkurl{udhr2} corpus (Unicode UDHR) and United Nations declaration text & Public UN text; public corpus distribution & Used only for robustness evaluation. Our loader downloads the public corpus when absent and records the local corpus checksum. \\
SIB-200 & Hugging Face \nolinkurl{Davlan/sib200}, test split & CC BY-SA 4.0 & Used only for robustness evaluation. Our loader uses a pinned public revision and records a text checksum for the measured split. \\
MGSM & Hugging Face \nolinkurl{juletxara/mgsm}, test split & CC BY-SA 4.0 & Used for robustness evaluation. Our preprocessing script builds the evaluation JSON with the original question text from the public dataset. \\
XNLI & \nolinkurl{facebookresearch/XNLI}; Hugging Face \nolinkurl{facebook/xnli}, test split & CC BY-NC 4.0 & Used for multilingual downstream evaluation only. We preserve the noncommercial restriction and do not redistribute the data. \\
XCOPA & Hugging Face \nolinkurl{cambridgeltl/xcopa}, test split & CC BY 4.0 & Used for multilingual downstream evaluation only. Our preprocessing script regenerates the local evaluation JSON from the public dataset. \\
Belebele & Hugging Face \nolinkurl{facebook/belebele}, test split & CC BY-SA 4.0 & Used for multilingual downstream evaluation only. \\
ARC-Easy/ARC-Challenge & Hugging Face \nolinkurl{allenai/ai2_arc}, test split for reported metrics & CC BY-SA 4.0 & Used for the English regression suite only. \\
HellaSwag & \nolinkurl{rowanz/hellaswag}, validation split for reported metrics & MIT & Used for the English regression suite only. \\
PIQA & \nolinkurl{ybisk.github.io/piqa} public release, validation split for reported metrics & AFL-3.0 & Used for the English regression suite only. \\
LAMBADA & Hugging Face \nolinkurl{EleutherAI/lambada_openai}, test split & Modified MIT & OpenAI-preprocessed LAMBADA; the original release is CC BY 4.0. Used for the English regression suite only. \\
MAPS (SWE-bench subset) & Hugging Face \nolinkurl{Fujitsu-FRE/MAPS}, rev.~\texttt{2b29573}, with task files at each base commit & CC BY 4.0; repository licenses & Token counts for Table~\ref{tab:cr-coding-workload} only; not redistributed. \\
Unicode 17.0 data files & \nolinkurl{unicode.org/Public/17.0.0} & Unicode License v3 & Fidelity audit (Table~\ref{tab:cr-unicode-fidelity}) only. \\
Ethnologue speaker counts & Ethnologue 27th edition (citation-only lookup) & Proprietary reference work & Used only for the speaker-population annotations in Table~\ref{tab:coverage-examples}. No Ethnologue content is redistributed. \\
\end{longtable}

\begin{longtable}{L{2.3cm}L{2.8cm}L{1.4cm}L{5.1cm}}
\caption{Existing software assets used in this work.}
\label{tab:asset-licenses-software}\\
\toprule
Asset & Access point / version used & License & Terms / handling in this work \\
\midrule
\endfirsthead
\toprule
Asset & Access point / version used & License & Terms / handling in this work \\
\midrule
\endhead
\bottomrule
\endfoot
Hugging Face \nolinkurl{tokenizers} & Upstream \nolinkurl{huggingface/tokenizers}; 0.22.3-dev.778/779 & Apache 2.0 & Primary SCRIPT-style tokenizer training uses dev.779. Our modifications retain the Apache 2.0 license and attribution files. \\
LitGPT & \nolinkurl{Lightning-AI/litgpt}, version 0.5.11 & Apache 2.0 & Our trainer vendors and modifies selected LitGPT modules while preserving the upstream license notice and attribution. \\
lm-evaluation-\newline harness & EleutherAI repo, version 0.4.9.1 & MIT & Used as the evaluation backend for our offline benchmark runner, pinned to this version. \\
SCRIPT-BPE & \nolinkurl{sanderland/script_tok}, commit \texttt{54a1058} & Apache 2.0 & Official SCRIPT-BPE tokenizers in Tables~\ref{tab:cr-scriptbpe}, \ref{tab:cr-unicode-fidelity}, and~\ref{tab:cr-matched-exposure-lm}. \\
MYTE maps and decoder & \nolinkurl{tomlimi/MYTE}, commit \texttt{177299c} & Not stated in the repository & Used unmodified for Tables~\ref{tab:cr-myte}, \ref{tab:cr-unicode-fidelity}, and~\ref{tab:cr-matched-exposure-lm}; not redistributed. \\
\end{longtable}
}

\needspace{12\baselineskip}
\subsection{Complete \texttt{101-lang} per-language results}
\label{app:per-language-results}
\begingroup
\small
\begin{longtable}{llrrrrr}
\caption{Complete FLORES-200 mean tokens per sentence for all 204 language--script configurations under \texttt{101-lang} tokenizer training. Columns give vocabulary sizes; each language has paired UBE and BBPE rows. Means are rounded to three decimal places; the win/tie/loss counts in Table~\ref{tab:config-ablation-app} use unrounded values.}
\label{tab:full-language-results}\\
\toprule
Language & Tokenizer & 16K & 32K & 64K & 128K & 256K \\
\midrule
\endfirsthead
\multicolumn{7}{l}{\tablename\ \thetable\ -- continued from previous page}\\
\toprule
Language & Tokenizer & 16K & 32K & 64K & 128K & 256K \\
\midrule
\endhead
\midrule
\multicolumn{7}{r}{Continued on next page}\\
\endfoot
\bottomrule
\endlastfoot
Acehnese (Arabic) & UBE & 85.927 & 79.600 & 75.741 & 70.241 & 67.480 \\*
 & BBPE & 86.626 & 79.774 & 75.897 & 70.374 & 67.620 \\
\addlinespace[2pt]
Acehnese (Latin) & UBE & 60.001 & 56.095 & 52.427 & 49.271 & 45.328 \\*
 & BBPE & 60.122 & 56.191 & 52.458 & 49.284 & 45.389 \\
\addlinespace[2pt]
Afrikaans & UBE & 52.071 & 47.458 & 43.605 & 40.014 & 36.589 \\*
 & BBPE & 52.299 & 47.583 & 43.651 & 40.024 & 36.590 \\
\addlinespace[2pt]
Akan & UBE & 76.909 & 65.912 & 63.770 & 61.159 & 55.821 \\*
 & BBPE & 77.001 & 65.955 & 63.786 & 61.182 & 55.850 \\
\addlinespace[2pt]
Alemannic & UBE & 61.181 & 56.013 & 51.111 & 46.273 & 41.842 \\*
 & BBPE & 61.465 & 56.144 & 51.289 & 46.292 & 41.854 \\
\addlinespace[2pt]
Amharic & UBE & 82.890 & 70.851 & 61.581 & 52.367 & 44.478 \\*
 & BBPE & 81.970 & 69.995 & 60.261 & 51.053 & 43.655 \\
\addlinespace[2pt]
Arabic & UBE & 68.952 & 60.407 & 52.653 & 46.245 & 40.890 \\*
 & BBPE & 69.872 & 60.600 & 52.753 & 46.296 & 40.905 \\
\addlinespace[2pt]
Arabic (Latin) & UBE & 70.497 & 66.711 & 62.903 & 59.892 & 57.739 \\*
 & BBPE & 70.679 & 66.736 & 62.952 & 59.888 & 57.740 \\
\addlinespace[2pt]
Armenian & UBE & 91.947 & 78.250 & 64.792 & 54.347 & 46.934 \\*
 & BBPE & 92.957 & 78.649 & 65.068 & 54.425 & 46.963 \\
\addlinespace[2pt]
Assamese & UBE & 121.978 & 107.690 & 103.233 & 98.061 & 94.208 \\*
 & BBPE & 122.269 & 107.488 & 103.420 & 98.105 & 94.208 \\
\addlinespace[2pt]
Asturian & UBE & 52.166 & 47.708 & 44.491 & 41.299 & 38.654 \\*
 & BBPE & 52.390 & 47.810 & 44.559 & 41.320 & 38.655 \\
\addlinespace[2pt]
Awadhi & UBE & 89.941 & 87.054 & 85.436 & 84.299 & 83.747 \\*
 & BBPE & 90.202 & 87.123 & 85.446 & 84.552 & 83.743 \\
\addlinespace[2pt]
Ayacucho Quechua & UBE & 63.566 & 57.986 & 53.421 & 50.081 & 47.194 \\*
 & BBPE & 63.680 & 58.379 & 53.437 & 50.100 & 47.196 \\
\addlinespace[2pt]
Aymara & UBE & 63.904 & 60.158 & 56.439 & 53.707 & 51.015 \\*
 & BBPE & 64.060 & 60.242 & 56.540 & 53.711 & 51.020 \\
\addlinespace[2pt]
Azerbaijani & UBE & 69.600 & 60.773 & 52.889 & 45.706 & 39.928 \\*
 & BBPE & 70.343 & 60.985 & 52.938 & 45.656 & 39.887 \\
\addlinespace[2pt]
Balinese & UBE & 55.118 & 51.107 & 47.312 & 42.325 & 38.968 \\*
 & BBPE & 55.354 & 51.183 & 47.353 & 42.358 & 38.996 \\
\addlinespace[2pt]
Bambara & UBE & 72.784 & 60.037 & 58.458 & 56.308 & 53.743 \\*
 & BBPE & 72.858 & 60.068 & 58.469 & 56.314 & 53.773 \\
\addlinespace[2pt]
Banjar (Arabic) & UBE & 88.832 & 80.923 & 75.831 & 69.976 & 66.470 \\*
 & BBPE & 89.302 & 81.011 & 75.842 & 69.978 & 66.504 \\
\addlinespace[2pt]
Banjar (Latin) & UBE & 52.688 & 47.217 & 43.095 & 39.330 & 36.176 \\*
 & BBPE & 52.796 & 47.273 & 43.111 & 39.360 & 36.176 \\
\addlinespace[2pt]
Bashkir & UBE & 86.725 & 75.885 & 65.871 & 59.651 & 54.490 \\*
 & BBPE & 87.014 & 76.205 & 65.915 & 59.676 & 54.504 \\
\addlinespace[2pt]
Basque & UBE & 56.730 & 51.664 & 46.143 & 40.982 & 36.703 \\*
 & BBPE & 56.903 & 51.779 & 46.189 & 41.031 & 36.706 \\
\addlinespace[2pt]
Belarusian & UBE & 82.795 & 72.856 & 64.942 & 56.513 & 49.458 \\*
 & BBPE & 83.050 & 73.020 & 65.007 & 56.563 & 49.490 \\
\addlinespace[2pt]
Bemba & UBE & 65.952 & 60.496 & 57.039 & 54.178 & 50.822 \\*
 & BBPE & 66.029 & 60.809 & 57.066 & 54.188 & 50.833 \\
\addlinespace[2pt]
Bengali & UBE & 111.474 & 104.137 & 99.317 & 95.335 & 92.797 \\*
 & BBPE & 111.689 & 103.932 & 99.507 & 95.321 & 92.844 \\
\addlinespace[2pt]
Bhojpuri & UBE & 89.326 & 86.490 & 85.134 & 83.772 & 83.337 \\*
 & BBPE & 89.237 & 86.524 & 85.141 & 84.063 & 83.328 \\
\addlinespace[2pt]
Bosnian & UBE & 57.041 & 52.513 & 48.725 & 45.691 & 42.523 \\*
 & BBPE & 57.147 & 52.595 & 48.825 & 45.699 & 42.525 \\
\addlinespace[2pt]
Buginese & UBE & 59.070 & 54.659 & 51.357 & 48.553 & 45.399 \\*
 & BBPE & 59.173 & 54.735 & 51.365 & 48.604 & 45.399 \\
\addlinespace[2pt]
Bulgarian & UBE & 68.430 & 59.862 & 52.577 & 45.971 & 41.144 \\*
 & BBPE & 69.152 & 60.066 & 52.661 & 46.033 & 41.144 \\
\addlinespace[2pt]
Burmese & UBE & 139.676 & 125.853 & 117.201 & 111.051 & 108.263 \\*
 & BBPE & 139.770 & 126.255 & 117.161 & 111.058 & 108.256 \\
\addlinespace[2pt]
Cantonese & UBE & 48.922 & 43.513 & 38.206 & 34.518 & 31.580 \\*
 & BBPE & 49.708 & 43.637 & 37.856 & 34.116 & 31.221 \\
\addlinespace[2pt]
Catalan & UBE & 53.407 & 49.471 & 45.687 & 42.175 & 39.055 \\*
 & BBPE & 53.569 & 49.581 & 45.742 & 42.190 & 39.064 \\
\addlinespace[2pt]
Cebuano & UBE & 58.205 & 53.396 & 48.913 & 45.038 & 41.801 \\*
 & BBPE & 58.409 & 53.467 & 48.954 & 45.063 & 41.805 \\
\addlinespace[2pt]
Central Atlas Tamazight & UBE & 192.112 & 183.305 & 182.243 & 180.664 & 180.648 \\*
 & BBPE & 269.200 & 269.166 & 269.123 & 269.102 & 203.206 \\
\addlinespace[2pt]
Central Kanuri (Arabic) & UBE & 91.903 & 88.641 & 86.271 & 84.266 & 82.724 \\*
 & BBPE & 91.996 & 88.654 & 86.276 & 84.267 & 82.729 \\
\addlinespace[2pt]
Central Kanuri (Latin) & UBE & 67.723 & 63.558 & 60.579 & 57.633 & 55.173 \\*
 & BBPE & 67.961 & 63.665 & 60.625 & 57.618 & 55.159 \\
\addlinespace[2pt]
Central Kurdish & UBE & 93.146 & 86.587 & 79.674 & 70.182 & 61.794 \\*
 & BBPE & 95.684 & 86.598 & 79.678 & 70.182 & 61.800 \\
\addlinespace[2pt]
Chhattisgarhi & UBE & 88.629 & 85.262 & 84.000 & 83.224 & 82.785 \\*
 & BBPE & 88.660 & 85.428 & 84.004 & 83.226 & 82.791 \\
\addlinespace[2pt]
Chichewa & UBE & 61.234 & 55.689 & 51.690 & 46.685 & 42.545 \\*
 & BBPE & 61.352 & 55.782 & 51.723 & 46.818 & 42.580 \\
\addlinespace[2pt]
Chinese (Simplified) & UBE & 44.910 & 39.359 & 35.015 & 31.574 & 28.664 \\*
 & BBPE & 44.920 & 39.183 & 34.764 & 31.310 & 28.437 \\
\addlinespace[2pt]
Chinese (Traditional) & UBE & 49.919 & 44.313 & 39.077 & 35.411 & 32.414 \\*
 & BBPE & 50.770 & 44.440 & 38.741 & 34.998 & 32.013 \\
\addlinespace[2pt]
Chokwe & UBE & 57.408 & 52.807 & 49.780 & 47.661 & 45.042 \\*
 & BBPE & 57.561 & 52.890 & 49.803 & 47.688 & 45.080 \\
\addlinespace[2pt]
Crimean Tatar & UBE & 61.907 & 56.758 & 51.302 & 47.011 & 43.093 \\*
 & BBPE & 62.021 & 56.830 & 51.457 & 47.042 & 43.098 \\
\addlinespace[2pt]
Croatian & UBE & 55.903 & 51.588 & 47.775 & 44.815 & 41.678 \\*
 & BBPE & 56.013 & 51.664 & 47.858 & 44.821 & 41.686 \\
\addlinespace[2pt]
Czech & UBE & 59.761 & 53.849 & 47.946 & 42.535 & 37.554 \\*
 & BBPE & 59.944 & 54.008 & 48.132 & 42.559 & 37.570 \\
\addlinespace[2pt]
Danish & UBE & 51.159 & 46.673 & 42.433 & 38.464 & 35.242 \\*
 & BBPE & 51.294 & 46.848 & 42.479 & 38.488 & 35.254 \\
\addlinespace[2pt]
Dari & UBE & 66.958 & 57.616 & 49.862 & 43.035 & 37.850 \\*
 & BBPE & 67.196 & 57.763 & 49.888 & 43.068 & 37.867 \\
\addlinespace[2pt]
Dinka & UBE & 63.980 & 55.801 & 52.088 & 50.110 & 48.201 \\*
 & BBPE & 64.145 & 55.947 & 52.095 & 50.118 & 48.219 \\
\addlinespace[2pt]
Dutch & UBE & 53.293 & 48.189 & 43.544 & 39.364 & 35.599 \\*
 & BBPE & 53.531 & 48.251 & 43.577 & 39.386 & 35.608 \\
\addlinespace[2pt]
Dyula & UBE & 61.540 & 57.182 & 54.221 & 50.734 & 48.086 \\*
 & BBPE & 61.633 & 57.230 & 54.233 & 50.757 & 48.093 \\
\addlinespace[2pt]
Dzongkha & UBE & 311.873 & 309.494 & 265.685 & 244.805 & 178.047 \\*
 & BBPE & 471.899 & 367.301 & 286.182 & 235.167 & 178.307 \\
\addlinespace[2pt]
Egyptian Arabic & UBE & 67.151 & 58.900 & 51.562 & 45.346 & 40.486 \\*
 & BBPE & 67.931 & 59.080 & 51.726 & 45.388 & 40.509 \\
\addlinespace[2pt]
English & UBE & 34.671 & 31.068 & 28.695 & 27.158 & 26.271 \\*
 & BBPE & 34.863 & 31.160 & 28.716 & 27.174 & 26.272 \\
\addlinespace[2pt]
Esperanto & UBE & 51.917 & 46.989 & 42.915 & 39.470 & 36.386 \\*
 & BBPE & 52.133 & 47.121 & 42.958 & 39.507 & 36.386 \\
\addlinespace[2pt]
Estonian & UBE & 55.151 & 49.897 & 45.625 & 41.158 & 36.723 \\*
 & BBPE & 55.427 & 50.057 & 45.656 & 41.181 & 36.735 \\
\addlinespace[2pt]
Ewe & UBE & 76.564 & 68.753 & 65.405 & 61.973 & 58.427 \\*
 & BBPE & 76.683 & 68.804 & 65.426 & 62.300 & 58.424 \\
\addlinespace[2pt]
Faroese & UBE & 60.310 & 55.507 & 50.721 & 47.020 & 43.441 \\*
 & BBPE & 60.559 & 55.625 & 50.798 & 47.048 & 43.440 \\
\addlinespace[2pt]
Fijian & UBE & 63.968 & 59.466 & 55.543 & 52.398 & 49.574 \\*
 & BBPE & 64.063 & 59.592 & 55.565 & 52.418 & 49.577 \\
\addlinespace[2pt]
Finnish & UBE & 60.395 & 54.506 & 49.171 & 44.018 & 39.285 \\*
 & BBPE & 60.664 & 54.867 & 49.185 & 44.038 & 39.304 \\
\addlinespace[2pt]
Fon & UBE & 103.407 & 87.433 & 84.325 & 78.079 & 73.668 \\*
 & BBPE & 103.517 & 87.460 & 84.337 & 78.090 & 73.673 \\
\addlinespace[2pt]
French & UBE & 57.925 & 52.676 & 48.367 & 44.186 & 40.611 \\*
 & BBPE & 58.072 & 52.753 & 48.390 & 44.231 & 40.626 \\
\addlinespace[2pt]
Friulian & UBE & 57.823 & 54.107 & 50.199 & 47.690 & 45.504 \\*
 & BBPE & 57.903 & 54.277 & 50.211 & 47.699 & 45.503 \\
\addlinespace[2pt]
Galician & UBE & 52.798 & 47.705 & 43.805 & 39.214 & 35.696 \\*
 & BBPE & 52.959 & 47.823 & 43.850 & 39.242 & 35.696 \\
\addlinespace[2pt]
Ganda & UBE & 58.761 & 54.324 & 51.193 & 48.448 & 45.609 \\*
 & BBPE & 58.819 & 54.426 & 51.207 & 48.500 & 45.613 \\
\addlinespace[2pt]
Georgian & UBE & 100.074 & 85.117 & 71.468 & 59.405 & 49.793 \\*
 & BBPE & 97.688 & 82.745 & 70.135 & 58.003 & 48.937 \\
\addlinespace[2pt]
German & UBE & 58.225 & 52.011 & 47.137 & 42.310 & 37.913 \\*
 & BBPE & 58.459 & 52.177 & 47.185 & 42.369 & 37.921 \\
\addlinespace[2pt]
Greek & UBE & 90.479 & 77.161 & 66.125 & 56.125 & 48.226 \\*
 & BBPE & 90.726 & 77.342 & 66.307 & 56.179 & 48.243 \\
\addlinespace[2pt]
Guarani & UBE & 65.314 & 60.675 & 56.804 & 53.292 & 50.566 \\*
 & BBPE & 65.500 & 60.755 & 56.818 & 53.304 & 50.563 \\
\addlinespace[2pt]
Gujarati & UBE & 105.362 & 99.345 & 93.423 & 89.804 & 86.962 \\*
 & BBPE & 105.835 & 98.813 & 93.399 & 89.846 & 86.943 \\
\addlinespace[2pt]
Haitian Creole & UBE & 49.862 & 46.222 & 42.663 & 39.065 & 35.862 \\*
 & BBPE & 49.914 & 46.280 & 42.720 & 39.073 & 35.858 \\
\addlinespace[2pt]
Hausa & UBE & 58.594 & 52.477 & 47.054 & 42.788 & 39.174 \\*
 & BBPE & 58.639 & 52.565 & 47.129 & 42.828 & 39.177 \\
\addlinespace[2pt]
Hebrew & UBE & 64.113 & 55.574 & 48.778 & 42.666 & 37.625 \\*
 & BBPE & 64.358 & 55.780 & 48.851 & 42.683 & 37.635 \\
\addlinespace[2pt]
Hindi & UBE & 89.002 & 86.442 & 85.105 & 84.375 & 83.987 \\*
 & BBPE & 89.078 & 86.494 & 85.114 & 84.376 & 83.991 \\
\addlinespace[2pt]
Hungarian & UBE & 65.203 & 59.007 & 52.771 & 47.145 & 41.420 \\*
 & BBPE & 65.443 & 59.067 & 52.841 & 47.198 & 41.448 \\
\addlinespace[2pt]
Icelandic & UBE & 59.486 & 53.666 & 48.718 & 44.026 & 39.751 \\*
 & BBPE & 59.710 & 53.771 & 48.808 & 44.042 & 39.766 \\
\addlinespace[2pt]
Igbo & UBE & 70.128 & 66.196 & 59.599 & 53.670 & 48.304 \\*
 & BBPE & 70.221 & 66.757 & 59.622 & 53.754 & 48.291 \\
\addlinespace[2pt]
Ilocano & UBE & 62.476 & 57.350 & 52.608 & 50.105 & 47.943 \\*
 & BBPE & 62.706 & 57.398 & 52.616 & 50.115 & 47.960 \\
\addlinespace[2pt]
Indonesian & UBE & 51.095 & 44.751 & 39.268 & 34.455 & 30.570 \\*
 & BBPE & 51.282 & 44.845 & 39.315 & 34.494 & 30.569 \\
\addlinespace[2pt]
Iraqi Arabic & UBE & 67.438 & 59.284 & 51.823 & 45.466 & 40.291 \\*
 & BBPE & 68.389 & 59.478 & 51.943 & 45.511 & 40.306 \\
\addlinespace[2pt]
Irish & UBE & 65.688 & 59.298 & 53.939 & 48.930 & 43.929 \\*
 & BBPE & 65.809 & 59.389 & 54.026 & 48.950 & 43.958 \\
\addlinespace[2pt]
Italian & UBE & 56.121 & 51.148 & 46.933 & 42.953 & 39.047 \\*
 & BBPE & 56.361 & 51.283 & 46.953 & 42.965 & 39.050 \\
\addlinespace[2pt]
Japanese & UBE & 52.419 & 46.273 & 40.783 & 36.098 & 32.338 \\*
 & BBPE & 52.668 & 46.194 & 40.630 & 35.965 & 32.145 \\
\addlinespace[2pt]
Javanese & UBE & 52.634 & 47.536 & 43.192 & 38.537 & 34.464 \\*
 & BBPE & 52.764 & 47.591 & 43.223 & 38.557 & 34.468 \\
\addlinespace[2pt]
Kabiyé & UBE & 129.830 & 112.313 & 109.938 & 107.228 & 87.811 \\*
 & BBPE & 129.873 & 112.336 & 109.958 & 107.237 & 87.816 \\
\addlinespace[2pt]
Kabuverdianu & UBE & 52.273 & 48.034 & 45.555 & 43.003 & 40.566 \\*
 & BBPE & 52.365 & 48.076 & 45.561 & 43.015 & 40.566 \\
\addlinespace[2pt]
Kabyle & UBE & 68.512 & 64.688 & 62.457 & 59.657 & 55.993 \\*
 & BBPE & 70.568 & 65.489 & 62.967 & 60.166 & 55.835 \\
\addlinespace[2pt]
Kachin & UBE & 70.703 & 64.437 & 61.013 & 56.744 & 54.767 \\*
 & BBPE & 70.808 & 64.487 & 61.032 & 56.751 & 54.767 \\
\addlinespace[2pt]
Kamba & UBE & 60.790 & 55.084 & 52.022 & 49.361 & 47.375 \\*
 & BBPE & 60.852 & 55.171 & 52.039 & 49.371 & 47.383 \\
\addlinespace[2pt]
Kannada & UBE & 121.886 & 114.405 & 108.854 & 104.483 & 101.329 \\*
 & BBPE & 121.878 & 115.094 & 109.013 & 104.505 & 101.329 \\
\addlinespace[2pt]
Kashmiri (Arabic) & UBE & 92.493 & 86.303 & 79.726 & 73.982 & 68.068 \\*
 & BBPE & 92.757 & 86.397 & 79.754 & 74.014 & 68.077 \\
\addlinespace[2pt]
Kashmiri (Devanagari) & UBE & 91.532 & 88.371 & 85.811 & 84.222 & 82.965 \\*
 & BBPE & 91.667 & 88.529 & 85.792 & 84.232 & 83.002 \\
\addlinespace[2pt]
Kazakh & UBE & 74.634 & 64.325 & 54.984 & 46.899 & 39.974 \\*
 & BBPE & 75.698 & 64.406 & 55.096 & 46.988 & 39.980 \\
\addlinespace[2pt]
Khmer & UBE & 145.965 & 135.804 & 127.642 & 121.825 & 118.121 \\*
 & BBPE & 145.946 & 136.047 & 127.747 & 121.860 & 118.116 \\
\addlinespace[2pt]
Kikongo & UBE & 59.435 & 55.104 & 51.684 & 47.888 & 46.230 \\*
 & BBPE & 59.529 & 55.150 & 51.697 & 47.888 & 46.231 \\
\addlinespace[2pt]
Kikuyu & UBE & 87.112 & 74.997 & 72.792 & 70.577 & 68.090 \\*
 & BBPE & 87.187 & 75.096 & 72.799 & 70.608 & 68.093 \\
\addlinespace[2pt]
Kimbundu & UBE & 62.657 & 57.136 & 53.250 & 50.544 & 48.355 \\*
 & BBPE & 62.847 & 57.305 & 53.274 & 50.551 & 48.368 \\
\addlinespace[2pt]
Kinyarwanda & UBE & 62.855 & 58.812 & 54.961 & 50.840 & 46.585 \\*
 & BBPE & 62.890 & 58.974 & 54.989 & 50.857 & 46.590 \\
\addlinespace[2pt]
Korean & UBE & 44.092 & 38.633 & 34.112 & 30.377 & 27.495 \\*
 & BBPE & 44.539 & 38.766 & 34.193 & 30.375 & 27.481 \\
\addlinespace[2pt]
Kyrgyz & UBE & 73.696 & 65.523 & 57.050 & 49.417 & 42.621 \\*
 & BBPE & 74.529 & 65.638 & 57.085 & 49.437 & 42.637 \\
\addlinespace[2pt]
Lao & UBE & 115.340 & 99.961 & 87.657 & 79.355 & 72.767 \\*
 & BBPE & 115.791 & 100.289 & 88.180 & 79.768 & 72.899 \\
\addlinespace[2pt]
Latgalian & UBE & 62.662 & 57.833 & 53.175 & 49.880 & 46.937 \\*
 & BBPE & 62.811 & 57.886 & 53.249 & 49.902 & 46.945 \\
\addlinespace[2pt]
Latvian & UBE & 63.681 & 56.927 & 50.618 & 44.915 & 39.578 \\*
 & BBPE & 63.818 & 57.020 & 50.708 & 44.976 & 39.583 \\
\addlinespace[2pt]
Ligurian & UBE & 61.958 & 58.355 & 55.042 & 52.183 & 49.449 \\*
 & BBPE & 62.092 & 58.480 & 55.069 & 52.199 & 49.449 \\
\addlinespace[2pt]
Limburgish & UBE & 55.518 & 51.811 & 48.811 & 44.882 & 42.739 \\*
 & BBPE & 55.757 & 51.863 & 48.829 & 44.896 & 42.743 \\
\addlinespace[2pt]
Lingala & UBE & 55.718 & 51.343 & 48.868 & 46.344 & 43.278 \\*
 & BBPE & 55.750 & 51.408 & 48.885 & 46.344 & 43.278 \\
\addlinespace[2pt]
Lithuanian & UBE & 57.804 & 52.376 & 47.147 & 42.651 & 37.911 \\*
 & BBPE & 58.010 & 52.514 & 47.202 & 42.671 & 37.916 \\
\addlinespace[2pt]
Lombard & UBE & 61.415 & 58.139 & 54.782 & 51.868 & 49.361 \\*
 & BBPE & 61.538 & 58.209 & 54.797 & 51.891 & 49.363 \\
\addlinespace[2pt]
Luba-Kasai & UBE & 57.894 & 53.066 & 49.802 & 46.561 & 44.423 \\*
 & BBPE & 57.994 & 53.200 & 49.825 & 46.584 & 44.422 \\
\addlinespace[2pt]
Luo & UBE & 55.680 & 50.832 & 48.359 & 46.138 & 44.131 \\*
 & BBPE & 55.740 & 51.055 & 48.375 & 46.149 & 44.134 \\
\addlinespace[2pt]
Luxembourgish & UBE & 62.170 & 57.709 & 52.636 & 48.048 & 44.665 \\*
 & BBPE & 62.391 & 57.796 & 52.678 & 48.112 & 44.670 \\
\addlinespace[2pt]
Macedonian & UBE & 66.926 & 58.824 & 51.662 & 45.542 & 40.673 \\*
 & BBPE & 67.254 & 58.984 & 51.717 & 45.597 & 40.676 \\
\addlinespace[2pt]
Magahi & UBE & 89.901 & 87.107 & 85.821 & 84.537 & 84.107 \\*
 & BBPE & 90.118 & 87.131 & 85.824 & 84.531 & 84.106 \\
\addlinespace[2pt]
Maithili & UBE & 92.055 & 89.345 & 87.868 & 85.727 & 84.039 \\*
 & BBPE & 92.080 & 89.391 & 87.870 & 85.723 & 84.004 \\
\addlinespace[2pt]
Malay & UBE & 52.465 & 45.849 & 40.387 & 35.232 & 31.275 \\*
 & BBPE & 52.641 & 45.912 & 40.434 & 35.286 & 31.274 \\
\addlinespace[2pt]
Malayalam & UBE & 132.650 & 126.980 & 120.760 & 115.976 & 113.153 \\*
 & BBPE & 132.444 & 126.831 & 120.714 & 115.951 & 113.167 \\
\addlinespace[2pt]
Maltese & UBE & 71.571 & 65.837 & 62.167 & 57.209 & 53.027 \\*
 & BBPE & 71.753 & 65.878 & 62.196 & 57.295 & 53.037 \\
\addlinespace[2pt]
Manipuri & UBE & 124.214 & 119.227 & 111.831 & 107.467 & 103.967 \\*
 & BBPE & 124.869 & 119.007 & 111.738 & 106.202 & 103.147 \\
\addlinespace[2pt]
Maori & UBE & 67.069 & 61.654 & 56.111 & 51.477 & 46.518 \\*
 & BBPE & 67.136 & 61.724 & 56.149 & 51.595 & 46.521 \\
\addlinespace[2pt]
Marathi & UBE & 104.165 & 100.329 & 98.342 & 96.261 & 95.370 \\*
 & BBPE & 104.264 & 100.625 & 98.368 & 96.236 & 95.363 \\
\addlinespace[2pt]
Minangkabau (Arabic) & UBE & 92.803 & 83.785 & 78.774 & 72.147 & 68.735 \\*
 & BBPE & 93.431 & 83.870 & 78.798 & 72.166 & 68.776 \\
\addlinespace[2pt]
Minangkabau (Latin) & UBE & 54.254 & 50.000 & 45.832 & 42.494 & 39.906 \\*
 & BBPE & 54.379 & 50.083 & 45.858 & 42.512 & 39.907 \\
\addlinespace[2pt]
Mizo & UBE & 60.398 & 56.024 & 51.791 & 49.498 & 46.544 \\*
 & BBPE & 60.598 & 56.080 & 51.802 & 49.518 & 46.534 \\
\addlinespace[2pt]
Mongolian & UBE & 80.031 & 69.169 & 59.959 & 50.714 & 43.704 \\*
 & BBPE & 80.340 & 69.547 & 60.095 & 50.801 & 43.726 \\
\addlinespace[2pt]
Moroccan Arabic & UBE & 67.443 & 59.507 & 52.555 & 46.515 & 42.130 \\*
 & BBPE & 68.267 & 59.719 & 52.646 & 46.535 & 42.162 \\
\addlinespace[2pt]
Mossi & UBE & 65.381 & 60.276 & 58.559 & 56.768 & 53.745 \\*
 & BBPE & 66.379 & 59.651 & 57.896 & 56.103 & 53.059 \\
\addlinespace[2pt]
Najdi Arabic & UBE & 69.042 & 60.559 & 52.814 & 46.434 & 41.090 \\*
 & BBPE & 69.959 & 60.746 & 52.919 & 46.484 & 41.105 \\
\addlinespace[2pt]
Nepali & UBE & 96.657 & 93.235 & 91.076 & 89.527 & 88.408 \\*
 & BBPE & 96.719 & 93.377 & 91.098 & 89.527 & 88.415 \\
\addlinespace[2pt]
Nigerian Fulfulde & UBE & 54.955 & 50.787 & 46.216 & 43.593 & 41.066 \\*
 & BBPE & 55.002 & 50.848 & 46.256 & 43.609 & 41.071 \\
\addlinespace[2pt]
North Levantine Arabic & UBE & 64.101 & 56.463 & 49.510 & 43.852 & 39.072 \\*
 & BBPE & 64.897 & 56.593 & 49.641 & 43.889 & 39.100 \\
\addlinespace[2pt]
Northern Kurdish & UBE & 61.977 & 56.102 & 50.753 & 46.007 & 41.816 \\*
 & BBPE & 62.231 & 56.165 & 50.822 & 46.066 & 41.831 \\
\addlinespace[2pt]
Northern Sotho & UBE & 63.217 & 59.307 & 56.489 & 53.723 & 50.016 \\*
 & BBPE & 63.522 & 59.399 & 56.526 & 53.741 & 50.020 \\
\addlinespace[2pt]
Northern Uzbek & UBE & 62.423 & 57.289 & 51.243 & 45.904 & 40.856 \\*
 & BBPE & 62.486 & 57.377 & 51.326 & 45.931 & 40.873 \\
\addlinespace[2pt]
Norwegian Bokmål & UBE & 49.458 & 45.037 & 41.152 & 37.581 & 34.627 \\*
 & BBPE & 49.628 & 45.210 & 41.177 & 37.622 & 34.633 \\
\addlinespace[2pt]
Norwegian Nynorsk & UBE & 51.084 & 46.710 & 42.946 & 39.627 & 36.563 \\*
 & BBPE & 51.270 & 46.811 & 43.007 & 39.681 & 36.568 \\
\addlinespace[2pt]
Nuer & UBE & 109.780 & 93.545 & 81.559 & 79.516 & 72.085 \\*
 & BBPE & 109.952 & 93.604 & 81.562 & 79.635 & 72.090 \\
\addlinespace[2pt]
Occitan & UBE & 57.495 & 53.097 & 49.753 & 46.640 & 44.212 \\*
 & BBPE & 57.662 & 53.154 & 49.778 & 46.650 & 44.223 \\
\addlinespace[2pt]
Odia & UBE & 188.713 & 161.316 & 135.328 & 118.017 & 109.633 \\*
 & BBPE & 182.258 & 164.863 & 133.265 & 118.017 & 108.809 \\
\addlinespace[2pt]
Pangasinan & UBE & 48.309 & 44.354 & 41.247 & 38.685 & 36.988 \\*
 & BBPE & 48.428 & 44.472 & 41.258 & 38.712 & 37.018 \\
\addlinespace[2pt]
Papiamento & UBE & 53.667 & 49.781 & 46.553 & 43.911 & 41.106 \\*
 & BBPE & 53.780 & 49.859 & 46.590 & 43.917 & 41.203 \\
\addlinespace[2pt]
Plateau Malagasy & UBE & 65.920 & 60.520 & 55.479 & 50.181 & 46.322 \\*
 & BBPE & 66.040 & 60.559 & 55.491 & 50.185 & 46.335 \\
\addlinespace[2pt]
Polish & UBE & 65.246 & 58.423 & 52.398 & 46.618 & 40.905 \\*
 & BBPE & 65.884 & 58.563 & 52.464 & 46.654 & 40.909 \\
\addlinespace[2pt]
Portuguese & UBE & 52.184 & 47.341 & 43.180 & 38.896 & 35.369 \\*
 & BBPE & 52.377 & 47.566 & 43.235 & 38.904 & 35.369 \\
\addlinespace[2pt]
Punjabi & UBE & 106.408 & 98.947 & 93.255 & 88.974 & 86.336 \\*
 & BBPE & 107.065 & 98.726 & 93.199 & 89.055 & 86.388 \\
\addlinespace[2pt]
Romanian & UBE & 58.321 & 54.096 & 49.314 & 44.658 & 40.676 \\*
 & BBPE & 58.457 & 54.194 & 49.376 & 44.672 & 40.681 \\
\addlinespace[2pt]
Rundi & UBE & 63.596 & 59.159 & 55.311 & 51.373 & 47.621 \\*
 & BBPE & 63.638 & 59.374 & 55.375 & 51.388 & 47.625 \\
\addlinespace[2pt]
Russian & UBE & 71.353 & 61.758 & 53.181 & 46.146 & 40.539 \\*
 & BBPE & 71.831 & 61.986 & 53.296 & 46.194 & 40.551 \\
\addlinespace[2pt]
Samoan & UBE & 65.732 & 62.070 & 59.037 & 55.253 & 51.648 \\*
 & BBPE & 65.797 & 62.167 & 59.042 & 55.294 & 51.650 \\
\addlinespace[2pt]
Sango & UBE & 60.273 & 56.579 & 53.367 & 50.404 & 47.674 \\*
 & BBPE & 60.302 & 56.607 & 53.372 & 50.420 & 47.680 \\
\addlinespace[2pt]
Sanskrit & UBE & 99.349 & 95.658 & 93.779 & 91.105 & 89.961 \\*
 & BBPE & 102.354 & 95.740 & 93.816 & 91.095 & 89.953 \\
\addlinespace[2pt]
Santali & UBE & 237.326 & 222.944 & 212.692 & 211.712 & 207.379 \\*
 & BBPE & 341.312 & 341.287 & 341.271 & 251.125 & 228.583 \\
\addlinespace[2pt]
Sardinian & UBE & 60.539 & 55.834 & 52.812 & 49.657 & 46.819 \\*
 & BBPE & 60.765 & 56.149 & 52.821 & 49.670 & 46.826 \\
\addlinespace[2pt]
Scottish Gaelic & UBE & 69.524 & 64.659 & 60.289 & 55.730 & 50.936 \\*
 & BBPE & 70.124 & 64.934 & 60.300 & 55.769 & 50.952 \\
\addlinespace[2pt]
Serbian & UBE & 67.635 & 59.429 & 51.874 & 46.119 & 41.318 \\*
 & BBPE & 67.838 & 59.612 & 51.923 & 46.154 & 41.318 \\
\addlinespace[2pt]
Shan & UBE & 236.768 & 221.375 & 218.032 & 167.262 & 151.673 \\*
 & BBPE & 233.891 & 218.301 & 204.210 & 170.984 & 151.355 \\
\addlinespace[2pt]
Shona & UBE & 63.249 & 57.294 & 53.068 & 49.035 & 45.174 \\*
 & BBPE & 63.323 & 57.509 & 53.125 & 49.199 & 45.180 \\
\addlinespace[2pt]
Sicilian & UBE & 59.735 & 56.201 & 52.945 & 49.747 & 46.600 \\*
 & BBPE & 60.358 & 56.648 & 53.291 & 50.107 & 46.944 \\
\addlinespace[2pt]
Silesian & UBE & 71.144 & 62.284 & 58.068 & 54.112 & 50.292 \\*
 & BBPE & 71.727 & 62.398 & 58.104 & 54.126 & 50.288 \\
\addlinespace[2pt]
Sindhi & UBE & 88.438 & 75.398 & 64.024 & 56.548 & 49.625 \\*
 & BBPE & 88.819 & 75.492 & 64.053 & 56.587 & 49.630 \\
\addlinespace[2pt]
Sinhala & UBE & 110.344 & 102.037 & 95.809 & 91.664 & 88.621 \\*
 & BBPE & 110.516 & 102.125 & 95.822 & 91.593 & 88.624 \\
\addlinespace[2pt]
Slovak & UBE & 59.940 & 54.424 & 49.093 & 43.599 & 38.802 \\*
 & BBPE & 60.335 & 54.685 & 49.178 & 43.622 & 38.815 \\
\addlinespace[2pt]
Slovenian & UBE & 55.101 & 50.241 & 46.221 & 41.769 & 37.653 \\*
 & BBPE & 55.371 & 50.386 & 46.295 & 41.807 & 37.662 \\
\addlinespace[2pt]
Somali & UBE & 64.171 & 59.198 & 53.844 & 49.272 & 45.173 \\*
 & BBPE & 64.639 & 59.266 & 53.864 & 49.355 & 45.187 \\
\addlinespace[2pt]
South Azerbaijani & UBE & 71.596 & 64.139 & 57.975 & 52.104 & 47.331 \\*
 & BBPE & 71.831 & 64.224 & 57.998 & 52.116 & 47.329 \\
\addlinespace[2pt]
South Levantine Arabic & UBE & 63.855 & 56.180 & 49.159 & 43.426 & 38.605 \\*
 & BBPE & 64.715 & 56.325 & 49.303 & 43.472 & 38.624 \\
\addlinespace[2pt]
Southern Pashto & UBE & 71.269 & 61.397 & 53.629 & 47.345 & 42.423 \\*
 & BBPE & 71.725 & 61.680 & 53.766 & 47.368 & 42.427 \\
\addlinespace[2pt]
Southern Sotho & UBE & 64.929 & 60.443 & 56.135 & 53.151 & 50.300 \\*
 & BBPE & 65.265 & 60.633 & 56.179 & 53.157 & 50.304 \\
\addlinespace[2pt]
Spanish & UBE & 54.925 & 50.021 & 45.200 & 40.793 & 37.298 \\*
 & BBPE & 55.205 & 50.161 & 45.262 & 40.805 & 37.300 \\
\addlinespace[2pt]
Sundanese & UBE & 54.776 & 50.185 & 44.769 & 39.723 & 35.630 \\*
 & BBPE & 55.007 & 50.275 & 44.827 & 39.742 & 35.632 \\
\addlinespace[2pt]
Swahili & UBE & 55.131 & 50.284 & 45.010 & 39.962 & 35.767 \\*
 & BBPE & 55.292 & 50.478 & 45.092 & 39.992 & 35.770 \\
\addlinespace[2pt]
Swati & UBE & 63.442 & 58.921 & 55.185 & 51.379 & 47.642 \\*
 & BBPE & 63.571 & 58.976 & 55.233 & 51.399 & 47.671 \\
\addlinespace[2pt]
Swedish & UBE & 50.691 & 45.972 & 42.010 & 38.131 & 34.788 \\*
 & BBPE & 50.818 & 46.118 & 42.037 & 38.165 & 34.792 \\
\addlinespace[2pt]
Ta'izzi Arabic & UBE & 68.238 & 59.942 & 52.392 & 45.931 & 40.679 \\*
 & BBPE & 69.195 & 60.128 & 52.545 & 45.988 & 40.692 \\
\addlinespace[2pt]
Tagalog & UBE & 60.836 & 55.433 & 50.975 & 46.782 & 43.141 \\*
 & BBPE & 60.985 & 55.543 & 51.022 & 46.794 & 43.153 \\
\addlinespace[2pt]
Tajik & UBE & 82.025 & 72.294 & 64.915 & 57.818 & 50.596 \\*
 & BBPE & 82.953 & 72.406 & 65.016 & 57.848 & 50.621 \\
\addlinespace[2pt]
Tamashek (Latin) & UBE & 64.325 & 60.727 & 57.668 & 55.734 & 51.245 \\*
 & BBPE & 64.648 & 60.888 & 57.751 & 55.803 & 51.292 \\
\addlinespace[2pt]
Tamashek (Tifinagh) & UBE & 194.050 & 188.255 & 185.609 & 184.284 & 184.142 \\*
 & BBPE & 271.147 & 270.655 & 270.571 & 270.528 & 204.566 \\
\addlinespace[2pt]
Tamil & UBE & 130.214 & 122.921 & 118.113 & 115.209 & 113.794 \\*
 & BBPE & 130.302 & 123.013 & 118.227 & 115.250 & 113.797 \\
\addlinespace[2pt]
Tatar & UBE & 77.965 & 66.208 & 57.925 & 50.683 & 43.549 \\*
 & BBPE & 78.247 & 66.621 & 57.987 & 50.709 & 43.575 \\
\addlinespace[2pt]
Telugu & UBE & 114.800 & 110.878 & 105.106 & 100.440 & 98.118 \\*
 & BBPE & 114.948 & 111.162 & 105.057 & 100.465 & 98.162 \\
\addlinespace[2pt]
Thai & UBE & 75.640 & 69.237 & 64.534 & 60.990 & 58.107 \\*
 & BBPE & 75.850 & 69.173 & 64.612 & 60.969 & 58.122 \\
\addlinespace[2pt]
Tibetan & UBE & 287.065 & 286.737 & 247.171 & 230.166 & 163.501 \\*
 & BBPE & 432.437 & 336.804 & 264.208 & 214.252 & 161.215 \\
\addlinespace[2pt]
Tigrinya & UBE & 91.849 & 79.221 & 69.298 & 61.681 & 54.403 \\*
 & BBPE & 92.910 & 78.665 & 69.399 & 61.069 & 53.665 \\
\addlinespace[2pt]
Tok Pisin & UBE & 62.981 & 57.913 & 54.843 & 52.291 & 50.361 \\*
 & BBPE & 63.056 & 57.952 & 54.853 & 52.299 & 50.365 \\
\addlinespace[2pt]
Tsonga & UBE & 65.786 & 61.653 & 58.231 & 55.727 & 52.549 \\*
 & BBPE & 65.937 & 61.708 & 58.242 & 55.729 & 52.577 \\
\addlinespace[2pt]
Tswana & UBE & 65.866 & 61.867 & 58.373 & 55.719 & 52.893 \\*
 & BBPE & 66.321 & 62.021 & 58.438 & 55.723 & 52.887 \\
\addlinespace[2pt]
Tumbuka & UBE & 74.959 & 68.298 & 64.038 & 58.219 & 54.144 \\*
 & BBPE & 75.073 & 68.649 & 64.180 & 58.237 & 54.148 \\
\addlinespace[2pt]
Tunisian Arabic & UBE & 66.258 & 58.247 & 51.588 & 45.781 & 41.058 \\*
 & BBPE & 67.125 & 58.427 & 51.682 & 45.815 & 41.079 \\
\addlinespace[2pt]
Turkish & UBE & 59.896 & 52.898 & 46.661 & 40.892 & 35.960 \\*
 & BBPE & 60.139 & 52.972 & 46.795 & 40.961 & 35.966 \\
\addlinespace[2pt]
Turkmen & UBE & 68.897 & 62.516 & 59.187 & 56.246 & 53.481 \\*
 & BBPE & 69.116 & 62.595 & 59.215 & 56.235 & 53.517 \\
\addlinespace[2pt]
Twi & UBE & 72.109 & 60.609 & 58.342 & 55.848 & 49.776 \\*
 & BBPE & 72.292 & 60.657 & 58.359 & 55.874 & 49.794 \\
\addlinespace[2pt]
Ukrainian & UBE & 72.472 & 63.801 & 56.567 & 49.195 & 43.529 \\*
 & BBPE & 72.595 & 64.037 & 56.634 & 49.225 & 43.552 \\
\addlinespace[2pt]
Umbundu & UBE & 59.733 & 55.394 & 52.952 & 50.009 & 47.377 \\*
 & BBPE & 59.838 & 55.542 & 52.970 & 50.014 & 47.378 \\
\addlinespace[2pt]
Urdu & UBE & 72.123 & 63.268 & 54.393 & 47.501 & 42.270 \\*
 & BBPE & 72.404 & 63.449 & 54.484 & 47.551 & 42.296 \\
\addlinespace[2pt]
Uyghur & UBE & 119.602 & 112.410 & 107.736 & 100.045 & 90.004 \\*
 & BBPE & 120.675 & 112.438 & 107.739 & 100.048 & 90.006 \\
\addlinespace[2pt]
Venetian & UBE & 53.993 & 50.199 & 46.913 & 43.812 & 41.147 \\*
 & BBPE & 54.225 & 50.297 & 46.931 & 43.834 & 41.148 \\
\addlinespace[2pt]
Vietnamese & UBE & 69.613 & 55.407 & 46.327 & 39.947 & 36.715 \\*
 & BBPE & 69.764 & 55.400 & 46.381 & 39.937 & 36.706 \\
\addlinespace[2pt]
Waray & UBE & 60.043 & 54.536 & 50.311 & 47.288 & 44.137 \\*
 & BBPE & 60.254 & 54.592 & 50.323 & 47.311 & 44.139 \\
\addlinespace[2pt]
Welsh & UBE & 66.409 & 61.948 & 58.707 & 54.242 & 50.725 \\*
 & BBPE & 66.582 & 61.978 & 58.736 & 54.253 & 50.723 \\
\addlinespace[2pt]
West Central Oromo & UBE & 69.103 & 64.926 & 60.678 & 57.090 & 53.208 \\*
 & BBPE & 69.242 & 64.967 & 60.695 & 57.106 & 53.213 \\
\addlinespace[2pt]
Western Persian & UBE & 69.852 & 60.530 & 52.618 & 45.601 & 40.308 \\*
 & BBPE & 70.073 & 60.675 & 52.650 & 45.659 & 40.328 \\
\addlinespace[2pt]
Wolof & UBE & 57.242 & 52.743 & 50.206 & 47.573 & 45.729 \\*
 & BBPE & 57.316 & 52.815 & 50.213 & 47.591 & 45.736 \\
\addlinespace[2pt]
Xhosa & UBE & 61.492 & 56.750 & 52.813 & 48.445 & 44.486 \\*
 & BBPE & 61.587 & 56.796 & 52.856 & 48.456 & 44.491 \\
\addlinespace[2pt]
Yiddish & UBE & 84.912 & 70.776 & 62.908 & 54.900 & 48.788 \\*
 & BBPE & 85.573 & 71.280 & 63.101 & 54.911 & 48.797 \\
\addlinespace[2pt]
Yoruba & UBE & 81.966 & 74.621 & 70.871 & 65.355 & 61.404 \\*
 & BBPE & 83.391 & 75.041 & 70.604 & 65.383 & 61.239 \\
\addlinespace[2pt]
Zulu & UBE & 65.370 & 60.117 & 55.212 & 50.784 & 45.533 \\*
 & BBPE & 65.469 & 60.134 & 55.272 & 50.792 & 45.556 \\
\addlinespace[2pt]
\end{longtable}
\endgroup

\subsection{Per-language downstream benchmarks and held-out BPB}
\label{app:perlang-benchmarks}
Tables~\ref{tab:perlang-xnli} and~\ref{tab:perlang-xcopa} report per-language zero-shot accuracy for XNLI and XCOPA, and Table~\ref{tab:perlang-bpb} reports held-out BPB, under the primary \texttt{101-lang/32K} configuration (182M). $\Delta$ denotes UBE $-$ BBPE\@. Section~\ref{sec:bpb-breakdown} summarizes held-out BPB differences by encoding-floor group; script-family aggregates can be recomputed directly from the per-language tables below.

\needspace{23\baselineskip}
\begin{table}[ht]
\centering
\small
\caption{Per-language XNLI zero-shot accuracy under the primary \texttt{101-lang/32K} 182M configuration (15 languages). $\Delta$ = UBE $-$ BBPE.}
\label{tab:perlang-xnli}
\begin{tabular}{lrrr}
\toprule
Language & UBE & BBPE & $\Delta$ \\
\midrule
ar & 33.6 & 31.8 & +1.8 \\
bg & 33.4 & 33.2 & +0.2 \\
de & 34.1 & 33.4 & +0.7 \\
el & 33.4 & 33.7 & -0.3 \\
en & 40.4 & 42.0 & -1.6 \\
es & 34.0 & 33.4 & +0.6 \\
fr & 35.2 & 34.7 & +0.5 \\
hi & 33.4 & 33.5 & -0.1 \\
ru & 35.9 & 34.7 & +1.1 \\
sw & 35.0 & 34.1 & +0.8 \\
th & 33.6 & 34.3 & -0.7 \\
tr & 33.3 & 33.1 & +0.2 \\
ur & 33.3 & 33.6 & -0.2 \\
vi & 33.1 & 34.5 & -1.4 \\
zh & 34.3 & 34.8 & -0.5 \\
\midrule
Mean & 34.4 & 34.3 & +0.1 \\
\bottomrule
\end{tabular}
\end{table}

\needspace{19\baselineskip}
\begin{table}[ht]
\centering
\small
\caption{Per-language XCOPA zero-shot accuracy under the primary \texttt{101-lang/32K} 182M configuration (11 languages). $\Delta$ = UBE $-$ BBPE.}
\label{tab:perlang-xcopa}
\begin{tabular}{lrrr}
\toprule
Language & UBE & BBPE & $\Delta$ \\
\midrule
et & 49.8 & 50.8 & -1.0 \\
ht & 52.4 & 52.0 & +0.4 \\
id & 52.2 & 53.4 & -1.2 \\
it & 49.6 & 48.4 & +1.2 \\
qu & 48.6 & 50.0 & -1.4 \\
sw & 53.4 & 54.6 & -1.2 \\
ta & 54.0 & 53.6 & +0.4 \\
th & 55.4 & 56.0 & -0.6 \\
tr & 53.2 & 52.8 & +0.4 \\
vi & 50.2 & 50.0 & +0.2 \\
zh & 51.4 & 50.2 & +1.2 \\
\midrule
Mean & 51.8 & 52.0 & -0.1 \\
\bottomrule
\end{tabular}
\end{table}

\paragraph{Belebele (122 languages).}
UBE wins 20/122 languages overall.

\needspace{6\baselineskip}
\paragraph{Held-out BPB (101 languages).}

{\small
\begin{longtable}{lrrrr}
\caption{Per-language held-out BPB for UBE and BBPE (\texttt{101-lang/32K}, 182M; 101 languages). UBE token-level perplexity is shown only as a scale reference; $\Delta$BPB = UBE $-$ BBPE.}\label{tab:perlang-bpb}\\
\toprule
Language & UBE BPB & BBPE BPB & $\Delta$BPB & UBE PPL \\
\midrule
\endfirsthead
\multicolumn{5}{c}{\tablename\ \thetable{} -- continued from previous page}\\
\toprule
Language & UBE BPB & BBPE BPB & $\Delta$BPB & UBE PPL \\
\midrule
\endhead
af & 1.9795 & 1.9670 & +0.0125 & 52.31 \\
am & 1.0483 & 1.0390 & +0.0093 & 9.50 \\
ar & 1.2544 & 1.2644 & -0.0100 & 17.86 \\
az & 1.9210 & 1.9380 & -0.0170 & 36.04 \\
be & 1.4363 & 1.4010 & +0.0353 & 30.66 \\
bg & 1.1601 & 1.1531 & +0.0070 & 21.12 \\
bn & 1.0159 & 0.9728 & +0.0431 & 9.55 \\
ca & 1.6012 & 1.5773 & +0.0239 & 25.30 \\
ceb & 1.6769 & 1.6802 & -0.0033 & 24.38 \\
co & 1.4023 & 1.3969 & +0.0054 & 12.53 \\
cs & 1.9726 & 1.9860 & -0.0134 & 32.29 \\
cy & 1.1069 & 1.1240 & -0.0171 & 10.97 \\
da & 1.4187 & 1.3983 & +0.0204 & 16.50 \\
de & 1.5568 & 1.5508 & +0.0060 & 21.65 \\
el & 0.9531 & 0.9781 & -0.0250 & 9.39 \\
en & 1.0282 & 1.0214 & +0.0068 & 14.10 \\
eo & 2.1549 & 2.1384 & +0.0165 & 59.73 \\
es & 1.3764 & 1.3663 & +0.0101 & 17.61 \\
et & 2.2668 & 2.2282 & +0.0386 & 57.55 \\
eu & 1.7250 & 1.6232 & +0.1018 & 22.09 \\
fa & 1.2241 & 1.2470 & -0.0229 & 19.65 \\
fi & 1.9675 & 1.9389 & +0.0286 & 32.96 \\
fil & 1.7348 & 1.7185 & +0.0163 & 37.87 \\
fr & 1.4166 & 1.4114 & +0.0052 & 17.89 \\
fy & 1.4440 & 1.4229 & +0.0211 & 13.91 \\
ga & 1.5320 & 1.4588 & +0.0732 & 17.52 \\
gd & 0.9800 & 0.9611 & +0.0189 & 4.59 \\
gl & 1.6125 & 1.6051 & +0.0074 & 25.43 \\
gu & 1.5372 & 1.5355 & +0.0017 & 31.87 \\
ha & 2.1483 & 2.1454 & +0.0029 & 57.05 \\
haw & 2.0418 & 2.0489 & -0.0071 & 29.10 \\
hi & 0.7297 & 0.7263 & +0.0034 & 6.25 \\
hmn & 1.8521 & 1.8462 & +0.0059 & 21.54 \\
ht & 2.1021 & 2.0850 & +0.0171 & 44.17 \\
hu & 2.1818 & 2.2069 & -0.0251 & 41.04 \\
hy & 1.0273 & 1.2238 & -0.1965 & 9.37 \\
id & 1.2899 & 1.2777 & +0.0122 & 14.39 \\
ig & 2.1886 & 2.2380 & -0.0494 & 45.18 \\
is & 2.2356 & 2.2252 & +0.0104 & 55.78 \\
it & 1.5506 & 1.5512 & -0.0006 & 23.09 \\
iw & 1.7980 & 1.7994 & -0.0014 & 50.13 \\
ja & 0.9237 & 0.9168 & +0.0069 & 8.65 \\
jv & 1.8007 & 1.8128 & -0.0121 & 28.74 \\
ka & 1.2649 & 1.0770 & +0.1879 & 42.92 \\
kk & 0.9030 & 0.7750 & +0.1280 & 9.59 \\
km & 1.3200 & 1.4587 & -0.1387 & 18.18 \\
kn & 0.6466 & 0.6689 & -0.0223 & 4.12 \\
ko & 0.8818 & 0.8818 & 0.0000 & 10.05 \\
ku & 1.3519 & 1.4384 & -0.0865 & 11.48 \\
ky & 0.8469 & 0.8660 & -0.0191 & 8.08 \\
la & 1.4653 & 1.4504 & +0.0149 & 18.31 \\
lb & 1.0129 & 1.0088 & +0.0041 & 6.25 \\
lo & 1.4836 & 1.9145 & -0.4309 & 35.47 \\
lt & 1.8119 & 1.7687 & +0.0432 & 24.75 \\
lv & 1.3041 & 1.3562 & -0.0521 & 9.76 \\
mg & 1.9600 & 1.9434 & +0.0166 & 36.28 \\
mi & 1.9730 & 1.9811 & -0.0081 & 31.00 \\
mk & 1.1655 & 1.1293 & +0.0362 & 24.85 \\
ml & 0.5524 & 0.5549 & -0.0025 & 3.47 \\
mn & 1.6991 & 1.7137 & -0.0146 & 43.94 \\
mr & 0.9251 & 0.9232 & +0.0019 & 10.53 \\
ms & 1.5119 & 1.4916 & +0.0203 & 21.15 \\
mt & 1.6258 & 1.6412 & -0.0154 & 10.10 \\
my & 1.4408 & 1.0904 & +0.3504 & 34.47 \\
ne & 0.9534 & 0.9511 & +0.0023 & 9.27 \\
nl & 1.5384 & 1.5239 & +0.0145 & 21.90 \\
no & 1.5625 & 1.5628 & -0.0003 & 22.33 \\
ny & 1.9924 & 2.0106 & -0.0182 & 40.73 \\
pa & 1.7135 & 1.5858 & +0.1277 & 52.47 \\
pl & 1.8785 & 1.8636 & +0.0149 & 23.20 \\
ps & 1.8104 & 1.7712 & +0.0392 & 73.08 \\
pt & 1.2234 & 1.2306 & -0.0072 & 11.92 \\
ro & 1.9984 & 1.9858 & +0.0126 & 49.56 \\
ru & 0.8143 & 0.8056 & +0.0087 & 8.97 \\
sd & 0.8137 & 0.8039 & +0.0098 & 5.42 \\
si & 1.5479 & 1.2656 & +0.2823 & 33.97 \\
sk & 2.1511 & 2.0940 & +0.0571 & 43.93 \\
sl & 1.4659 & 1.4815 & -0.0156 & 13.61 \\
sm & 1.8296 & 1.8198 & +0.0098 & 22.49 \\
sn & 1.9315 & 1.9302 & +0.0013 & 33.36 \\
so & 1.7567 & 1.7942 & -0.0375 & 26.43 \\
sq & 2.7314 & 2.7785 & -0.0471 & 180.35 \\
sr & 1.4024 & 1.3963 & +0.0061 & 35.68 \\
st & 2.2992 & 2.3017 & -0.0025 & 55.37 \\
su & 1.8481 & 1.8349 & +0.0132 & 35.17 \\
sv & 1.6863 & 1.7006 & -0.0143 & 28.82 \\
sw & 2.2022 & 2.2039 & -0.0017 & 58.59 \\
ta & 0.8556 & 0.8134 & +0.0422 & 7.15 \\
te & 0.9789 & 1.0941 & -0.1152 & 8.41 \\
tg & 1.4043 & 1.4377 & -0.0334 & 34.27 \\
th & 0.4792 & 0.4784 & +0.0008 & 4.85 \\
tr & 1.3062 & 1.3492 & -0.0430 & 10.93 \\
uk & 0.9754 & 0.9425 & +0.0329 & 11.30 \\
ur & 1.5161 & 1.5038 & +0.0123 & 40.99 \\
uz & 2.1720 & 2.1786 & -0.0066 & 48.10 \\
vi & 1.6018 & 1.5877 & +0.0141 & 25.09 \\
xh & 1.9580 & 1.9329 & +0.0251 & 33.52 \\
yi & 1.6290 & 1.6012 & +0.0278 & 39.59 \\
yo & 2.6140 & 2.5907 & +0.0233 & 90.06 \\
zh & 1.0238 & 1.0117 & +0.0121 & 9.87 \\
zu & 1.7078 & 1.6966 & +0.0112 & 25.30 \\
\midrule
Mean & 1.5237 & 1.5174 & +0.0063 & \\
\bottomrule
\end{longtable}
}

\clearpage

\end{document}